\documentclass[letterpaper]{article} 
\usepackage{aaai24}  
\usepackage{times}  
\usepackage{helvet}  
\usepackage{courier}  
\usepackage[hyphens]{url}  
\usepackage{graphicx} 
\urlstyle{rm} 
\usepackage{natbib}  
\usepackage{caption} 
\usepackage{amsmath}
\frenchspacing  
\setlength{\pdfpagewidth}{8.5in} 
\setlength{\pdfpageheight}{11in} 
%
\usepackage{algorithm}
\usepackage{algorithmic}

\usepackage{caption} 
\usepackage{microtype}
\usepackage{graphicx}
\usepackage{subfigure}
\usepackage{booktabs} 

\usepackage{amsmath}
\usepackage{amssymb}
\usepackage{mathtools}
\usepackage{amsthm}
\usepackage{tabularx}
\usepackage{multirow}

\theoremstyle{plain}
\newtheorem{theorem}{Theorem}[section]

\newtheorem{lemma}[theorem]{Lemma}

\theoremstyle{definition}

\theoremstyle{remark}

\renewcommand{\vec}[1]{\mathbf{#1}}

\usepackage[utf8]{inputenc} 
\usepackage[T1]{fontenc}    
\usepackage{url}            
\usepackage{booktabs}       
\usepackage{amsfonts}       
\usepackage{nicefrac}       
\usepackage{microtype}      
\usepackage{xcolor}         
\usepackage{algorithm}
\usepackage{algorithmic}
\usepackage{multirow}


%
\usepackage{newfloat}
\usepackage{listings}
\DeclareCaptionStyle{ruled}{labelfont=normalfont,labelsep=colon,strut=off} 
\lstset{%
	basicstyle={\footnotesize\ttfamily},
	numbers=left,numberstyle=\footnotesize,xleftmargin=2em,
	aboveskip=0pt,belowskip=0pt,%
	showstringspaces=false,tabsize=2,breaklines=true}
\floatstyle{ruled}
\newfloat{listing}{tb}{lst}{}
\floatname{listing}{Listing}
%
\pdfinfo{
/TemplateVersion (2024.1)
}

\setcounter{secnumdepth}{2} 

\title{Pareto Front-Diverse Batch Multi-Objective Bayesian Optimization}

\author {
    Alaleh Ahmadianshalchi \equalcontrib \textsuperscript{\rm 1},
    Syrine Belakaria\equalcontrib \textsuperscript{\rm 2},
    Janardhan Rao Doppa \textsuperscript{\rm 1}
}

\affiliations {
\textsuperscript{\rm 1} School of EECS, Washington State University\\ \textsuperscript{\rm 2} Computer Science Department, Stanford University\\
a.ahmadianshalchi@wsu.edu, syrineb@stanford.edu, jana.doppa@wsu.edu
}

\begin{document}

\maketitle

\begin{abstract}
We consider the problem of multi-objective optimization (MOO) of expensive black-box functions with the goal of discovering high-quality and diverse Pareto fronts where we are allowed to evaluate a batch of inputs. This problem arises in many real-world applications including penicillin production where diversity of solutions is critical. We solve this problem in the framework of Bayesian optimization (BO) and propose a novel approach referred to as {\em {\bf P}areto front-{\bf D}iverse Batch Multi-Objective {\bf BO} (PDBO)}. PDBO tackles two important challenges: 1) How to automatically select the best acquisition function in each BO iteration, and 2) How to select a diverse batch of inputs by considering multiple objectives. We propose principled solutions to address these two challenges. First, PDBO employs a multi-armed bandit approach to select one acquisition function from a given library. We solve a cheap MOO problem by assigning the selected acquisition function for each expensive objective function to obtain a candidate set of inputs for evaluation. Second, it utilizes Determinantal Point Processes (DPPs) to choose a Pareto-front-diverse batch of inputs for evaluation from the candidate set obtained from the first step. The key parameters for the methods behind these two steps are updated after each round of function evaluations. Experiments on multiple MOO benchmarks demonstrate that PDBO outperforms prior methods in terms of both the quality and diversity of Pareto solutions.
\end{abstract}
\section{Introduction}

A wide range of science and engineering applications, including materials design \cite{ashby2000multi}, biological sequence design \cite{taneda2015multi}, and drug/vaccine design \cite{nicolaou2013multi} involves optimizing multiple {\em expensive-to-evaluate} objective functions. For example, in nanoporous materials design \cite{deshwal2021bayesian}, the goal is to optimize the adsorption property and cost of synthesis guided by physical lab experiments. Since the experiments are expensive in terms of the consumed resources, our goal is to approximate the {\em optimal Pareto set} of solutions. In many of the aforementioned applications, there are two important considerations. First, we can perform multiple parallel experiments which should be leveraged to accelerate the discovery of high-quality solutions. Second, practitioners care about diversity in solutions and their outcomes. For example, in penicillin production application, diverse solutions for objectives including penicillin production, the time to ferment, and the $CO_2$ byproduct \cite{birol2002modular}.

We consider the problem of multi-objective optimization (MOO) over expensive-to-evaluate functions to find high-quality and diverse Pareto fronts when we are allowed to perform a batch of experiments. We solve this problem using Bayesian optimization (BO) \cite{shahriari2015taking} which has been shown to be highly effective for such problems. The key idea behind BO is to learn a surrogate model (e.g., Gaussian process)from past experiments and use it to intelligently select the sequence of experiments guided by an acquisition function (e.g., expected improvement).
There is prior BO work on selecting a batch of experiments to find diverse high-performing solutions for single-objective optimization. However, there is very limited work on batch BO for MOO problems and to produce diverse MOO solutions. A key drawback of the existing BO methods for MOO is that they evaluate diversity in terms of input space, which is not appropriate for MOO (diversity in input space $\nRightarrow$ diversity in output space). To overcome this drawback and to measure the  diversity of MOO solution in the output space, we define a new metric referred to as {\em Diversity of the Pareto Front (DPF)}.

We propose a novel approach referred to as {\em {\bf P}areto front-{\bf D}iverse Batch Multi-Objective {\bf BO} (PDBO)}. PDBO selects a batch $B$ of inputs for evaluations in each iteration using two main steps. First, it employs a principled multi-arm bandit strategy to dynamically select one acquisition function (AF) from a given library of AFs within the single-objective BO literature. A cheap MOO problem is solved by assigning the selected AF for each expensive objective function to obtain a Pareto set. Second, a principled configuration of determinantal point processes (DPPs) \cite{borodin2009determinantal,kulesza2012determinantal} for multiple objectives is used to select $B$ inputs for evaluation from the Pareto set of the first step to improve the diversity of the uncovered Pareto front. PDBO updates the key parameters of the algorithms for these two steps (dynamic selection of AF and selecting $B$ Pareto-front diverse inputs from a candidate Pareto set) after obtaining the objective function evaluations. Our experiments on multiple benchmarks with varying input dimensions and number of objective functions demonstrate that PDBO outperforms prior methods in finding high-quality and diverse Pareto fronts.

\vspace{1.0ex}

\noindent {\bf Contributions.} The key contribution of this paper is developing and evaluating the PDBO approach for solving MOO problems to find high-quality and diverse Pareto fronts. 

Specific contributions are as follows:
\begin{itemize}
    \item A multi-arm bandit strategy with a novel reward function to dynamically select one acquisition function from a given library for MOO problems.
    \item A novel DPP method for selecting a batch of inputs from a given Pareto set to maximize Pareto front diversity using a new mechanism to generate the DPP kernel for MOO. 
    \item To demonstrate the effectiveness of PDBO and study the Pareto front diversity, we propose a new metric to measure the diversity of \emph{Pareto fronts}. To the best of our knowledge, this is the first attempt at extensive experimental evaluation of Pareto front diversity within BO.
    \item Theoretical analysis of our PDBO algorithm in terms of asymptotic regret bounds.
    \item Experimental evaluation of PDBO and baselines on multiple benchmark MOO problems. The code for PDBO is publicly available at \url{https://github.com/Alaleh/PDBO}.
\end{itemize}
\section{Problem Setup and Background} \label{setup}

We first formally define the batch MOO problem along with the metrics to evaluate the quality and diversity of solutions. Next, we provide an overview of the BO framework.

\vspace{1.0ex}

\noindent {\bf Batch Multi-Objective Optimization.} We consider a MOO problem where the goal is to 
optimize multiple conflicting functions. Let $\mathfrak{X} \subset \mathbb{R}^d$ be the input space of $d$ design variables, where each candidate input $\vec{x} \in \mathfrak{X}$ is a $d$-dimensional input vector. And let $\{f_1, \cdots, f_{K}\}$ with $K \geq 2$ be the objective functions defined over the input space $\mathfrak{X}$ where $f_1(\vec{x}), \cdots, f_K(\vec{x})  : \mathfrak{X} \rightarrow \mathbb{R}$. We denote the functions evaluation at an input $\vec{x}$ as  $\vec{y}=[y_1, \cdots, y_K]$, where $y_i = f_i(\vec{x})$ for all $i \in \{1, \cdots, K\}$. Without loss of generality, we assume minimization for all $K$ objective functions. The optimal solution of the MOO problem is a set of inputs $\mathcal{X}^* \subset \mathfrak{X}$ such that no input $\vec{x'} \in \mathfrak{X} \setminus \mathcal{X}^*$ Pareto-dominates another input $\vec{x} \in \mathcal{X}^*$. An input $\vec{x}$ Pareto-dominates another point $\vec{x}'$ if and only if $ \forall j : f_j(\vec{x}) \leq f_j(\vec{x}')$ and $\exists  j : f_j(\vec{x})<f_j(\vec{x}')$. The set of input solutions $\mathcal{X}^*$ is called the optimal {\em Pareto set} and the corresponding set of function values $\mathcal{Y}^*$  is called the optimal {\em Pareto front}. We can select $B$ inputs for parallel evaluation in each iteration, and our goal is to uncover a high-quality and diverse Pareto front while minimizing the total number of expensive function evaluations.

\vspace{1.0ex}

\noindent {\bf Metrics for Quality and Diversity of Pareto Front.} Our goal is to find high-quality and diverse Pareto fronts. The diversity of the Pareto front has not been formally \textit{evaluated} in any previous work. We introduce an appropriate evaluation metric to measure the diversity of the Pareto front and discuss an existing measure of Pareto front quality below.

\vspace{0.5ex}

{\em Diversity of Pareto Front.} Diversity is an important criterion for many optimization problems. Prior work on batch BO, both in the single-objective and MO settings focused on evaluating diversity with respect to the input space \cite{jain2022multi}. However, in most real-world MOO problems, diversity in the input space does not necessarily reflect diversity in the output space. In MOO, practitioners might care more about the diversity of the Pareto front rather than the Pareto set. Yet, little work has gone into understanding, formalizing, and measuring Pareto front diversity in MOO. In most cases, finding a more diverse set of points in the output space leads to a higher hypervolume \cite{zitzler1999multiobjective}. However, a higher hypervolume does not necessarily correspond to a more diverse Pareto front. \cite{konakovic2020diversity} is the only prior work that proposed a diversity-guided approach for batch MOO. However, the diversity of the produced Pareto front \textit{was not evaluated}. To fill this gap, we propose an evaluation metric to fill this gap to assess the {\em Diversity of the Pareto Front (DPF)}. Given a Pareto front $\mathcal{Y}_t$, $DPF (\mathcal{Y}_t)$ is the average pairwise distance between points (i.e., output vectors) in Pareto front $\mathcal{Y}_t$. It is important to clarify that the pairwise distances are computed in the output space between different vector pairs $(\vec{y}, \vec{y}') \in \mathcal{Y}_t$, unlike previously used metrics to assess input space diversity in the single-objective setting \cite{angermueller2020population}.
\begin{align*}
& DPF(\mathcal{Y}_t) = \frac{\sum_{(i,j) \in \mathcal{I}} || \vec{y_i} - \vec{y_j}||}{ |\mathcal{I}|} \\
& \text{with } \mathcal{I} = \{(i,j) \forall i,j \in \{1 \cdots t\}, i<j\}
\end{align*}
We provide a more detailed discussion of existing metrics and some illustrative results on previous metrics and their utility in evaluating diversity in the Appendix.

\vspace{0.5ex}

{\em Hypervolume Indicator.} The hypervolume indicator \cite{zitzler1999multiobjective} is the most commonly employed measure to evaluate the quality of a given Pareto front. Given a set of functions evaluations $\mathcal{Y}_t = \{\vec{y}_0, \cdots, \vec{y}_t\}$, the Pareto hypervolume (PHV) indicator is the volume between a predefined reference point $\vec{r}$ and the given Pareto front.

\subsection{Bayesian Optimization Framework}

BO \cite{shahriari2015taking} is a general framework for solving expensive black-box optimization problems in a sample-efficient manner. 
BO algorithms iterate through three steps. 

\textbf{1)} Build a probabilistic {\em surrogate model} of the true expensive objective function. Gaussian process (GPs) \cite{williams2006gaussian}) are widely employed in BO. 

\textbf{2)} Define an {\em acquisition function (AF)} to score the utility of unevaluated inputs. It uses the surrogate model's predictions as a cheap substitute for expensive function evaluations and strategically trades off exploitation and exploration. 

Some examples of widely used acquisition functions in single-objective optimization include expected improvement (EI) \cite{mockus1978application}, upper confidence bound (UCB)  \cite{auer2002using}, Thompson sampling (TS) \cite{thompson1933likelihood} and Identity (ID).
\begin{align}
    & EI(\vec{x}) =\sigma(\vec{x})(\alpha \Phi(\alpha) + \phi(\alpha)), \alpha = (\tau -  \frac{\mu(\vec{x})}{\sigma(\vec{x})})\\
    & UCB(\vec{x}) = \mu(\vec{x}) + \sqrt{\beta} \sigma(\vec{x})\\
    & TS(\vec{x}) = \hat{f}(\vec{x}) ~ with ~ \hat{f} \sim \textsc{Gp}\\
    & ID(\vec{x}) = \mu(\vec{x})
\end{align}

Here, $\beta$ is a parameter to balance exploration and exploitation in the UCB acquisition function \cite{srinivas2009Gaussian}, $\tau$ is the best-uncovered function value; and $\Phi$ and $\phi$ are the CDF and PDF of a standard normal distribution respectively.

\textbf{3)} Select the input with the highest utility score for function evaluation by optimizing the acquisition function. \looseness=-1

\section{Related Work}

\noindent {\bf Multi-Objective BO.} There is relatively less work on MOO in comparison with single-objective BO. Prior work builds on the insights from single-objective methods \cite{shahriari2015taking,PES,MES,hvarfner2022joint} for MOO. Recent work on MOBO includes Predictive Entropy Search \cite{PESMO}, Max-value Entropy Search \cite{belakaria2019max}, Multi-Objective Regionalized BO \cite{daulton2022multiobjective}, Uncertainty-aware Search  \cite{Usemo}, Pareto-Frontier Entropy Search \cite{suzuki2020multi}, and Expected Hypervolume Improvement \cite{daulton2020differentiable,emmerich2008computation}. These methods are shown to perform well on a variety of MOO problems.

\vspace{1.0ex}

\noindent {\bf Batch Multi-objective BO.} The Batch BO problem in the multi-objective setting is even much less studied. Diversity-Guided Efficient Multi-Objective Optimization (DGEMO) \cite{konakovic2020diversity} approximates and analyzes a piecewise-continuous Pareto set representation which allows the algorithm to introduce a batch selection strategy that optimizes for both hypervolume improvement and diversity of selected samples. However, DGEMO work did not study or evaluate the Pareto-front diversity of the produced solutions. qEHVI \cite{daulton2020differentiable} is an exact computation of the joint EHVI of $q$ new candidate points (up to Monte-Carlo integration error). 

qPAREGO is a novel extension of ParEGO \cite{knowles2006parego, daulton2020differentiable} that supports parallel evaluation and constraints. More recent work \cite{lin2022pareto} proposed an approach to address MOO problems with continuous/infinite Pareto fronts by approximating the whole Pareto set via a continuous manifold. This approach enables a better preference-based exploration strategy for practitioners compared to prior work \cite{abdolshah2019multi,paria2020flexible,astudillo2020multi}. However, it is typically unknown to the user if the Pareto front is dense/continuous, especially in expensive function settings where the data is limited. {\em It is not known whether any of these batch methods produce diverse Pareto fronts or not, as they were not evaluated on diversity metrics}. We perform an experimental evaluation to answer this question.

\vspace{1.0ex}

\noindent {\bf DPPs for Batch Single-Objective BO.} 
DPPs are elegant probabilistic models \cite{borodin2005harmonic,borodin2009determinantal} that characterize the property of repulsion in a set of vectors and are well-suited for the selection of a diverse subset of inputs from a predefined set. Prior work used DPPs for selecting batches for evaluation in the single-objective BO literature \cite{kathuria2016batched,nava2022diversified,wang2017batched}. In the context of MOO for cheap objective functions using evolutionary algorithms, DPP was deployed using non-learning-based kernels such as cosine function applied to points in the Pareto front while disregarding the input space \cite{wang2022many,zhang2020new,okoth2022large}. However, {\em to the best of our knowledge, there is no work on using DPPs for multi-objective BO to uncover diverse Pareto fronts using a learned kernel that captures the trade-off between multiple objectives, similarity in the input space, and diversity of the Pareto front}.

\vspace{1.0ex}

\noindent {\bf Adaptive Acquisition Function Selection.} There has been a plethora of research on finding efficient and reliable acquisition functions (AFs). However, prior work has shown that no single acquisition function is universally efficient and consistently outperforms all others. GP-Hedge \cite{hoffman2011portfolio} proposed to use a portfolio of acquisition functions. The optimization of each AF will nominate an input, and the algorithm will select one of them for evaluation using the selection probabilities. The GP-Hedge method uses the Hedge strategy \cite{freund1997decision}, a multi-arm bandit method designed to choose one action amongst a set of different possibilities using selection probabilities calculated based on the reward (performance given by function values) collected from previous evaluations.
\citeauthor{nopastvasconcelos2019no} extended \citeauthor{hoffman2011portfolio} by proposing to use discounted cumulative reward and \citeauthor{setupvasconcelos2022self} suggested using Thompson sampling to automatically set the hedge hyperparameter $\eta$.

\section{Proposed PDBO Algorithm}\label{overview_pdbo}
We start by providing an overview of the proposed PDBO algorithm illustrated in Figure \ref{fig:overview_pdbo}. 
Next, we explain our algorithms for two key components of PDBO, namely, adaptive acquisition function selection via a multi-arm bandit strategy and diverse batch selection via determinantal point processes for multi-objective output space diversity.

\begin{figure*}[t]
    \centering
    \includegraphics[width=\textwidth]{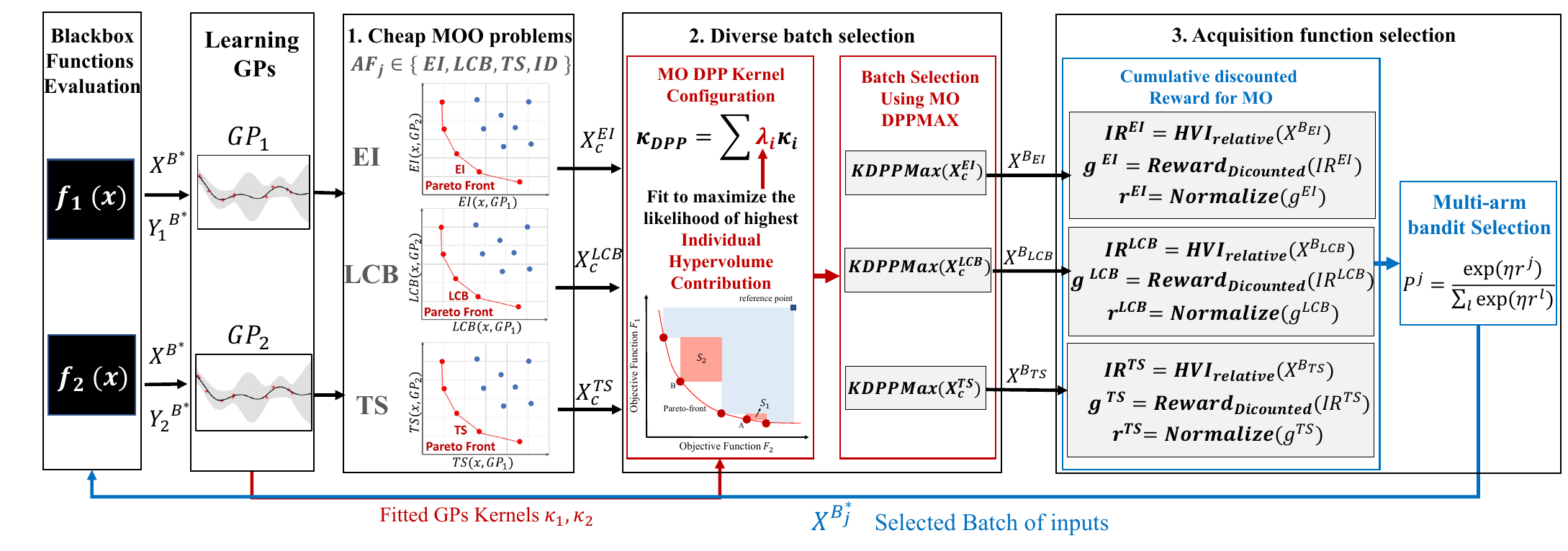}
    \caption{High-level overview of PDBO algorithm illustrating its three key components.}
    \label{fig:overview_pdbo}
\end{figure*}

\vspace{1.0ex}

\noindent \textbf{Overview of PDBO.} PDBO is an iterative algorithm. It introduces novel methods for selecting varying acquisition functions and for bringing the diversity of inputs into the multi-objective BO setting. The method builds $K$ independent Gaussian processes $\textsc{Gp}_1, \cdots, \textsc{Gp}_K$ as surrogates for each of the objective functions. Its three key steps at each iteration $t$ to select $B$ inputs for evaluation are:

{\em 1. Solving multiple cheap MOO problems:} PDBO takes as input a portfolio of acquisition functions, $\mathcal{P}=\{\textsc{Af}_1, \cdots, \textsc{Af}_M\}$ for single-objective BO. It constructs $M$ cheap MOO problems, each corresponding to one AF. The multiple objectives defining the cheap MOO problems are acquisition functions respectively corresponding to the $K$ objective functions. Solving cheap MOO problems will generate $M$ cheap Pareto-sets of solutions $\mathcal{X}_c^{1} \cdots \mathcal{X}_c^{M}$. 

{\em 2. Diverse batch selection:} From each cheap Pareto set $\mathcal{X}_c^{j}$, a batch $X^{B_j}_t \subset \mathcal{X}_c^{j}$ of $B$ inputs is selected using a diversity-aware approach based on DPPs. Importantly, the adapted DPP is configured to favor the diversity in the output space and to handle multiple objective settings by using a principally fitted convex combination of the kernels of the $K$ Gaussian processes. The convex combination scalars are strategically set to maximize the likelihood of selecting a diverse subset of inputs with respect to the Pareto front. 

{\em 3. Acquisition function selection:}
From $\{X^{B_j}_t ; ~ \forall ~ j\in[1, \cdots, M]\}$, only one nominated subset would be selected using a multi-arm bandit strategy. The keys to this selection are probabilities $p^1, \cdots, p^j$, one for each acquisition function to capture their performance based on past iterations. $p^j$ is the probability of selecting the batch generated by the acquisition function $\textsc{Af}_j$, defined in equation \ref{prob_eq}. These probabilities are updated based on the discounted cumulative reward $r^j$ of each of the respective acquisition functions. The reward values $r^j$ are updated based on the quality of the batches nominated by the respective  acquisition functions.

Algorithm \ref{our_method} provides a pseudocode with high-level steps of the PDBO approach. The $\textsc{DPP-Select}$ and \textsc{Adaptive-Af-Select} represent the second and third key steps. The details of these methods and their corresponding pseudocodes are provided in Sections \ref{adaptive_acq} and \ref{DPP}, respectively.

\subsection{Multi-arm Bandit Strategy for Adaptive Acquisition Function Selection} \label{adaptive_acq}
In this section, we propose a multi-arm bandit (MAB) approach to {\em adaptively} select one acquisition function (AF) from a given library of AFs in each iteration of PDBO.

\begin{algorithm}[t]
\caption{Pareto front-Diverse Batch Multi-Objective BO}
\footnotesize
\textbf{Input}: $\mathfrak{X}$ input space; \{$f_1, \cdots, f_K$\}, $K$ black-box objective functions; $\mathcal{P}$ = \{$\textsc{Af}_1, \cdots, \textsc{Af}_M$\} portfolio of acquisition functions; $B$ batch size; and  $T_{max}$ number of iterations
\begin{algorithmic}[1] \label{our_method}
\STATE Initialize data $\mathcal{D}_0=\{\mathcal{X}_{0},\mathcal{Y}_{0}\}$ with $N_0$ initial points
\FOR{each iteration $t \in [1,T_{max}]$}
\STATE Fit statistical models $\textsc{Gp}_1, \cdots, \textsc{Gp}_k$ using $\mathcal{D}_{t-1}$ 
\FOR{each acquisition function $\textsc{Af}_j \in \mathcal{P} $}
\STATE $\mathcal{X}_{c}^j \leftarrow \arg min_{\vec{x} \in \mathfrak{X}} \; (\textsc{Af}_j(\textsc{Gp}_1,\vec{x}),\cdots, \textsc{Af}_j(\textsc{Gp}_k,\vec{x}))$ // Solve cheap MOO problem
\STATE $X^{B_j}_{t} \leftarrow \textsc{DPP-Select}(\mathcal{X}_{c}^j,\{\textsc{Gp}_1, \cdots, \textsc{Gp}_k\},\mathcal{D}_{t-1})$ \\ // Select a batch of inputs from $\mathcal{X}_{c}^j$ using DPPs
\ENDFOR
\STATE $ \textsc{Af}_{j^*} \leftarrow \textsc{Adaptive-AF-Select}(\mathcal{D}_{t-1},\{X^{B_j}_{t-1}\}_{j=1}^M )$ \\ // Select an AF using previously aggregated data
\STATE $X^{B}_{t}=X^{B_{j^*}}_{t}$ // Choose the batch nominated by $\textsc{Af}_{j^*}$ 
\STATE $Y^B_{t} \leftarrow \{ [f_1(\vec{x}),\cdots,f_k(\vec{x})] ; ~ \forall ~ \vec{x} \in X^B_{t} \}$ \\ // Evaluate objective functions for batch of inputs $X^B_{t}$
\STATE $ \mathcal{D}_t=\{\mathcal{X}_{t},\mathcal{Y}_{t}\} \leftarrow \{\mathcal{X}_{t-1},\mathcal{Y}_{t-1}\} \cup \{(X^B_{t}, Y^B_{t})\}$ 
\ENDFOR
\STATE \textbf{return} Pareto set $\mathcal{X}_{T_{max}}$ and Pareto front  $\mathcal{Y}_{T_{max}}$
\end{algorithmic}
\end{algorithm}

\noindent {\bf Multi-arm Bandit Formulation.} We are given a portfolio of $M$ acquisition functions $\mathcal{P}$ = $\{\textsc{Af}_1, \cdots, \textsc{Af}_M\}$ and our goal is to adaptively select one AF in each iteration of PDBO. Each acquisition function in $\mathcal{P}$ corresponds to one arm, and we need to select an arm based on the performance of past selections for solving the MOO problem. Inspired by the previous work on AF selection and algorithm selection in the single objective setting \cite{hoffman2011portfolio,nopastvasconcelos2019no,setupvasconcelos2022self}, we propose an adaptive acquisition function selection approach for the MOO setting (see Algorithm 2). We explain the two main steps of this approach below.

\vspace{1.0ex}

\noindent {\bf Nominating Promising Candidates via Cheap MOO.}\label{cheapmo} In each PDBO iteration, we employ the updated statistical models $\{\textsc{Gp}_1 \cdots \textsc{Gp}_K \}$ and the portfolio $\mathcal{P}$ to generate $M$ sets of candidate points. For each $\textsc{Af}_j$, the algorithm constructs a cheap MOO problem with the objectives defined as $\textsc{Af}_j(\textsc{Gp}_1,\vec{x}) \cdots \textsc{Af}_j(\textsc{Gp}_K,\vec{x})$. Assuming minimization, the cheap MOO generate $M$ Pareto-sets of solutions $\mathcal{X}_c^{1} \cdots \mathcal{X}_c^{M}$ (one for each acquisition function) defined as:
\begin{align}
\label{eq2}
\mathcal{X}_c^{j}\leftarrow \arg \min_{\vec{x} \in \mathfrak{X}} \; \{\textsc{Af}_j(\textsc{Gp}_1,\vec{x}),\cdots, \textsc{Af}_j(\textsc{Gp}_K,\vec{x})\}
\end{align}
We employ the algorithm proposed by \citeauthor{deb2002nsga} to solve cheap MOO problems defined in \ref{eq2}.
From each $\mathcal{X}_c^{j}$, a batch $X^{B_j}_t \subset \mathcal{X}_c^{j}$ of $B$ points is selected in iteration $t$ using a diversity-aware approach described in Section \ref{DPP}. We denote the function evaluations of inputs in $X^{B_j}_t$  by $Y^{B_j}_t$.

\vspace{1.0ex}

\noindent {\bf Multi-Objective Reward Update.} We employ the relative hypervolume improvement as the quality metric to define our reward. The Pareto hypervolume captures the quality of nominated batches from a Pareto-dominance perspective and carries information about the represented trade-off between the multiple objectives. Defining the immediate reward as the raw Pareto hypervolume of the nominated batch $IR_t^j= HV(Y_t^j)$ can lead to an undesirable assessment of the suitable acquisition function since a batch of points can have a large hypervolume value at iteration $t$ but does not provide a significant improvement over the previous Pareto front $\mathcal{Y}_{t-1}$ while another batch nominated in iteration $t-1$ may have a smaller hypervolume $HV(Y_{t-1}^j)$ yet provides a higher improvement over $\mathcal{Y}_{t-2}$. Additionally, initial iterations might provide drastic hypervolume improvements even if the selected points are not optimal. To mitigate these issues, we use the relative hypervolume improvement as the immediate reward instead of the hypervolume. In each BO iteration $t$, the immediate reward $IR_t^j$ for each acquisition function $\textsc{Af}_j;  ~ \forall ~ j \in \{1\cdots M\}$ is defined as follows. 

\begin{equation} \label{immediate_reward}
    IR_t^j= \frac{HV(\tilde{\mathcal{Y}}_{t-1} \cup \tilde{Y}_t^j)-HV(\tilde{\mathcal{Y}}_{t-1})}{HV(\tilde{\mathcal{Y}}_{t-1})}
\end{equation}
where $\tilde{\mathcal{Y}}_{t-1}$  is the Pareto front at iteration $t-1$ and $\tilde{Y}_t^j$ is the evaluation of the batch of points $X_t^j$ nominated by $\textsc{Af}_j$ computed using the {\em predictive mean of the updated GP based statistical models}. As optimization progresses, statistical models provide a better representation of the objective functions, and the batches nominated by each AF become more informative about the quality of its selections. Hence, the impact of the early iterations may become irrelevant later. Consequently, we employ a {\em discounted cumulative reward} for each acquisition function $\textsc{Af}_j$ at iteration $t$ (denoted $g_{t}^j$).
\begin{equation} \label{cumulative_discounted_reward}
    g_{t}^j = \gamma g_{t-1}^j + IR_t^j = \sum_{t' \leq t} \gamma^{t'-1} IR_{t'}^j
\end{equation}
Where $\gamma$ is a decay rate that trades off past and recent improvements. The use of the decay rate can lead to equal or comparable rewards in advanced iterations, causing the algorithm to select the acquisition function randomly. To address this problem, the discounted cumulative reward $g_{t}^j$ should be normalized \cite{nopastvasconcelos2019no}. The rewards at the first iteration are all initialized to zero and then updated at each iteration $t$ using the following expression:
\begin{equation} \label{normalized_reward}
    r_{t}^j= \frac{ g_{t}^j - g_{max}^j }{g_{max}^j-g_{min}^j}
\end{equation}
where $g_{max}^j$ = $max(\{g^j_{t'}, t'\in [1,t] \})$ and  $g_{min}^j$ = $\min(\{g^j_{t'}, t'\in [1,t] \})$. Finally, the probability of selection of each AF at each iteration $t$ is calculated using equation \ref{prob_eq}.
\begin{equation}  \label{prob_eq}
p_t^j=\frac{exp(\eta r_{t}^j)}{\sum_{l=1}^M exp(\eta r_{t}^l)} ~ for ~ j=1 \dots M 
\end{equation}
The proposed MAB approach is an extension of the Hedge algorithm but is fundamentally distinct in its methodology and applicability. While it does generalize certain aspects of Hedge, it introduces critical variations that make it unique within the context of this study. Our approach diverges from the original Hedge algorithm by incorporating two key modifications: {\bf 1)} The use of discounted rewards and the application of normalization. Unlike the conventional Hedge, where rewards are typically used without any discounting or normalization, our strategy accounts for these factors, enhancing its adaptability to the specific problem domain. {\bf 2)} Another significant departure lies in the problem setting itself. Hedge was initially designed for single-objective optimization while our proposed approach solves the more challenging problem of multi-objective optimization. This shift in focus has substantial implications, as it requires an entirely different set of considerations and techniques to address the complexities introduced by multiple conflicting objectives.

\begin{algorithm}[t]
\caption{\textsc{Adaptive-AF-Select}}
\footnotesize
\textbf{Input}: training data $\mathcal{D}_t$; Batches nominated by different AFs $\{X^{B_j}_{t}, \forall j \in \{1 \cdots M\}\}$
\begin{algorithmic}[1] \label{afselect}
\IF{$t==0$}
\STATE $r_{t}^j=0 ~ \text{for} ~ j=1 \cdots M$ 
\ELSE
\STATE Compute rewards: $r_{t}^j ~ \text{for} ~ j=1 \cdots M$  using Equation \ref{normalized_reward}
\STATE Update probabilities: $p_t^j=\frac{exp(\eta r_{t}^j)}{\sum_{l=1}^M exp(\eta r_{t}^l)} ~ \text{for} ~ j=1 \cdots M$
\ENDIF
\STATE Select $\textsc{Af}_{j^*}$ according to the probabilities $\{p_t^j\}_{j=1}^{M}$ 
\STATE \textbf{return} acquisition function $\textsc{Af}_{j^*}$
\end{algorithmic}
\end{algorithm}

\vspace{0.5ex}

{\bf Remark.} It is important to note that we are using a full information multi-arm bandit strategy that requires the reward to be updated for all possible actions (i.e., for all acquisition functions) at each iteration.  
Since we evaluate only the batch nominated by the selected AF, we achieve this by computing the reward using the predictive mean functions of the {\em updated} surrogate models. For this reason, we solve a cheap MOO problem for each $\textsc{Af}_j \in \mathcal{P}$ even though the acquisition function is selected based on the data from the previous iterations. Algorithm \ref{afselect} provides the pseudocode of the adaptive AF selection based on the estimated rewards and probabilities. 

\subsection{DPPs for Batch Selection} \label{DPP}
We explain our approach to select a batch of diverse inputs by configuring DPPs to promote {\em output space diversity}.

\vspace{1.0ex}

\noindent \textbf{Determinantal Point Processes.} (DPPs) \cite{kulesza2012determinantal} are well-suited to model samples of a diverse subset of $k$ points from a predefined set of $n$ of points. Given a similarity function over a pair of points, DPPs assign a high probability of selection to the most diverse subsets according to the similarity function. The similarity function is typically defined as a kernel. 
Formally, given a DPP kernel defined over a set $S$ of $n$ elements, the $k$-DPP distribution is defined as selecting a subset $S'$ of size $k$ with $S'\subset S$ with probability proportional to the determinant of the kernel:
\begin{align}
    Pr(S')=\frac{det(\kappa(S'))}{\sum_{|s|=k} det(\kappa(s))}
\end{align}
\cite{kathuria2016batched} introduced the use of DPP for batch BO in the single-objective setting. Given the surrogate GP of the objective function, the covariance of the GP is used as the similarity function for the DPP. The approach selects the first point in the batch by maximizing the UCB acquisition function. Next, it creates a set of points referred to as relevance region by bounding the search space with the maximizer of the LCB acquisition function and manually discretizing the bounded space into a grid of $n$ points. DPP selects the remaining $(k-1)$ points out of the $n$ points in the relevance region. \cite{wang2017batched, oh2021batch,nava2022diversified} used similar techniques to apply DPPs to high-dimensional and discrete spaces. 

There exist two approaches to selecting a diverse subset with a fixed size via DPP: 1) Choosing the subset that maximizes the determinant referred to as {\em DPP-max}; and 2) Sampling with the determinantal probability measure referred to as {\em DPP-sample}. In this paper, we will focus on DPP-max. Although selecting the subset that maximizes the determinant is an NP-Hard problem, several approximations were proposed \cite{nikolov2015randomized}. A greedy strategy \cite{kathuria2016batched} provides an approximate solution and was adopted in several BO papers \cite{wang2017batched}

\vspace{1.0ex}

\noindent {\bf Limitations of Prior Work and Challenges for MOO.} We list the key limitations of prior methods for DPP-based batch selection in the single-objective setting as they are applicable to the multi-objective setting too. {\bf L1)} How can we overcome the limitation of selecting the first point separately regardless of the DPP diversity? {\bf L2)} How can we prevent the potential limitation of under-explored search space caused by the discretization of the space to create the relevance region set?

The key challenges to employing DPPs for batch selection in the MOO setting include {\bf C1)} How to define a kernel that captures the diversity for multiple objectives given that we have $K$ separate surrogate models and their corresponding kernel? {\bf C2)} How can the DPP kernel capture the Pareto front diversity and the trade-off between the objectives without compromising the Pareto quality of selected points?

\vspace{1.0ex}

\noindent \textbf{DPPs for Multi-objective BO.} We propose principled methods to address the limitations of prior work on DPPs for batch BO ({\bf L1} and {\bf L2}) and the challenges {\bf C1} and {\bf C2} for MOO.

\vspace{0.5ex}

\textit{Multi-objective Relevance Region.} Our proposed algorithm naturally mitigates the two limitations of the single-objective DPP approach. Recall that the first step of PDBO algorithm (Section \ref{cheapmo}) proposes to generate cheap approximate Pareto-sets which capture the trade-offs between the objectives in the utility space and might include, with high probability, optimal points \cite{Usemo, konakovic2020diversity}. We consider the cheap Pareto sets as the multi-objective relevance region. Our approach allows for generating the relevance region without manually discretizing the search space. Also, the full batch is selected from the multi-objective relevance region leading to a better diversity among all the points in the batch. 

\begin{algorithm}[t]
\caption{\textsc{DPP-Select}}
\footnotesize
\textbf{Input}: cheap Pareto set $\mathcal{X}_{c}$; surrogate models $\textsc{Gp}_1,\cdots ,\textsc{Gp}_k$; data $\mathcal{D}_{t}$.
\begin{algorithmic}[1] \label{alg:DPP}
\STATE $\vec{C}_t=[ HVC(\vec{y}), \; ~\forall ~ \vec{y} \in \mathcal{Y}_{t} ] $ // Calculate the individual hypervolume contribution for each input $\in \mathcal{D}_{t}$ 
\STATE $\kappa_{DPP} = \sum_{i = 1}^{K} \lambda_i \cdot \kappa_i ~ st. ~ \sum_{i=1}^{K} \lambda_i =1$ \\ // Construct $\kappa_{DPP}$ as a convex combination of function kernels 
\STATE $\Lambda^* =  \arg min_{\Lambda \in [0,1]^K} \log p(\vec{C}_t|\mathcal{X}_t) \quad s.t \sum_{i=1}^K \lambda_i = 1 $ \\// Select $\lambda_i$ values by maximizing the LML
\STATE Use the fitted $\kappa_{DPP}$ kernel to select the most diverse points $X^B_{t}$ from the cheap Pareto set $\mathcal{X}_{c}$ via DPP-max \\
\STATE \textbf{return} the selected $B$ inputs, $X^B_{t}$
\end{algorithmic}
\end{algorithm}

\vspace{0.5ex}

\textit{Multi-objective DPP Kernel Fitting.} To overcome the challenges of using DPPs in the MOO setting, we build a new kernel $\kappa_{dpp}$ that is defined as a convex combination of the $K$ kernels of the statistical models (GPs) representing each of the black-box objective functions. Let $\Lambda = [\lambda_1, \cdots, \lambda_K]$ be a vector of size $K$ where each $\lambda_i$ corresponds to the convex combination scalar associated with kernel $\kappa_i$ of the objective function $f_i$. The DPP kernel $\kappa_{DPP}$ is defined as: 
\begin{equation} \label{eq_kdpp}
    \kappa_{DPP} = \sum_{i = 1}^{K} \lambda_i \cdot \kappa_i ~~ st. ~ \sum_{i=1}^{i=K} \lambda_i =1
\end{equation}
The hyperparameters of the kernels $\kappa_i ~ \forall~ i \in \{1, 2, \cdots K\}$ are fixed during the fitting of the GPs. To set the convex combination scalars $\Lambda$ in a principled manner that promotes diverse batch selection, we propose to set the $\Lambda$ by maximizing the log marginal likelihood of selecting points with the highest individual hypervolume contribution. 

The individual hypervolume contribution (HVC) of each point in the evaluated Pareto front $\mathcal{Y}_t$ (via evaluated training data $D_{t}$) is the reduction in hypervolume if the point is removed from the Pareto front. HVC is considered a Pareto front (PF) diversity indicator \cite{daulton2022multiobjective}. Points in crowded regions of the Pareto front have smaller HVC values. Therefore, more Pareto front points with high HVC indicate more output space coverage and consequently, higher PF diversity. \looseness=-1
 \begin{equation} \label{contributionhv}
        HVC(\vec{y}) = HV(\mathcal{Y}_{t}) - HV(\mathcal{Y}_{t} \setminus \{\vec{y}\}) ~\forall ~ \vec{y} \in \mathcal{Y}_{t}
\end{equation}
Given equation \ref{contributionhv}, we can construct the training set for the fitting of $\Lambda$. Let $\vec{C}_t$ = $[ HVC(\vec{y}); ~\forall ~ \vec{y} \in \mathcal{Y}_{t} ] $ be a vector of the individual HV contributions of evaluated points $\mathcal{X}_t \in \mathcal{D}_t$. 
\begin{align}
    &\Lambda^* =  \text{argmin}_{\Lambda \in [0,1]^K} \log p(\vec{C}_t|\mathcal{X}_t) \label{LMLopt} \quad \text{s.t.} \sum_{i=1}^K \lambda_i = 1
\end{align}
\begin{align}
   \text{where} ~ \log p(\vec{C}_t|\mathcal{X}_t) = & -\frac{1}{2}\vec{C}_t^T\mathcal{K_{DPP}}^{-1}\vec{C}_t \nonumber \\
    & - \frac{1}{2}\log|\mathcal{K_{DPP}}| -\frac{n}{2}\log2\pi \nonumber
\end{align}
Algorithm \ref{alg:DPP} provides the pseudo-code for selecting the diverse batch from a given candidate/cheap Pareto set $\mathcal{X}_c$.

\section{Theoretical Analysis}

Prior work developed regret bound for different single objective acquisition functions including UCB \cite{srinivas2009Gaussian,Usemo}. However, the theoretical analysis of our proposed MAB algorithm is more challenging as it involves input selection based on different acquisition functions at each iteration $t$. 
The choices made at any given iteration $t$ influence the state and subsequent rewards of all future iterations \cite{hoffman2011portfolio} and therefore there is a need to adapt prior theoretical analysis. Additionally, regret bounds for Hedge MAB strategies have been developed independently outside the context of acquisition function selection \cite{cesa2006prediction}. We follow similar steps suggested by \citet{hoffman2011portfolio} to derive a suitable regret bound for our MOO setting.

We assume maximization of objectives and further assume that the UCB acquisition function is in the portfolio of acquisition functions used by PDBO. We make this choice for the sake of clarity and ease of readability as we build our theoretical analysis on prior seminal work \cite{srinivas2009Gaussian,hoffman2011portfolio}. Notably, this is not a restrictive assumption, and with minimal mathematical manipulations, the same derived regret bound holds for the case of minimization with the LCB acquisition function being in the portfolio instead.  

To simplify the proof and solely for the sake of theoretical regret bound, we consider the instant reward at iteration $t$ to be the sum of predictive means of the Gaussian processes
\begin{equation}\label{eq:IRtheo1}
 IR_t= \sum_{i=1}^k \mu_{i,t-1}(\vec{x}_t)   
\end{equation}
where $\mu_{i,t-1}$ is the posterior mean of function $i$. The cumulative reward over ${T_{max}}$ iterations that would have been obtained using acquisition function $AF_j$ is defined as: 
\begin{equation}
	r_{T_{max}}^j = \sum_{t=1}^{T_{max}} IR_t = \sum_{t=1}^{T_{max}} \sum_{i=1}^k \mu_{i,t-1}(\vec{x}_t^j). \label{new_IR}
\end{equation}
It is important to note that in our proposed PDBO algorithm, we use a different and better instant reward $IR_t$ and cumulative reward $r_{T_{max}}^i$. The rewards in equation \ref{eq:IRtheo1} is a design choice to achieve the following regret bound. In Section \ref{ablation_theo}, we provide a discussion accompanied by an experimental ablation study comparing the reward function used in theory to the reward function used in PDBO.
\begin{theorem}
\label{theorem}
 Let $\vec{x}^*$ be a point in the optimal Pareto set $\mathcal{X^*}$. Let $\vec{x}$ be a point in the Pareto set $\mathcal{X}_t$ estimated by PDBO via solving cheap MOO problem at the $t^{th}$ iteration. Let the cumulative regret for the multiple objectives be defined as $R_{T_{max}}(\vec{x}^*) = \sum_{t=1}^{T_{max}} \sum_{i=1}^k  f_i(\vec{x}^*) - f_i(\vec{x}_t)$
 
Assuming maximization of objectives and that UCB is in the portfolio of acquisition functions, let $\beta_t$ be the UCB parameter and $\gamma^i_{T_{max}}$ be the bound on the information gained for function $i$ at points selected by PDBO after ${T_{max}}$ iterations, then with probability at least $1-\delta$ the cumulative regret is bounded by
\begin{align*}
	R_{T_{max}}(\vec{x}^*) 	
	&\leq
	\mathcal O(\sqrt{{T_{max}}}) + 
	\sum_{t=1}^{T_{max}} \sum_{i=1}^k \sqrt{\beta_t}\sigma_{i,t-1}(\vec{x}_t^\mathrm{UCB})\\
 &+ 
	\sqrt{C_i{T_{max}}\beta_{T_{max}}\gamma^i_{T_{max}}}.
\end{align*}
\end{theorem}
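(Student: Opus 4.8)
The plan is to prove the bound by marrying two ingredients: (i) the Gaussian-process confidence-interval argument of \citet{srinivas2009Gaussian}, applied separately to each of the $k$ surrogates, and (ii) the ``no regret against the best arm in hindsight'' guarantee of the Hedge-style full-information bandit driving the acquisition-function choice, in the spirit of \citet{hoffman2011portfolio}. For the theoretical statement I take the batch size to be $1$ (the general case only duplicates the per-slot bound, which is absorbed into the constants and the information-gain term), I write $\vec{x}_t$ for the point PDBO commits to at iteration $t$, and $\vec{x}_t^{\mathrm{UCB}}$ for the maximiser of the scalarised UCB acquisition $\sum_{i=1}^{k}\big(\mu_{i,t-1}(\cdot)+\sqrt{\beta_t}\,\sigma_{i,t-1}(\cdot)\big)$; since such a scalarisation optimum is Pareto-optimal for the cheap MOO problem solved with UCB, it is a legitimate member of the UCB candidate set whose reward the bandit compares against.

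First I would fix the high-probability event: applying the GP tail bound of \citet{srinivas2009Gaussian} to each $\textsc{Gp}_i$ with a union bound over $i$ (replacing $\delta$ by $\delta/k$, which only inflates $\beta_t$ by an additive constant) gives, with probability at least $1-\delta$, $|f_i(\vec{x})-\mu_{i,t-1}(\vec{x})|\le\sqrt{\beta_t}\,\sigma_{i,t-1}(\vec{x})$ for all $t$, $i$ and $\vec{x}\in\mathfrak{X}$. Conditioning on this event, I decompose the per-iteration regret: by optimism, $\sum_i f_i(\vec{x}^*)\le\sum_i\big(\mu_{i,t-1}(\vec{x}^*)+\sqrt{\beta_t}\sigma_{i,t-1}(\vec{x}^*)\big)\le\sum_i\big(\mu_{i,t-1}(\vec{x}_t^{\mathrm{UCB}})+\sqrt{\beta_t}\sigma_{i,t-1}(\vec{x}_t^{\mathrm{UCB}})\big)$, the last step using that $\vec{x}_t^{\mathrm{UCB}}$ maximises the scalarised UCB; and by the lower confidence bound, $f_i(\vec{x}_t)\ge\mu_{i,t-1}(\vec{x}_t)-\sqrt{\beta_t}\sigma_{i,t-1}(\vec{x}_t)$. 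Subtracting and summing over $i$ and $t$ gives
\[
R_{T_{max}}(\vec{x}^*)\le \mathrm{(I)}+\mathrm{(II)}+\mathrm{(III)},
\]
with $\mathrm{(I)}=\sum_{t}\sum_{i}\big(\mu_{i,t-1}(\vec{x}_t^{\mathrm{UCB}})-\mu_{i,t-1}(\vec{x}_t)\big)$, $\mathrm{(II)}=\sum_t\sum_i\sqrt{\beta_t}\sigma_{i,t-1}(\vec{x}_t^{\mathrm{UCB}})$, and $\mathrm{(III)}=\sum_t\sum_i\sqrt{\beta_t}\sigma_{i,t-1}(\vec{x}_t)$.

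Term $\mathrm{(II)}$ already appears verbatim in the claimed bound and is left untouched. For $\mathrm{(I)}$, note it equals $r_{T_{max}}^{\mathrm{UCB}}-r_{T_{max}}^{\mathrm{PDBO}}$ for the theoretical reward of \eqref{new_IR}, hence is at most $\max_j r_{T_{max}}^j - r_{T_{max}}^{\mathrm{PDBO}}$ since UCB is in the portfolio; assuming the objectives (hence the posterior means) are uniformly bounded, the per-round rewards lie in a fixed interval and the Hedge external-regret bound --- in the form adapted by \citet{hoffman2011portfolio} to reward sequences that depend on the algorithm's own past selections through the GP posteriors --- yields $\mathrm{(I)}=\mathcal{O}(\sqrt{T_{max}\log M})=\mathcal{O}(\sqrt{T_{max}})$. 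For $\mathrm{(III)}$, since the inputs fed to $\textsc{Gp}_i$ are exactly the $\vec{x}_t$ chosen by PDBO, the Srinivas et al. variance-sum argument gives $\sum_{t=1}^{T_{max}}\beta_t\sigma_{i,t-1}^2(\vec{x}_t)\le C_i\,\beta_{T_{max}}\gamma^i_{T_{max}}$ with $\gamma^i_{T_{max}}$ the maximum information gain for function $i$ at those points, and Cauchy--Schwarz turns this into $\sum_t\sqrt{\beta_t}\sigma_{i,t-1}(\vec{x}_t)\le\sqrt{C_i T_{max}\beta_{T_{max}}\gamma^i_{T_{max}}}$; summing over $i$ bounds $\mathrm{(III)}$, and collecting the three estimates gives the stated inequality.

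The step I expect to be the main obstacle is making the GP-UCB optimism argument rigorous in the multi-objective, Pareto-set setting of Step 2 together with the Hedge adaptation in Step 3: one must specify which element of the UCB-generated cheap Pareto set is designated $\vec{x}_t^{\mathrm{UCB}}$ and which element of PDBO's Pareto set it ``commits'' to, so that $\mathrm{(I)}$ is literally a difference of two entries of the bandit's reward vector; and one must justify that the Hedge regret guarantee survives the fact that the realized reward sequence is not oblivious but is generated by the very GP posteriors the bandit shapes --- precisely the subtlety that \citet{hoffman2011portfolio} address and that an off-the-shelf bandit bound does not. Secondary care is needed for the batch size ($B>1$) and the DPP sub-selection, which I would handle by noting that they leave the worst-case per-slot bound and the information-gain term intact.
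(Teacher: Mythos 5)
Your proposal is correct and follows essentially the same route as the paper's proof: the identical three-term decomposition (Hedge external regret for the posterior-mean gap, the confidence-width sum at $\vec{x}_t^{\mathrm{UCB}}$, and the information-gain/Cauchy--Schwarz bound on the widths at the evaluated points), resting on the same lemmas from \citet{srinivas2009Gaussian} and \citet{hoffman2011portfolio}. The only cosmetic difference is that you justify $\sum_i f_i(\vec{x}^*)\le\sum_i UCB_{i,t}(\vec{x}_t^{\mathrm{UCB}})$ via the linear-scalarisation maximiser, whereas the paper uses a coordinatewise Pareto-dominance argument over the cheap UCB Pareto set; both yield the same inequality after summing over $i$.
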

We provide complete proof in the Appendix. The theorem suggests that our regret is bounded by two sublinear terms and another term that might include points suggested by UCB but not necessarily selected by the Hedge strategy. Additionally, the theoretical proof accounts only for sequential input selection. Extension to batch selection using DPP is possible by carefully accounting for results introduced by \citet{kathuria2016batched}.

\begin{figure*}[t]
\begin{subfigure}
     \centering
     \begin{subfigure}
     \centering
     \begin{subfigure}
         \centering
         \includegraphics[width=0.24\textwidth]{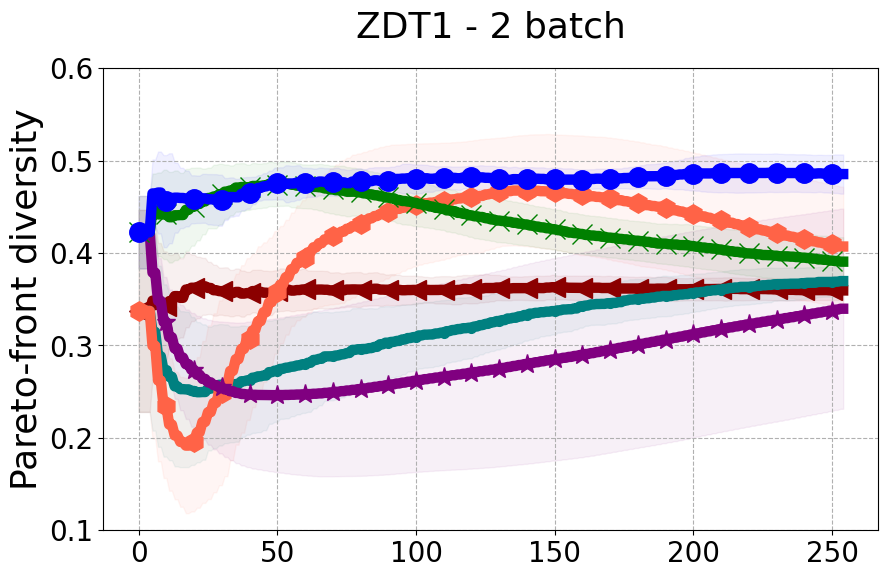}
     \end{subfigure}
     \hfill
     \begin{subfigure}
         \centering
         \includegraphics[width=0.24\textwidth]{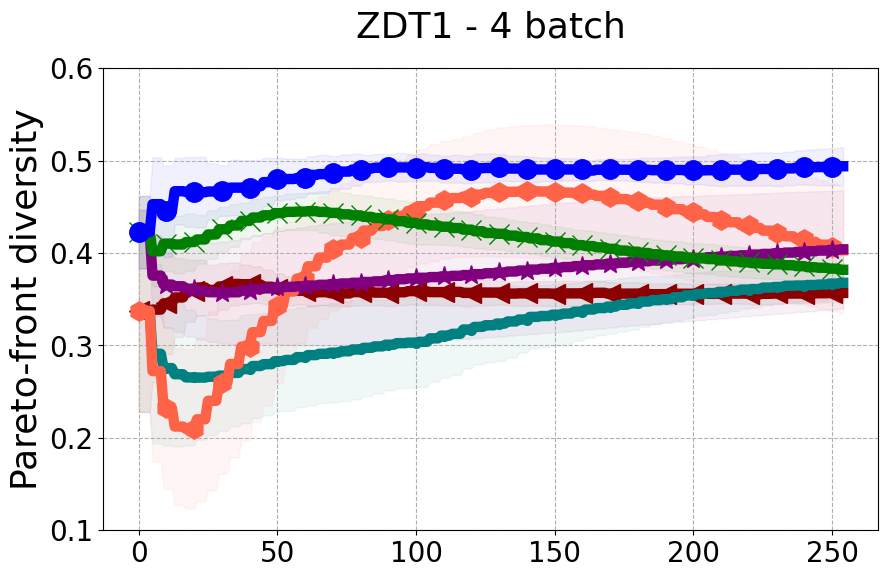}
     \end{subfigure}
     \hfill
     \begin{subfigure}
         \centering
         \includegraphics[width=0.24\textwidth]{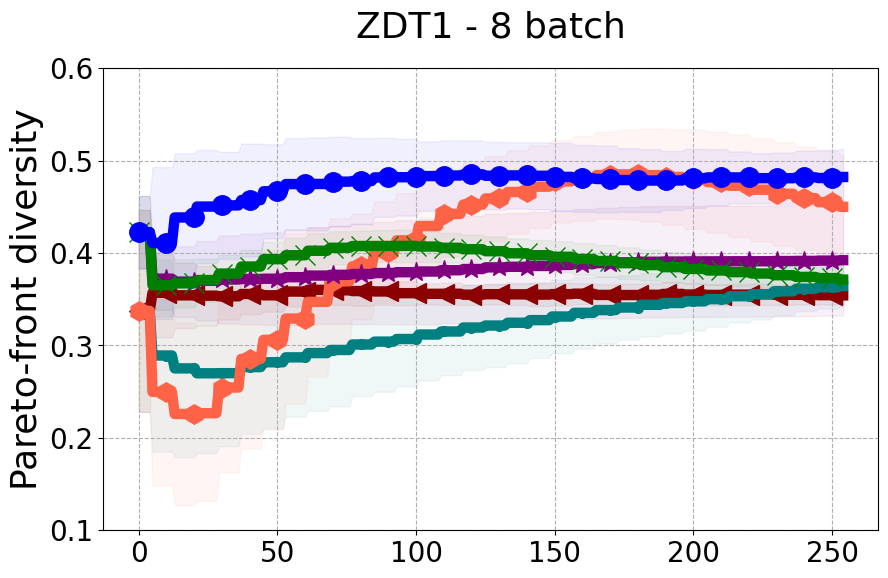}
     \end{subfigure}
     \hfill
     \begin{subfigure}
         \centering
         \includegraphics[width=0.24\textwidth]{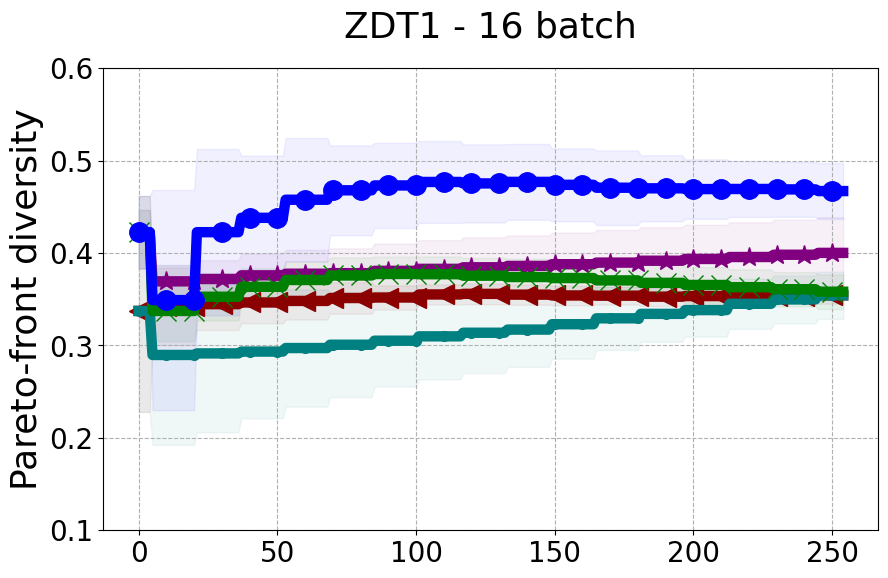}
     \end{subfigure}
        \label{fig:paper-div-zdt1}
    \end{subfigure}
        \centering
\centering
     \begin{subfigure}
         \centering
         \includegraphics[width=0.24\textwidth]{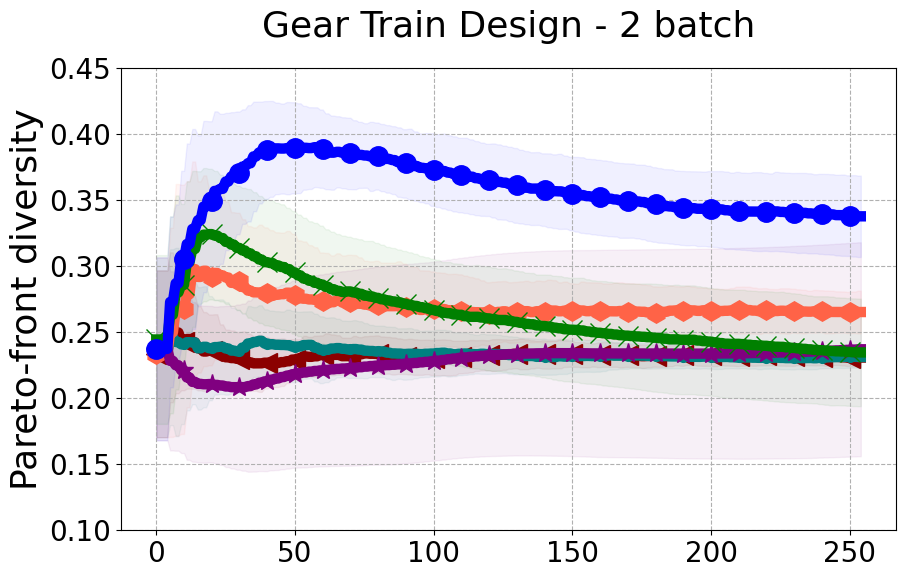}
     \end{subfigure}
     \hfill
     \begin{subfigure}
         \centering
         \includegraphics[width=0.24\textwidth]{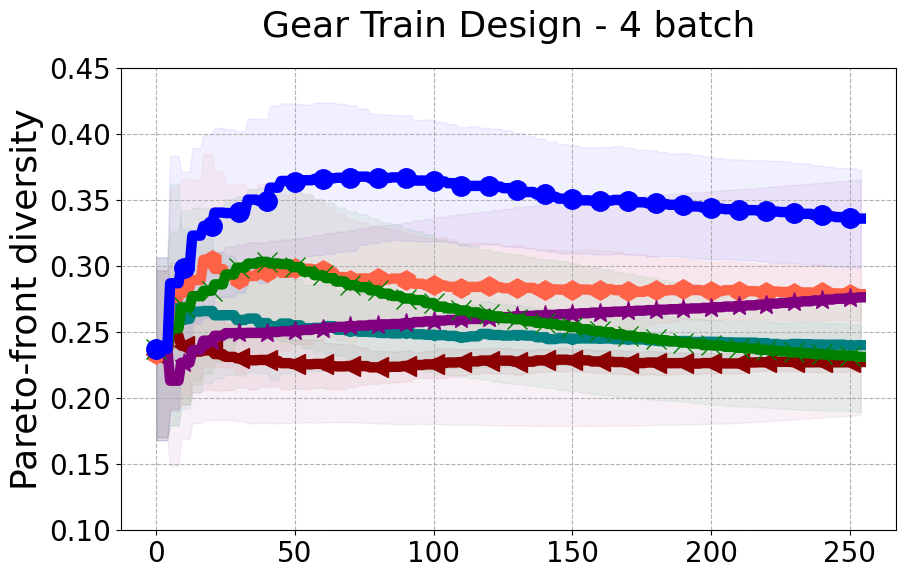}
     \end{subfigure}
     \hfill
     \begin{subfigure}
         \centering
         \includegraphics[width=0.24\textwidth]{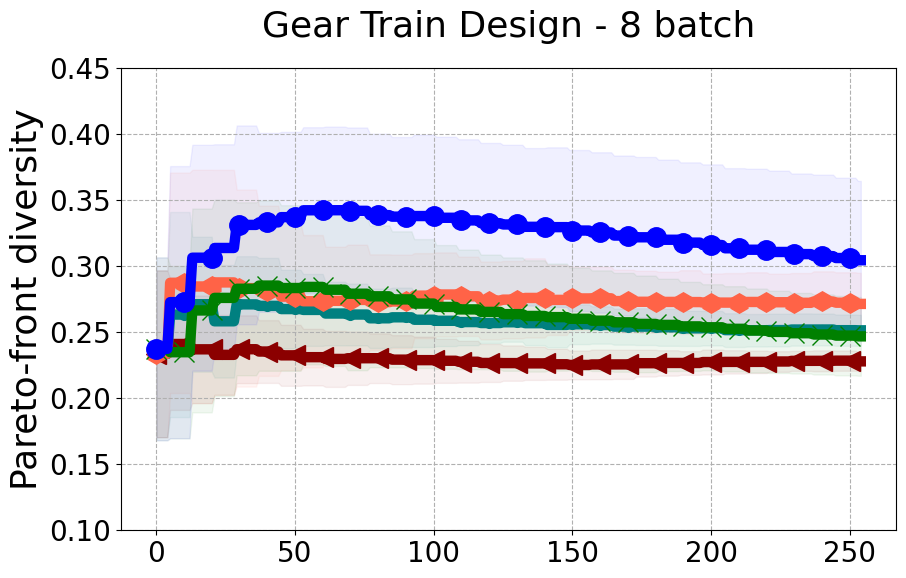}
     \end{subfigure}
     \hfill
     \begin{subfigure}
         \centering
         \includegraphics[width=0.24\textwidth]{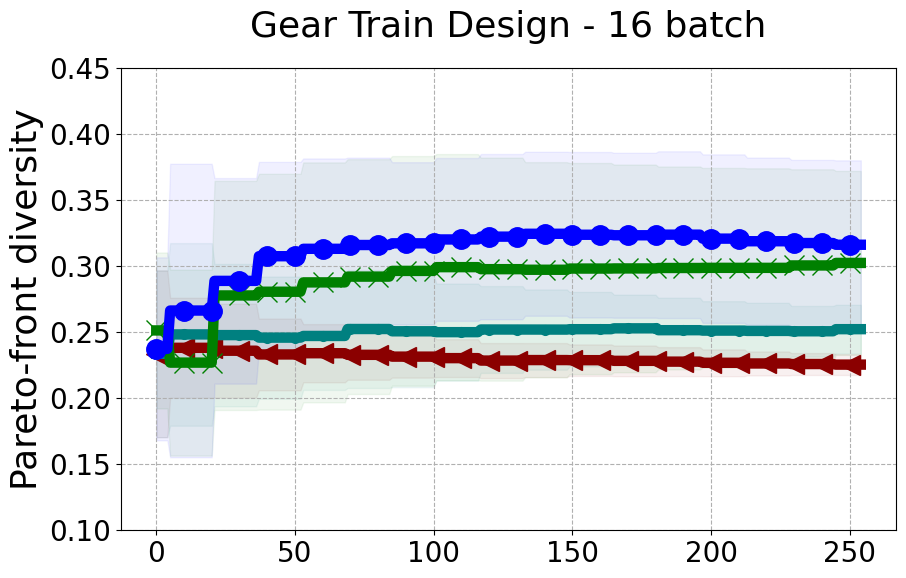}
     \end{subfigure}
        \label{fig:paper-div-re6}
    \end{subfigure}
    \centering
    \begin{subfigure}
    \centering
     \begin{subfigure}
         \centering
         \includegraphics[width=0.24\textwidth]{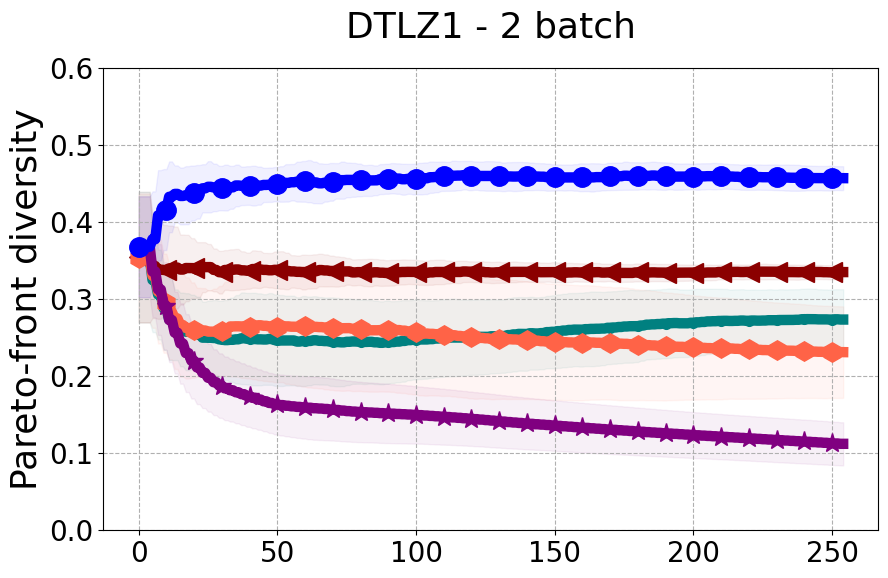}
     \end{subfigure}
     \hfill
     \begin{subfigure}
         \centering
         \includegraphics[width=0.24\textwidth]{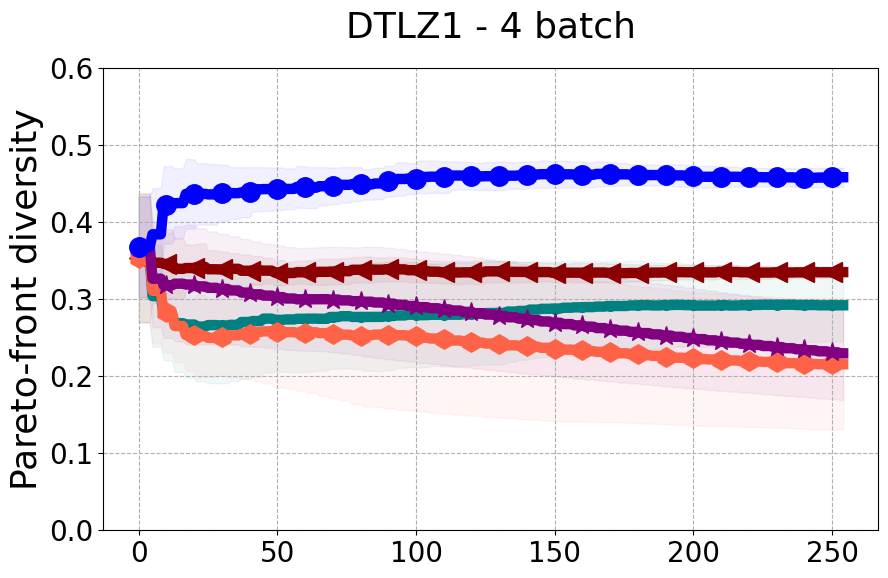}
     \end{subfigure}
     \hfill
     \begin{subfigure}
         \centering
         \includegraphics[width=0.24\textwidth]{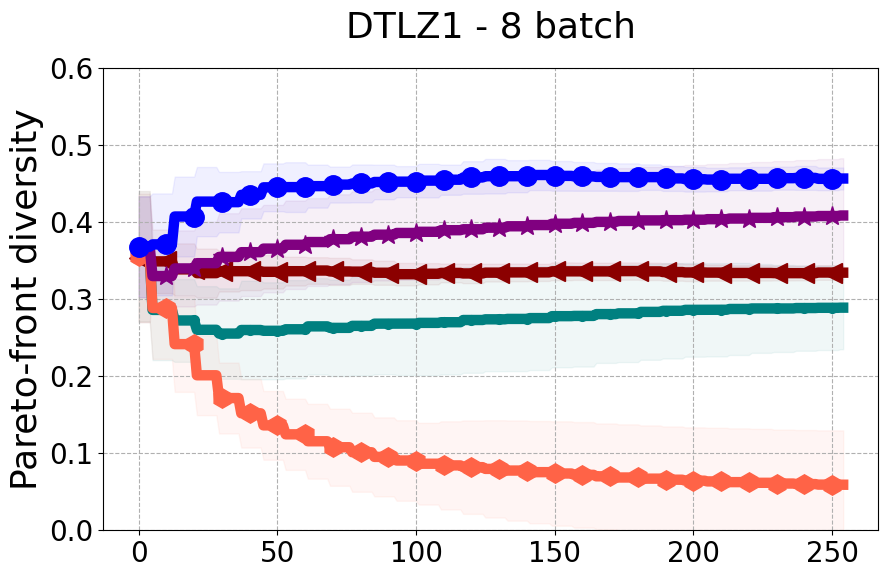}
     \end{subfigure}
     \hfill
     \begin{subfigure}
         \centering
         \includegraphics[width=0.24\textwidth]{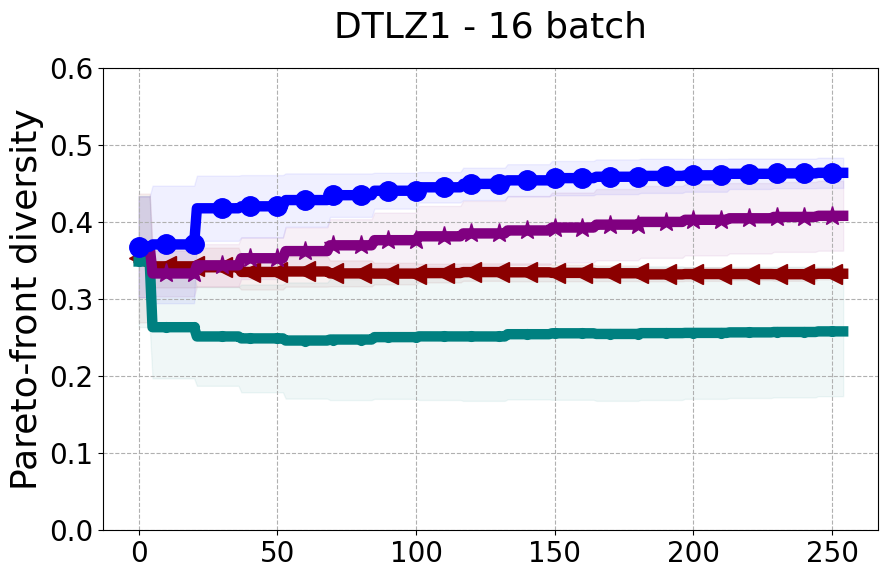}
     \end{subfigure}
        \label{fig:paper-div-dtlz1}
    \end{subfigure}
    \centering
    \begin{subfigure}
         \centering
     \begin{subfigure}
         \centering
         \includegraphics[width=0.24\textwidth]{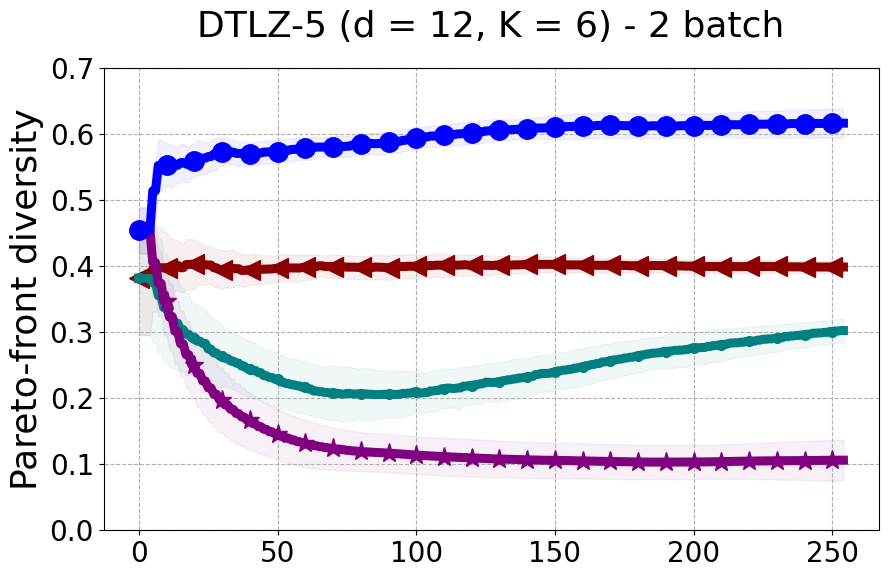}
     \end{subfigure}
     \hfill
     \begin{subfigure}
         \centering
         \includegraphics[width=0.24\textwidth]{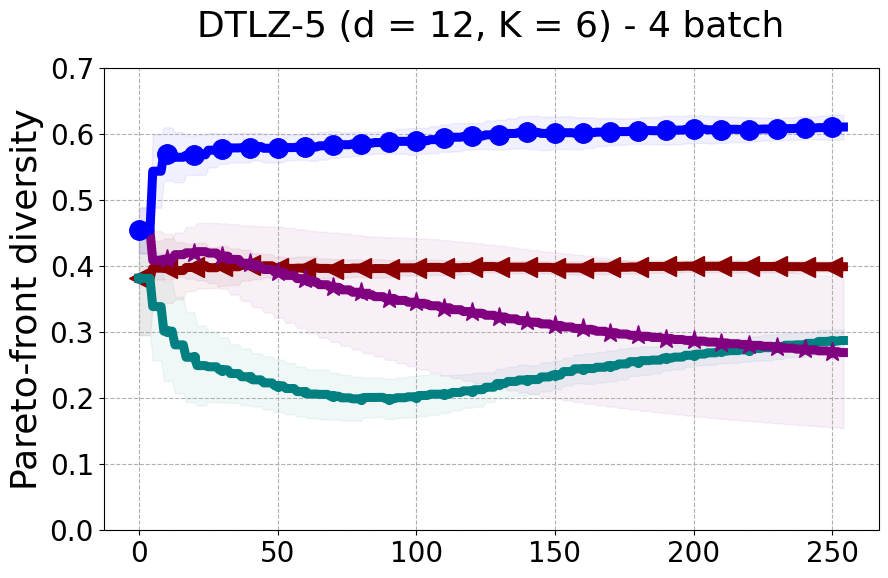}
     \end{subfigure}
     \hfill
     \begin{subfigure}
         \centering
         \includegraphics[width=0.24\textwidth]{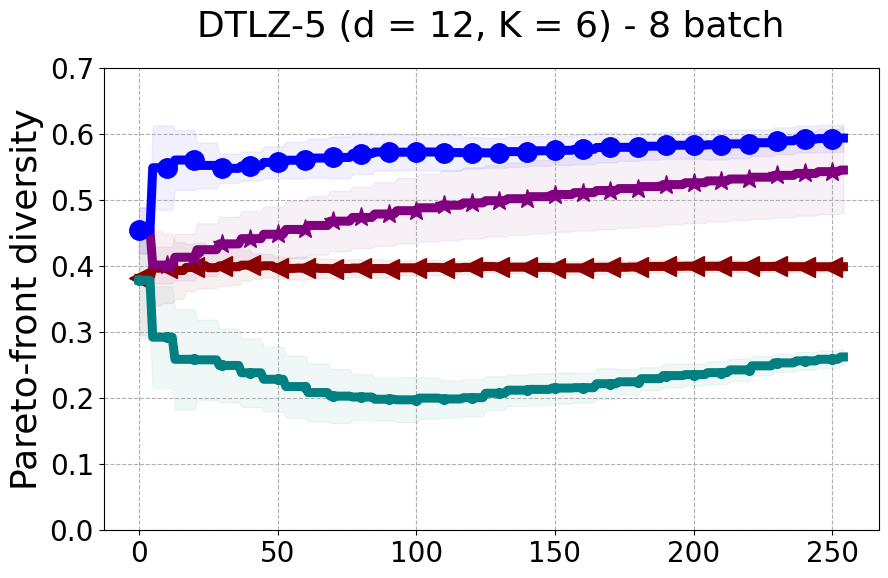}
     \end{subfigure}
     \hfill
     \begin{subfigure}
         \centering
         \includegraphics[width=0.24\textwidth]{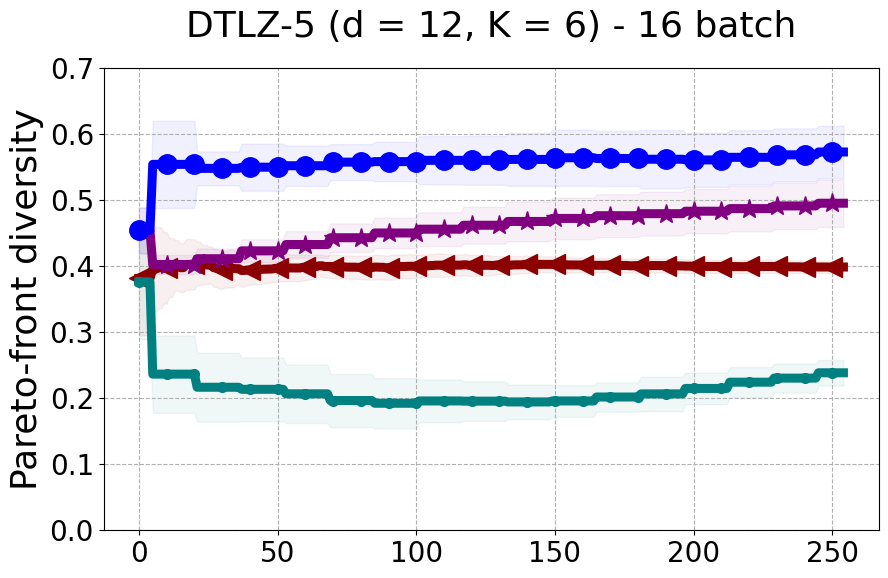}
     \end{subfigure}
    \label{fig:paper-div-dtlz5}
    \end{subfigure}
    \centering
      \begin{subfigure}
         \centering
         \includegraphics[width=0.98\textwidth]{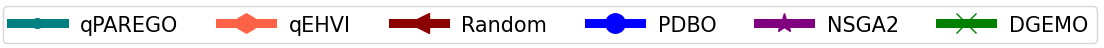}
     \end{subfigure}
     \caption{Diverse Pareto front (DPF) results evaluated on multiple benchmarks and batch sizes.}
     \label{fig:paper-dpf}
     \vspace{-2mm}
\end{figure*}

\subsection{Analysis and Ablation Study}\label{ablation_theo}

To simplify the proof for regret bound, we defined a new instant reward and cumulative reward in equations \ref{eq:IRtheo1} and \ref{new_IR} that are different from the rewards we use for PDBO in equations  \ref{immediate_reward}, \ref{cumulative_discounted_reward}, and \ref{normalized_reward}. The goal of this section was to define a reward that allows tractable theoretical analysis. However, this reward is not intuitive and has several practical issues: {\bf 1)} It is defined as a summation over predictive means of the functions and does not provide any insight on the quality of the selected points; {\bf 2)} It does not account for improvement with respect to previous iterations which is uninformative in terms of the quality of points selected by different acquisition functions at different iterations; {\bf 3)} It is non-discounted and does not account for the importance of the iterative progress of the input selection; and  {\bf 4)} It is not normalized and therefore can lead to random selection in the advanced iterations. All stated issues have been addressed by our proposed reward function which we carefully designed to be intuitive and informative about the different acquisition strategies and to mitigate potential numerical issues. We performed an ablation study to compare the performance of PDBO when using our proposed reward function and the reward function in the theoretical proof. Our results in the Appendix show superior performance for our proposed reward strategy.

\section{Experiments and Results}
\label{experimental-section}
We provide experimental details and compare PDBO to baseline methods on multiple MOO benchmarks and varying batch sizes. We evaluate all methods using the hypervolume indicator and diversity of Pareto front (DPF) measure. 

\begin{figure*}[t]
        \centering
     \begin{subfigure}
         \centering
         \includegraphics[width=0.23\textwidth]{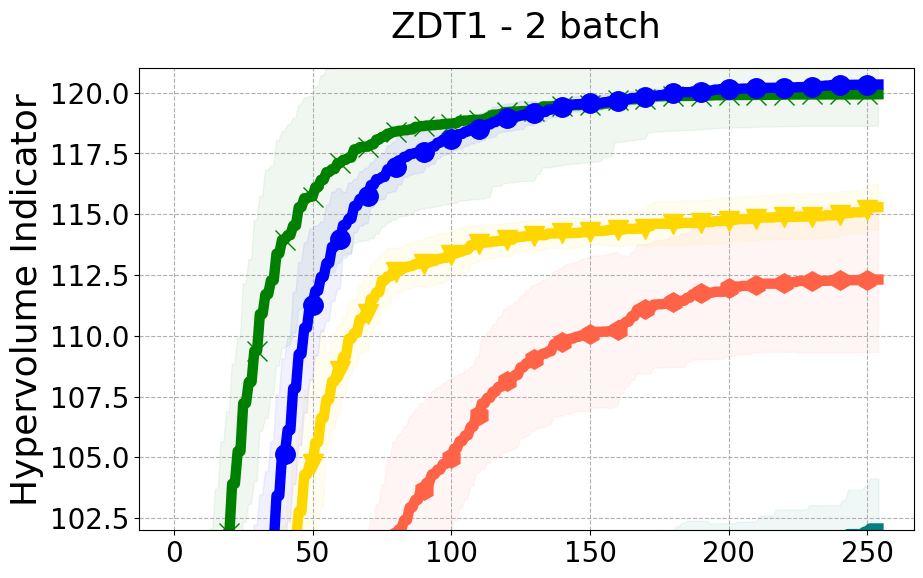}
     \end{subfigure}
     \hfill
     \begin{subfigure}
         \centering
         \includegraphics[width=0.23\textwidth]{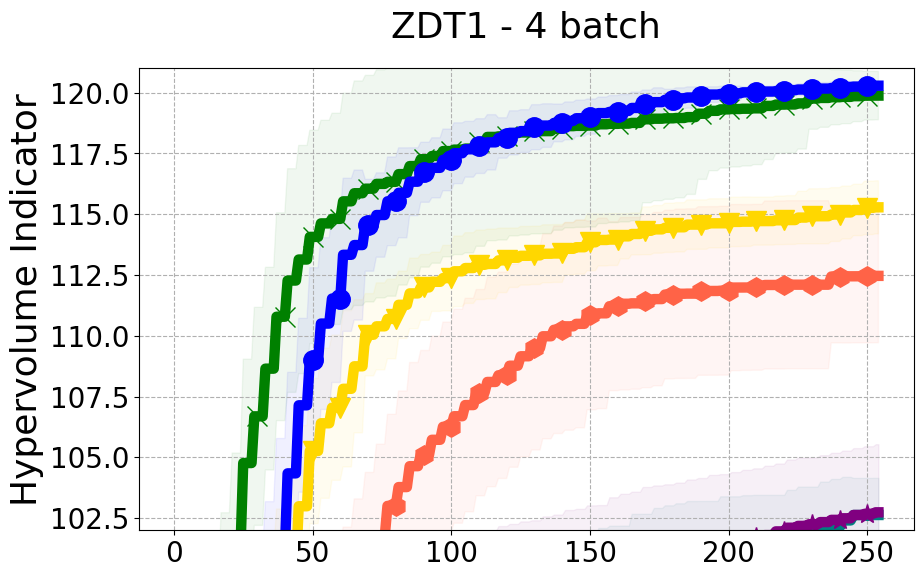}
     \end{subfigure}
     \hfill
     \begin{subfigure}
         \centering
         \includegraphics[width=0.23\textwidth]{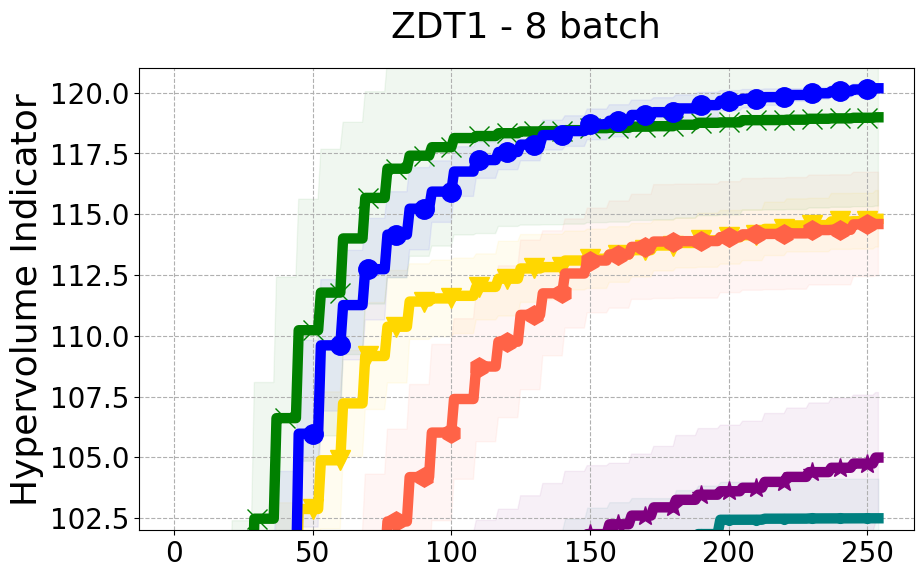}
     \end{subfigure}
     \hfill
     \begin{subfigure}
         \centering
         \includegraphics[width=0.23\textwidth]{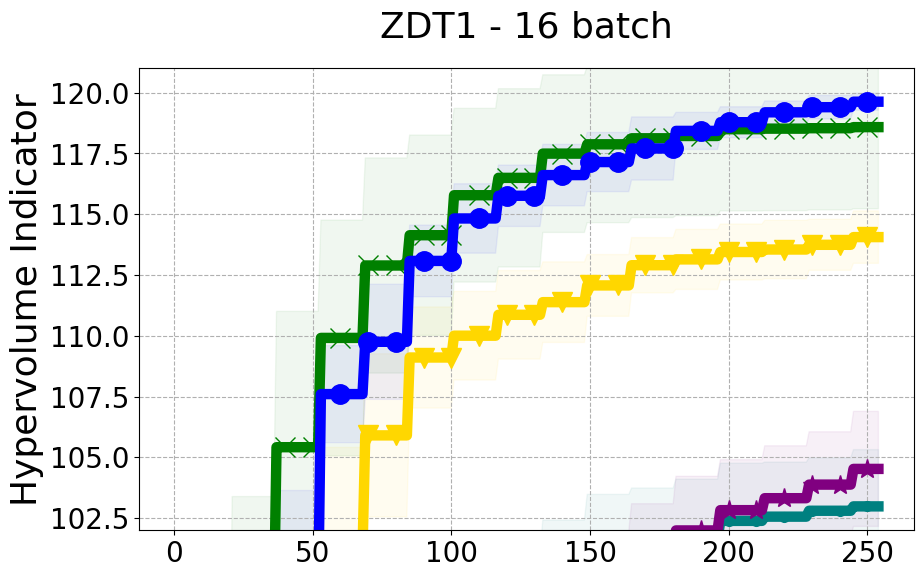}
     \end{subfigure}
        \label{fig:paper-zdt1}
        \centering
     \begin{subfigure}
         \centering
         \includegraphics[width=0.23\textwidth]{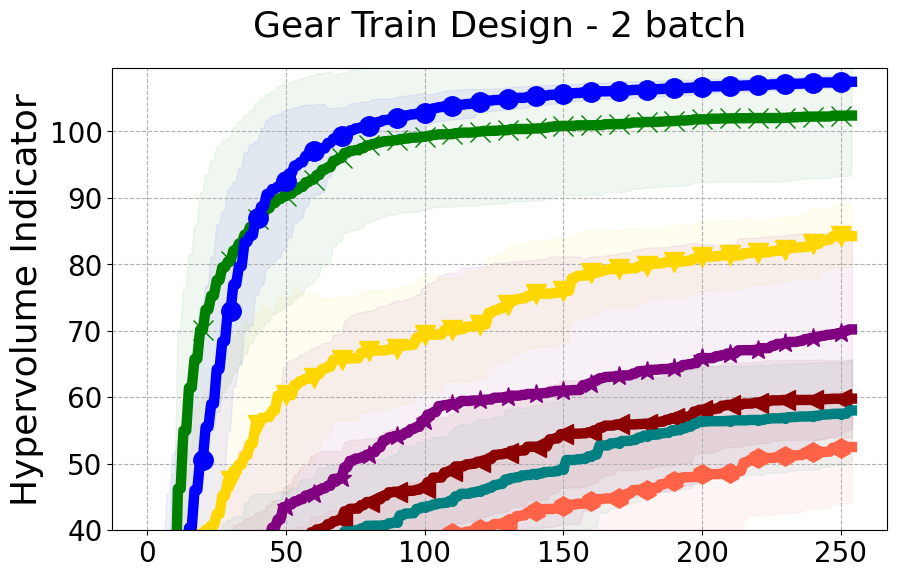}
     \end{subfigure}
     \hfill
     \begin{subfigure}
         \centering
         \includegraphics[width=0.23\textwidth]{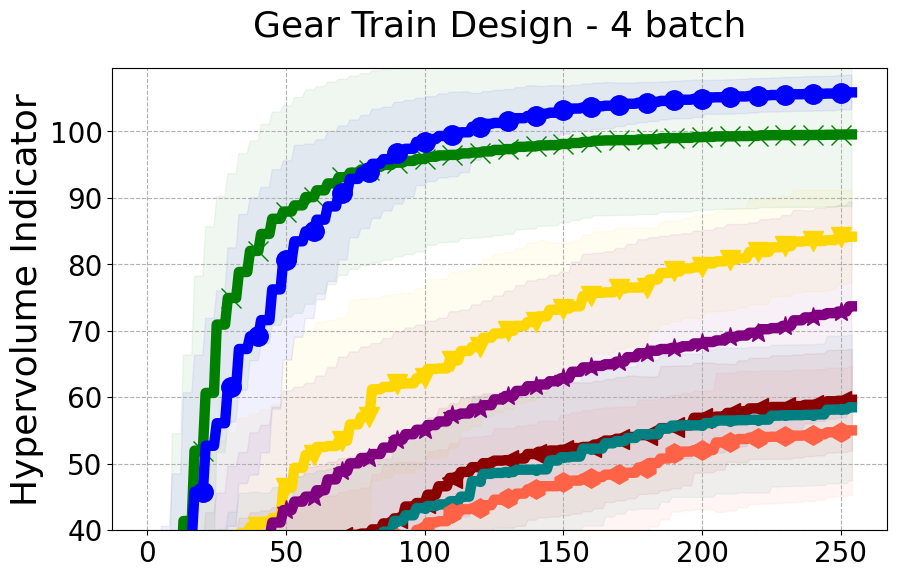}
     \end{subfigure}
     \hfill
     \begin{subfigure}
         \centering
         \includegraphics[width=0.23\textwidth]{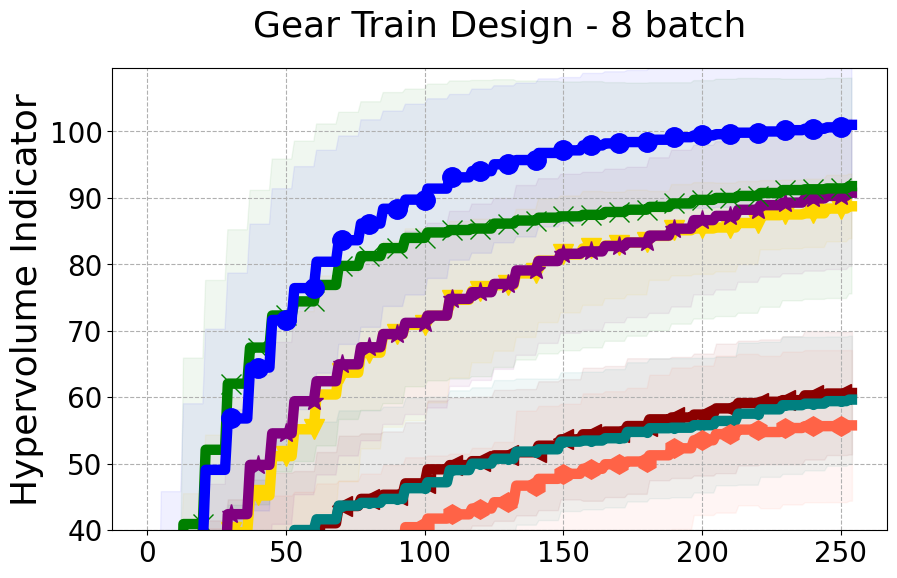}
     \end{subfigure}
     \hfill
     \begin{subfigure}
         \centering
         \includegraphics[width=0.23\textwidth]{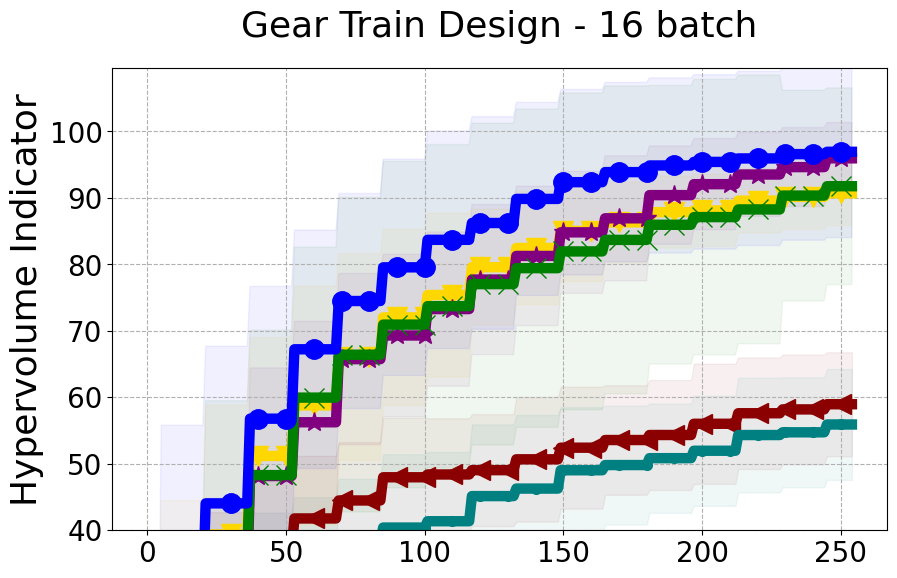}
     \end{subfigure}
        \label{fig:paper-re6}
        \centering
     \begin{subfigure}
         \centering
         \includegraphics[width=0.23\textwidth]{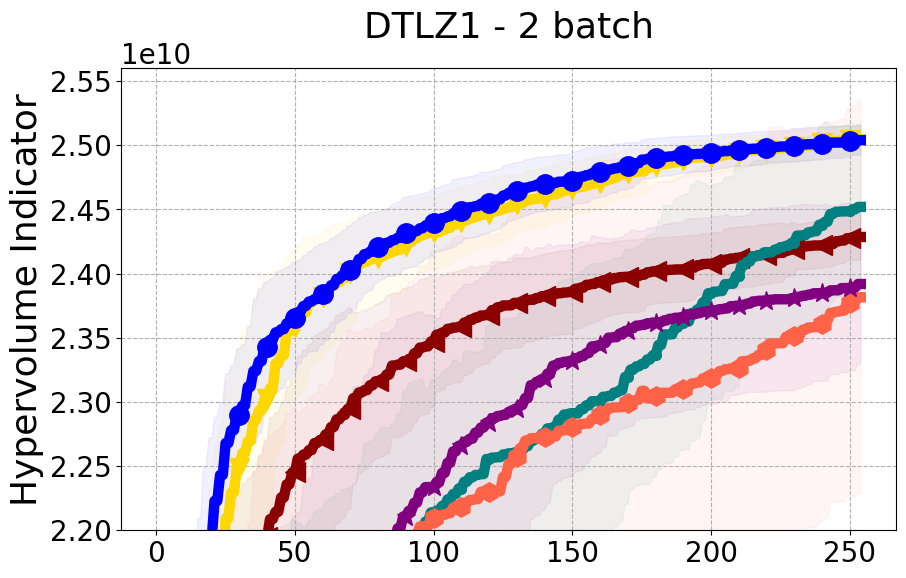}
     \end{subfigure}
     \hfill
     \begin{subfigure}
         \centering
         \includegraphics[width=0.23\textwidth]{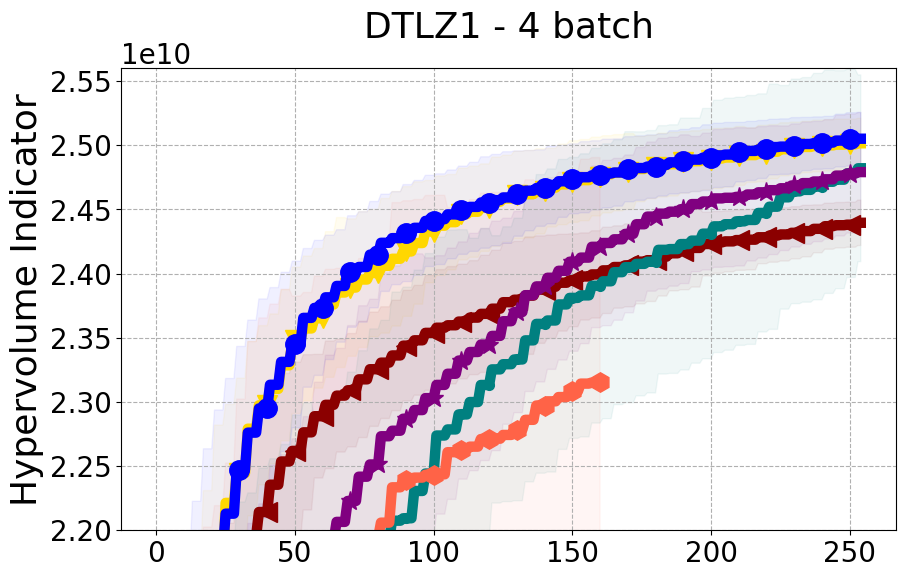}
     \end{subfigure}
     \hfill
     \begin{subfigure}
         \centering
         \includegraphics[width=0.23\textwidth]{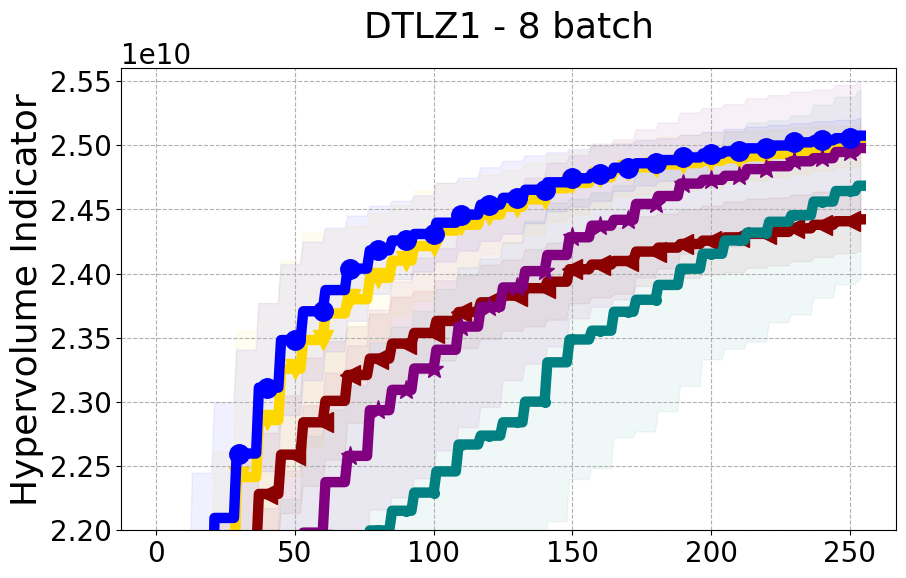}
     \end{subfigure}
     \hfill
     \begin{subfigure}
         \centering
         \includegraphics[width=0.23\textwidth]{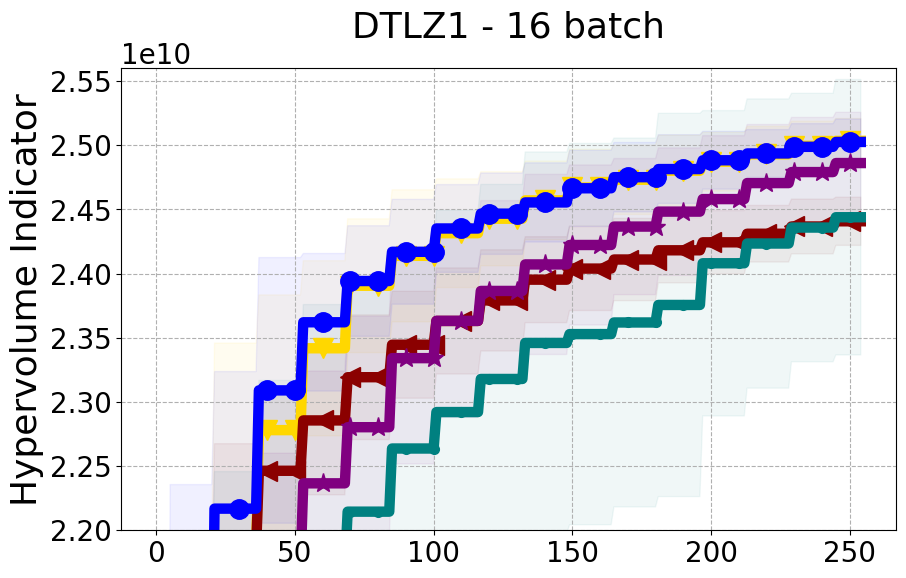}
     \end{subfigure}
        \label{fig:paper-dtlz1}
        \centering
     \begin{subfigure}
         \centering
         \includegraphics[width=0.23\textwidth]{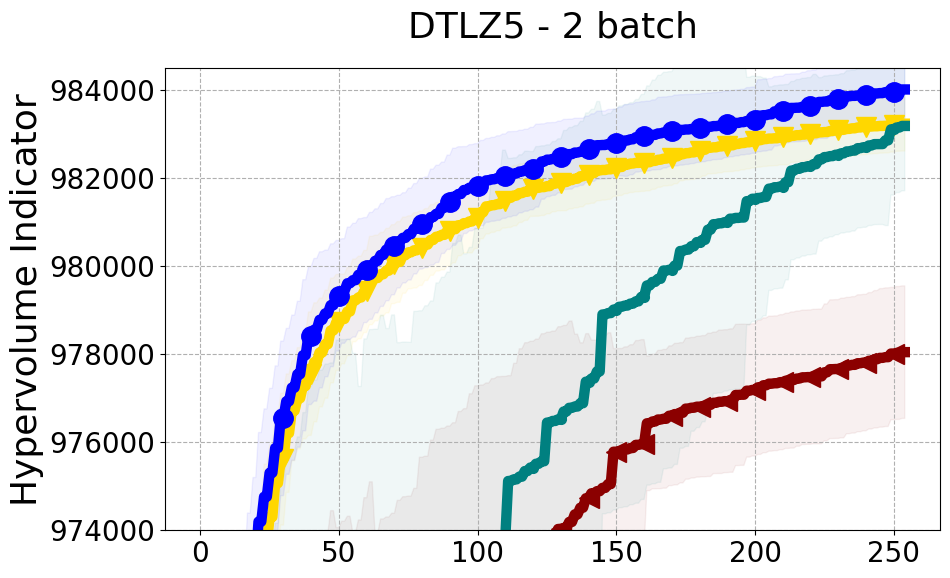}
     \end{subfigure}
     \hfill
     \begin{subfigure}
         \centering
    \includegraphics[width=0.23\textwidth]{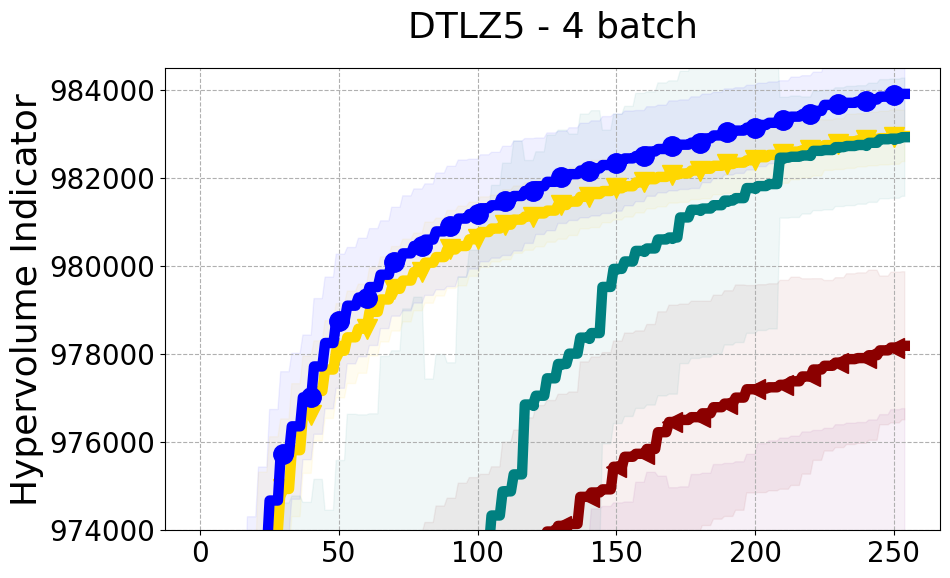}
     \end{subfigure}
     \hfill
     \begin{subfigure}
         \centering
    \includegraphics[width=0.23\textwidth]{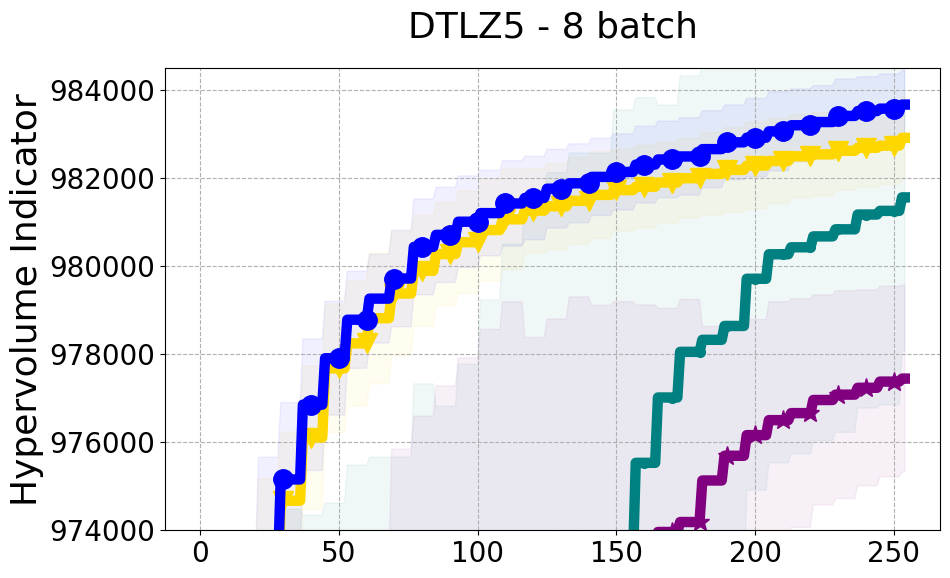}
     \end{subfigure}
     \hfill
     \begin{subfigure}
         \centering
    \includegraphics[width=0.23\textwidth]{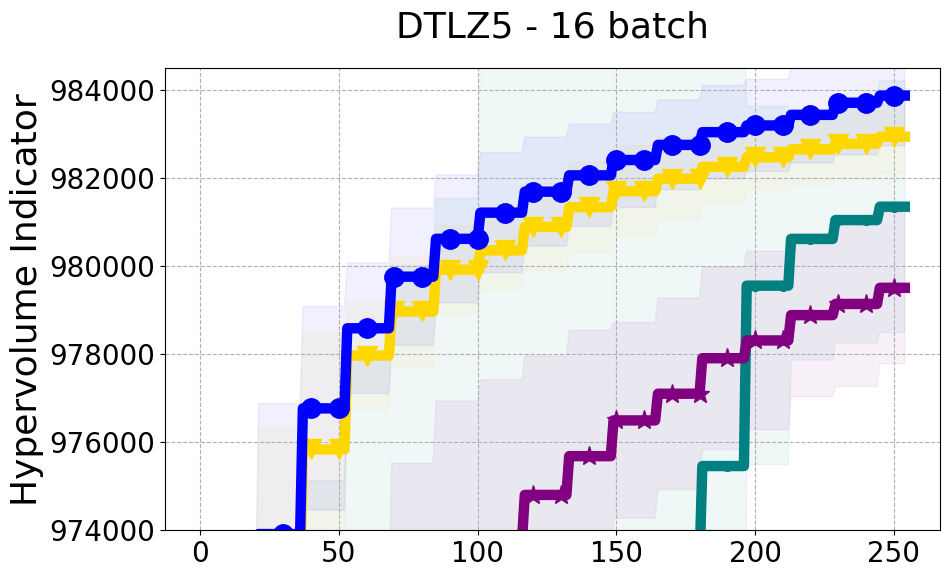}
     \end{subfigure}
     \label{fig:appendix-paper-dtlz5}
      \begin{subfigure}
         \centering
         \includegraphics[width=\textwidth]{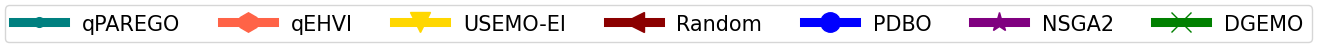}
         \label{fig:paper-legend}
     \end{subfigure}
    \caption{Hypervolume results evaluated on multiple benchmarks and batch sizes.}
    \label{fig:mainpaper-all-results}
    \vspace{-2.5mm}
\end{figure*}

\noindent \textbf{Benchmarks.} We conduct experiments on benchmarks with varying numbers of input and output dimensions to show the versatility and flexibility of our method. We use several synthetic problems: ZDT-1, ZDT-2, ZDT-3 \cite{zitzler2000comparison}, DTLZ-1, DTLZ-3, DTLZ-5 \cite{deb2005scalable} and three real wold problems: the gear train design problem \cite{deb2006innovization,konakovic2020diversity}, SWLLVM \cite{siegmund2012predicting} and Unmanned aerial vehicle power system design \cite{belakaria2020machine}. More details and descriptions of problem settings are included in the Appendix. In the ablation studies, we provide additional experiments with synthetic benchmarks where we vary the input and output dimensions. 

\vspace{0.5ex}

\noindent \textbf{Baselines.} We compare our PDBO method to state-of-the-art batch MOO methods: DGEMO, qEHVI, qPAREGO, and  USEMO-EI. We also include NSGA-II as the evolutionary algorithm baseline and random input selection. We set the hyperparameters of PDBO to $\gamma$ = 0.7 and $\tau$ = 4 as recommended by \cite{hoffman2011portfolio,nopastvasconcelos2019no}. We define the AF portfolio as $\mathcal{P} = \{ EI, TS, UCB, ID\}$.

\vspace{0.5ex}

\noindent \textbf{Experimental Setup.} 
All experiments are initialized with five random inputs/evaluations and run for at least 250 function evaluations. We conduct experiments with four different batch sizes $B \in \{2, 4, 8, 16\}$ and adjust the number of iterations accordingly. For instance, when using a batch size of two, we run the algorithm for 125 iterations. Each experiment is repeated 25 times, and we report the average and standard deviation of the hypervolume indicator and the DPF metric. To solve the constrained optimization problem in the DPP algorithm, we utilize an implementation of the SQP method \cite{lalee1998implementation,nocedal2006numerical} from the Python SciPy library \cite{2020SciPy-NMeth}. For baselines, we use the codes and hyperparameters provided in the open source repositories of DGEMO \footnote{https://github.com/yunshengtian/DGEMO} and Botorch \footnote{https://github.com/pytorch/botorch}. We provide the details for the NSGA-II baseline and cheap MO solver, and more details about the setup for fitting the hyperparameters of GP models in the Appendix. 

\vspace{0.5ex}

\noindent \textbf{Results and Discussion.} Figure \ref{fig:paper-dpf} demonstrates that PDBO outperforms other baselines with respect to the Pareto-front diversity metric. Additionally, Figure \ref{fig:mainpaper-all-results} demonstrates that PDBO outperforms all baseline methods in most experiments with respect to the Hypervolume indicator and provides a competitive performance on the others. 

In the Appendix, we present a comprehensive set of additional results and analyses. This includes the evaluation of hypervolume and DPF on various benchmarks. We also introduce results using other metrics, notably the Inverted Generational Distance (IGD) and a modified version of DPF, accompanied by a relevant discussion. Additionally, we compare the run-time of all baseline methods. For visual insight into the diversity of solutions, we include scatter plots representing the Pareto front for problems with two objectives. Lastly, we provide statistics on the selection of AF.

\vspace{0.25ex}

\textbf{PDBO Advantages.} PDBO is fast and effective in producing high-quality and diverse Pareto fronts. While outperforming the baseline methods, it can also be used with any number of input and output dimensions as well as being flexible to run with any batch size. The two state-of-the-art methods are DGEMO and qEHVI. The DGEMO method fails to run for experiments with more than three objective functions as the graph cut algorithm consistently crashes (same observation was made by \cite{daulton2021parallel}). qEHVI fails to run with batch sizes higher than eight as the method becomes extremely memory-consuming even with GPUs. We provide a more detailed discussion about these limitations in the Appendix. Therefore, PDBO's ability to easily run with any input and output dimensions as well as any batch size is an advantage for practitioners. PDBO is capable of proactively creating a diverse Pareto front while improving or maintaining the quality of the Pareto front. 

Given that PDBO incorporates two key contributions, namely adaptive acquisition function selection and multi-objective batch selection using DPPs, we examine the individual contributions of each component to the overall performance by conducting ablation experiments.

\vspace{0.25ex}

{\bf Merits of Adaptive AF Selection.} We demonstrate the superiority of the adaptive AF selection method, as outlined in Section \ref{adaptive_acq}, compared to using a static AF from the portfolio. To isolate the impact of this component from the batch selection process, we conduct an ablation study using the USEMO baseline. With a batch size of one, we consider USEMO with UCB, TS, ID, and EI as baselines. We then evaluate the efficacy of our MAB method by incorporating the adaptive AF selection approach into USEMO. Results shown in the Appendix consistently demonstrate the superior performance of the MAB strategy over using a static AF. 

\vspace{0.25ex}

{\bf Merits of DPP-Based Batch Selection for MOO.} Following a similar ablation approach, we employ the USEMO-EI baseline to examine the impact of the DPP-based batch selection. USEMO-EI selects the next input for evaluation from the cheap Pareto set based on an uncertainty metric. To perform this ablation, we replace the input selection mechanism utilized in USEMO with our proposed DPP selection strategy and compare their performance. The ablation is conducted across different batch sizes $B \in \{2, 4, 8, 16\}$. The results presented in the Appendix reveal that the proposed DPP selection strategy, referred to as DPP-EI, surpasses the USEMO selection strategy in terms of diversity while simultaneously improving the quality of hypervolume.

\vspace{-0.5mm}
\section{Summary}
We studied the Pareto front-Diverse Batch Multi-Objective BO (PDBO) method based on the BO framework. It employs a full information multi-arm bandit algorithm with discounted reward to adaptively select the most suitable acquisition function in each iteration. We also proposed an appropriate reward based on the relative hypervolume contribution of each acquisition function and a multi-objective DPP approach configured to select a batch of Pareto-diverse inputs for evaluation. Experiments on multiple benchmarks demonstrate that PDBO outperforms prior methods in terms of both diversity and quality of Pareto-front solutions.\\

\vspace{-1.5mm}

\noindent \textbf{Acknowledgements} The authors gratefully acknowledge the in part support from National Science Foundation (NSF) grants IIS-1845922, SII-2030159, and CNS-2308530. The views expressed are those of the authors and do not reflect the official policy or position of the NSF.

\bibliographystyle{plainnat}
\bibliography{aaai24}

\clearpage

\section{Appendix}

\subsection{Theoretical Analysis}

In this section, we assume maximization and assume that the UCB acquisition function is in the portfolio of acquisition functions used during the multi-objective Bayesian optimization process. We make this choice for the sake of clarity and ease of readability as we build our theoretical analysis on prior seminal work \cite{srinivas2009Gaussian,hoffman2011portfolio}. It is important to clarify that this is not a restrictive assumption and that with minimal mathematical transformations, the same derived regret bound holds for the case of minimization with the LCB acquisition function being in the portfolio instead.  

In order to simplify the proof and solely for the sake of theoretical regret bound, we consider the instant reward of iteration $t$ to be the sum of predictive means of the Gaussian processes.
\begin{equation}\label{eq:IRtheo}
 IR_t= \sum_{i=1}^k \mu_{i,t-1}(\vec{x}_t)   
\end{equation}
 with $\mu_{i,t-1}$ is the posterior mean of function $i$.

The cumulative reward over ${T_{max}}$ iterations that would have been obtained using acquisition function $j$ is defined as: 
\begin{equation}
	r_{T_{max}}^j = \sum_{t=1}^{T_{max}} IR_t = \sum_{t=1}^{T_{max}} \sum_{i=1}^k \mu_{i,t-1}(\vec{x}_t^j).
\end{equation}

It is important to note that in our proposed algorithm, we use different and better-designed instant reward $IR_t$ and cumulative reward $r_{T_{max}}^i$ functions. The rewards in equation \ref{eq:IRtheo} is a design choice to achieve the following regret bound. In section \ref{ablation_theo}, we provide a discussion accompanied with an ablation study comparing the reward function used in theory to the reward function used in our proposed approach.

\vspace{1.0ex}

To bound the regret of Hedge with respect to the gain, we define the maximum strategy as: $r_{T_{max}}^\mathrm{max}=\max_jr_{T_{max}}^j$
\begin{lemma}
\label{lemma:exp3}
With probability at least $1-\delta_1$ and $\eta=\sqrt{8\ln k/{T_{max}}}$, the regret is bounded by
\begin{equation}
	r_{T_{max}}^\mathrm{max} - r_{T_{max}}^\mathrm{Hedge} \leq \mathcal O(\sqrt{{T_{max}}}).
\end{equation}
\end{lemma}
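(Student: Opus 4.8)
The plan is to run the classical potential-function analysis of the Hedge algorithm to bound the gap between the best arm's cumulative reward and the \emph{distribution-averaged} reward, and then upgrade this to a high-probability bound on the \emph{realized} reward via a martingale concentration inequality; this is the route of \citet{hoffman2011portfolio} and ultimately rests on the bounds in \cite{cesa2006prediction}. As a preliminary I would observe that the instant rewards $IR_t^j = \sum_{i=1}^k \mu_{i,t-1}(\vec x_t^j)$ are bounded: under the usual GP-BO hypotheses (bounded kernels and bounded objective functions, or by conditioning on the high-probability confidence event of \cite{srinivas2009Gaussian}), there is a constant $\rho$ with $IR_t^j \in [0,\rho]$ for every $t,j$, so after rescaling by $1/\rho$ we may assume the rewards lie in $[0,1]$.

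For the deterministic part, set $G_{t-1}^j = \sum_{t'=1}^{t-1} IR_{t'}^j$, weights $w_t^j = \exp(\eta\, G_{t-1}^j)$ so that $p_t^j = w_t^j/\sum_l w_t^l$, and the potential $\Phi_t = \ln\sum_{j=1}^k w_t^j$. Telescoping $\Phi_{T_{max}+1}-\Phi_1$, lower-bounding $\Phi_{T_{max}+1}$ by $\eta\, r_{T_{max}}^{\max}$, and upper-bounding each increment $\Phi_{t+1}-\Phi_t$ using Hoeffding's lemma on the distribution $p_t$ over $[0,1]$-valued rewards, one obtains in the standard way
\begin{equation*}
	r_{T_{max}}^{\max} - \sum_{t=1}^{T_{max}} \sum_{j=1}^k p_t^j\, IR_t^j \;\le\; \frac{\ln k}{\eta} + \frac{\eta\,T_{max}}{8}.
\end{equation*}
Plugging in $\eta = \sqrt{8\ln k / T_{max}}$ makes the right-hand side $\sqrt{T_{max}\ln k/2} = \mathcal O(\sqrt{T_{max}})$, which bounds the distance from the best arm to the averaged reward $\sum_t \langle p_t, IR_t\rangle$.

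For the concentration part, note that $r_{T_{max}}^{\mathrm{Hedge}} = \sum_t IR_t$ is the reward of the arm \emph{sampled} from $p_t$ at each round, so it differs from $\sum_t\langle p_t, IR_t\rangle$ only through the randomness of the draws. Let $\mathcal F_{t-1}$ be the $\sigma$-algebra generated by all data and random choices through iteration $t-1$ (so $p_t$ and all the $IR_t^j$ are $\mathcal F_{t-1}$-measurable, as the latter use the posterior means of the models refit on $\mathcal D_{t-1}$), and define $D_t = \sum_j p_t^j IR_t^j - IR_t$. Then $(D_t)$ is a martingale difference sequence with $|D_t|\le 1$, and Azuma--Hoeffding gives, with probability at least $1-\delta_1$,
\begin{equation*}
	\sum_{t=1}^{T_{max}} D_t \;\le\; \sqrt{\tfrac{T_{max}}{2}\,\ln(1/\delta_1)} \;=\; \mathcal O(\sqrt{T_{max}}).
\end{equation*}
Summing the two displayed bounds, undoing the $1/\rho$ rescaling, and absorbing constants yields $r_{T_{max}}^{\max} - r_{T_{max}}^{\mathrm{Hedge}} \le \mathcal O(\sqrt{T_{max}})$ with probability at least $1-\delta_1$.

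The main obstacle I anticipate is not the algebra of either step but the measurability/conditioning bookkeeping in this coupled setting: the rewards $IR_t^j$ are themselves history-dependent (built from GPs refit on $\mathcal D_{t-1}$, which depends on the arms sampled earlier) and the point $\vec x_t$ entering $r_{T_{max}}^{\mathrm{Hedge}}$ is produced by the sampled AF together with the cheap MOO solver. One must fix the filtration so that everything needed at round $t$ is $\mathcal F_{t-1}$-measurable except the single fresh draw $j_t\sim p_t$, and the boundedness constant $\rho$ has to be established on a high-probability event in the style of \cite{srinivas2009Gaussian}, which consumes a further slice of the failure probability that is folded into $\delta_1$ by a union bound. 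Extending cleanly to the batch case $B>1$ would additionally require importing the DPP arguments of \citet{kathuria2016batched}, which is why the statement is phrased for the sequential reward $IR_t$.
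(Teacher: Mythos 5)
Your proposal is correct and is essentially the same argument the paper relies on: the paper does not prove this lemma itself but cites it directly from \cite[Section 4.2]{cesa2006prediction} (as also used by \citet{hoffman2011portfolio}) for rewards in $[0,1]$, and your potential-function-plus-Azuma derivation is precisely the proof of that cited result, with the rescaling to $[0,1]$ and the history-dependence of the rewards being the same caveats the paper flags as "generalizations that worsen the bound by a constant."
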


This result follows directly from \cite[Section 4.2]{cesa2006prediction} for rewards in the range $[0,1]$. This result is also used in the proof of regret bound in \cite{hoffman2011portfolio}. \cite{hoffman2011portfolio} discussed possible generalizations that come at the cost of worsening the bound with a multiplicative or additive constant. These relaxations include a non-restrictive reward bound and a time variant $\eta$ term. These generalizations also hold in the context of our proof. We refer the reader to \cite{hoffman2011portfolio} for more details. It is important to note that this lemma holds for any choice of $IR_t$ where the reward is a result of the action taken by the Hedge algorithm.

\vspace{1.0ex}

The next lemmas are defined in \cite{srinivas2009Gaussian} and \cite{hoffman2011portfolio}. We will refer the reader to \cite[Lemma 5.1 and 5.3]{srinivas2009Gaussian} and \cite[Lemma 4 and 5]{hoffman2011portfolio} for proofs. We provide them here for the sake of completeness. It is important to clarify that these lemmas only depend on the surrogate Gaussian process models and can be used regardless of the UCB acquisition function.
\begin{lemma}
	\label{lemma:concentration}
	Assume $\delta_2\in(0,1)$, a finite sample space $|\mathcal{X}|<\infty$, and $\beta_t=2\log(|\mathcal{X}|\pi_t/\delta_2)$ where $\sum_{t}\pi_t^{-1}=1$ and $\pi_t>0$. Then with probability at least $1-\delta_2$, the absolute deviation of the mean is bounded by
	\begin{equation*}
		|f_i(\vec{x}) - \mu_{i,t-1}(\vec{x})| \leq \sqrt{\beta_t}\sigma_{i,t-1}(\vec{x})
		\quad\forall \vec{x}\in \mathcal{X}, \forall t\geq 1.
	\end{equation*}
\end{lemma}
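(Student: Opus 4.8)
\textbf{Proof proposal for Lemma~\ref{lemma:concentration}.} This is the standard Gaussian-process confidence bound of \citet[Lemma 5.1]{srinivas2009Gaussian}, here instantiated on the surrogate $\textsc{Gp}_i$ of a single objective $f_i$; the plan is to reproduce that argument. It rests on three ingredients: (i) for a fixed input $\vec{x}$ and a fixed iteration $t$, conditioning on the observations gathered in iterations $1,\dots,t-1$, the posterior of $f_i(\vec{x})$ under $\textsc{Gp}_i$ is exactly $\mathcal{N}\!\big(\mu_{i,t-1}(\vec{x}),\sigma_{i,t-1}^2(\vec{x})\big)$; (ii) a Gaussian tail bound controls the deviation at that single point; and (iii) a double union bound — over the $|\mathcal{X}|<\infty$ inputs and over the countably many iterations $t\ge 1$ — upgrades the pointwise statement to the uniform one, with the precise form of $\beta_t$ and the condition $\sum_t\pi_t^{-1}=1$ engineered so that the failure probabilities sum to $\delta_2$.

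Concretely, fix $t\ge 1$ and $\vec{x}\in\mathcal{X}$. By ingredient (i), $Z:=\big(f_i(\vec{x})-\mu_{i,t-1}(\vec{x})\big)/\sigma_{i,t-1}(\vec{x})\sim\mathcal{N}(0,1)$ conditionally on the history, so the elementary bound $\Pr(|Z|>c)\le e^{-c^2/2}$ with $c=\sqrt{\beta_t}$ yields $\Pr\!\big(|f_i(\vec{x})-\mu_{i,t-1}(\vec{x})|>\sqrt{\beta_t}\,\sigma_{i,t-1}(\vec{x})\big)\le e^{-\beta_t/2}$. Substituting $\beta_t=2\log(|\mathcal{X}|\pi_t/\delta_2)$ makes the right-hand side equal to $\delta_2/(|\mathcal{X}|\pi_t)$. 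A union bound over the $|\mathcal{X}|$ inputs turns this into $\delta_2/\pi_t$ for a fixed $t$, and a further union bound over $t\ge 1$ gives a total failure probability of at most $\delta_2\sum_{t\ge 1}\pi_t^{-1}=\delta_2$; the choice $\pi_t=\pi^2 t^2/6$ satisfies $\sum_t\pi_t^{-1}=1$. Passing to the complement, with probability at least $1-\delta_2$ the inequality $|f_i(\vec{x})-\mu_{i,t-1}(\vec{x})|\le\sqrt{\beta_t}\,\sigma_{i,t-1}(\vec{x})$ holds simultaneously for all $\vec{x}\in\mathcal{X}$ and all $t\ge 1$, which is the claim.

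The only delicate point — and the one I would be most careful about — is that the inputs $\vec{x}_1,\dots,\vec{x}_{t-1}$ queried by PDBO before iteration $t$ are themselves random and data-dependent (they come out of the cheap MOO solver and the bandit), so the conditioning set cannot be treated as fixed. The resolution is that ingredient (i) holds for \emph{every} realization of the history: for a fixed $\vec{x}$, whatever the realized past queries and observations are, the conditional law of $f_i(\vec{x})$ is Gaussian with the corresponding (data-dependent) posterior parameters, so integrating the pointwise tail bound over the history leaves it unchanged. This is exactly why the union bound is taken over the fixed finite set $\mathcal{X}$ rather than over the random queried points, and why the assumption $|\mathcal{X}|<\infty$ is required; a continuous domain would need the discretization/covering refinement of \citet{srinivas2009Gaussian}, and an RKHS-agnostic setting would replace the Gaussian tail step by a different concentration bound with a correspondingly larger $\beta_t$. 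No other step poses an obstacle, and — as the paper notes — the lemma depends only on the GP surrogates and is independent of which acquisition functions populate the portfolio.
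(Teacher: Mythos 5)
Your proposal is correct and matches the paper's treatment: the paper does not prove this lemma but simply defers to \citet[Lemma 5.1]{srinivas2009Gaussian}, and your argument is a faithful reproduction of that proof (conditional Gaussianity of the posterior, the tail bound $e^{-\beta_t/2}=\delta_2/(|\mathcal{X}|\pi_t)$, and the double union bound over $\mathcal{X}$ and $t$ with $\sum_t\pi_t^{-1}=1$). Your remark on why the data-dependent history does not break the conditioning is also the right resolution and consistent with the paper's note that the lemma depends only on the GP surrogates.
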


\begin{lemma}
	\label{lemma:sum}
	Given points $\vec{x}_t$ selected by the algorithm, the following bound holds for the sum of variances:
	\begin{equation*}
		\sum_{t=1}^{T_{max}} \beta_t \sigma_{i,t}^2(\vec{x}_t) \leq C_i\beta_{T_{max}}\gamma^i_{T_{max}},
	\end{equation*}
	where $C_i=2/\log(1+\sigma_i^{-2})$.
\end{lemma}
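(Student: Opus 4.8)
The plan is to follow the classical Gaussian-process argument that this lemma restates (\citet[Lemma 5.3]{srinivas2009Gaussian}, with the multi-function bookkeeping of \citet[Lemma 5]{hoffman2011portfolio}): reduce the weighted sum of posterior variances of the $i$-th surrogate to its maximum information gain $\gamma^i_{T_{max}}$. Since $\beta_t$ is a nondecreasing sequence, I would first factor it out, $\sum_{t=1}^{T_{max}}\beta_t\sigma_{i,t-1}^2(\vec{x}_t)\le\beta_{T_{max}}\sum_{t=1}^{T_{max}}\sigma_{i,t-1}^2(\vec{x}_t)$, where $\sigma_{i,t-1}(\vec{x}_t)$ is the $i$-th GP posterior standard deviation at $\vec{x}_t$ conditioned on the first $t-1$ evaluations (the $\sigma_{i,t}$ of the statement read as this pre-observation quantity). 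It then remains to show $\sum_t\sigma_{i,t-1}^2(\vec{x}_t)\le C_i\gamma^i_{T_{max}}$ with $C_i=2/\log(1+\sigma_i^{-2})$.

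For the pointwise step I would use that the kernel is normalized so that $k_i(\vec{x},\vec{x})\le 1$, hence $\sigma_{i,t-1}^2(\vec{x}_t)\le 1$ because the posterior variance only shrinks, and that $u\mapsto u/\log(1+u)$ is increasing on $(0,\infty)$; evaluating this map at $u=\sigma_i^{-2}\sigma_{i,t-1}^2(\vec{x}_t)\in(0,\sigma_i^{-2}]$ and rearranging gives $\sigma_{i,t-1}^2(\vec{x}_t)\le\frac{1}{\log(1+\sigma_i^{-2})}\log\!\big(1+\sigma_i^{-2}\sigma_{i,t-1}^2(\vec{x}_t)\big)$ for every $t$. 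Summing over $t$ and invoking the standard identity $\tfrac12\sum_{t=1}^{T_{max}}\log\!\big(1+\sigma_i^{-2}\sigma_{i,t-1}^2(\vec{x}_t)\big)=I\big(\vec{y}^{i}_{1:T_{max}};f_i\big)$ — the mutual information between the noisy observations of $f_i$ at the played points and $f_i$, which follows from the entropy chain rule together with the fact that, conditioned on the previous observations, $y^i_t$ is Gaussian with variance $\sigma_i^2+\sigma_{i,t-1}^2(\vec{x}_t)$ — yields $\sum_t\sigma_{i,t-1}^2(\vec{x}_t)\le\frac{2}{\log(1+\sigma_i^{-2})}\,I\big(\vec{y}^{i}_{1:T_{max}};f_i\big)\le C_i\gamma^i_{T_{max}}$, since by definition $\gamma^i_{T_{max}}=\max_{|A|=T_{max}}I(\vec{y}^{i}_A;f_i)$ upper bounds that mutual information. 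Combining with the first step gives the claim.

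There is no genuine obstacle here, only bookkeeping: the one slightly delicate ingredient is the Gaussian-process mutual-information identity, which I would either derive in a line from Gaussianity and the entropy chain rule or simply quote from \citet{srinivas2009Gaussian}, and the remaining choices (the index convention $\sigma_{i,t-1}$ vs.\ $\sigma_{i,t}$, the kernel normalization $k_i(\vec{x},\vec{x})\le 1$) are stated up front. The point I would stress, in line with the surrounding text, is that this bound depends only on the $K$ surrogate Gaussian processes and the inputs actually evaluated by PDBO, and is entirely independent of which acquisition functions populate the portfolio or how the multi-arm bandit selects among them; that is exactly why it can be substituted, unchanged, into the third term of the regret decomposition of Theorem~\ref{theorem} for each $i=1,\dots,k$, after Cauchy--Schwarz turns $\sum_t\sqrt{\beta_t}\,\sigma_{i,t-1}(\vec{x}_t)$ into $\sqrt{T_{max}}$ times the square root of the sum bounded here.
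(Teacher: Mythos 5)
Your proposal is correct and is exactly the standard argument the paper relies on: the paper does not prove this lemma itself but defers to \citet[Lemma 5.3 and 5.4]{srinivas2009Gaussian} and \citet[Lemma 5]{hoffman2011portfolio}, whose proof is precisely your chain of (i) factoring out the nondecreasing $\beta_t$, (ii) the pointwise bound via monotonicity of $u/\log(1+u)$ and the normalized-kernel bound $\sigma_{i,t-1}^2(\vec{x}_t)\le 1$, and (iii) the Gaussian mutual-information identity capped by the maximum information gain $\gamma^i_{T_{max}}$. Your handling of the $\sigma_{i,t}$ versus $\sigma_{i,t-1}$ indexing and the observation that the bound is independent of the acquisition-function portfolio both match the paper's intent.
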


The next lemma is defined in \cite{hoffman2011portfolio} and follows directly from \cite[Lemma 5.2]{srinivas2009Gaussian}. This lemma depends only on the definition of the UCB acquisition function, and does not require that points at any previous iteration were selected via UCB acquisition function.
\begin{lemma}
	\label{lemma:ucb}

The cheap multi-objective optimization (MOO) problem when the UCB acquisition function is selected is defined as follows:
\begin{align}
     \max_{x \in \mathcal{X}}\, (UCB_{1,t}(\vec{x}),\cdots, UCB_{k,t}(\vec{x}))
\end{align}
Assuming that the cheap MOO solver achieves optimality, leading to the optimal Pareto set for the above defined problem, with the Pareto set defined as $\mathcal{X}_{t}$, either  there exists a $\vec{x}_t^\mathrm{UCB} \in \mathcal{X}_{t}$ such that 
\begin{align}
UCB_{i,t} (\vec{x}^*) \leq UCB_{i,t} (\vec{x}_t^\mathrm{UCB}), \forall{i \in \{1,\cdots,k\}}
\end{align}
or $\vec{x}^*$ is in the optimal Pareto set  $\mathcal{X}_{t}$ generated by cheap MOO solver (i.e., $\vec{x}_t^\mathrm{UCB}= \vec{x}^*$).
\vspace{1ex}

\noindent Now, if the bound from Lemma~\ref{lemma:concentration} holds, then for a point $\vec{x}_t^\mathrm{UCB}$ proposed by UCB with parameters $\beta_t$, the following bound holds for any function $f_i$, 
\begin{align}
 f_i(\vec{x}^*) \leq UCB_{i,t} (\vec{x}^*) \leq UCB_{i,t} (\vec{x}_t^\mathrm{UCB}) 
\end{align}
Leading to the following bound
\begin{equation}
    f_i(\vec{x}^*) - \mu_{i,t-1}(\vec{x}_t^\mathrm{UCB}) \leq 
    \sqrt{\beta_t}\sigma_{i,t-1}(\vec{x}_t^\mathrm{UCB}).
\end{equation}
\end{lemma}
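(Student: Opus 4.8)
The plan is to separate the lemma into its two logically distinct assertions: first, a purely order-theoretic \emph{dichotomy} about the Pareto set of the cheap UCB problem, and second, a chain of inequalities that feeds the high-probability concentration bound of Lemma~\ref{lemma:concentration} into that dichotomy. The concentration step is routine once the dichotomy is in hand, so the substance of the argument lives in establishing the dichotomy.

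For the dichotomy I would work in the maximization setting where, for the $k$ cheap objectives $UCB_{1,t},\dots,UCB_{k,t}$, a point $\vec{x}$ weakly dominates $\vec{x}'$ precisely when $UCB_{i,t}(\vec{x}) \ge UCB_{i,t}(\vec{x}')$ for every $i$ (strictly in at least one coordinate), and $\mathcal{X}_t$ is the set of non-dominated points returned by the (assumed optimal) cheap MOO solver. The goal is to show that the fixed comparison point $\vec{x}^*$ is weakly dominated by some member of $\mathcal{X}_t$; when that dominator is $\vec{x}^*$ itself we land in the second branch. I would invoke the standard \emph{external stability} property of the Pareto set: on a finite domain (finiteness is already assumed in Lemma~\ref{lemma:concentration}), every feasible point is weakly dominated by at least one Pareto-optimal point. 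I would prove this by a short termination argument: if $\vec{x}^*$ is non-dominated it already lies in $\mathcal{X}_t$ and we take $\vec{x}_t^\mathrm{UCB}=\vec{x}^*$; otherwise there is a strictly dominating point, and iterating produces a chain of strict dominators $\vec{x}^* \prec \vec{x}^{(1)} \prec \vec{x}^{(2)} \prec \cdots$ (where $\prec$ reads ``is strictly dominated by''). Because strict Pareto domination is transitive and irreflexive, no point can recur; finiteness then forces the chain to halt at a non-dominated $\vec{x}_t^\mathrm{UCB} \in \mathcal{X}_t$, and transitivity of weak domination gives $UCB_{i,t}(\vec{x}^*) \le UCB_{i,t}(\vec{x}_t^\mathrm{UCB})$ for all $i$. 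This closes the first half.

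For the second half I would condition on the event of probability at least $1-\delta_2$ on which the bound of Lemma~\ref{lemma:concentration} holds uniformly. Since $UCB_{i,t}(\vec{x}) = \mu_{i,t-1}(\vec{x}) + \sqrt{\beta_t}\,\sigma_{i,t-1}(\vec{x})$ by definition, the one-sided form of that bound reads $f_i(\vec{x}) \le UCB_{i,t}(\vec{x})$ for every $\vec{x}$ and $i$. Applying this at $\vec{x}^*$ and then chaining with the domination inequality from the first half yields $f_i(\vec{x}^*) \le UCB_{i,t}(\vec{x}^*) \le UCB_{i,t}(\vec{x}_t^\mathrm{UCB})$, which is the displayed chain. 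Substituting the definition of $UCB_{i,t}(\vec{x}_t^\mathrm{UCB})$ and rearranging gives $f_i(\vec{x}^*) - \mu_{i,t-1}(\vec{x}_t^\mathrm{UCB}) \le \sqrt{\beta_t}\,\sigma_{i,t-1}(\vec{x}_t^\mathrm{UCB})$, as claimed.

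The main obstacle, and really the only delicate point, is the dichotomy's reliance on external stability, which in full generality needs the feasible set (or its image under the objective map) to be finite or compact so that the domination chain terminates; on an infinite, non-compact $\mathfrak{X}$ one can construct a dominated point admitting no dominating Pareto-optimal point. I would therefore carry the finiteness hypothesis of Lemma~\ref{lemma:concentration} into this lemma explicitly, or alternatively appeal to continuity of the GP posterior mean and variance together with compactness of $\mathfrak{X}$ to recover external stability. A secondary subtlety worth flagging is the assumption that the cheap solver returns the \emph{exact} optimal Pareto set: under only approximate optimality the inequality $UCB_{i,t}(\vec{x}^*) \le UCB_{i,t}(\vec{x}_t^\mathrm{UCB})$ must be relaxed by the solver's sub-optimality gap, which would then propagate as an additive term into the overall regret bound.
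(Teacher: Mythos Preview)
Your proposal is correct and matches the paper's approach. The paper does not supply a standalone proof of this lemma; it states that the result ``is defined in \citet{hoffman2011portfolio} and follows directly from \citet[Lemma 5.2]{srinivas2009Gaussian}'' and embeds the two-step argument (Pareto-set dichotomy, then chaining with the concentration bound of Lemma~\ref{lemma:concentration}) into the lemma statement itself. Your write-up fills in exactly those steps, and your explicit appeal to external stability of the Pareto set under the finiteness hypothesis inherited from Lemma~\ref{lemma:concentration} is a welcome clarification that the paper leaves implicit.
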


\vspace{1.0ex}

\noindent We can now combine these results to construct the proof of Theorem~\ref{theorem}.
\begin{proof}[Proof of Theorem~\ref{theorem}]
With probability at least $1-\delta_1$, the result of Lemma~\ref{lemma:exp3} holds. If we assume that UCB acquisition function is included in the portfolio of acquisition functions, we have
\begin{equation*}
	-r_{T_{max}}^\mathrm{Hedge} \leq \mathcal O(\sqrt{{T_{max}}}) - r_{T_{max}}^\mathrm{UCB}
\end{equation*}
and by adding $\sum_{t=1}^{T_{max}} \sum_{i=1}^k f_i(\vec{x}^*)$ to both sides of the inequality, we have:
\begin{align*}
	 & \sum_{t=1}^{T_{max}} \sum_{i=1}^k f_i(\vec{x}^*) - \mu_{i,t-1}(\vec{x}_t) \\ & \leq  \mathcal O(\sqrt{{T_{max}}}) +
	\sum_{t=1}^{T_{max}} \sum_{i=1}^k f_i(\vec{x}^*) - \mu_{i,t-1}(\vec{x}_t^\mathrm{UCB})   
\end{align*}
With probability at least $1-\delta_2$ the bound from Lemma~\ref{lemma:concentration} can be applied to the left-hand-side and the result of Lemma~\ref{lemma:ucb} can be applied to the right side of the inequality leading to the following inequality
\begin{align*}
	&\sum_{t=1}^{T_{max}} \sum_{i=1}^k f_i(\vec{x}^*) - f_i(\vec{x}_t) - \sqrt{\beta_t}\sigma_{i,t-1}(\vec{x}_t) \\
	& \leq
	\mathcal O(\sqrt{{T_{max}}}) + 
	\sum_{t=1}^{T_{max}} \sum_{i=1}^k \sqrt{\beta_t}\sigma_{i,t-1}(\vec{x}_t^\mathrm{UCB})
\end{align*}
which means that the regret is bounded by
\begin{align*}
	R_{T_{max}}(\vec{x}^*)
	&= \sum_{t=1}^{T_{max}} \sum_{i=1}^k  f_i(\vec{x}^*) - f_i(\vec{x}_t) \\
	&\leq
	\mathcal O(\sqrt{{T_{max}}}) + 
	\sum_{t=1}^{T_{max}} \sum_{i=1}^k\sqrt{\beta_t}\sigma_{i,t-1}(i,\vec{x}_t^\mathrm{UCB}) \\ 
 & + 
	\sum_{t=1}^{T_{max}} \sum_{i=1}^k \sqrt{\beta_t}\sigma_{i,t-1}(\vec{x}_t) \\
	&\leq
	\mathcal O(\sqrt{{T_{max}}}) + 
	\sum_{t=1}^{T_{max}} \sum_{i=1}^k \sqrt{\beta_t}\sigma_{i,t-1}(\vec{x}_t^\mathrm{UCB}) \\ 
 & + 
	\sqrt{C_i{T_{max}}\beta_{T_{max}}\gamma^i_{T_{max}}}.
\end{align*}
We should note that we cannot use Lemma~\ref{lemma:sum} to further simplify the terms involving  $\vec{x}_t^\mathrm{UCB}$. This is because the lemma only holds for points that are sampled by the algorithm, which may not include those proposed by UCB acquisition function.

Finally, this derivation requires Lemmas~\ref{lemma:exp3} and~\ref{lemma:ucb} to hold. Using a union bound argument, we can see that both lemmas hold with probability at least $1-\delta_1-\delta_2$. By setting $\delta_1=\delta_2=\delta/2$, we recover our result \cite{hoffman2011portfolio}.
\end{proof}

\subsection{PDBO Hypervolume Experiments}

In this section, we present supplementary experiments that focus on the comparison of hypervolume between our proposed method, PDBO, and other existing baselines. We refer the interested reader to Figure \ref{fig:appendix-main-results} for the additional results. 

\subsection{PDBO Diversity Experiments}

In terms of Pareto front diversity, we conduct a comprehensive comparison between PDBO and state-of-the-art methods introduced in Section \ref{experimental-section}. It is worth noting that, apart from DGEMO, none of the other baseline methods explicitly address Pareto front diversity. The results depicted in Figure \ref{fig:appendix-dpf} demonstrate the superior performance of PDBO, outperforming all existing methods in terms of Pareto front diversity measure.

\begin{figure*}[!ht]
\centering
\begin{subfigure}{\includegraphics[width=0.17\textwidth]{ 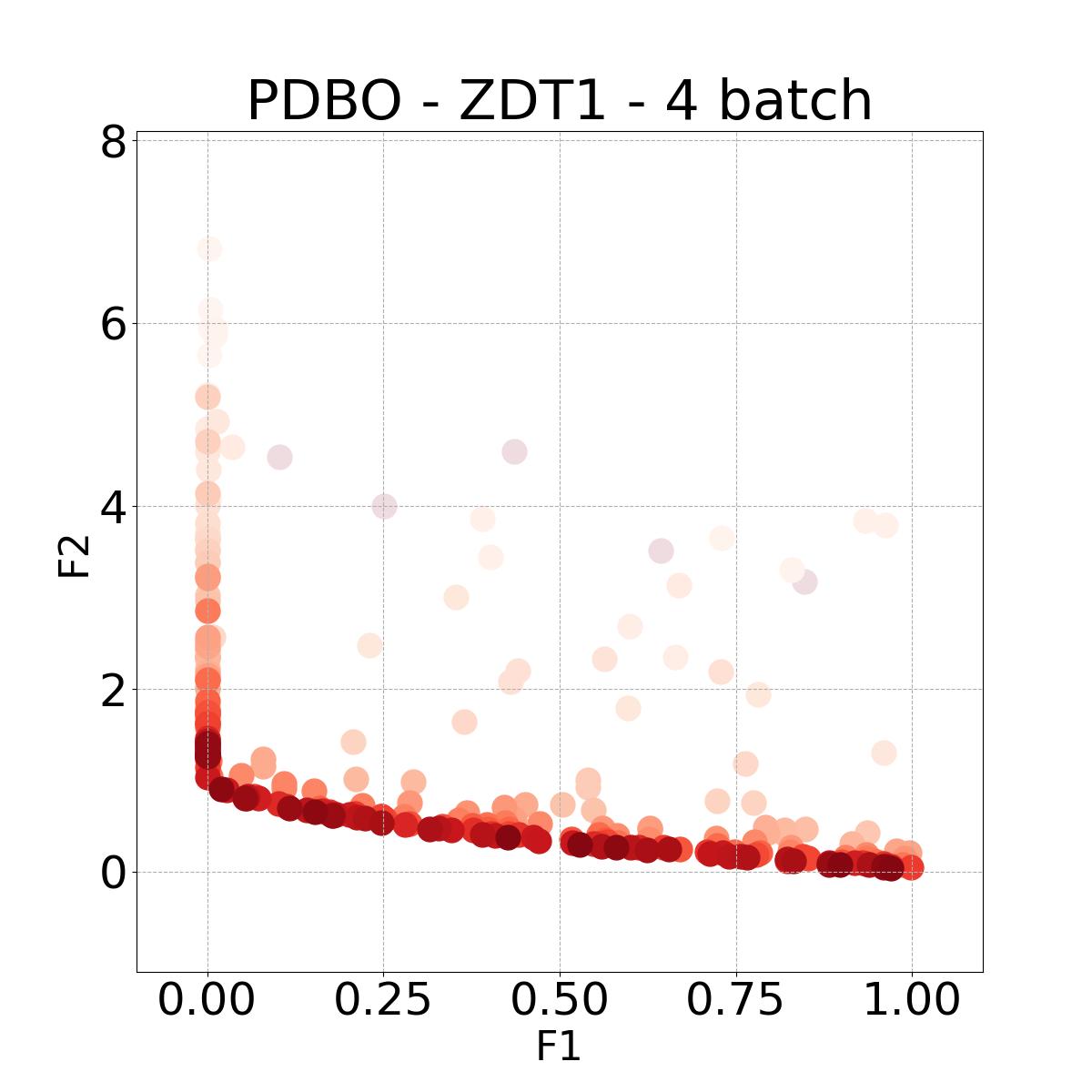}}\hspace{-3mm}
\end{subfigure}
\begin{subfigure}{\includegraphics[width=0.17\textwidth]{ 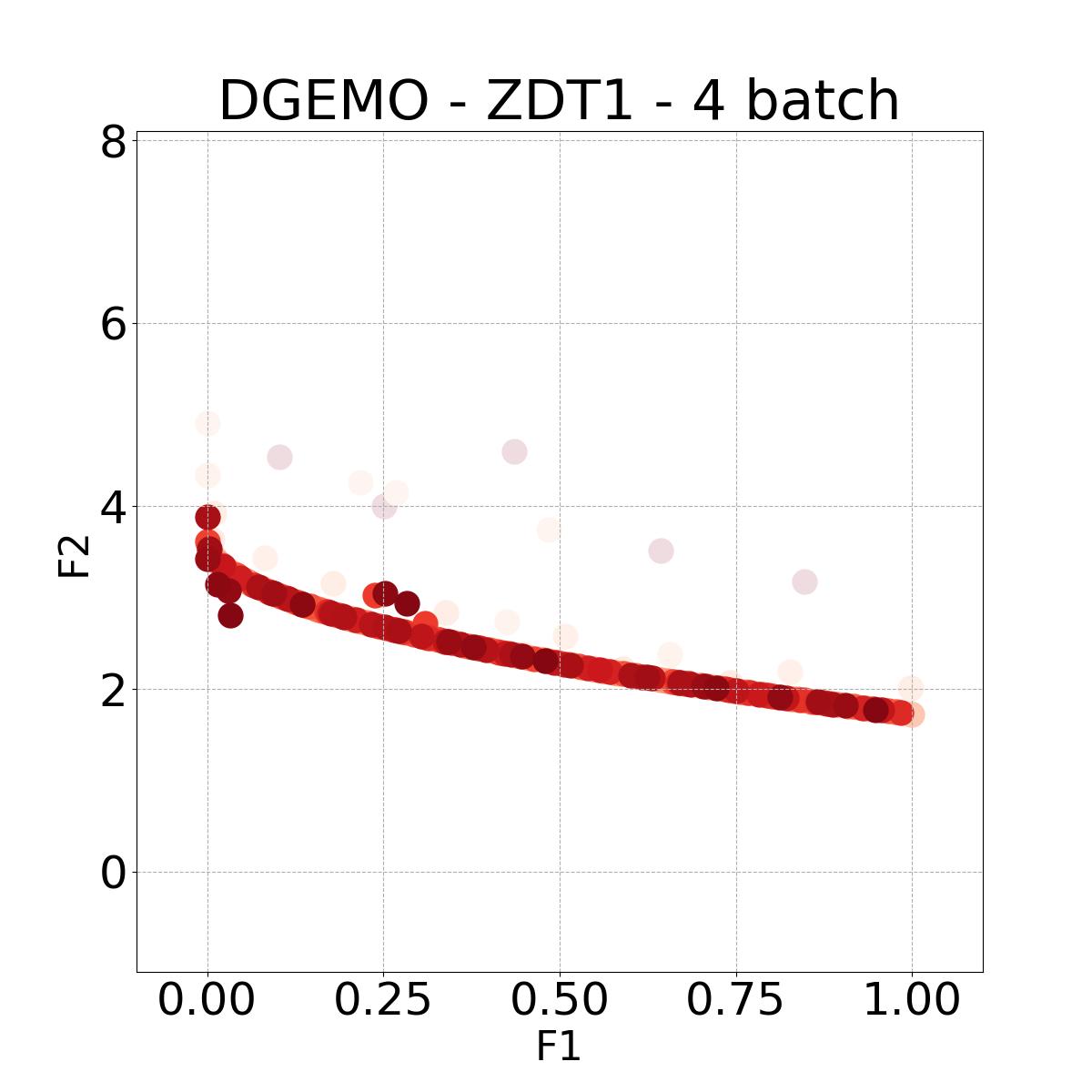}}\hspace{-3mm}
\end{subfigure}
\begin{subfigure}{\includegraphics[width=0.17\textwidth]{ 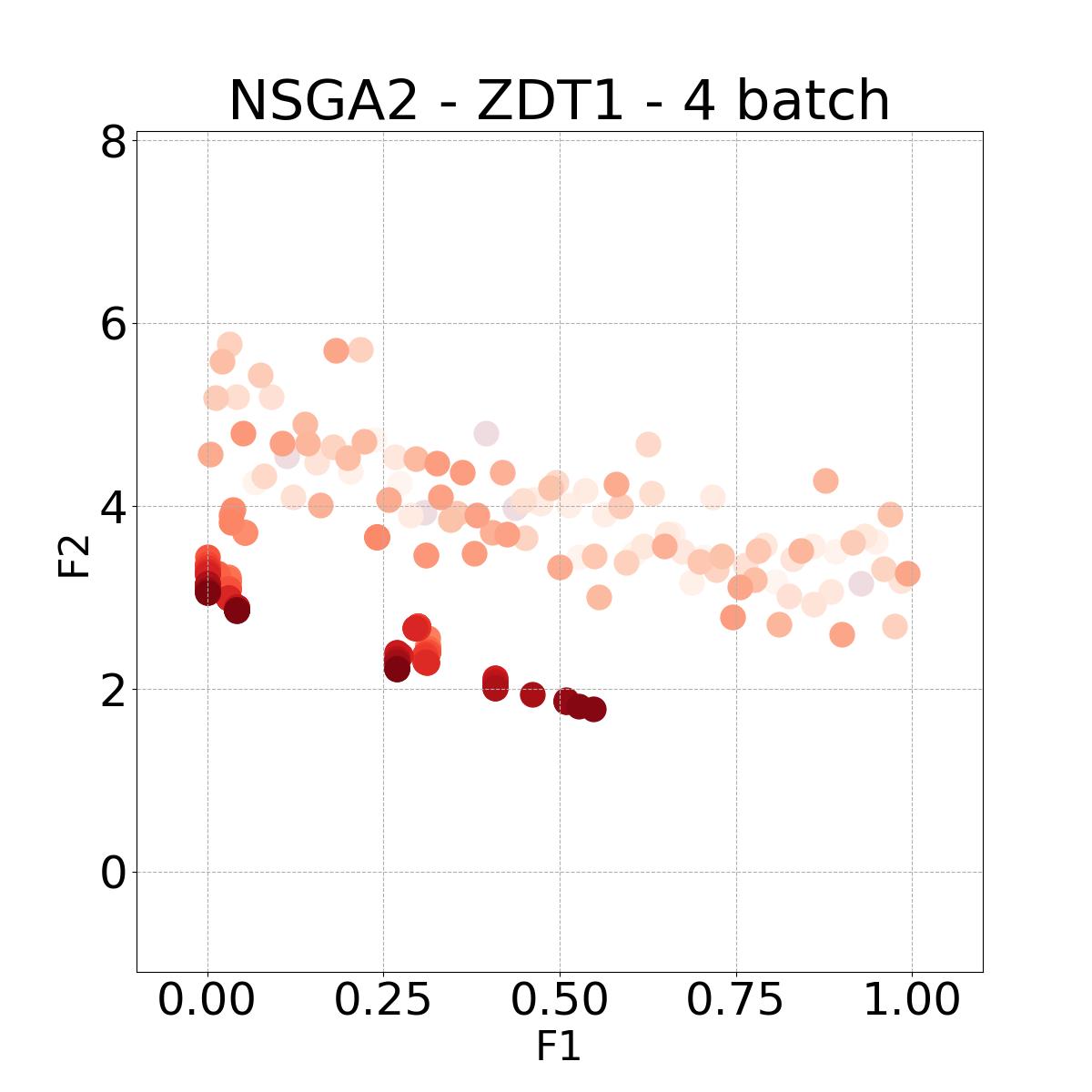}}\hspace{-3mm}
\end{subfigure}
\begin{subfigure}{\includegraphics[width=0.17\textwidth]{ 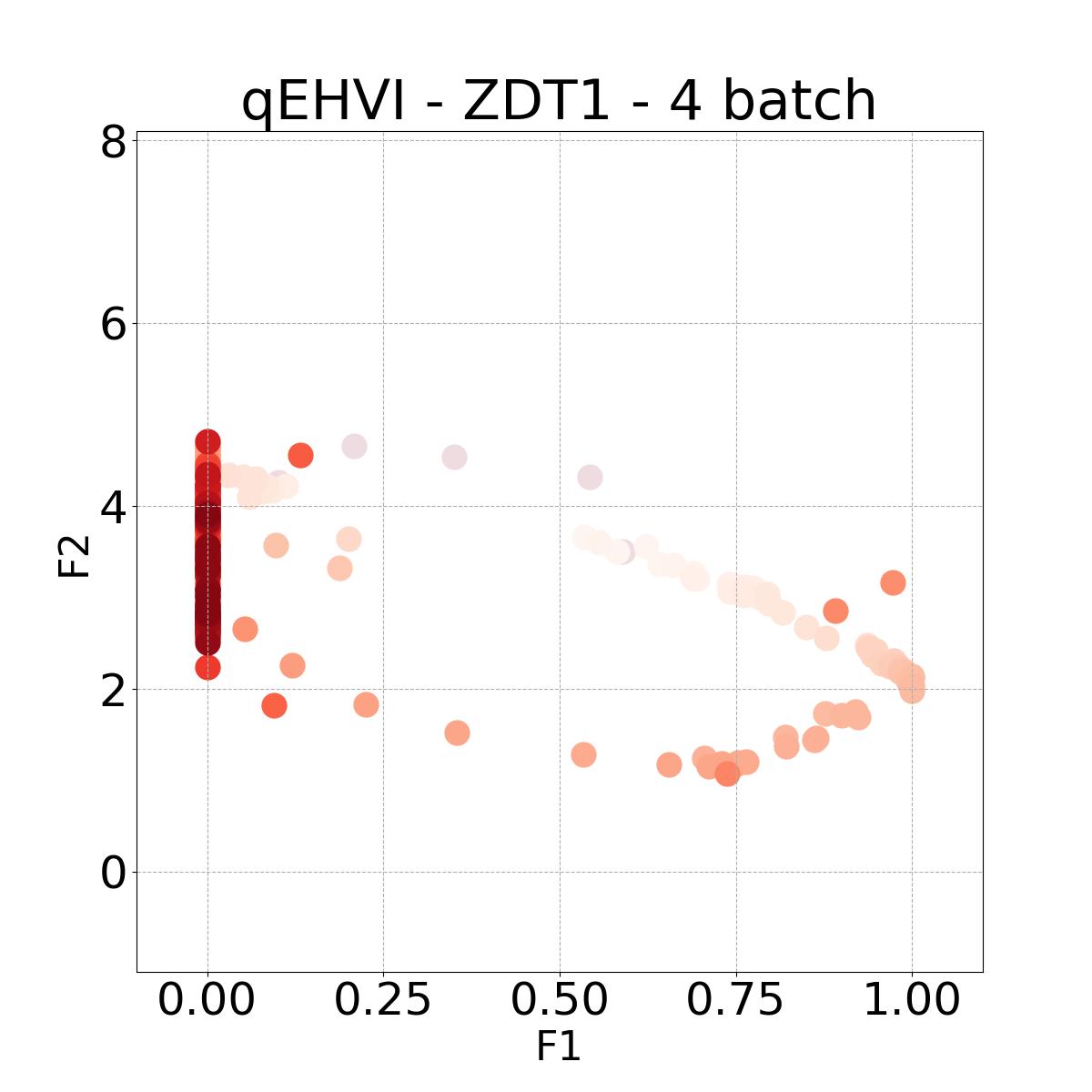}}\hspace{-3mm}
\end{subfigure}
\begin{subfigure}{\includegraphics[width=0.17\textwidth]{ 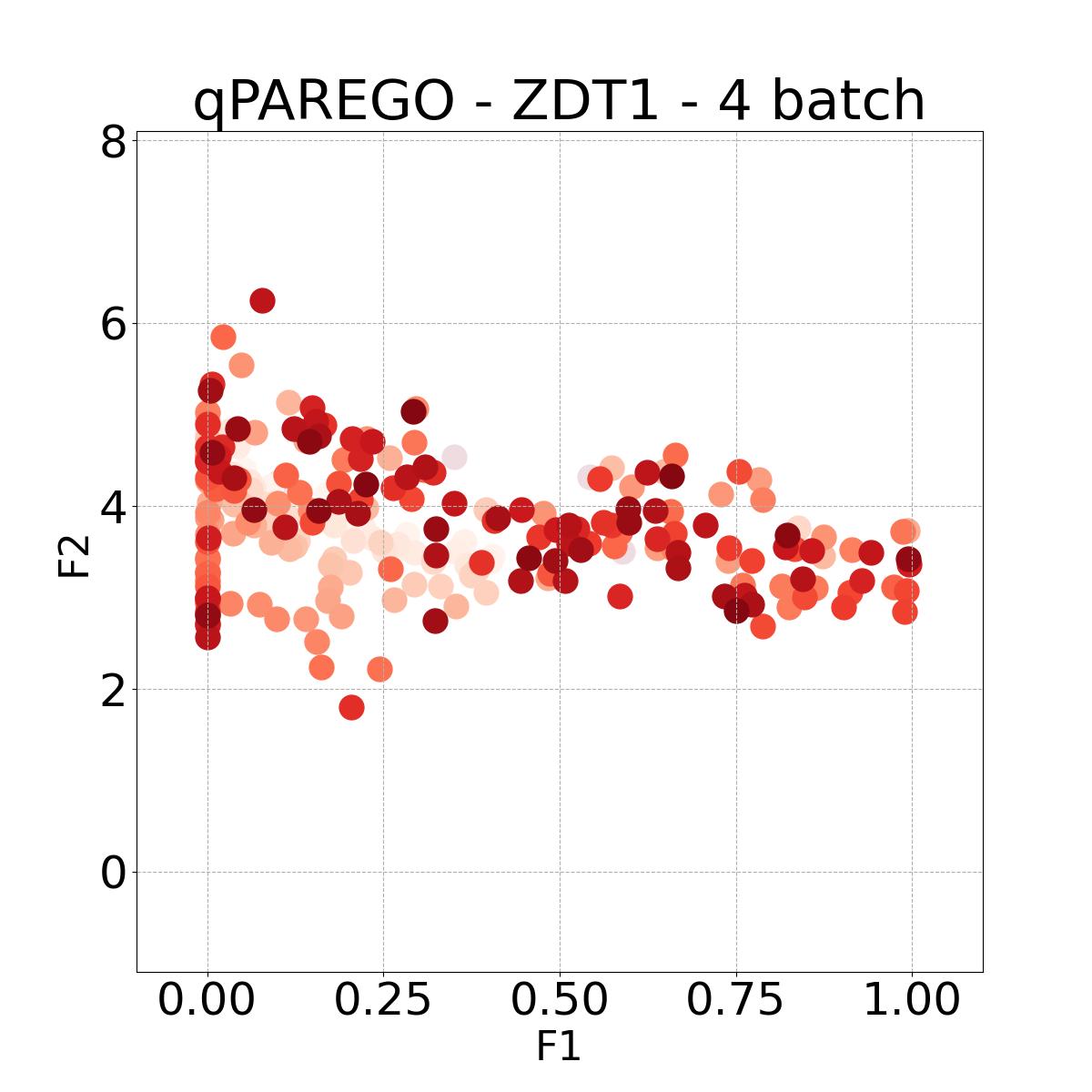}}\hspace{-3mm}
\end{subfigure}
\begin{subfigure}{\includegraphics[width=0.17\textwidth]{ 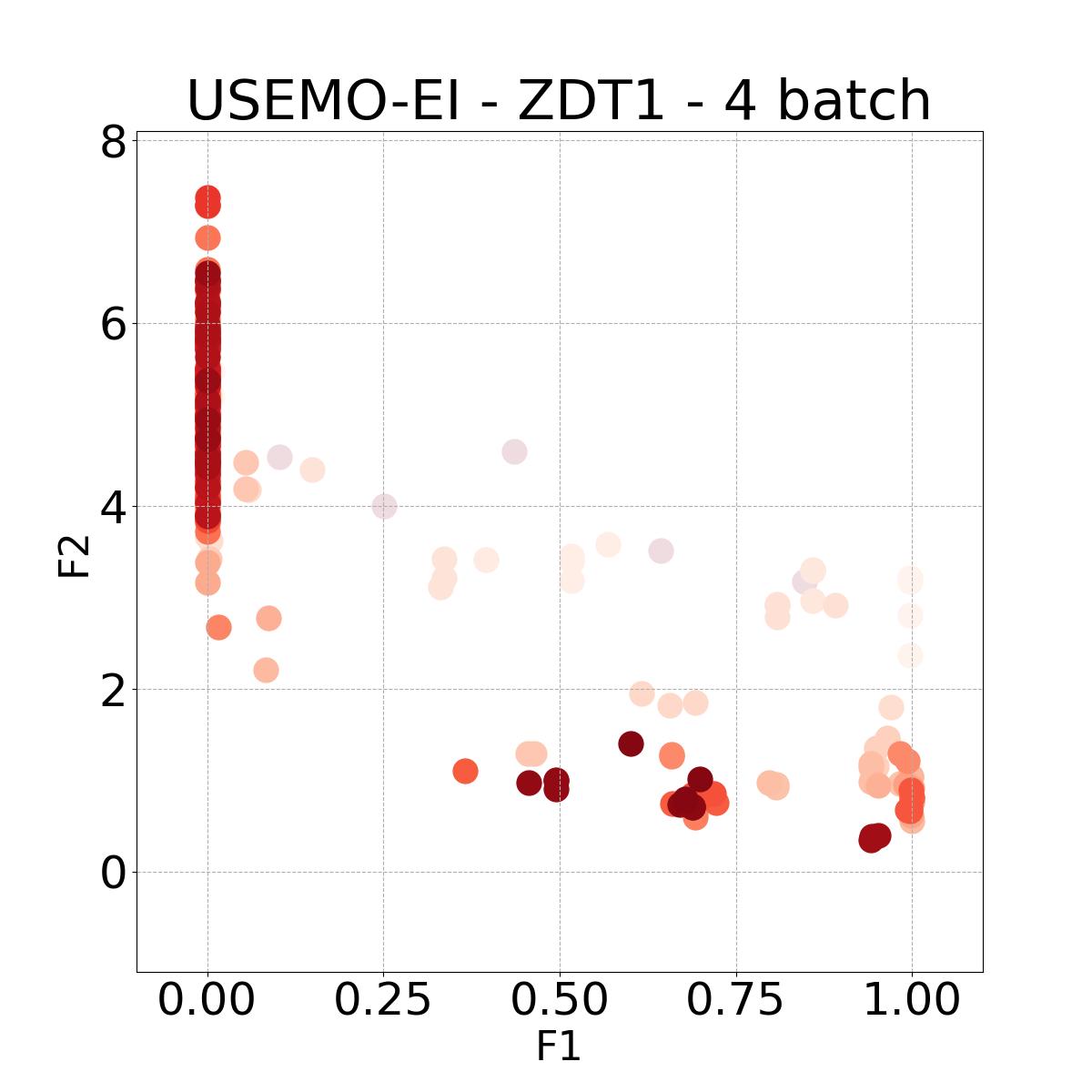}}\hspace{-3mm}
\end{subfigure}\\

\begin{subfigure}{\includegraphics[width=0.17\textwidth]{ 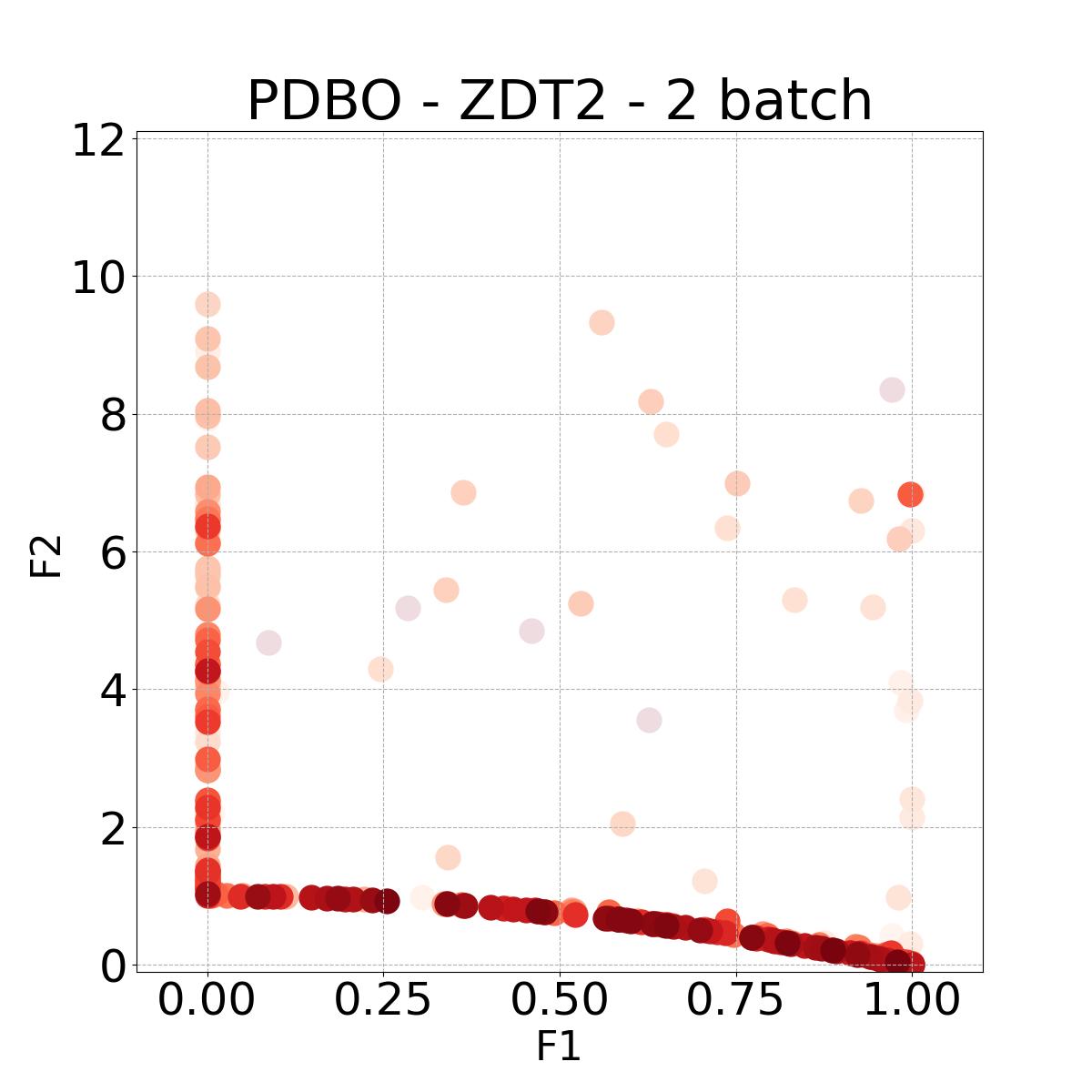}}\hspace{-3mm}
\end{subfigure}
\begin{subfigure}{\includegraphics[width=0.17\textwidth]{ 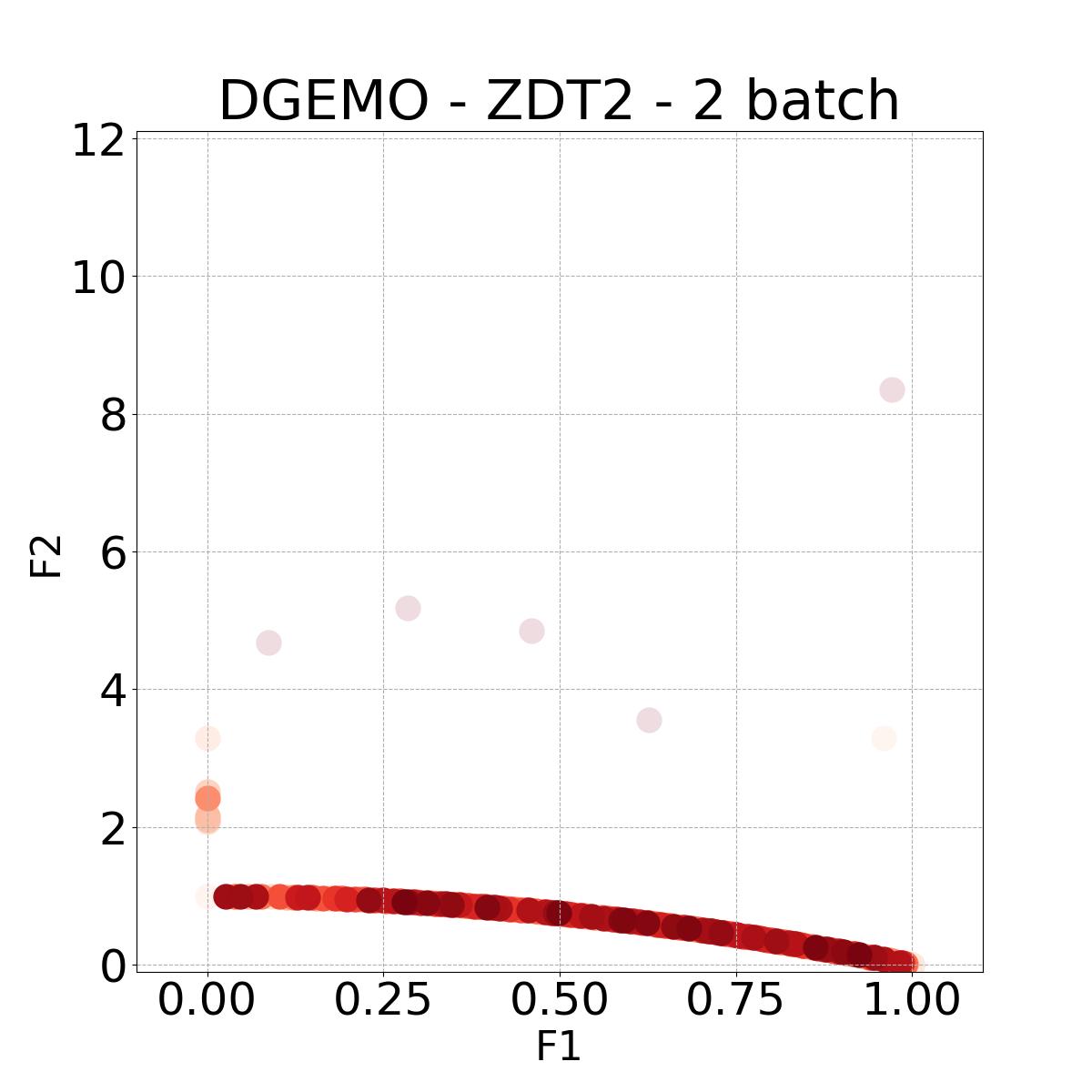}}\hspace{-3mm}
\end{subfigure}
\begin{subfigure}{\includegraphics[width=0.17\textwidth]{ 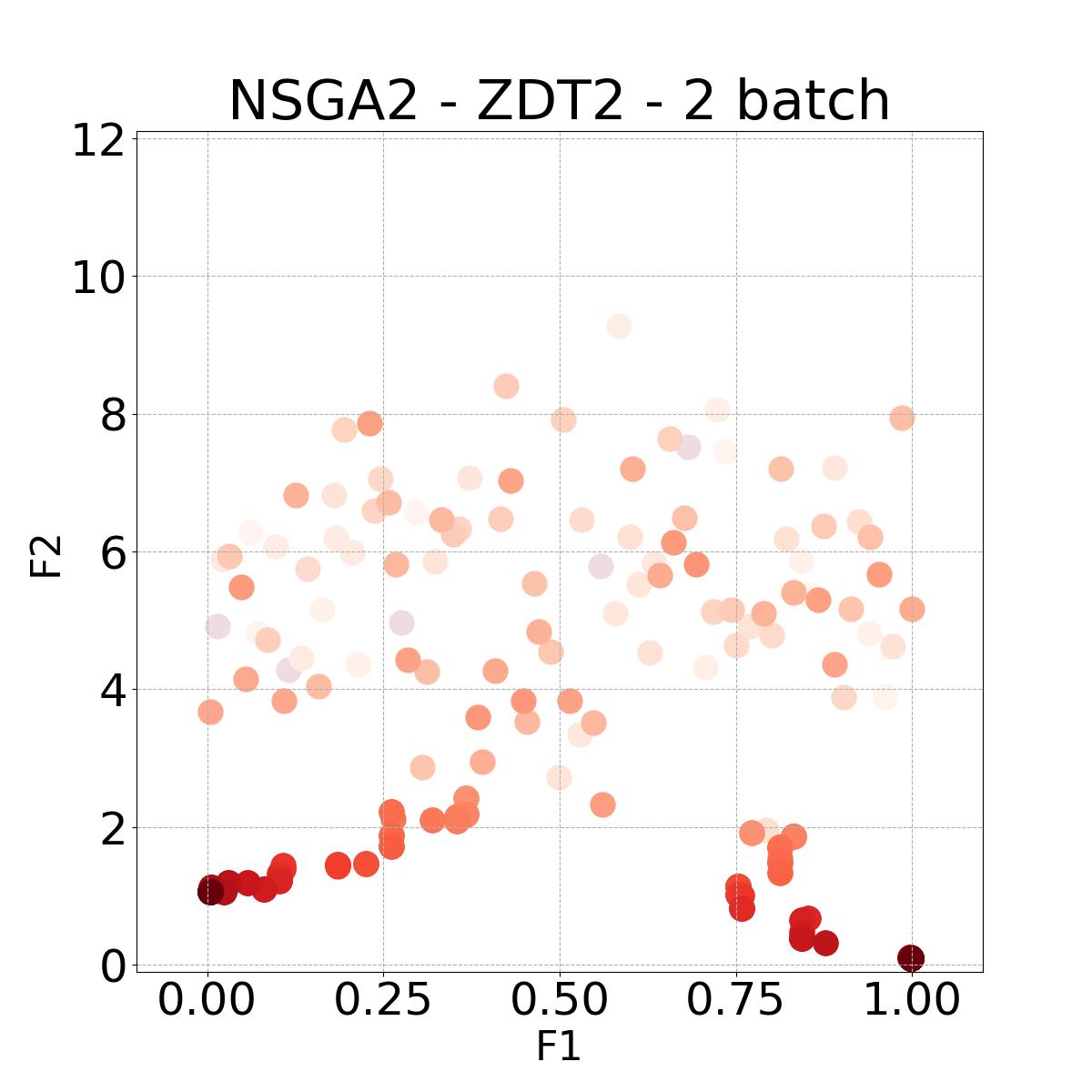}}\hspace{-3mm}
\end{subfigure}
\begin{subfigure}{\includegraphics[width=0.17\textwidth]{ 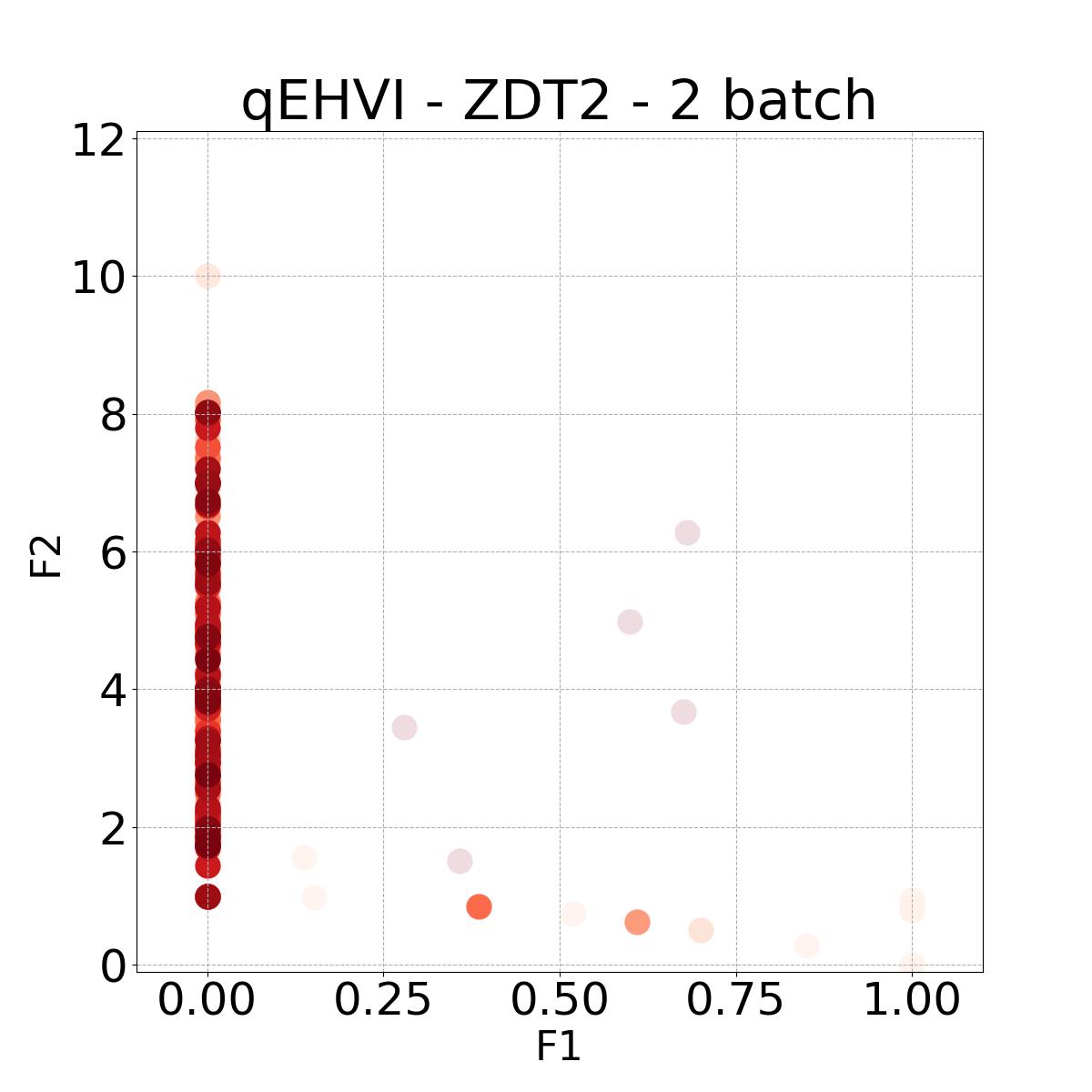}}\hspace{-3mm}
\end{subfigure}
\begin{subfigure}{\includegraphics[width=0.17\textwidth]{ 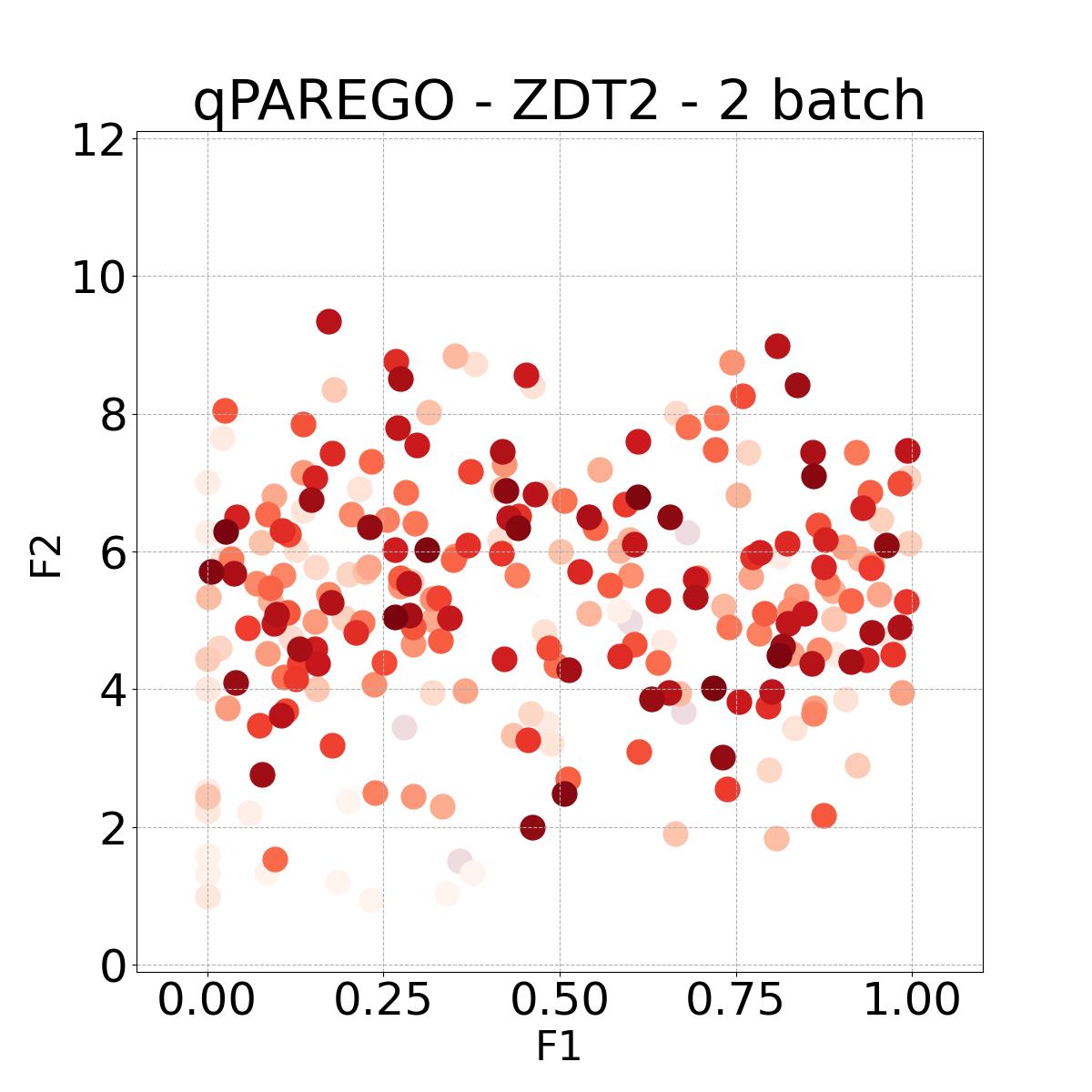}}\hspace{-3mm}
\end{subfigure}
\begin{subfigure}{\includegraphics[width=0.17\textwidth]{ 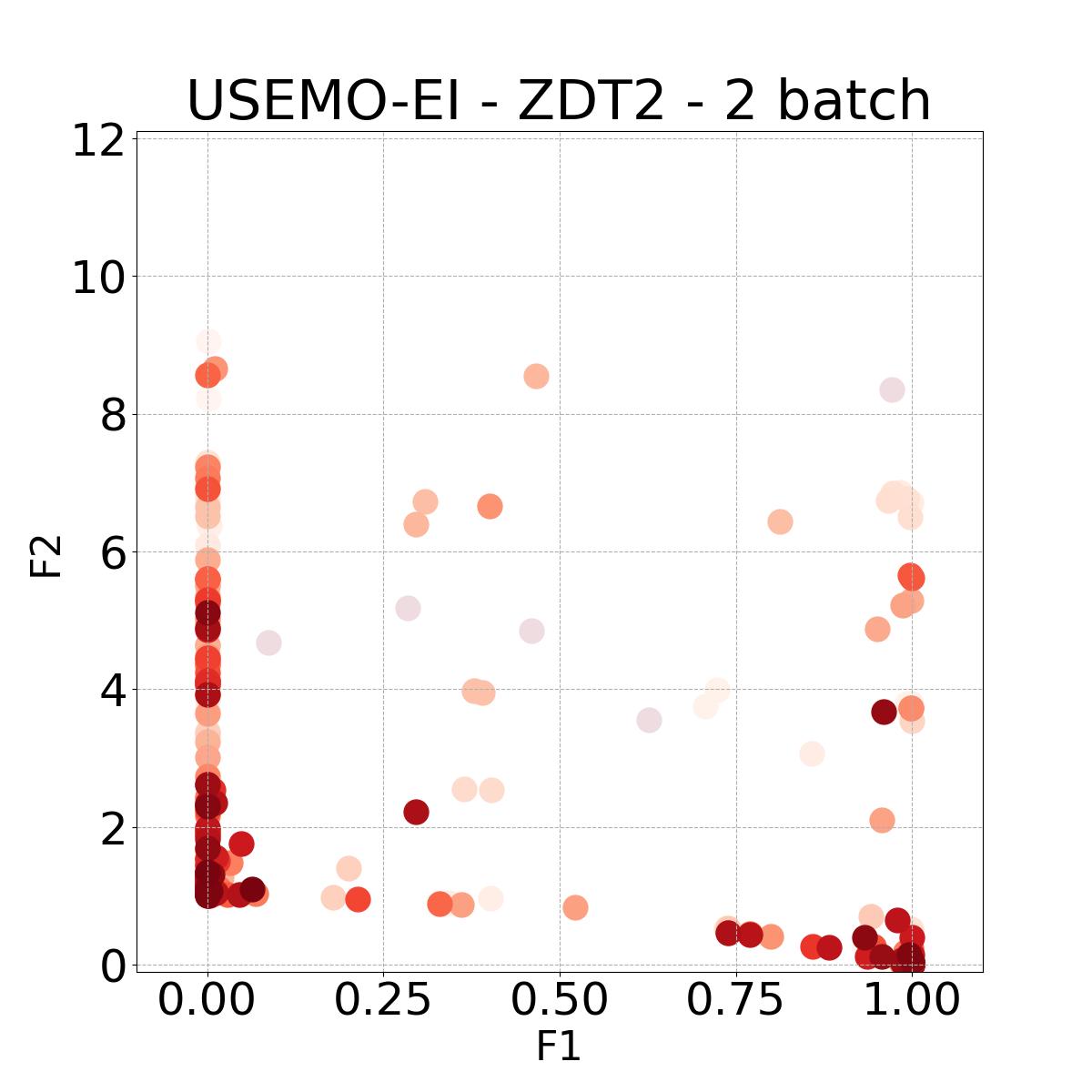}}\hspace{-3mm}\\
\end{subfigure}

\begin{subfigure}{\includegraphics[width=0.17\textwidth]{ 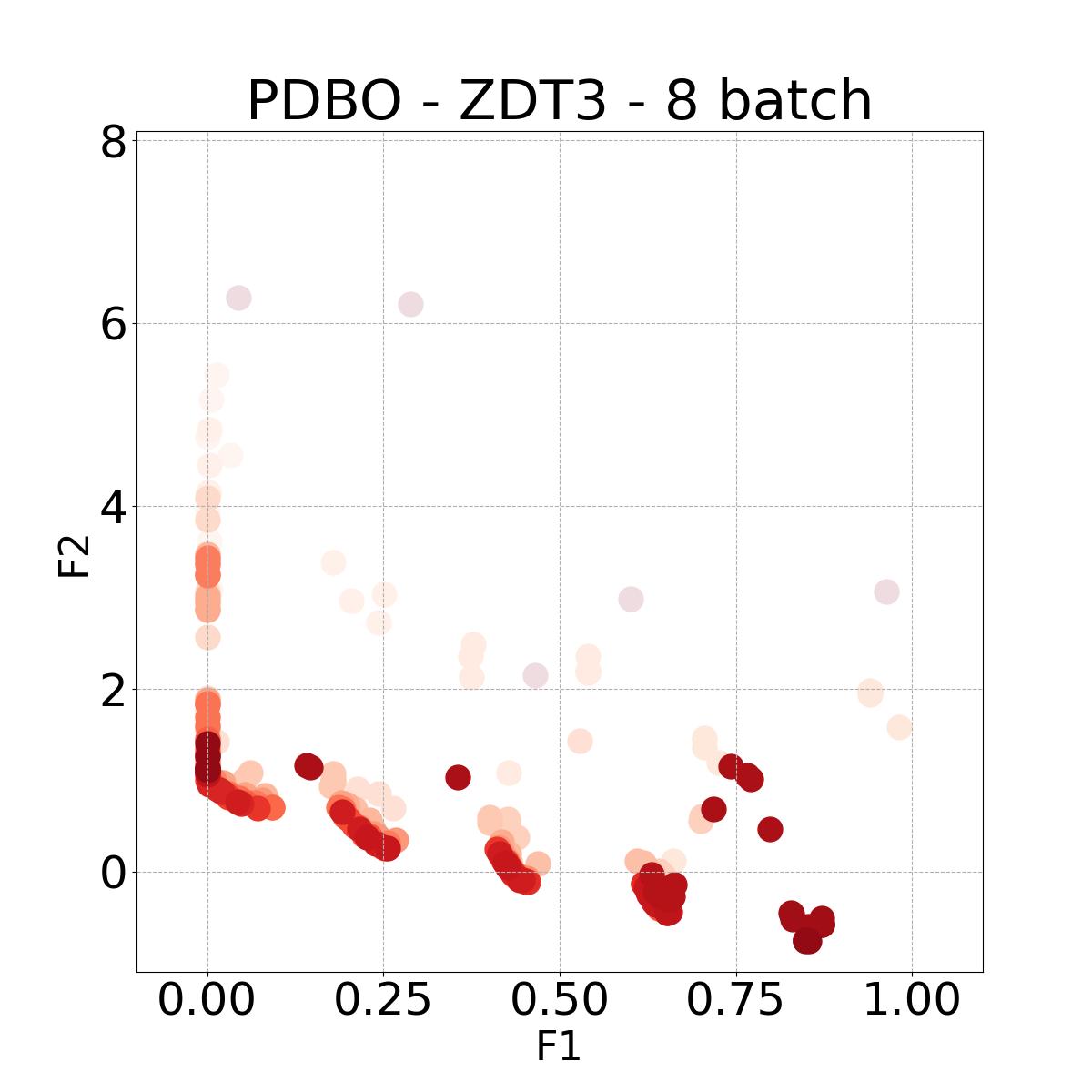}}\hspace{-3mm}
\end{subfigure}
\begin{subfigure}{\includegraphics[width=0.17\textwidth]{ 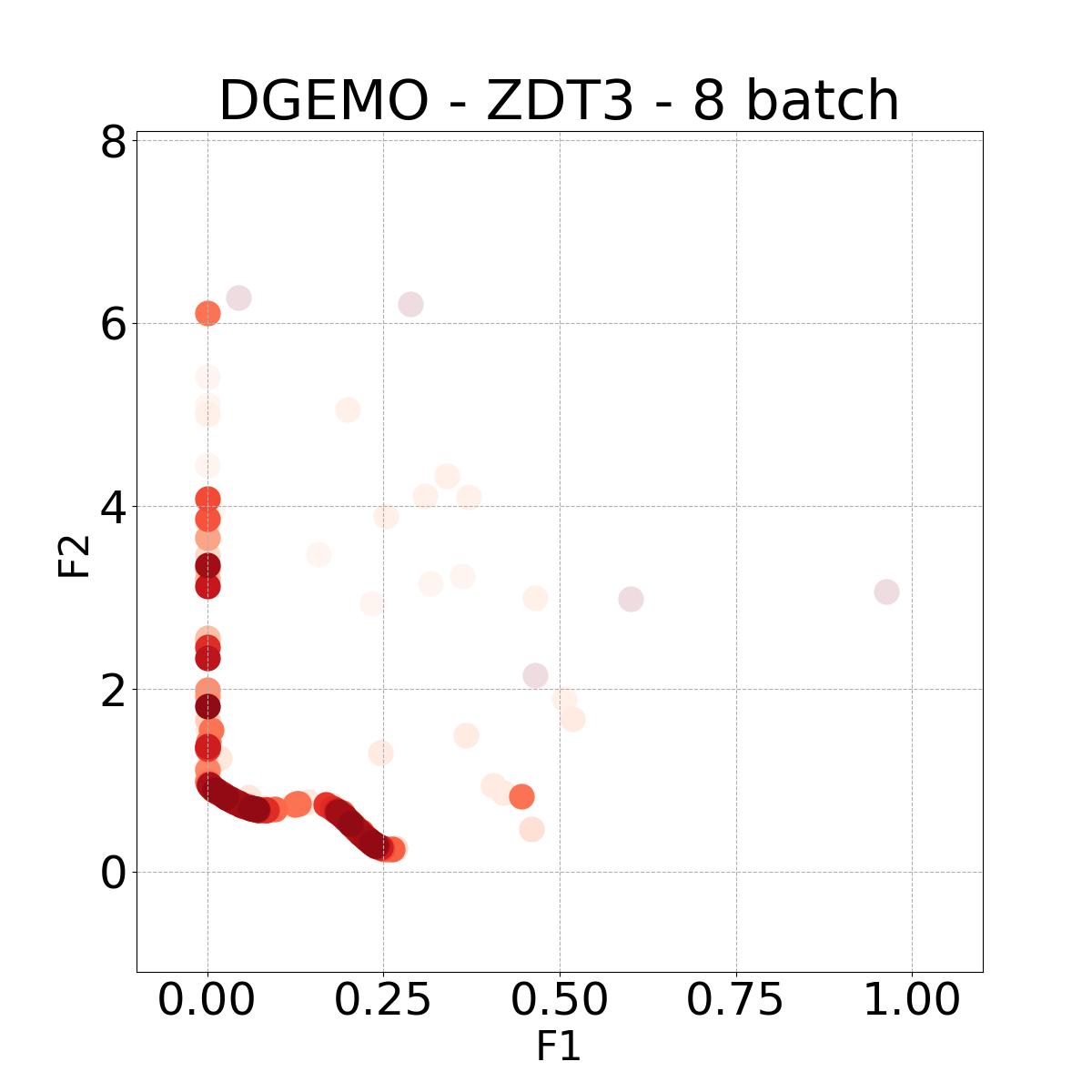}}\hspace{-3mm}
\end{subfigure}
\begin{subfigure}{\includegraphics[width=0.17\textwidth]{ 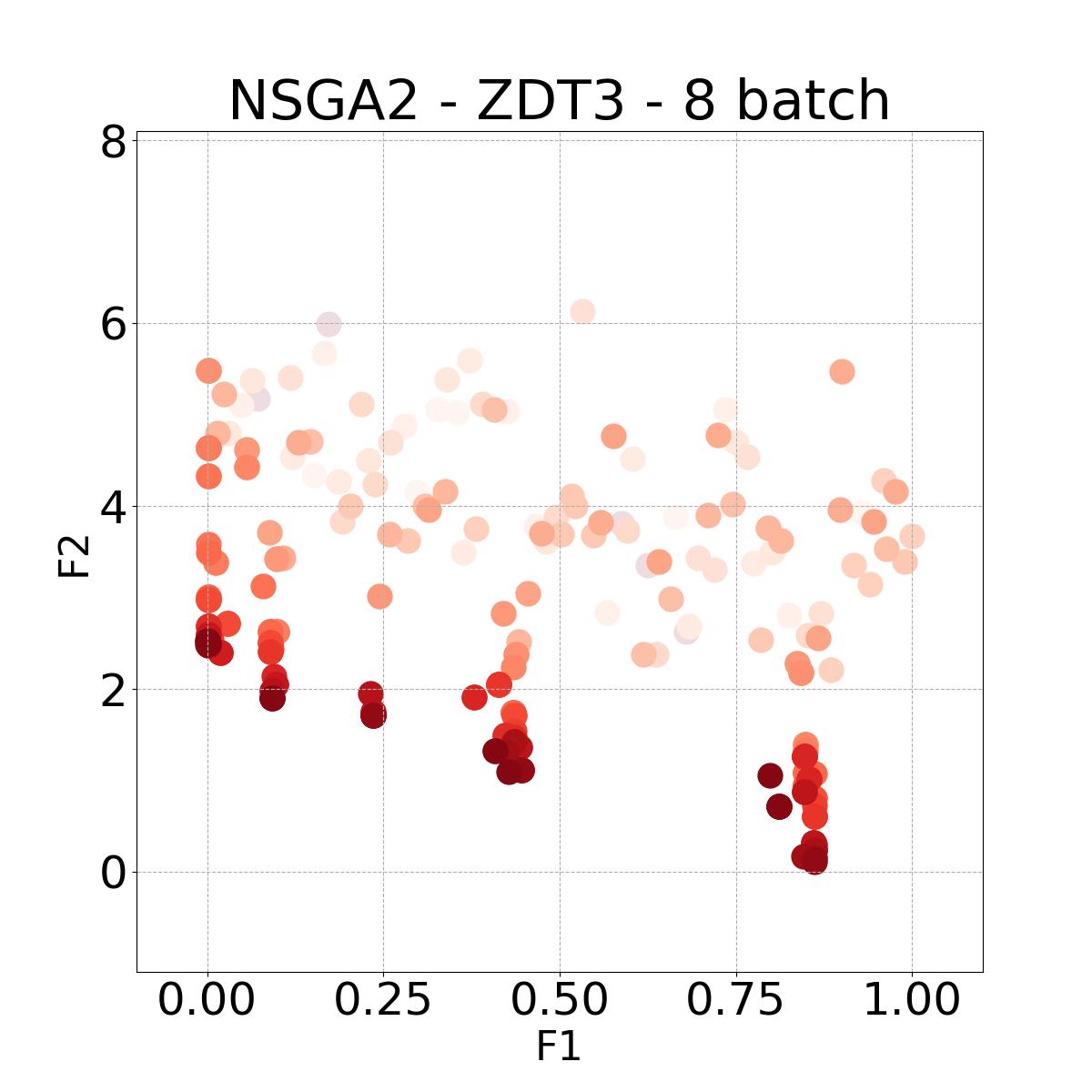}}\hspace{-3mm}
\end{subfigure}
\begin{subfigure}{\includegraphics[width=0.17\textwidth]{ 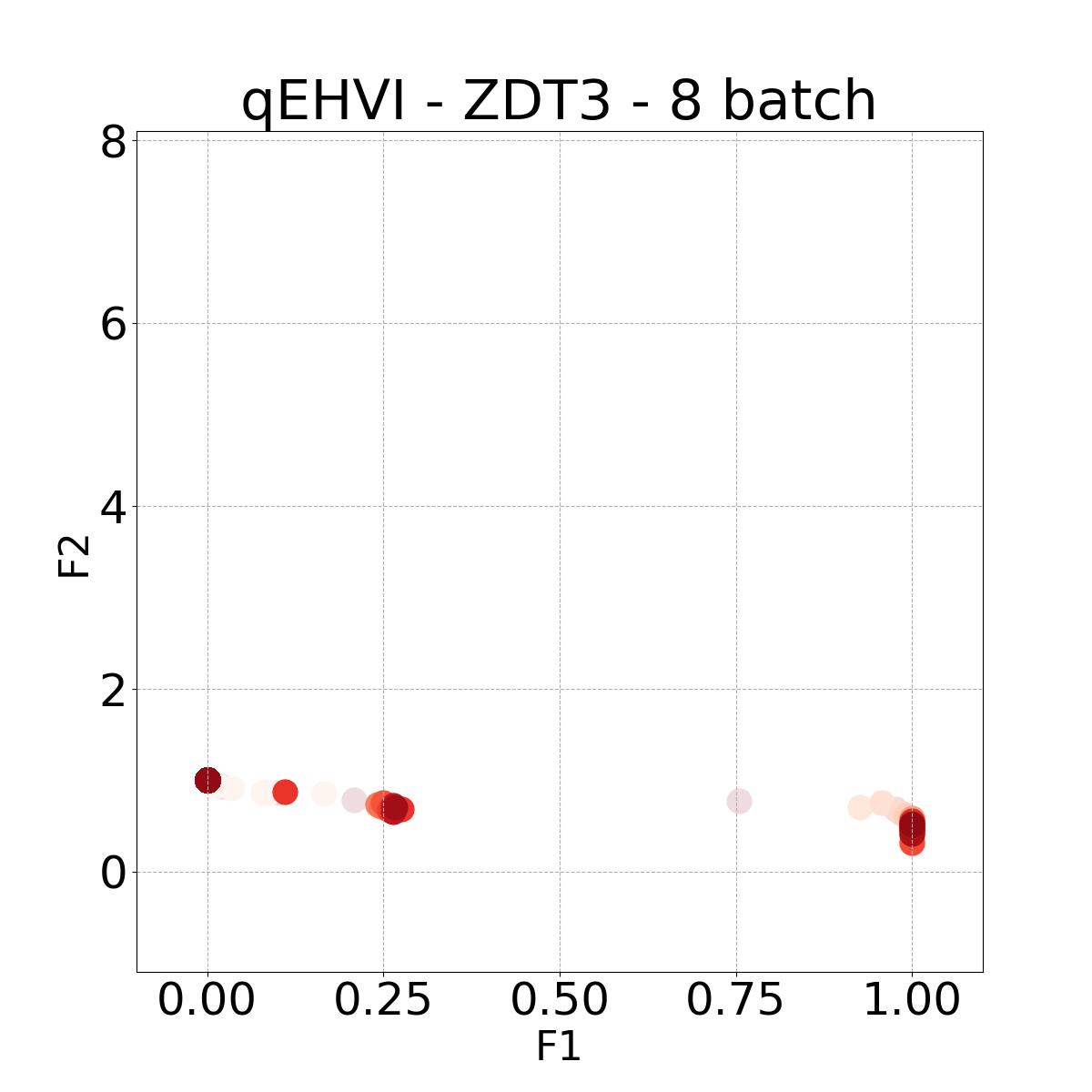}}\hspace{-3mm}
\end{subfigure}
\begin{subfigure}{\includegraphics[width=0.17\textwidth]{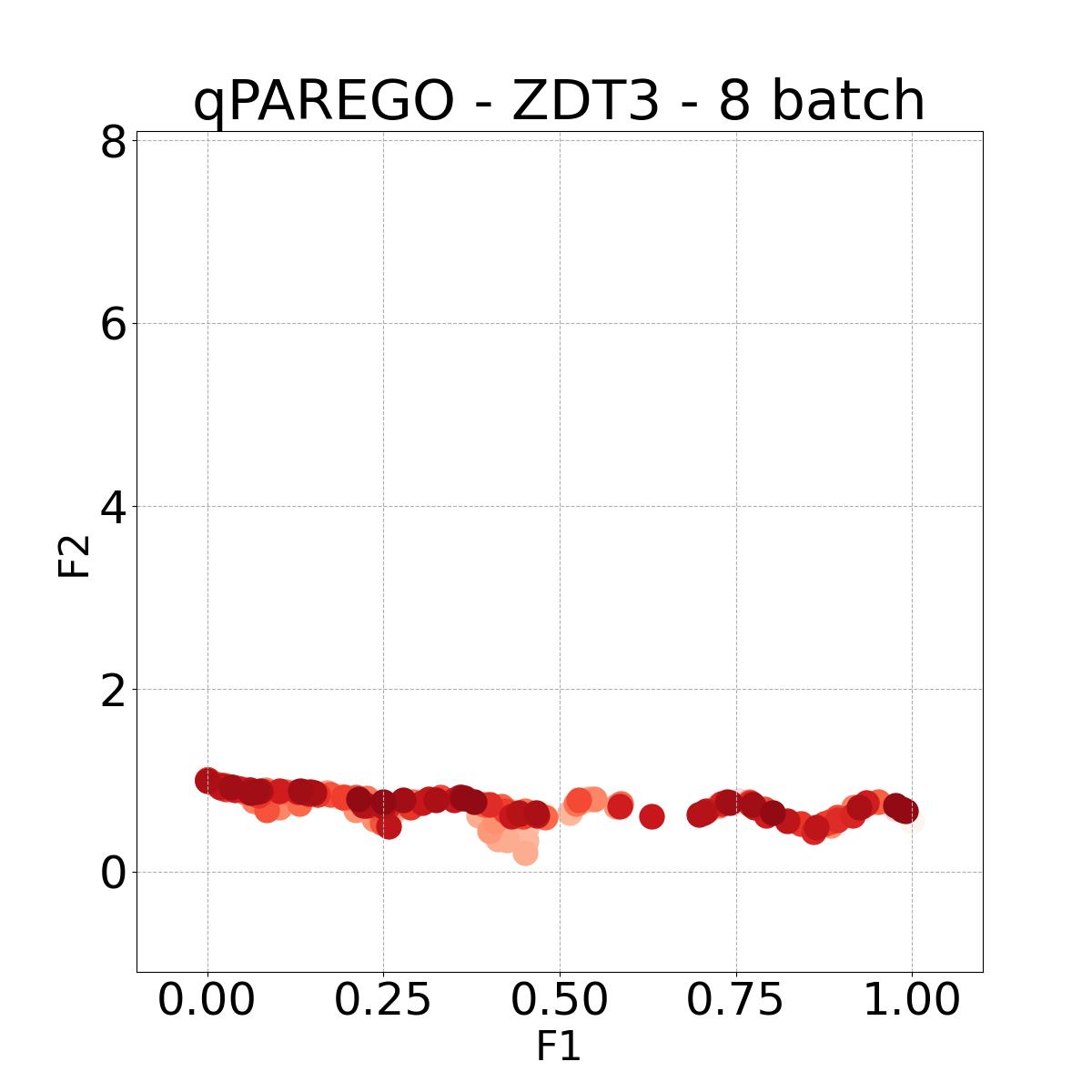}}\hspace{-3mm}
\end{subfigure}
\begin{subfigure}{\includegraphics[width=0.17\textwidth]{ 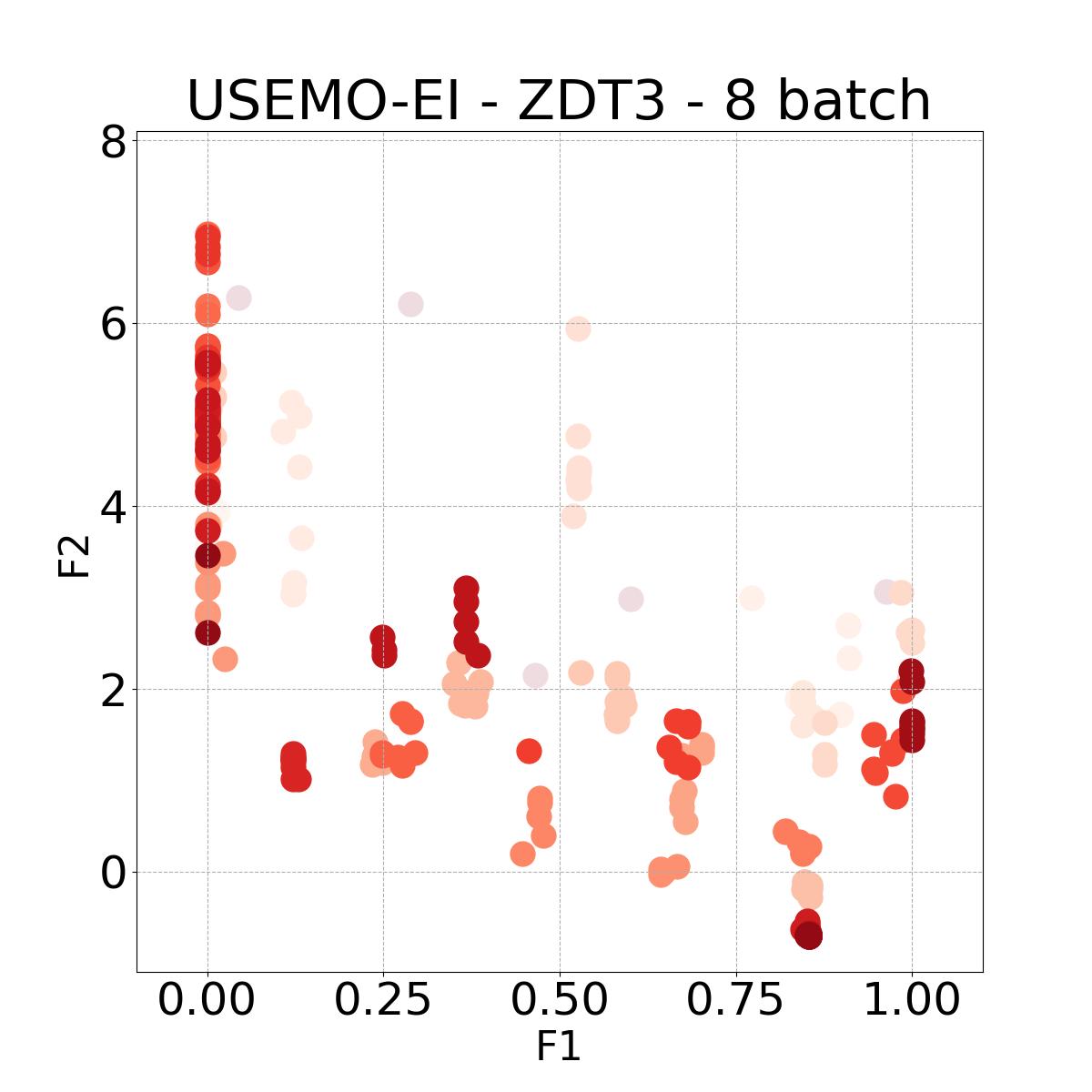}}\hspace{-3mm}\\
\end{subfigure}
\vspace{-0.4cm}
\caption{The Pareto front of the inputs selected by each algorithm. Points from later batches are depicted in darker colors.}
\label{appendix:PFs}
\end{figure*}

\subsection{Visualization of Diversity of Pareto front} \label{appendix-section:PF-scatterplots}
In Figure \ref{appendix:PFs}, we present illustrative examples of Pareto front figures achieved by each baseline. In each experiment, we deliberately vary the batch size to showcase the consistent diversity of results, irrespective of the batch size chosen. It's worth noting that several of benchmarks featured in this paper involve more than two objective functions, rendering it impractical to effectively visualize the Pareto front points in a coherent and legible manner.

\subsection{Acquisition Function Selection Statistics}\label{AF_stats}

Table \ref{tab:acq_histogram} contains statistics regarding the selection of each acquisition function by the MAB algorithm. It is important to clarify that the goal of the acquisition function selection multi-armed bandit algorithm is not solely to converge to a single acquisition function. Instead, its purpose is to dynamically choose the most appropriate acquisition function at each iteration to maximize overall performance improvement. Therefore, we believe that these statistics are interesting but do not necessarily lead to the conclusion that the most selected acquisition function is necessarily the best for the experiment. \looseness=-1

\subsection{Additional Real-World Experiments}\label{add_real_exp}

This section offers insights into two supplementary real-world experiments carried out to further demonstrate the effectiveness of PDBO when compared to existing baselines in terms of the hypervolume metric and diversity of the Pareto front. \looseness=-1

\vspace{1.0ex}

\noindent \textbf{Unmanned Aerial Vehicle (UAV) power system design:} This real-world benchmark \cite{belakaria2020machine} has two objective functions: minimizing the energy consumption and minimizing the UAV’s total mass. The search space has five input variables defined as the battery cells in series (ranging between 10 and 18), the battery cells in parallel (ranging between 16 and 70), the motor quantity (ranging between 6 and 10), the height of the stator structure (ranging between 80 and 260), and the motor stator winding turns (ranging between 100 and 550). We use a high-fidelity simulator to evaluate the two objective functions for any candidate input configuration. \looseness=-1

\vspace{1.0ex}

\noindent \textbf{SW-LLVM compiler settings optimization:} SW-LLVM \cite{siegmund2012predicting} is a compiler settings problem determined by d=10 compiler configurations. The goal of this experiment is to find a setting of the LLVM compiler that optimizes the memory footprint and performance on a given set of software programs.

In Figure \ref{fig:additional-real-world-hv} and Figure \ref{fig:additional-real-world-div}, we provide hypervolume and diverse Pareto front (DPF) results for these two additional real-world problems. 

\begin{table*}[ht]
\centering
\footnotesize
\setlength{\tabcolsep}{3pt}
\begin{tabularx}{\textwidth}{@{}p{1.5cm}*{16}{X}@{}}
\toprule
\centering
\multirow{3}{*}{\shortstack{Problem\\ Name}} & \multicolumn{4}{c}{2-Batch} & \multicolumn{4}{c}{4-Batch} & \multicolumn{4}{c}{8-Batch} & \multicolumn{4}{c}{16-Batch} \\ \cmidrule(lr){2-5} \cmidrule(lr){6-9} \cmidrule(lr){10-13} \cmidrule(lr){14-17}
      & TS & EI & UCB & ID & TS & EI & UCB & ID & TS & EI & UCB & ID & TS & EI & UCB & ID \\ \midrule
ZDT1  & 12$\pm$12 & 9$\pm$7 & 1$\pm$1 & 75$\pm$15 & 13$\pm$18 & 14$\pm$12 & 4$\pm$3 & 67$\pm$17 & 16$\pm$28 & 10$\pm$12 & 5$\pm$5 & 67$\pm$30 & 10$\pm$21 & 5$\pm$7 & 11$\pm$10 & 72$\pm$21 \\
ZDT2  & 23$\pm$17 & 22$\pm$11 & 19$\pm$10 & 33$\pm$18 & 17$\pm$18 & 25$\pm$18 & 16$\pm$17 & 40$\pm$30 & 9$\pm$19 & 31$\pm$27 & 21$\pm$24 & 39$\pm$27 & 7$\pm$19 & 40$\pm$38 & 24$\pm$29 & 28$\pm$34 \\
ZDT3  & 1$\pm$5 & 8$\pm$6 & 4$\pm$5 & 85$\pm$10 & 1$\pm$2 & 12$\pm$11 & 2$\pm$3 & 83$\pm$12 & 1$\pm$2 & 16$\pm$14 & 7$\pm$8 & 74$\pm$17 & 1$\pm$2 & 28$\pm$33 & 7$\pm$3 & 62$\pm$35 \\
DTLZ1 & 13$\pm$10 & 13$\pm$9 &  7$\pm$5 & 65$\pm$16 & 11$\pm$12 & 13$\pm$11 & 7$\pm$6 & 67$\pm$17 & 10$\pm$13 & 12$\pm$17 & 10$\pm$10 & 66$\pm$27 & 14$\pm$24 & 20$\pm$26 & 16$\pm$14 & 48$\pm$35 \\
DTLZ3 & 7$\pm$6 & 9$\pm$11 & 3$\pm$4 & 79$\pm$15 & 16$\pm$19 & 9$\pm$10 & 5$\pm$6 & 69$\pm$23 & 13$\pm$20 & 17$\pm$24 & 9$\pm$10 & 60$\pm$32 & 26$\pm$36 & 15$\pm$32 & 21$\pm$24 & 37$\pm$38 \\
DTLZ5 & 4$\pm$3 & 85$\pm$8 & 3$\pm$3 & 5$\pm$4 & 5$\pm$6 & 82$\pm$8 & 3$\pm$2 & 8$\pm$6 & 6$\pm$8 & 68$\pm$20 & 10$\pm$10 & 14$\pm$19  & 7$\pm$17 & 61$\pm$29 & 14$\pm$9 & 17$\pm$22 \\
Gear Train Design & 1$\pm$4 & 2$\pm$3 & 1$\pm$1 & 94$\pm$5 & 1$\pm$4 & 2$\pm$3 & 2$\pm$2 & 93$\pm$6 & 2$\pm$7 & 5$\pm$6 & 3$\pm$1 & 88$\pm$8 & 2$\pm$6 & 10$\pm$15 & 8$\pm$8 & 79$\pm$18 \\
\bottomrule
\end{tabularx}
\vspace{-0.15cm} 
\caption{The percentage of each acquisition function used for each problem and batch size, averaged among 25 separate runs}
\label{tab:acq_histogram}
\end{table*}

\begin{figure*}[h!]
\begin{subfigure}{\includegraphics[width=0.24\textwidth]{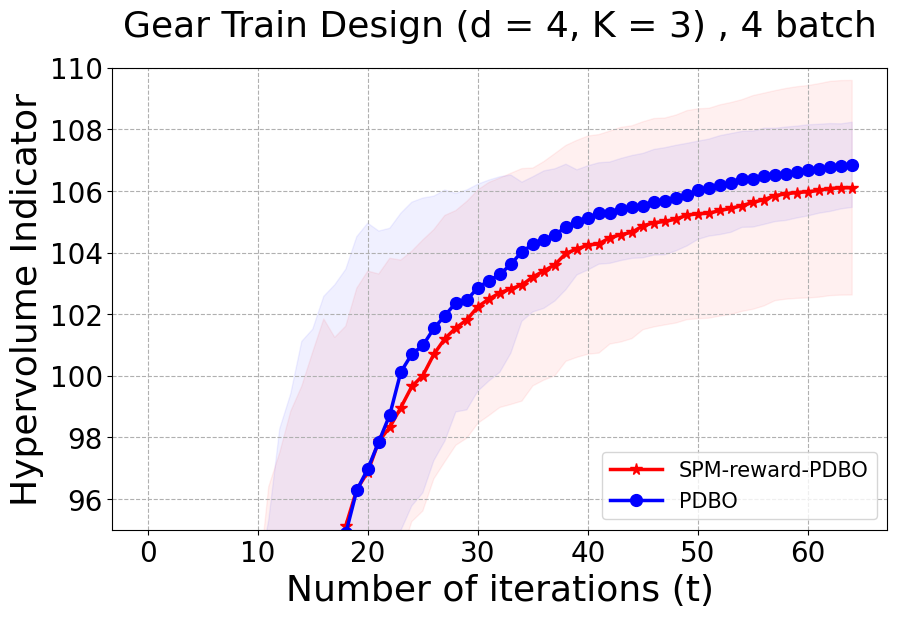}}
\end{subfigure}
\begin{subfigure}{\includegraphics[width=0.24\textwidth]{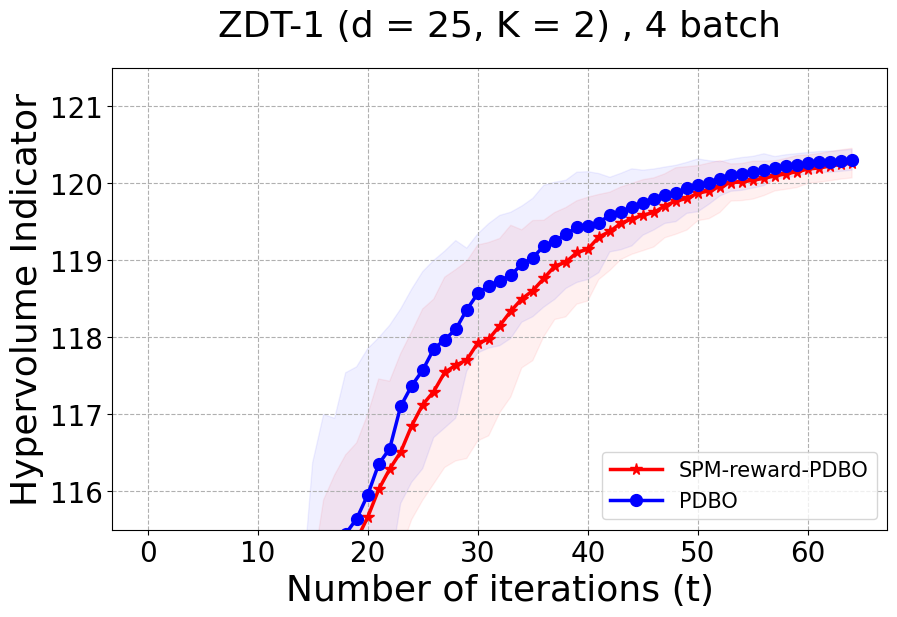}}
\end{subfigure}
\begin{subfigure}{\includegraphics[width=0.24\textwidth]{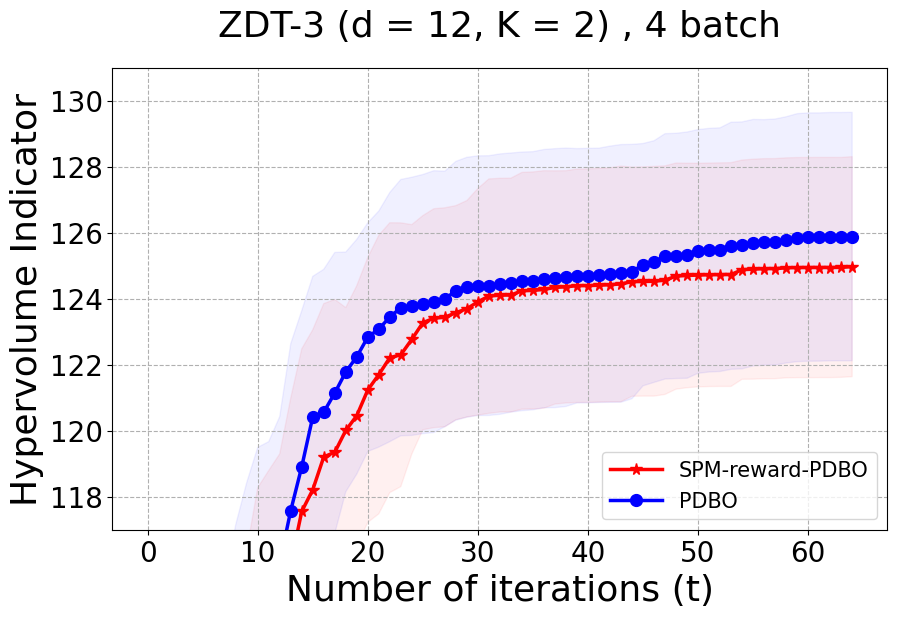}}
\end{subfigure}
\begin{subfigure}{\includegraphics[width=0.24\textwidth]{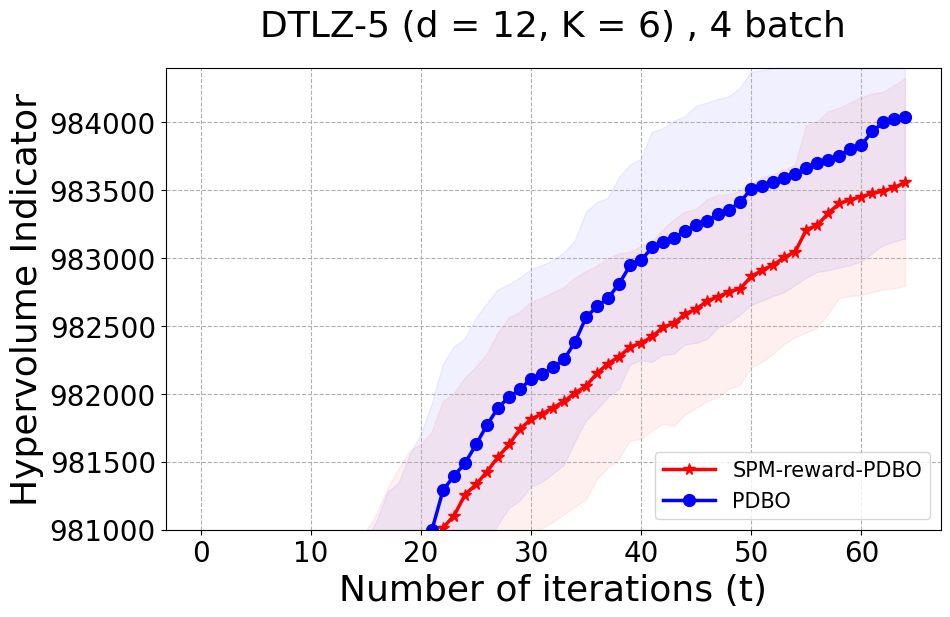}}
\end{subfigure}
\caption{Comparing PDBO against PDBO with the sum of posterior means (SPM) as the reward.}
\label{fig:pdbo_vs_spm}
\end{figure*}

\subsection{Ablation Studies} \label{ablations}

\subsubsection{Merits of DPP-Based Batch Selection compared to Greedy HVC selection.}

As detailed in Section 4.2 of our paper, we have established that Hypervolume alone may not effectively capture the diversity of individual data points. For a more nuanced assessment of each new point in relation to previously evaluated designs, we turn to Individual Hypervolume Contributions (HVCs). However, selecting points solely based on the highest HVC values presents several potential issues: 1. It may lead to exploitative behavior, as it relies solely on predictive mean information; 2. It may undermine diversity considerations, especially when multiple points exhibit closely ranked HVC values; 3. It may overlook diversity in the input domain, as DPP's utilization of a kernel allows for a more comprehensive representation of diversity across the input space. Figure \ref{fig:rebuttal_hvc_vs_dpp} presents the results of our ablation study on both a real-world problem and a synthetic problem offering a comparative evaluation of batch selection methods: one based on the highest hypervolume contribution (HVC) and the other based on our proposed multi-objective DPP-based batch selection approach.

\begin{figure}[ht!]
\centering
\includegraphics[width=0.3\textwidth]{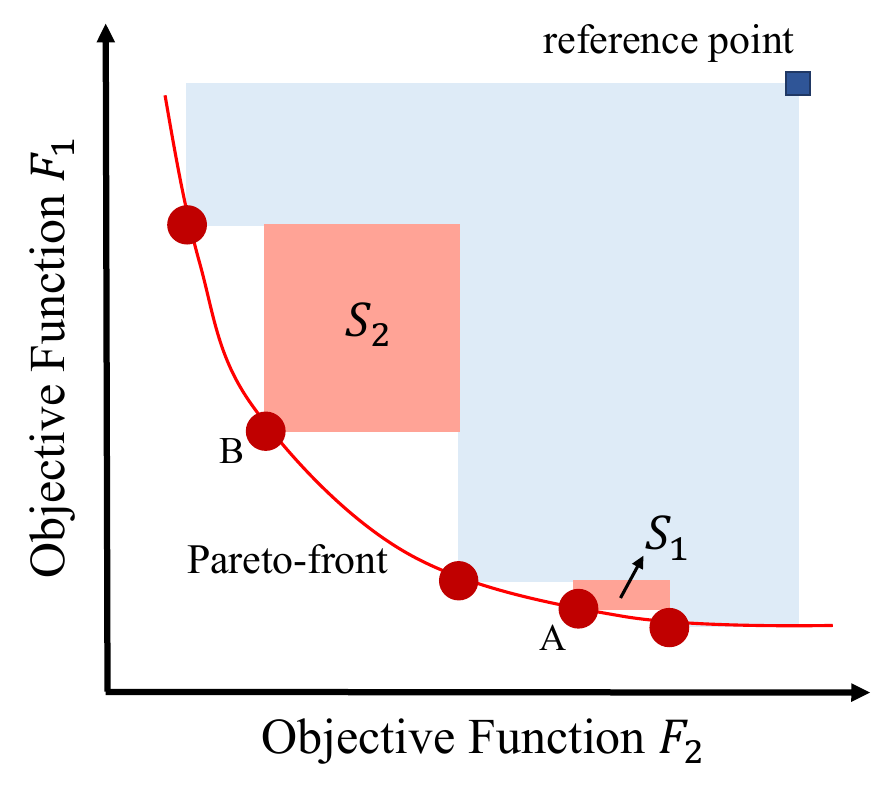}
\caption{Sections $S_1$ and $S_2$ exhibit the specific hypervolume contributions of points A and B, respectively. This figure indicates a positive correlation between higher individual hypervolume contributions and increased diversity among the points.}\label{fig:hv-contribution}
\end{figure}

\begin{figure*}[h!]
     \centering
     \begin{subfigure}
         \centering
         \includegraphics[width=0.23\textwidth]{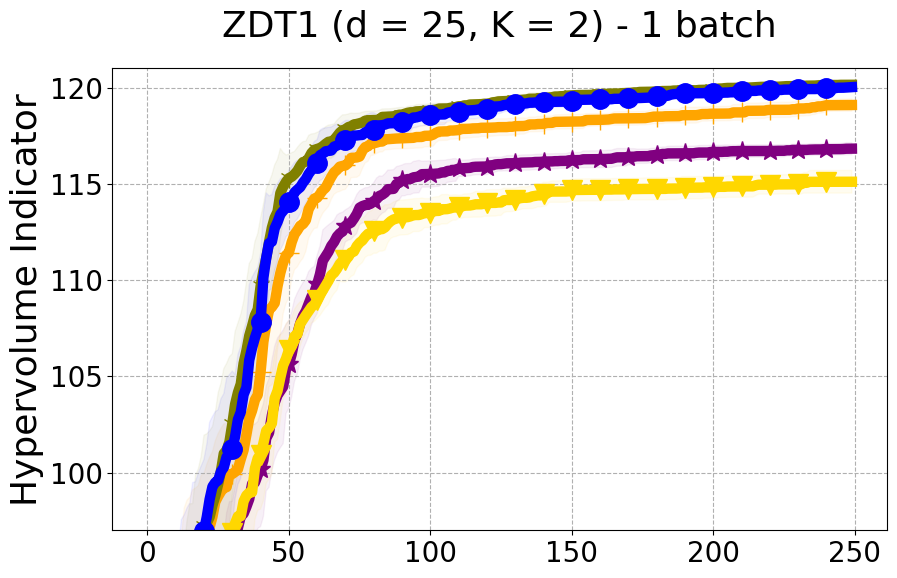}
     \end{subfigure}
     \hfill
     \begin{subfigure}
         \centering
         \includegraphics[width=0.23\textwidth]{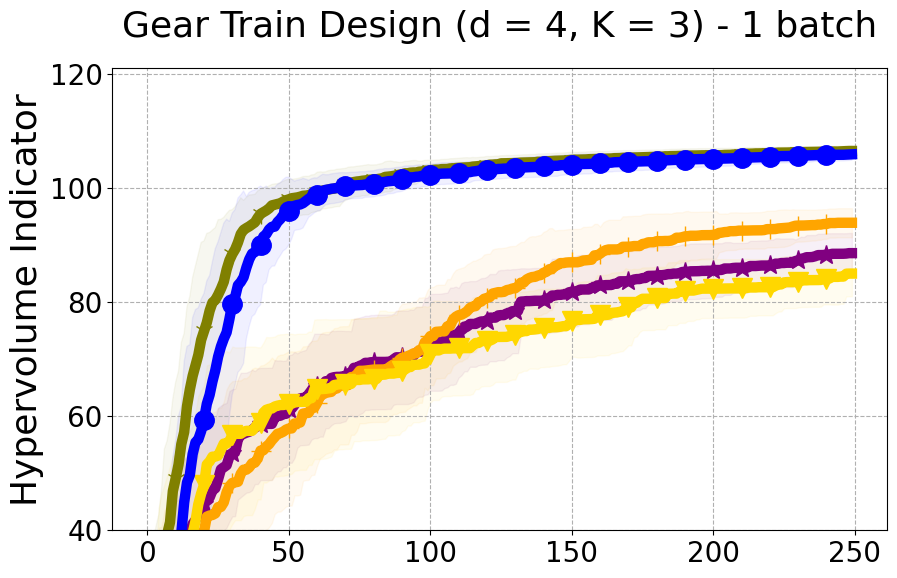}
     \end{subfigure}
     \hfill
     \begin{subfigure}
         \centering
         \includegraphics[width=0.23\textwidth]{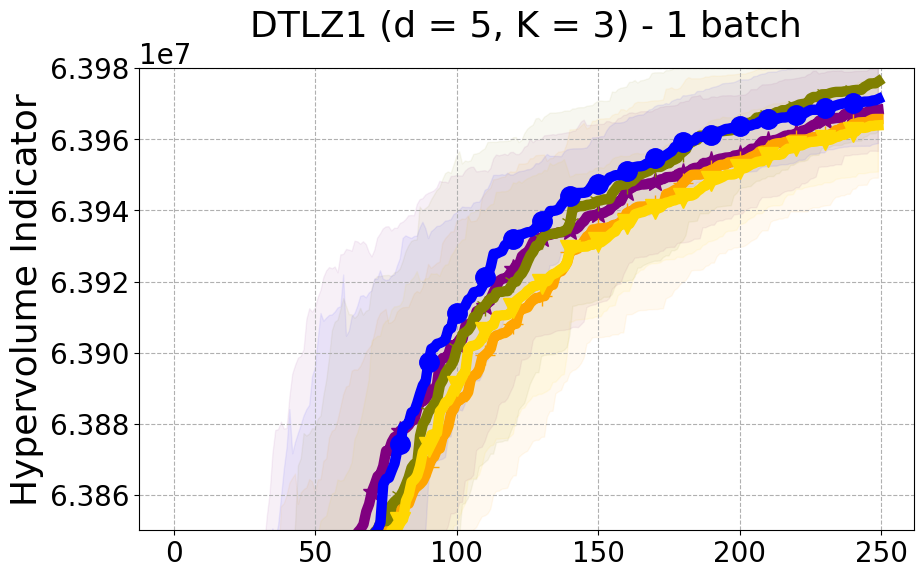}
     \end{subfigure}
     \centering
     \begin{subfigure}
         \centering
         \includegraphics[width=0.23\textwidth]{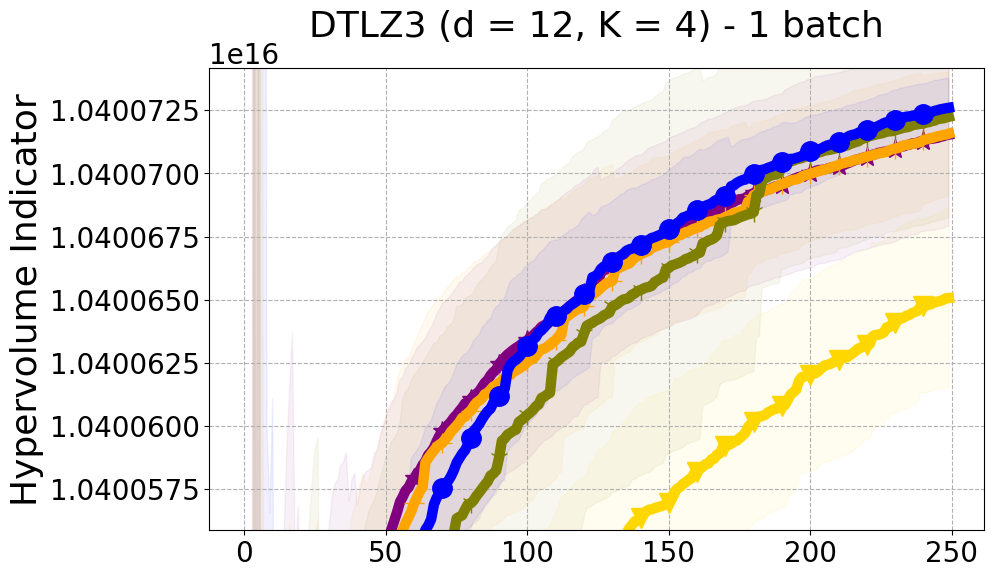}
     \end{subfigure}
     \hfill
     \begin{subfigure}
         \centering
         \includegraphics[width=0.24\textwidth]{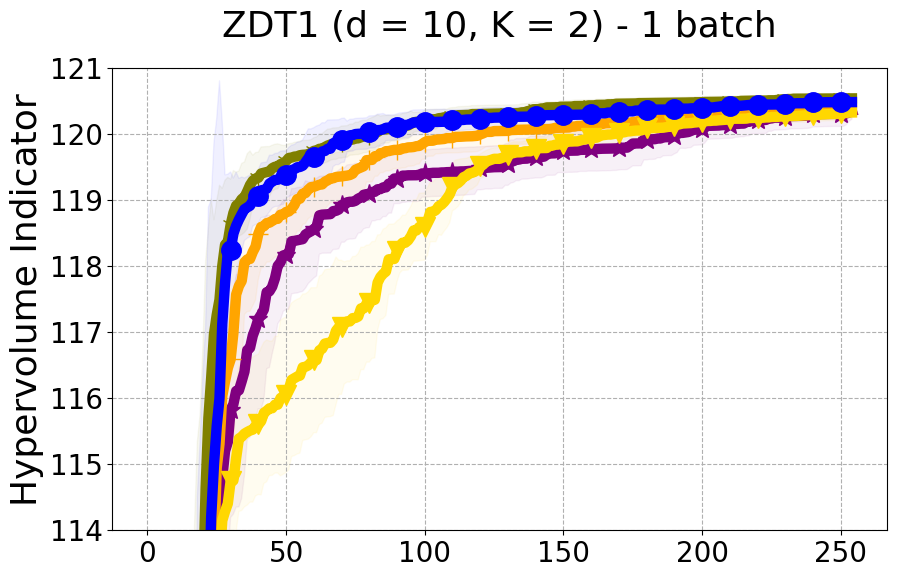}
     \end{subfigure}
     \hfill
     \begin{subfigure}
         \centering
         \includegraphics[width=0.23\textwidth]{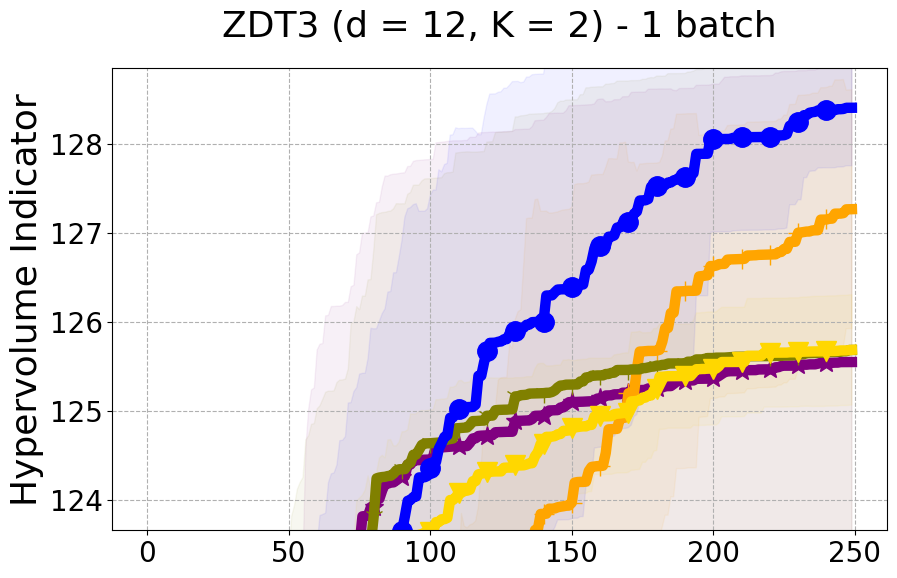}
     \end{subfigure}
     \hfill
     \begin{subfigure}
         \centering
         \includegraphics[width=0.23\textwidth]{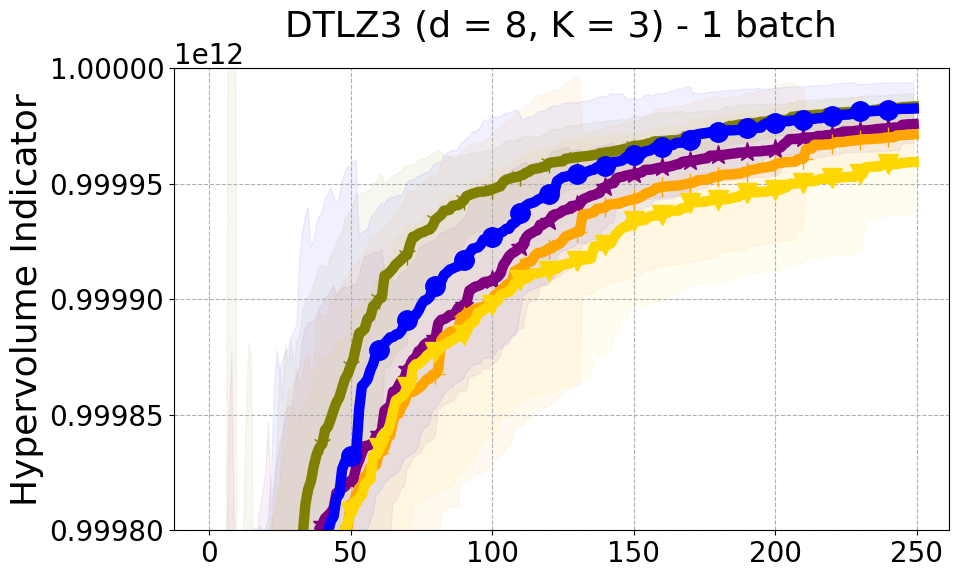}
     \end{subfigure}
      \centering
     \begin{subfigure}
         \centering
         \includegraphics[width=0.23\textwidth]{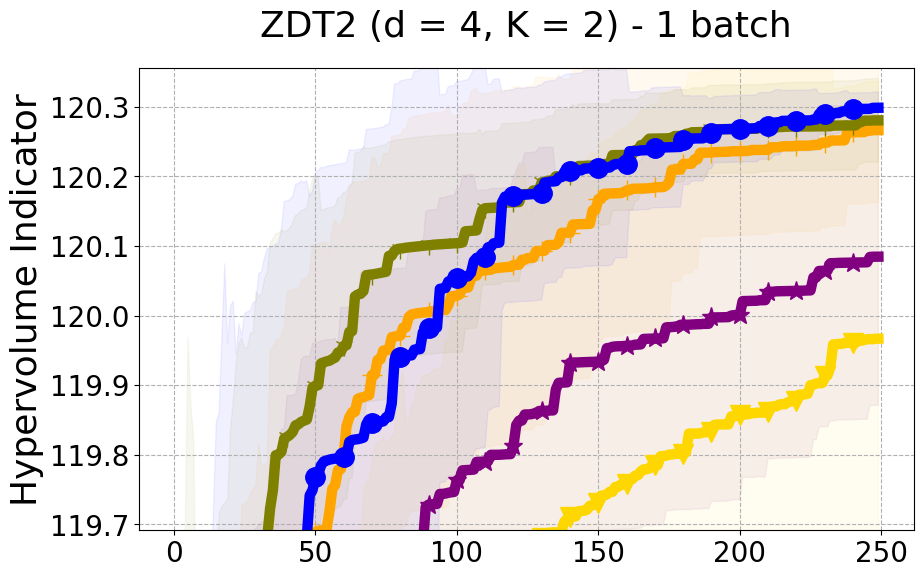}
     \end{subfigure}
    \begin{subfigure}
         \centering
         \includegraphics[width=0.98\textwidth]{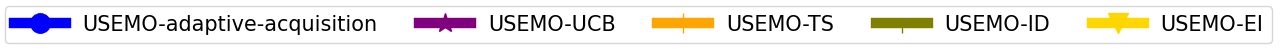}
     \end{subfigure}
     \vspace{-0.3cm}
        \caption{Acquisition functions ablation study results}
    \label{fig:acquisitions-ablation}
\end{figure*}

\subsection{Additional Diversity Metrics} \label{section:appendix-IGD}

We present examples of our results evaluated using the Inverted Generational Distance (IGD) metric \cite{coello2004study}. However, we contend that IGD may not be a suitable metric for assessing diversity. This is because IGD is heavily influenced by the number of points discovered, which can lead to misleading results. As an example, if one baseline uncovers only two points very close to the ideal Pareto front, it may achieve a heavily more favorable IGD value than a method finding 50 diverse points slightly further away. Clearly, the set of two points is not inherently more diverse. Furthermore, the IGD metric relies on a predefined ground truth Pareto front, which is often unavailable in black-box settings, rendering its use less meaningful. Moreover, the choice of the ideal Pareto front shape can impact whether IGD primarily reflects diversity or performance. Recent work \cite{pierrot2022multi} proposed an approach to evaluate diversity for cheap multi-objective problems. However, the proposed approach requires the user to manually define space descriptors. Depending on the descriptors' definition, the results provided by the metric will change leading to potentially inconsistent results, especially in the black-box expensive setting where it is harder to define the descriptors. To our understanding, there is no clear/known strategy to define the descriptors.

As additional metrics, we employ the Pareto Hypervolume (PHV) to evaluate the quality of the uncovered optimal Pareto front and use the Diversity of Pareto Front (DPF) metric to assess the algorithm's ability to select diverse designs. We believe that incorporating DPF alongside PHV provides a more comprehensive evaluation of the algorithm since these metrics offer complementary insights into different aspects of performance.

Here, we include two new metrics. First, we calculate the DPF value exclusively on the optimal discovered Pareto frontier as shown in Figure \ref{fig:appendix-dpf-on-pf}. Additionally, we include experiments where the IGD metric is computed as shown in Figure \ref{fig:appendix-IGD-results}. \looseness=-1

\subsection{Experimental Details}\label{subsect:appendix-experimental-details}
This section includes supplementary implementation elements and experimental details.

Table \ref{tab:problem-size} provides information regarding the dimensions, objectives, and reference points utilized for each benchmark problem in the primary hypervolume and DPF results. Additionally, Table \ref{tab:tab-gp} presents the hyperparameters of the GP surrogate model. In all experiments, a consistent definition of the GP surrogate model is employed, featuring a zero mean function and an anisotropic Matern 5/2 kernel.

\begin{table}[ht]
\centering
\begin{tabular}{c        c        c        c} 
 \toprule
 Problem Name & d & K & reference point  \\ 
 \midrule
 ZDT-1 & 25 & 2  & [11.0, 11.0]\\ 
 \midrule
 ZDT-2 & 4 & 2  & [11.0, 11.0]  \\
 \midrule
  ZDT-3 & 12 & 2  & [11.0, 11.0] \\ 
  \midrule
 SW-LLVM & 10 & 2  & [1000.0, 500.0]   \\  
 \midrule
 UAV & 6 & 2  & [1e9, 1000.0]   \\  
 \midrule
  Gear Train Design & 4 & 3 & [6.6764, 59.0, 0.4633]\\ 
 \midrule
 DTLZ-1 & 10 & 4  & [400.0, ..., 400.0]   \\  
 \midrule
  DTLZ-3 & 9 & 4 & [10000.0, ..., 10000.0]  \\ 
 \midrule
  DTLZ-5 & 12 & 6 & [10.0, ..., 10.0]  \\ 
  \bottomrule
\end{tabular}
\caption{The problem details for hyporvolume and diversity experiments}
\label{tab:problem-size}
\end{table}

\begin{table}[ht]
\centering
\begin{tabular}{c           c} 
 \toprule 
 Hyperparameter name & Hayperparameter value  \\ 
 \midrule
 initial l & $(1, \cdots, 1) \in \mathcal{R}^d$  \\  
 \midrule
 l range &  $(\sqrt{10^{-3}}, \sqrt{10^3})$  \\
 \midrule
  initial $\sigma_f$ & 1  \\ 
 \midrule
 $\sigma_f$ range & $(\sqrt{10^{-3}}, \sqrt{10^3})$   \\ 
 \midrule
  initial $\sigma_n$  & $10^{-2}$  \\ 
 \midrule
  $\sigma_n$ range & $10^{-2}$ \\ 
\bottomrule
\end{tabular}
\caption{Hyperparameters used in the GP} 
\label{tab:tab-gp}
\end{table}

The NSGA-II multi-objective optimization (MOO) solver adopts simulated binary crossover \cite{deb1995simulated} with a parameter of $\mu_c = 15$, and polynomial mutation \cite{deb1996combined} with a parameter of $\mu_m = 20$. It utilizes a population size of 100 and runs for a total of 200 generations to explore the Pareto front of acquisition function values. The initial population is derived from the best current samples identified using non-dominated sorting \cite{deb2002fast}. As for the baseline NSGA-II algorithm, it employs the same crossover and mutation parameters ($\mu_c$ and $\mu_m$), but the population size is adjusted to match the batch size, and the number of generations aligns with the number of algorithm iterations. These hyperparameters are as mentioned in \cite{konakovic2020diversity}

Table \ref{tab:runtimes} presents a comprehensive comparison of the runtime performance for all the methods discussed. The qEHVI and qPAREGO techniques have the capability to leverage GPUs for their execution. However, considering the limited accessibility of GPUs for many BO problems, we also provide an analysis of these methods when executed on CPUs. It should be noted that the runtime of qEHVI-CPU and qPAREGO-CPU is notably longer when compared to their counterparts. Therefore, for these two methods, we report the average runtime based on 4 runs, whereas for all other experiments, we report the runtime based on 25 runs. The experiments utilizing GPUs were conducted using four NVIDIA Quadro RTX 6000 GPUs, while the CPU experiments were performed on an AMD EPYC 7451 24-Core Processor.

\subsubsection{Practical Issues and Limitations of Existing Baselines}\label{issues_ehvi_dgemo}
The two state-of-the-art methods are DGEMO and qEHVI. The both have practical issues and limitations leading to their inapplicability in some experiments. Below we provide a detailed explanation of these limitations:
\begin{itemize}
    \item DGEMO: One algorithmic step of the DGEMO algorithm is to divide the points in the Pareto front into different regions (line 9, Algorithm 1 in \cite{konakovic2020diversity}). Depending on the number of objective functions, the algorithm used to divide the Pareto front into regions would be different. In the implementation of DGEMO, only the cases of 2 and 3 functions were implemented, any case beyond 3 objectives is not handled. So this limitation of the DGEMO approach prevents it from being applicable to any problem. The inability to handle more than 3 objectives was not explicitly discussed in the paper. Therefore, no solution was suggested for it and no implementation was provided for this case. Finding the appropriate algorithm for the splitting approach when the number of objectives exceeds 3 and implementing it is not straightforward, and a research investigation in itself.

\item qEHVI: qEHVI algorithm has the capability to use auto differentiation and batch selection parallelization. However, this leads to high memory consumption when the batch sizes increase or when the input dimension or the number of objective functions increases, leading to an out-of-memory issue even on (GPU) machines with 40 GB of memory. Unfortunately, we do not have access to higher memory machines. We believe that this is a limitation of the approach since most scientists and practitioners using BO algorithms might not necessarily have access to very high-memory GPU machines.

\end{itemize}

\subsection{Limitations}\label{limitations}
Our approach overcomes the adaptive acquisition function selection problem via a MAB approach. However, our algorithm requires the proper selection of the MAB algorithm hyperparameters, namely, the decay factor  and the probability of selection hyperparameter . We set these hyperparameters manually based on a previous study of their suitable value \cite{hoffman2011portfolio}. We leave the study of the adaptive selection of hyperparameters to future work. Another limitation of our work is its inability to handle high-dimensional search space. However, it is important to note that this work is not proposed in the context of high-dimensional BO, and it can be synergistically combined with effective high-dimensional BO approaches such as \cite{eriksson2019scalable} to handle high-dimensional search spaces.

\begin{figure*}[ht]
    \centering
    \begin{subfigure}
    \centering
     \begin{subfigure}
         \centering
         \includegraphics[width=0.24\textwidth]{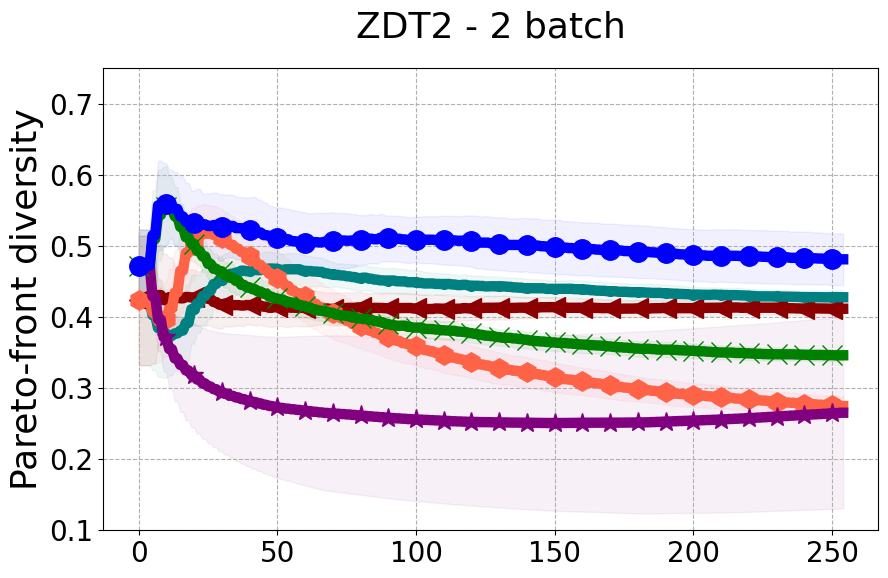}
     \end{subfigure}
     \hfill
     \begin{subfigure}
         \centering
         \includegraphics[width=0.24\textwidth]{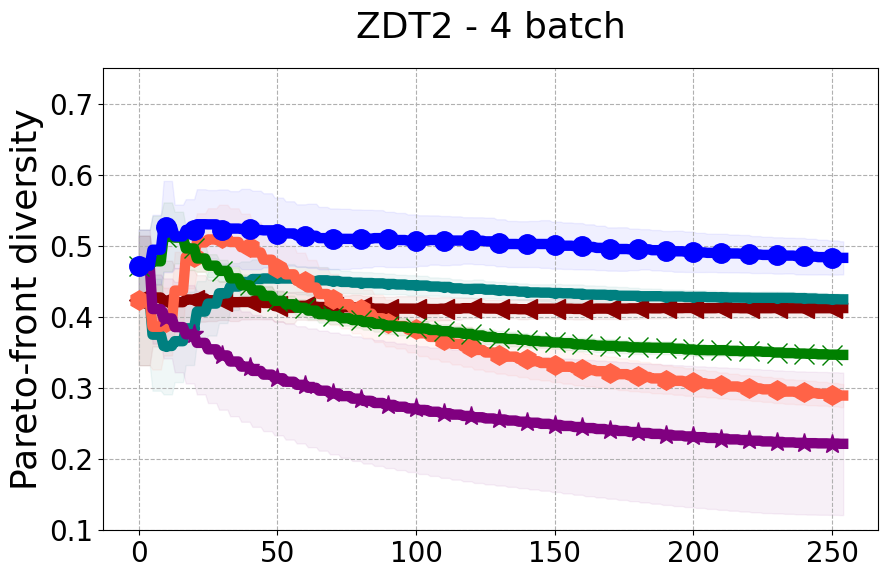}
     \end{subfigure}
     \hfill
     \begin{subfigure}
         \centering
         \includegraphics[width=0.24\textwidth]{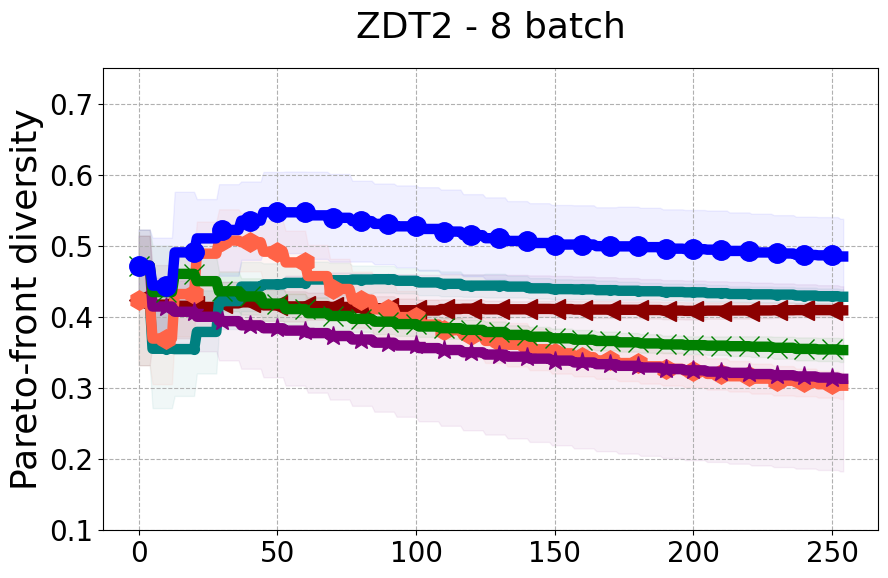}
     \end{subfigure}
     \hfill
     \begin{subfigure}
         \centering
         \includegraphics[width=0.24\textwidth]{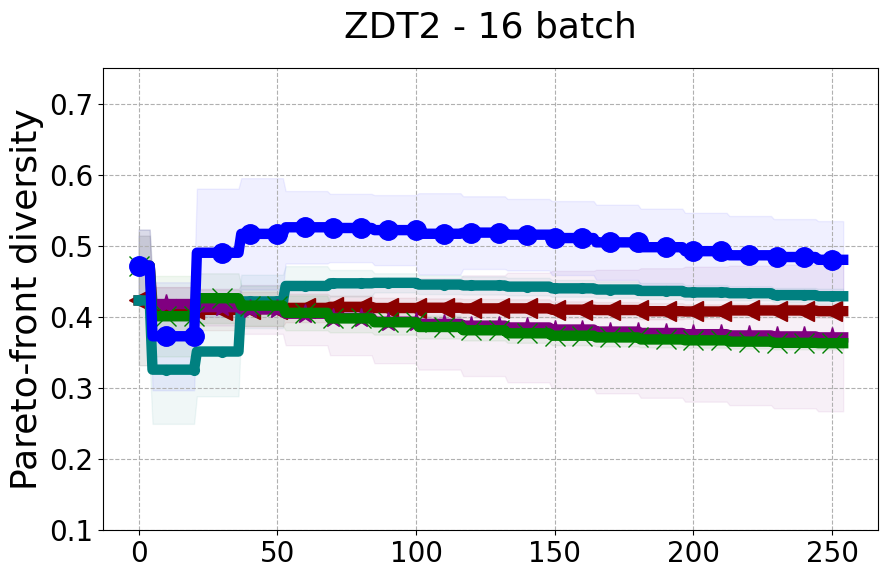}
     \end{subfigure}
        \label{fig:paper-div-zdt2}
    \end{subfigure}
    \centering
    \begin{subfigure}
    \centering
     \begin{subfigure}
         \centering
         \includegraphics[width=0.24\textwidth]{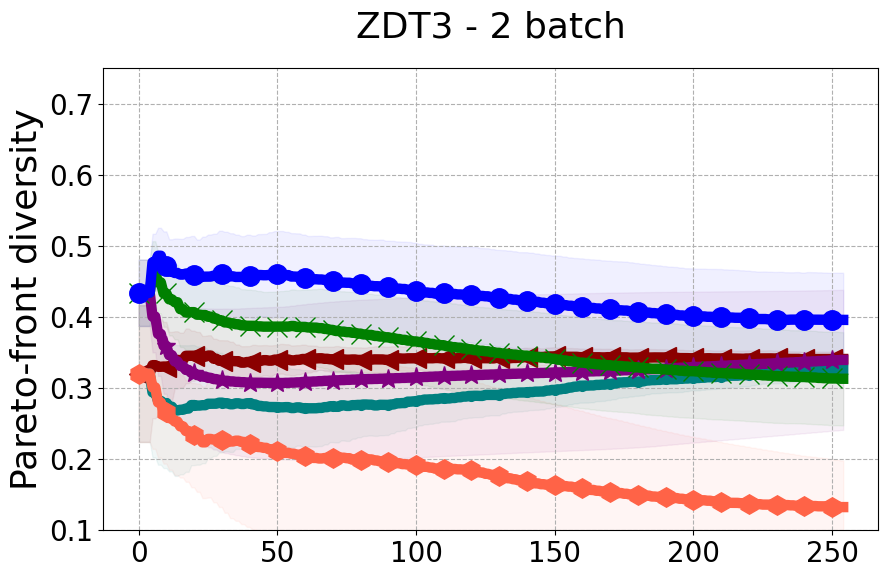}
     \end{subfigure}
     \hfill
     \begin{subfigure}
         \centering
         \includegraphics[width=0.24\textwidth]{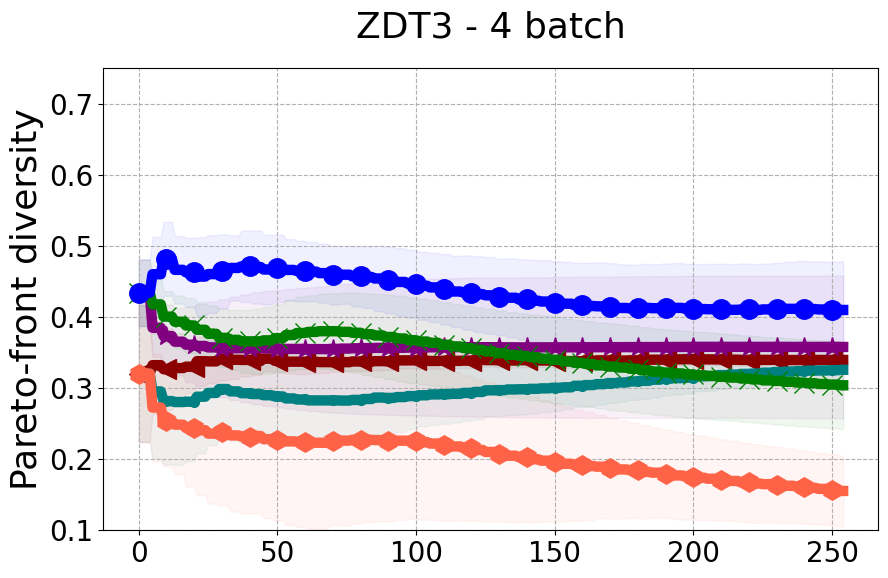}
     \end{subfigure}
     \hfill
     \begin{subfigure}
         \centering
         \includegraphics[width=0.24\textwidth]{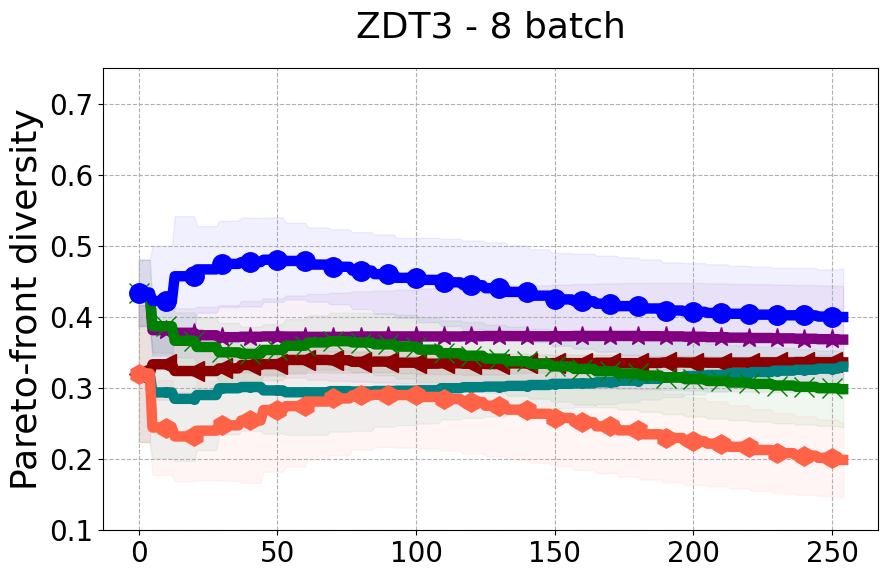}
     \end{subfigure}
     \hfill
     \begin{subfigure}
         \centering
         \includegraphics[width=0.24\textwidth]{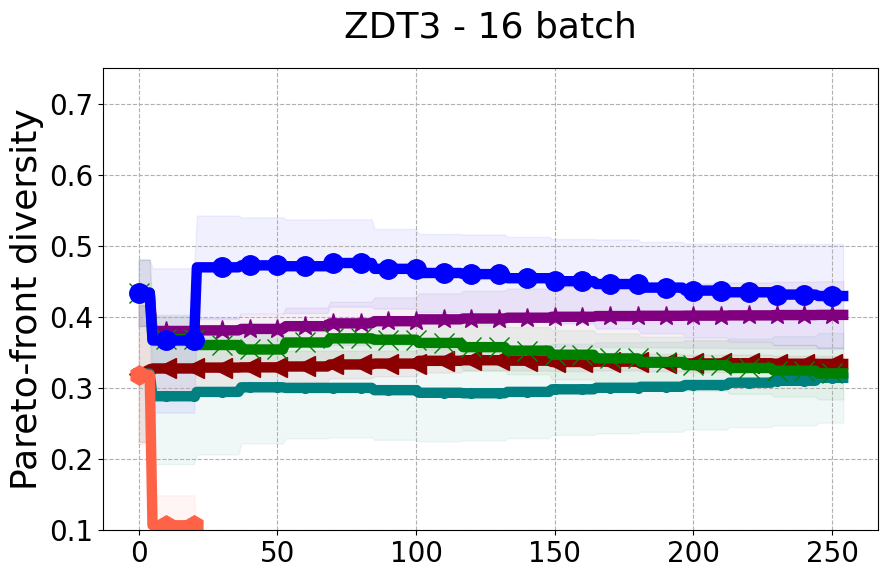}
     \end{subfigure}
    \label{fig:paper-div-zdt3}
     \end{subfigure}
    \centering
    \begin{subfigure}
     \centering
     \begin{subfigure}
         \centering
         \includegraphics[width=0.24\textwidth]{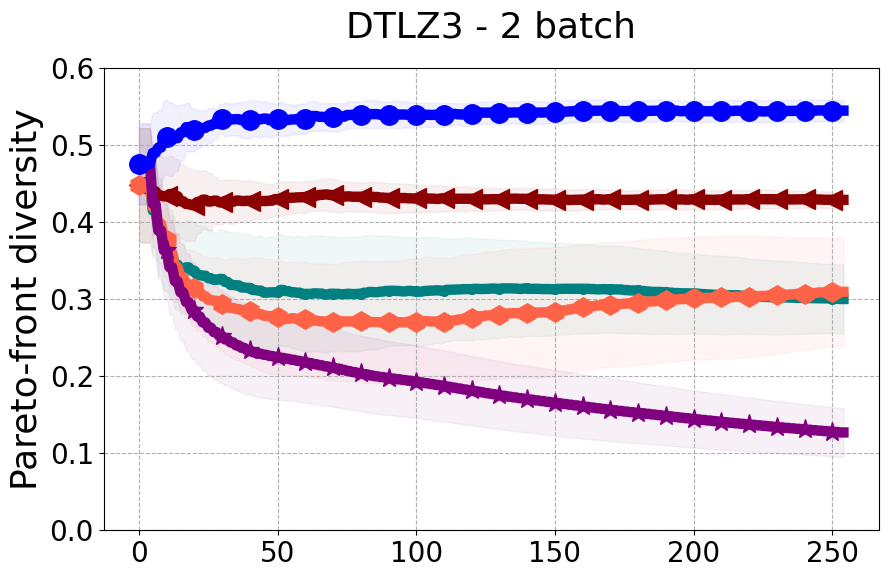}
     \end{subfigure}
     \hfill
     \begin{subfigure}
         \centering
         \includegraphics[width=0.24\textwidth]{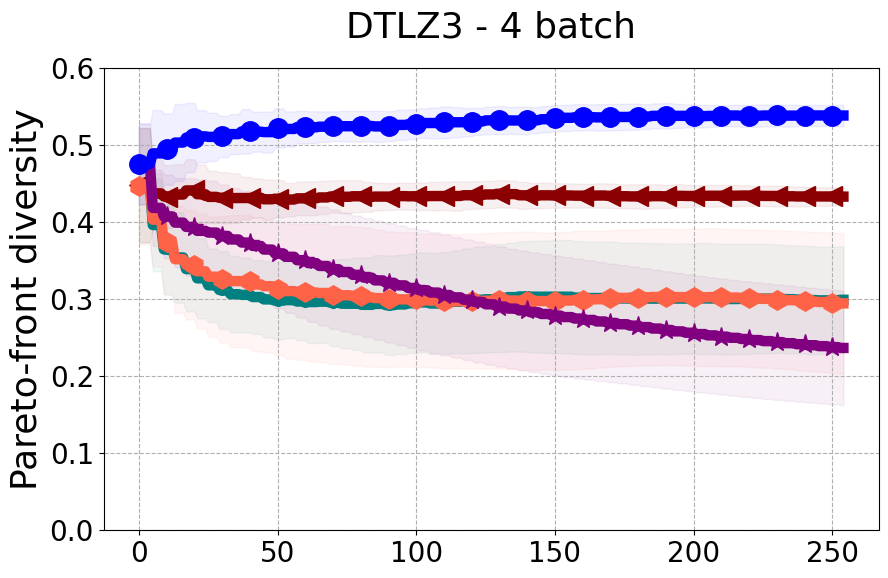}
     \end{subfigure}
     \hfill
     \begin{subfigure}
         \centering
         \includegraphics[width=0.24\textwidth]{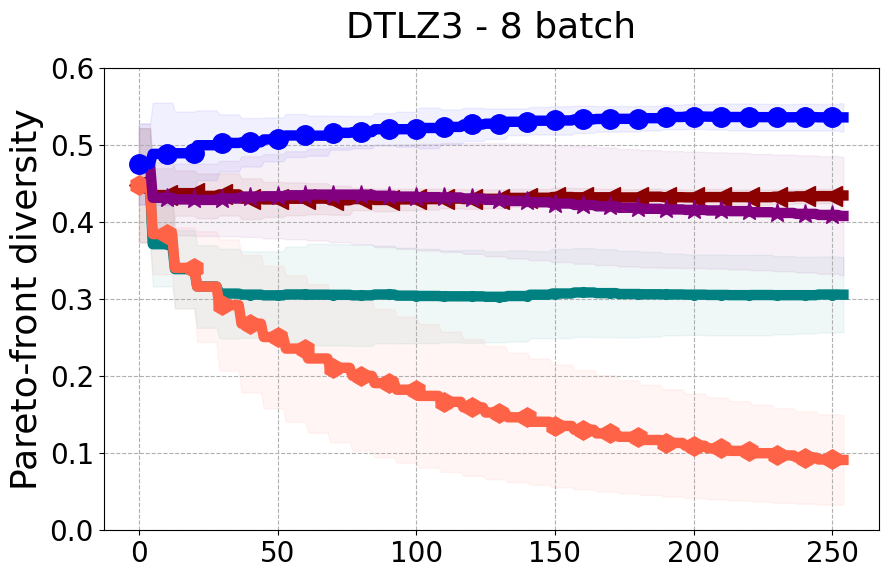}
     \end{subfigure}
     \hfill
     \begin{subfigure}
         \centering
         \includegraphics[width=0.24\textwidth]{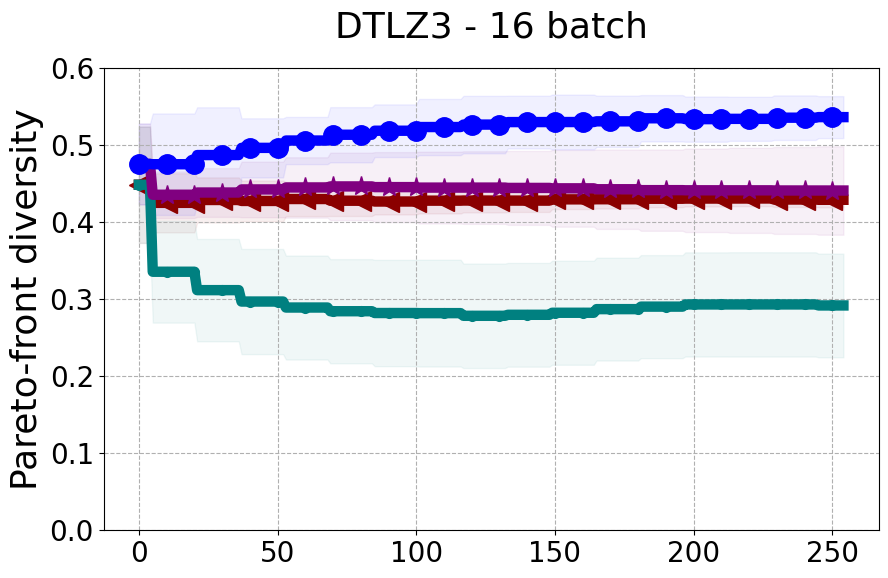}
     \end{subfigure}
    \label{fig:paper-div-dtlz3}
     \end{subfigure}
      \begin{subfigure}
         \centering
         \includegraphics[width=0.98\textwidth]{figures/diversity-legend.png}
     \end{subfigure}
     \caption{DPF results for all benchmarks}
     \label{fig:appendix-dpf}
\end{figure*}

\begin{figure*}[ht]
    \centering
    \begin{subfigure}
     \centering
     \begin{subfigure}
         \centering
         \includegraphics[width=0.24\textwidth]{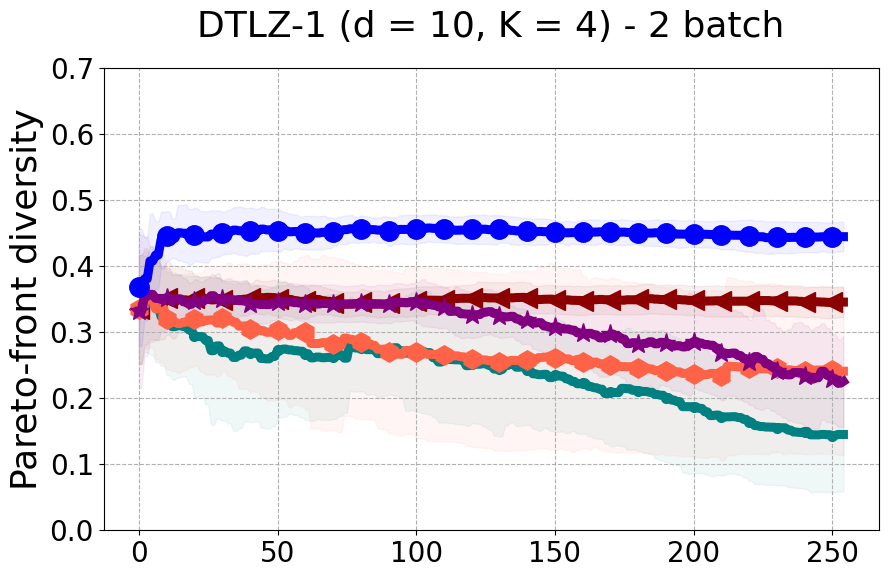}
     \end{subfigure}
     \hfill
     \begin{subfigure}
         \centering
         \includegraphics[width=0.24\textwidth]{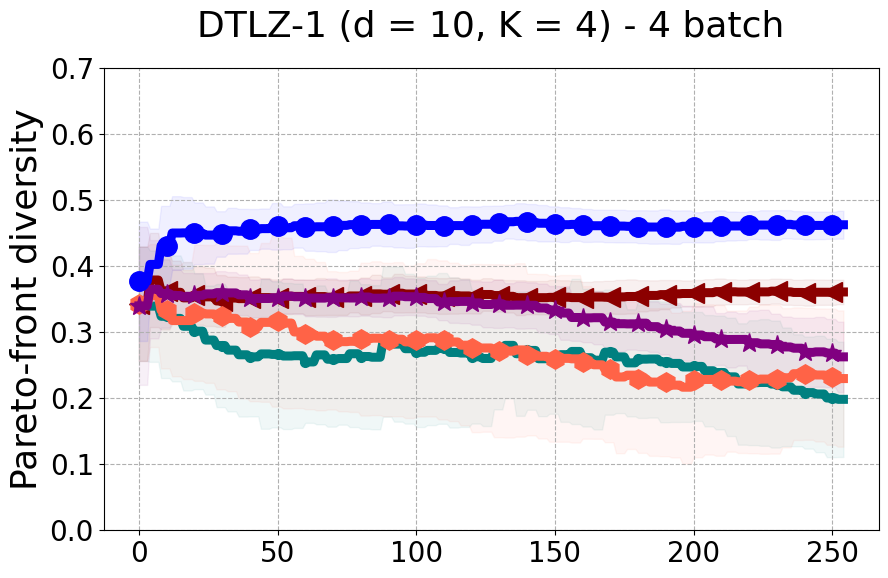}
     \end{subfigure}
     \hfill
     \begin{subfigure}
         \centering
         \includegraphics[width=0.24\textwidth]{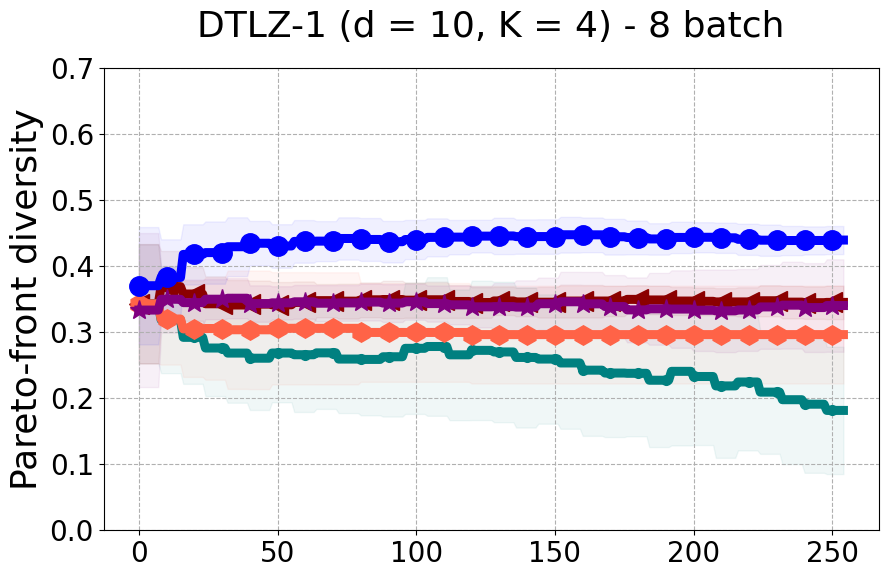}
     \end{subfigure}
     \hfill
     \begin{subfigure}
         \centering
         \includegraphics[width=0.24\textwidth]{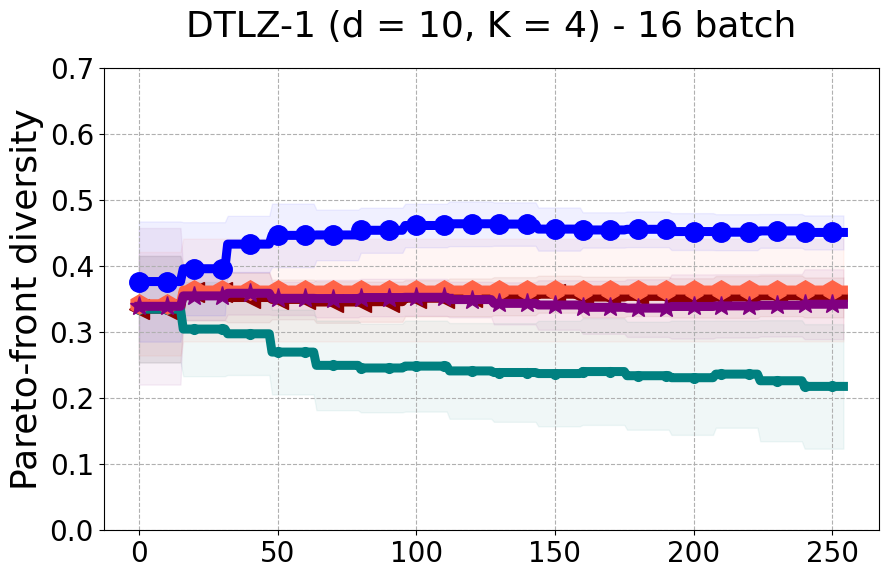}
     \end{subfigure}
     \end{subfigure}
     \centering
    \begin{subfigure}
     \centering
     \begin{subfigure}
         \centering
         \includegraphics[width=0.24\textwidth]{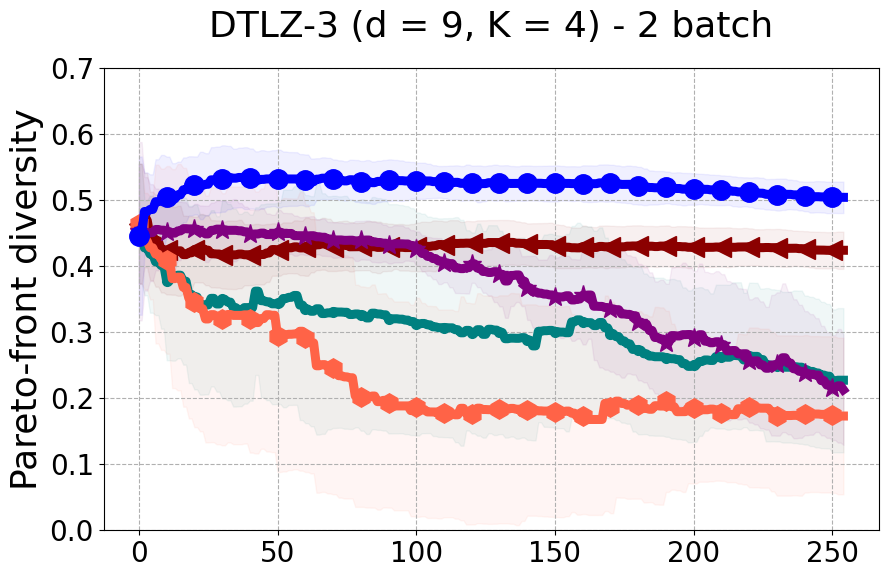}
     \end{subfigure}
     \hfill
     \begin{subfigure}
         \centering
         \includegraphics[width=0.24\textwidth]{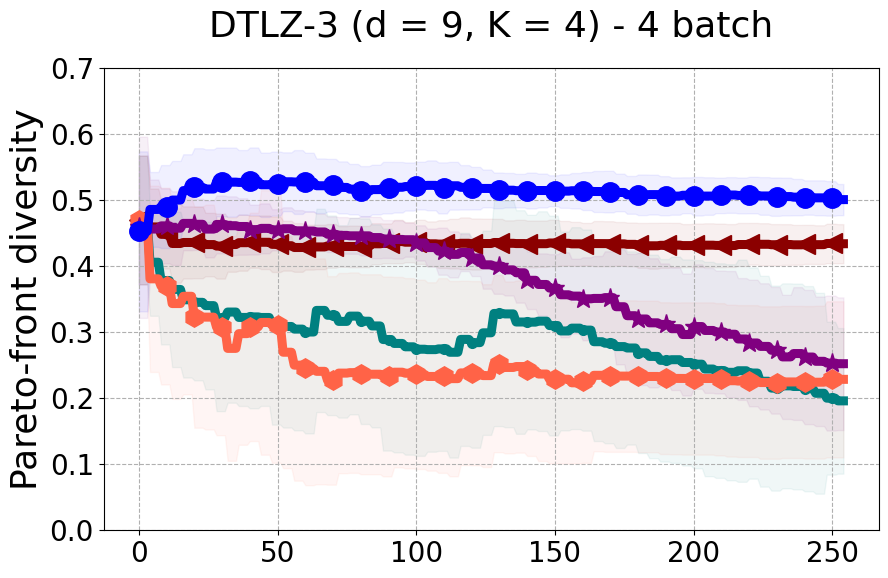}
     \end{subfigure}
     \hfill
     \begin{subfigure}
         \centering
         \includegraphics[width=0.24\textwidth]{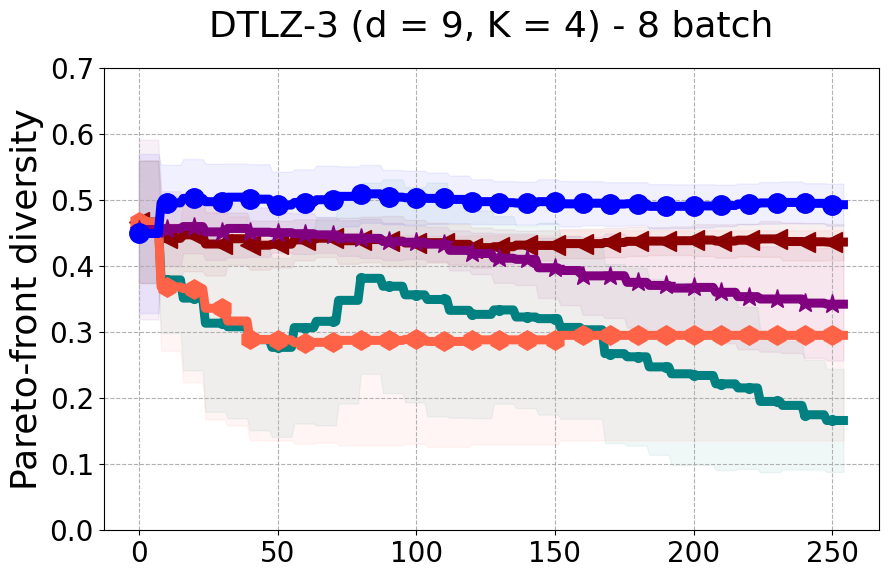}
     \end{subfigure}
     \hfill
     \begin{subfigure}
         \centering
         \includegraphics[width=0.24\textwidth]{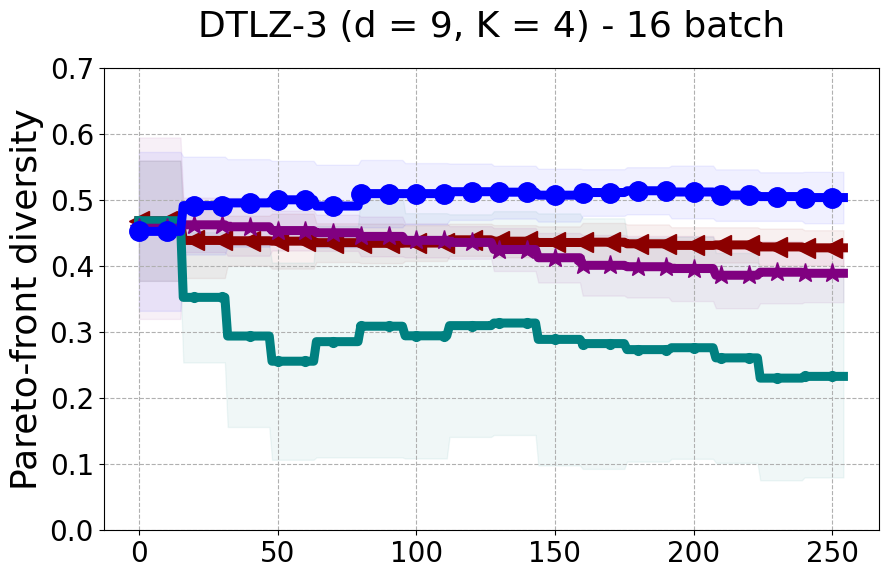}
     \end{subfigure}
     \end{subfigure}
     \centering
    \begin{subfigure}
     \centering
     \begin{subfigure}
         \centering
         \includegraphics[width=0.24\textwidth]{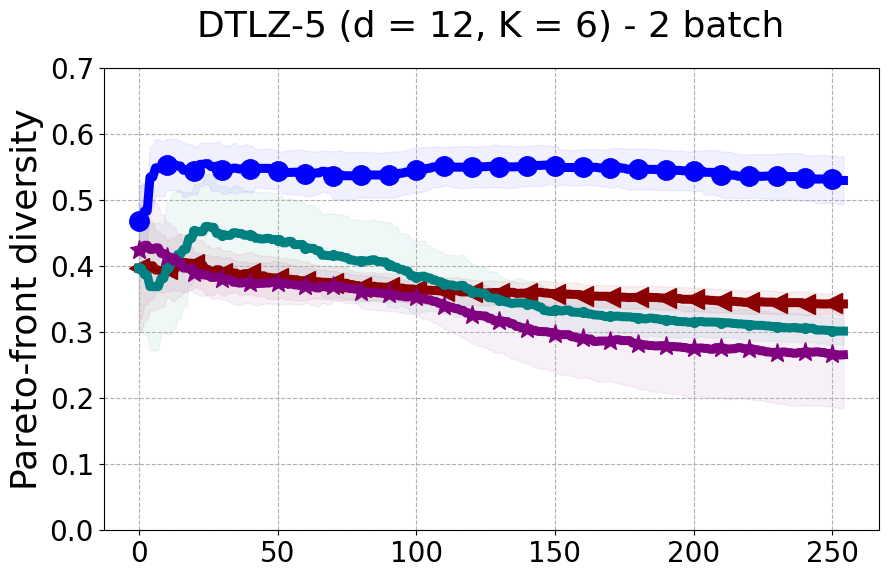}
     \end{subfigure}
     \hfill
     \begin{subfigure}
         \centering
         \includegraphics[width=0.24\textwidth]{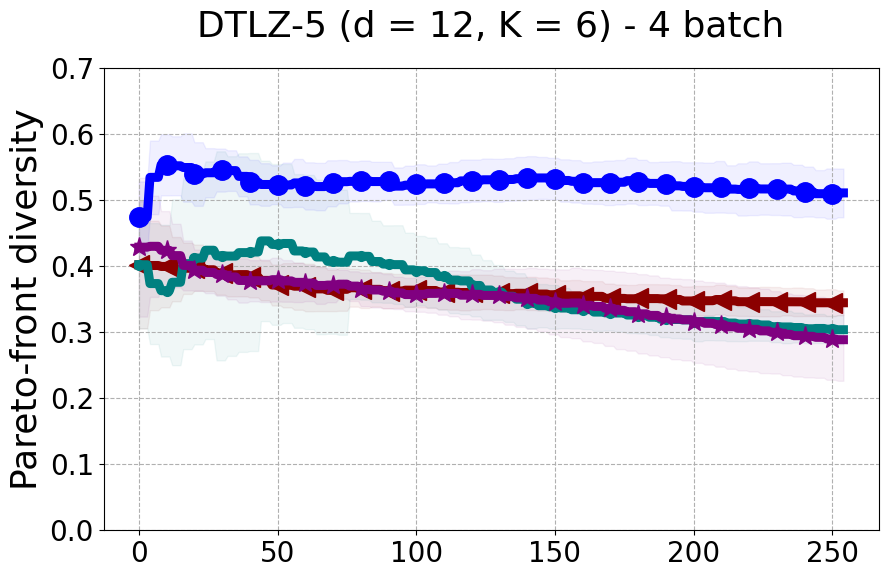}
     \end{subfigure}
     \hfill
     \begin{subfigure}
         \centering
         \includegraphics[width=0.24\textwidth]{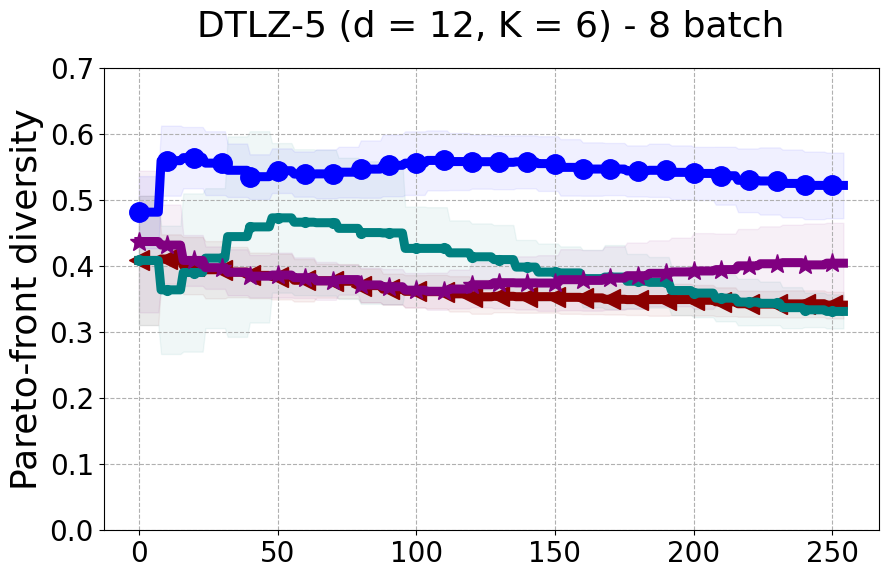}
     \end{subfigure}
     \hfill
     \begin{subfigure}
         \centering
         \includegraphics[width=0.24\textwidth]{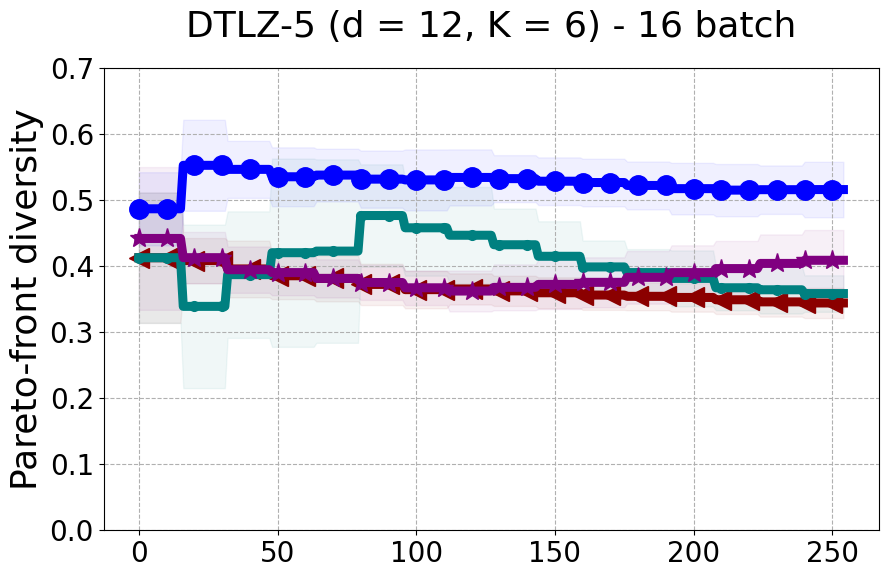}
     \end{subfigure}
     \end{subfigure}
      \begin{subfigure}
         \centering
         \includegraphics[width=0.98\textwidth]{figures/diversity-legend.png}
     \end{subfigure}
     \caption{DPF results calculated using the optimal Pareto front per algorithm iteration}
     \label{fig:appendix-dpf-on-pf}
\end{figure*}

\begin{figure*}[ht]
\centering
     \begin{subfigure}
         \centering
         \includegraphics[width=0.24\textwidth]{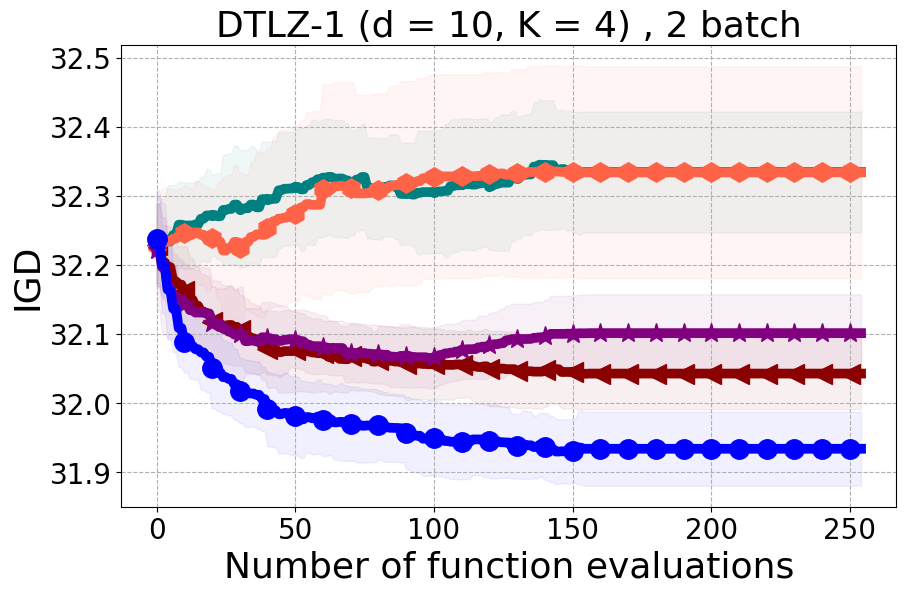}
     \end{subfigure}
     \hfill
     \begin{subfigure}
         \centering
         \includegraphics[width=0.24\textwidth]{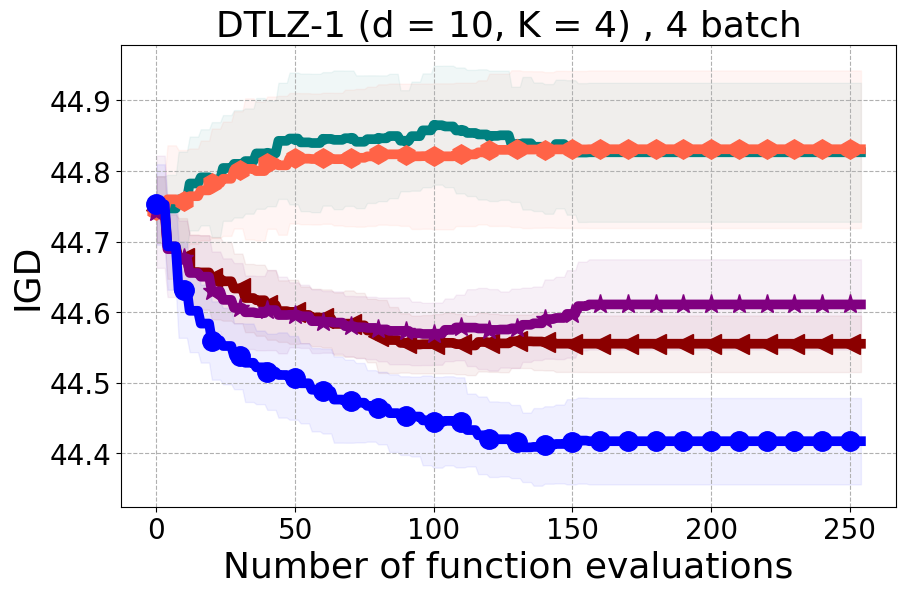}
     \end{subfigure}
     \hfill
     \begin{subfigure}
         \centering
         \includegraphics[width=0.24\textwidth]{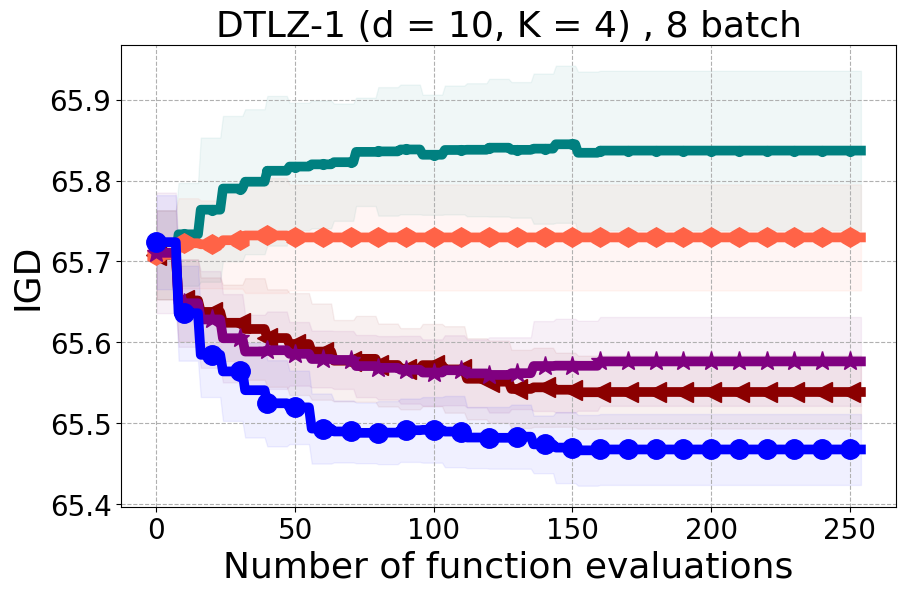}
     \end{subfigure}
     \hfill
     \begin{subfigure}
         \centering
         \includegraphics[width=0.24\textwidth]{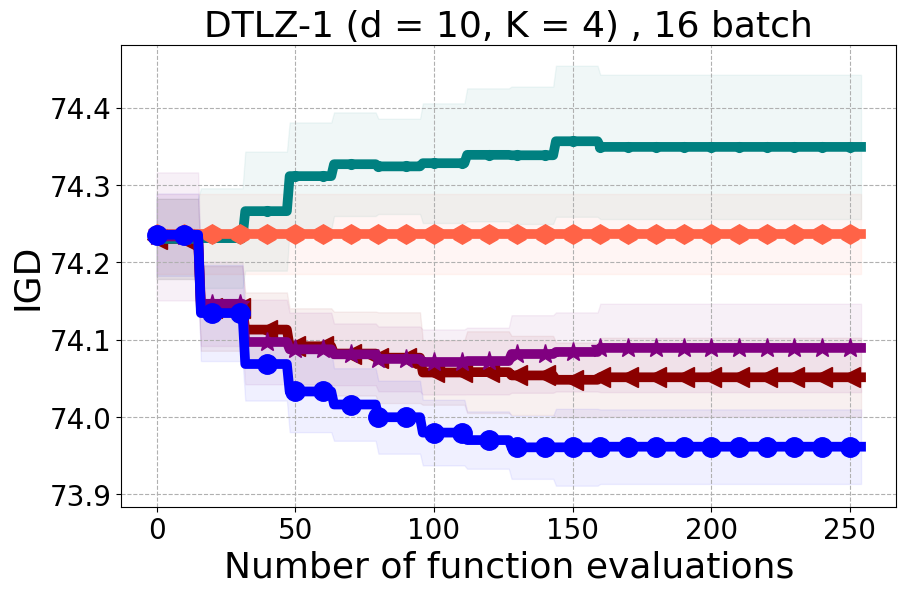}
     \end{subfigure}
        \label{fig:IGD-dtlz1}
    \centering
    \begin{subfigure}
         \centering
         \includegraphics[width=0.24\textwidth]{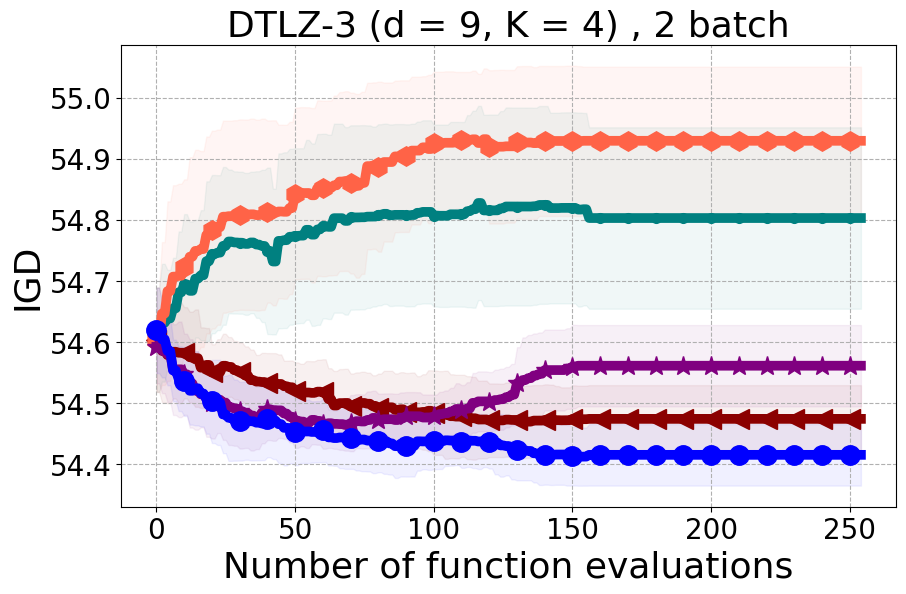}
     \end{subfigure}
     \hfill
     \begin{subfigure}
         \centering
         \includegraphics[width=0.24\textwidth]{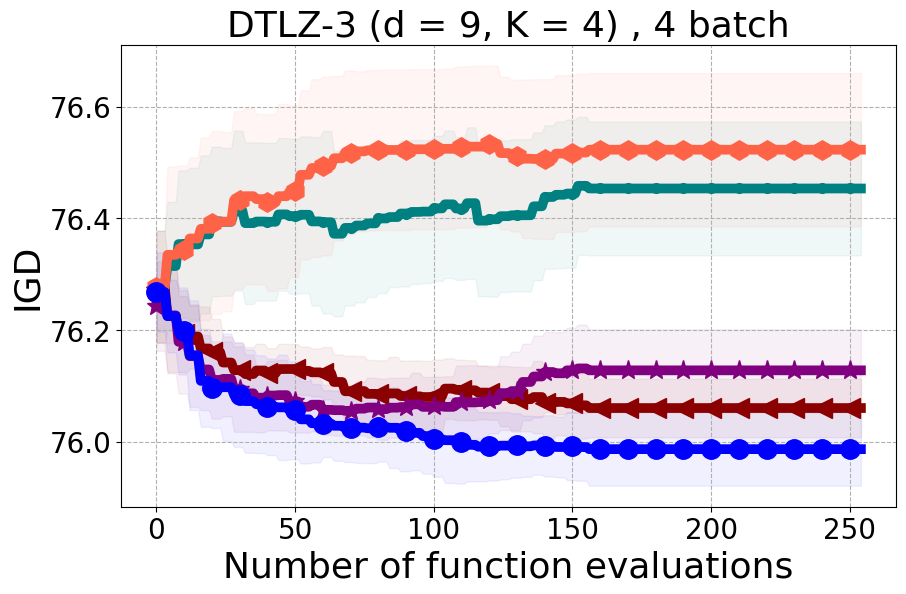}
     \end{subfigure}
     \hfill
     \begin{subfigure}
         \centering
         \includegraphics[width=0.24\textwidth]{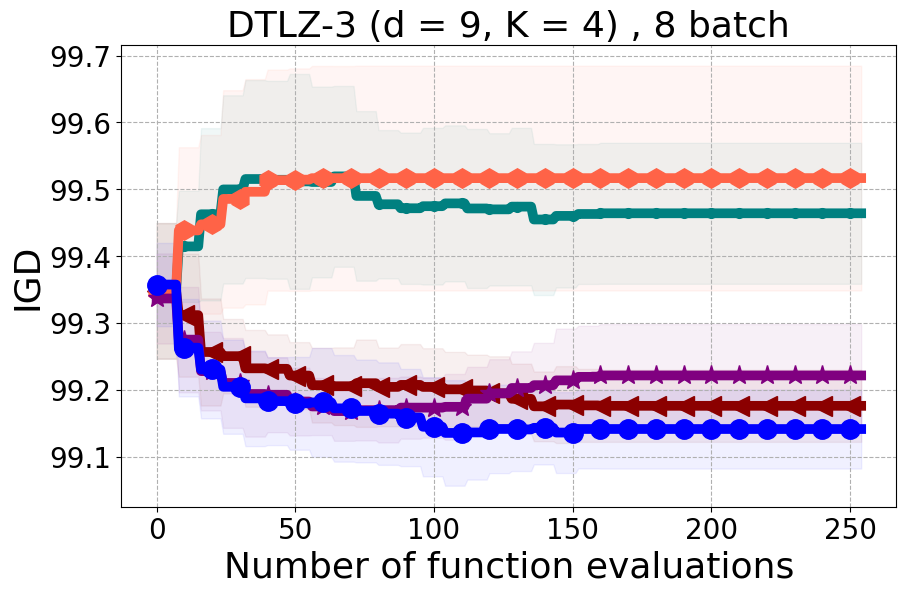}
     \end{subfigure}
     \hfill
     \begin{subfigure}
         \centering
         \includegraphics[width=0.24\textwidth]{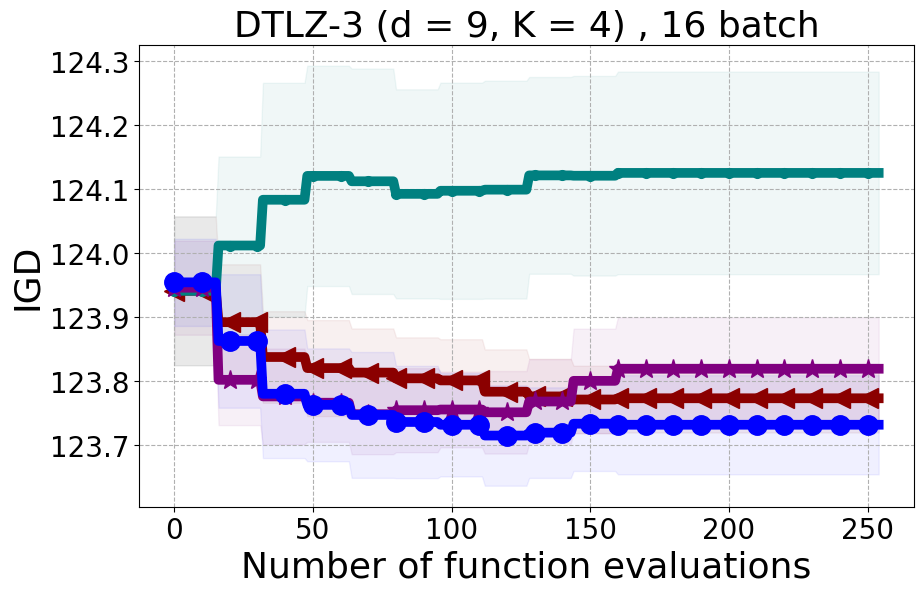}
     \end{subfigure}
        \label{fig:appendix-igd-dtlz3}
        \centering
    \begin{subfigure}
         \centering
         \includegraphics[width=0.24\textwidth]{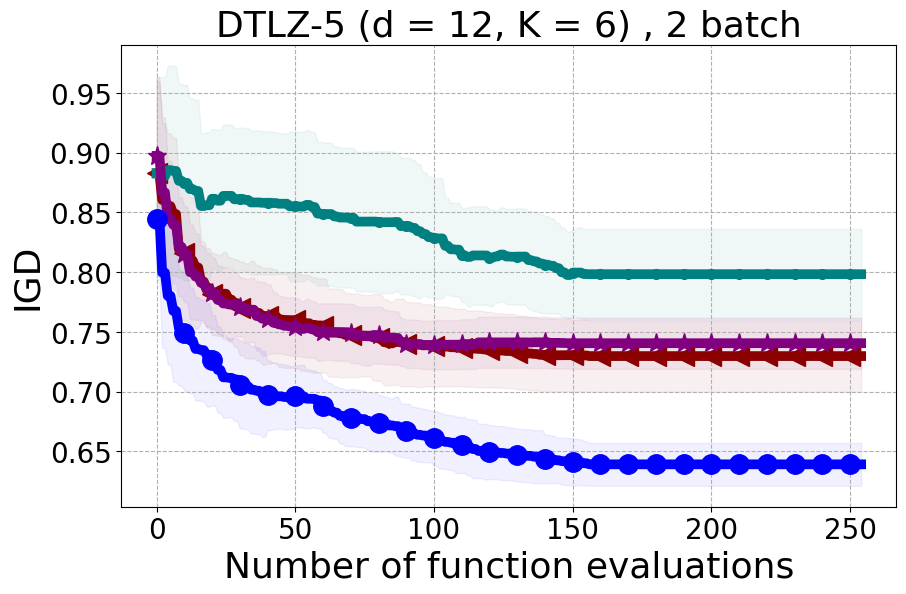}
     \end{subfigure}
     \hfill
     \begin{subfigure}
         \centering
         \includegraphics[width=0.24\textwidth]{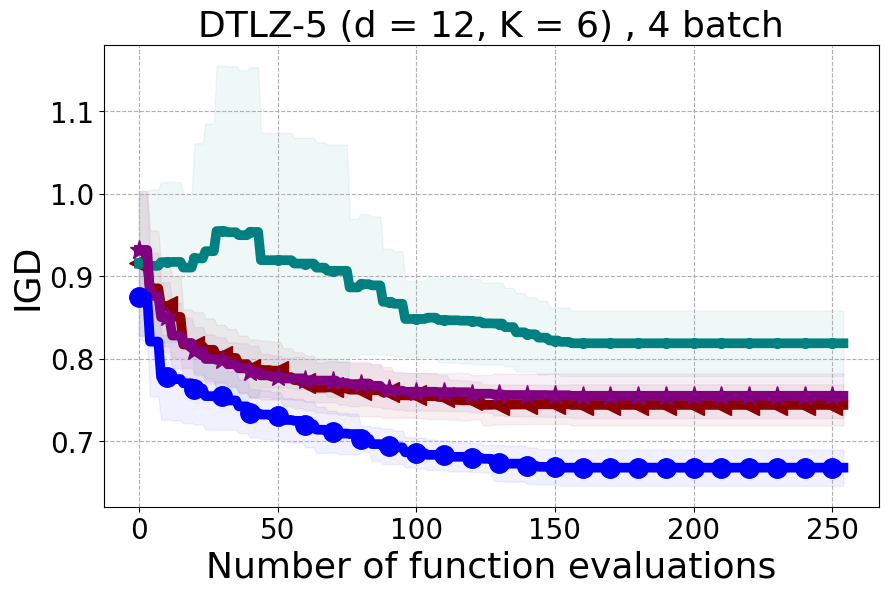}
     \end{subfigure}
     \hfill
     \begin{subfigure}
         \centering
         \includegraphics[width=0.24\textwidth]{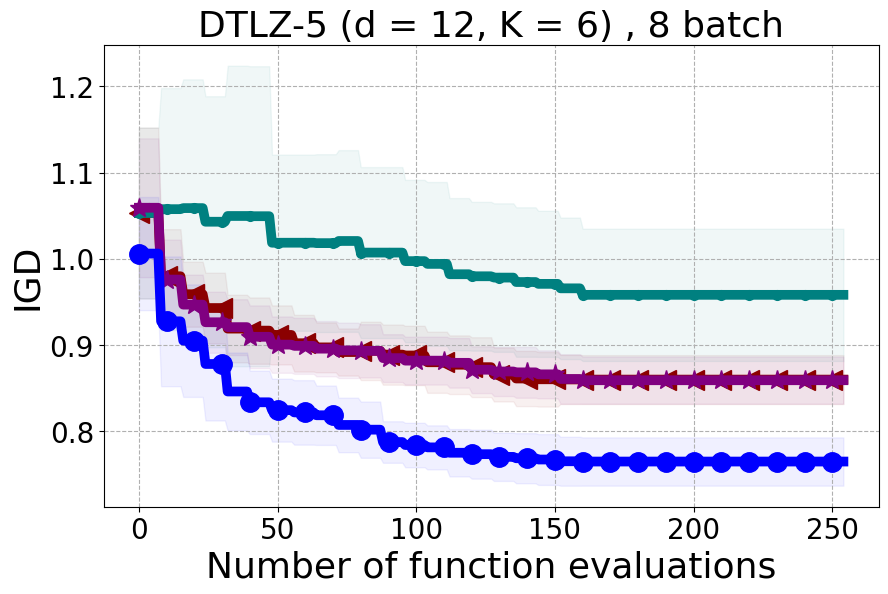}
     \end{subfigure}
     \hfill
     \begin{subfigure}
         \centering
         \includegraphics[width=0.24\textwidth]{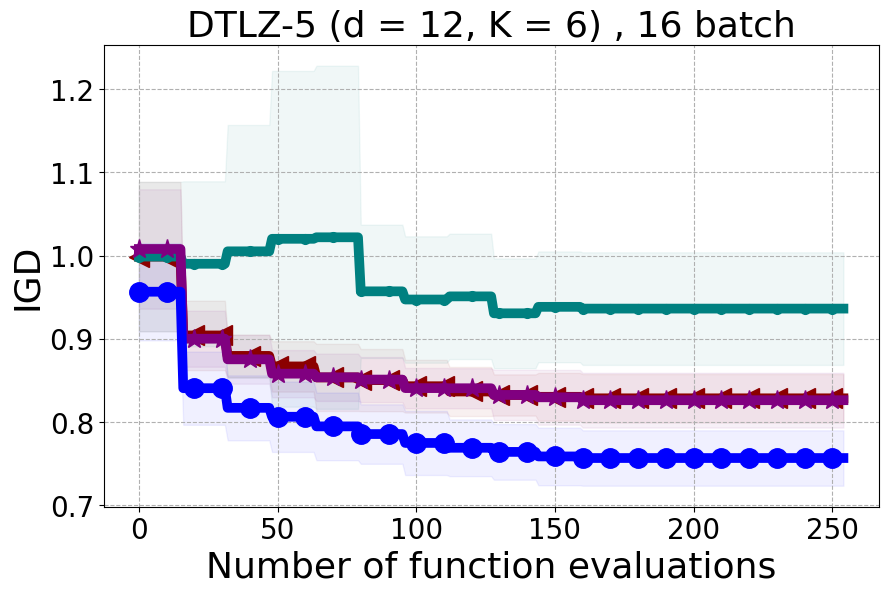}
     \end{subfigure}
        \label{fig:appendix-igd-dtlz5}
      \begin{subfigure}
         \centering
         \includegraphics[width=0.96\textwidth]{figures/diversity-legend.png}
     \end{subfigure}
     \caption{IGD results on several benchmarks; A lower IGD value indicates a solution of higher quality.}
    \label{fig:appendix-IGD-results}
\end{figure*}

\begin{figure*}[ht]
\centering
     \begin{subfigure}
         \centering
         \includegraphics[width=0.24\textwidth]{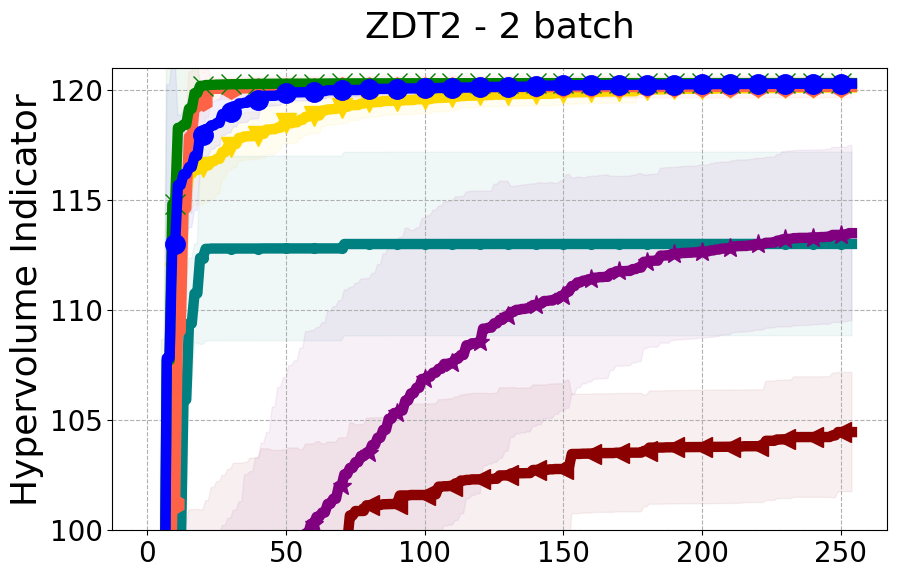}
     \end{subfigure}
     \hfill
     \begin{subfigure}
         \centering
         \includegraphics[width=0.24\textwidth]{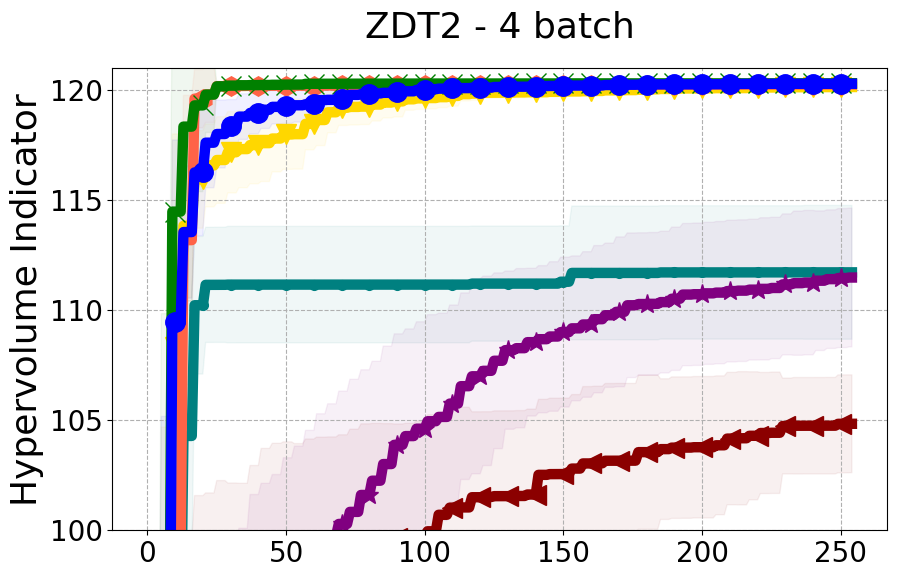}
     \end{subfigure}
     \hfill
     \begin{subfigure}
         \centering
         \includegraphics[width=0.24\textwidth]{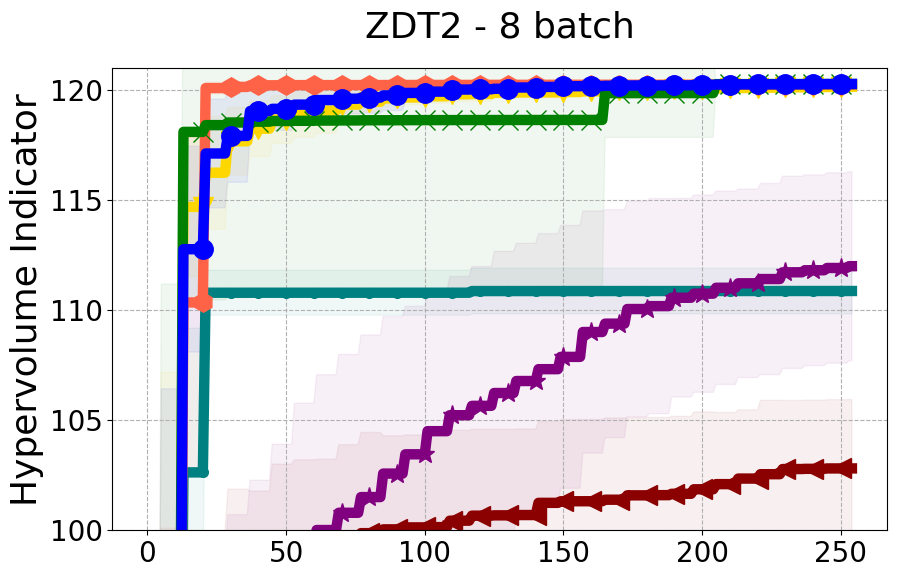}
     \end{subfigure}
     \hfill
     \begin{subfigure}
         \centering
         \includegraphics[width=0.24\textwidth]{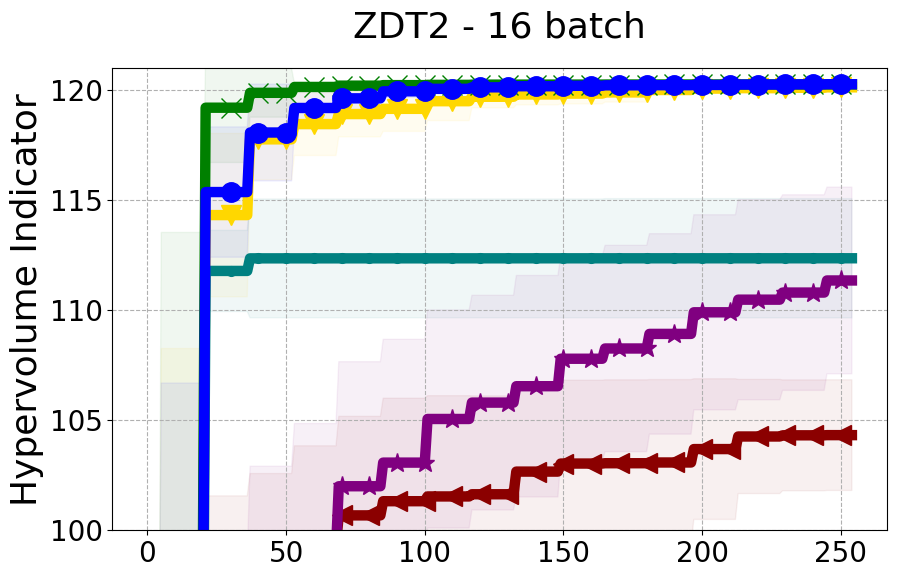}
     \end{subfigure}
        \label{fig:appendix-paper-zdt2}
    \centering
    \begin{subfigure}
         \centering
         \includegraphics[width=0.24\textwidth]{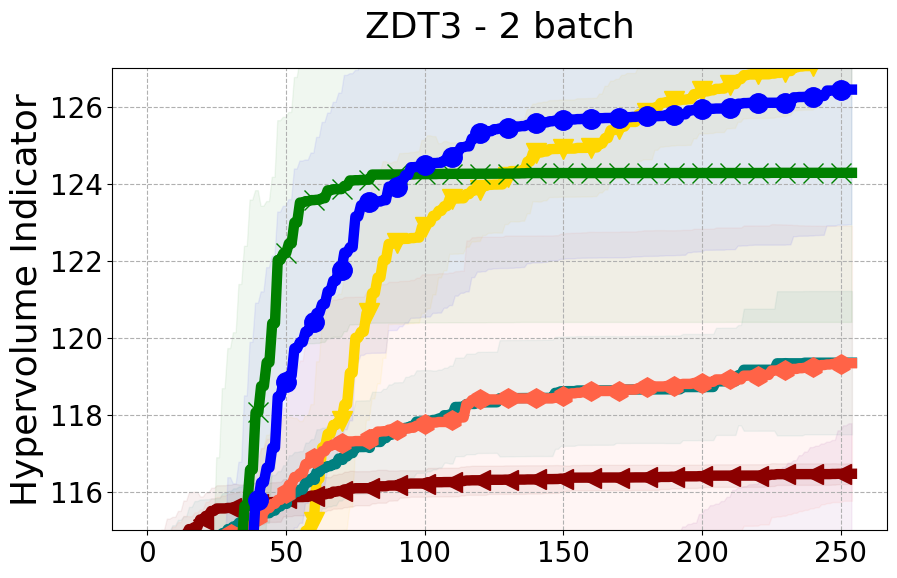}
     \end{subfigure}
     \hfill
     \begin{subfigure}
         \centering
         \includegraphics[width=0.24\textwidth]{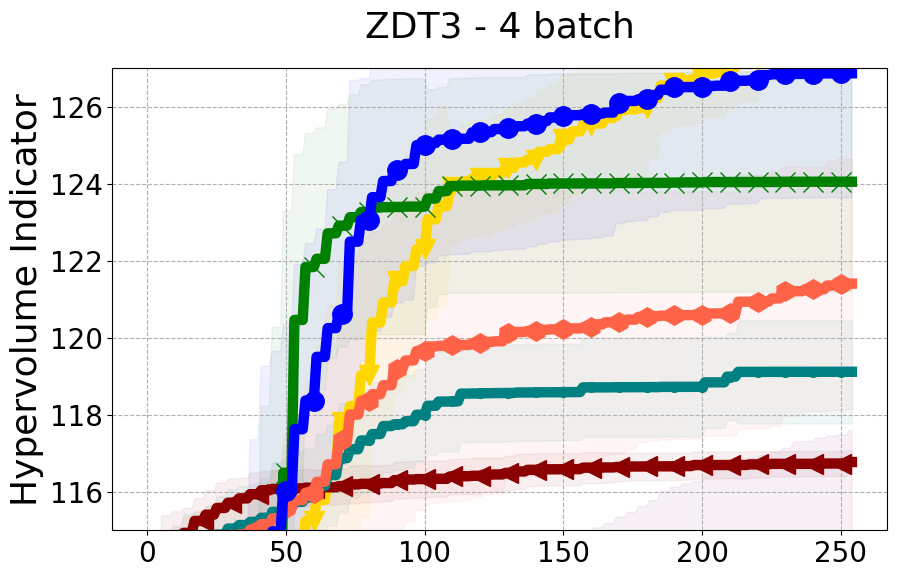}
     \end{subfigure}
     \hfill
     \begin{subfigure}
         \centering
         \includegraphics[width=0.24\textwidth]{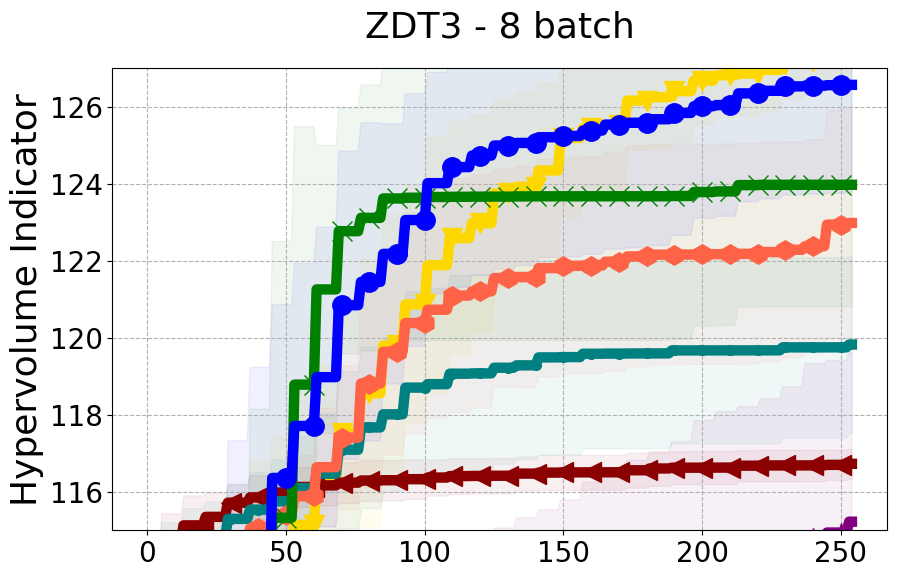}
     \end{subfigure}
     \hfill
     \begin{subfigure}
         \centering
         \includegraphics[width=0.24\textwidth]{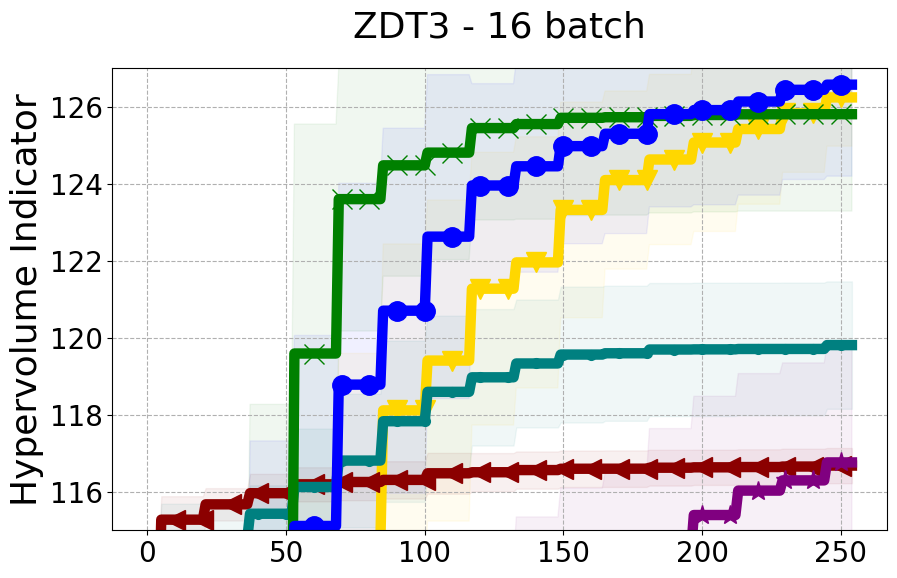}
     \end{subfigure}
        \label{fig:appendix-paper-zdt3}
        \centering
    \begin{subfigure}
         \centering
         \includegraphics[width=0.24\textwidth]{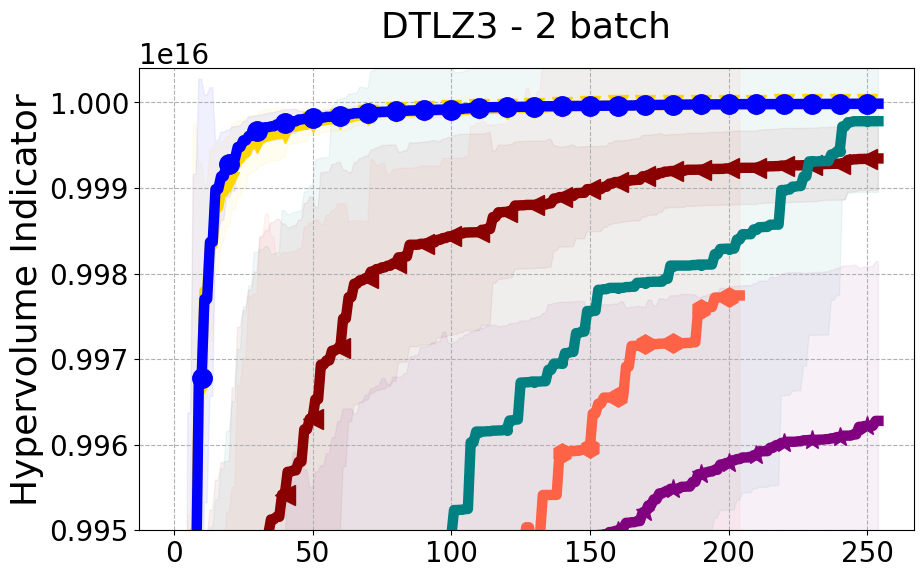}
     \end{subfigure}
     \hfill
     \begin{subfigure}
         \centering
         \includegraphics[width=0.24\textwidth]{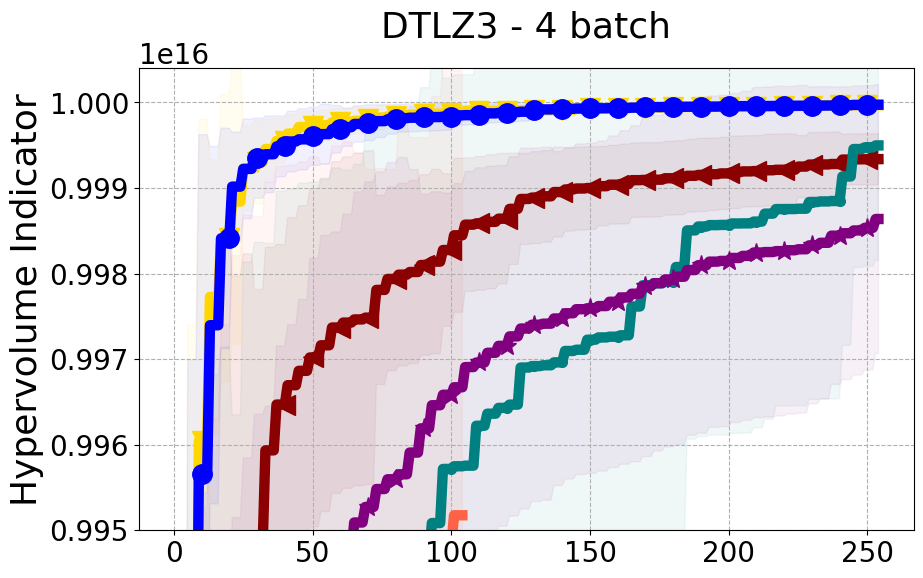}
     \end{subfigure}
     \hfill
     \begin{subfigure}
         \centering
         \includegraphics[width=0.24\textwidth]{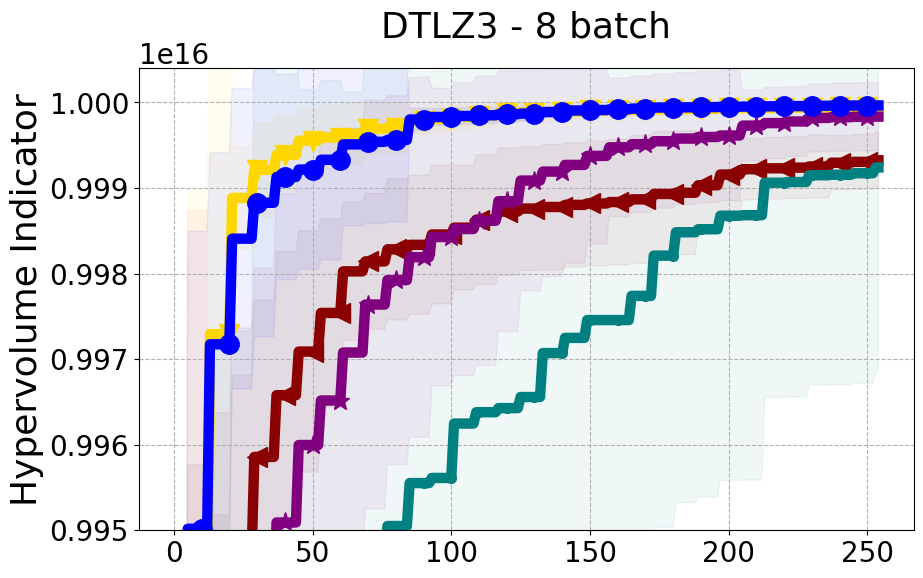}
     \end{subfigure}
     \hfill
     \begin{subfigure}
         \centering
         \includegraphics[width=0.24\textwidth]{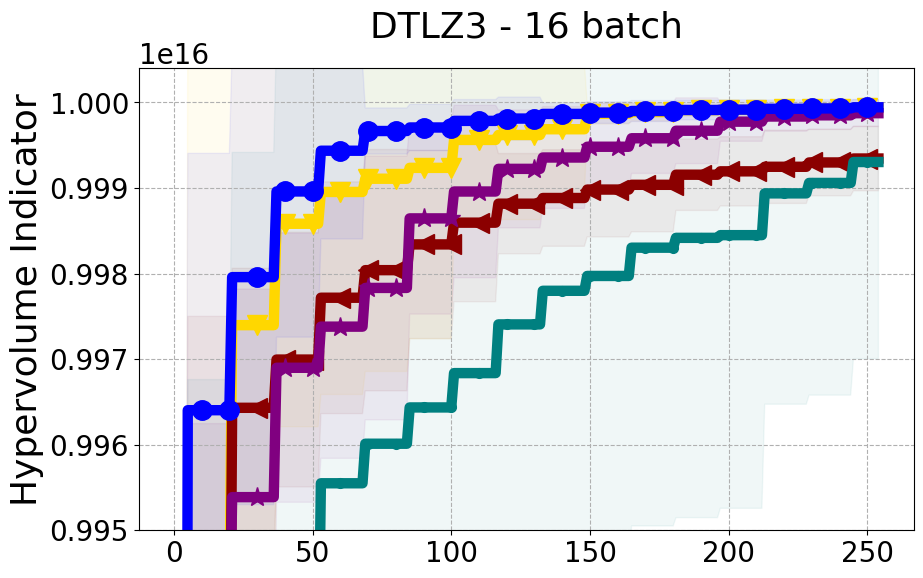}
     \end{subfigure}
        \label{fig:appendix-paper-dtlz33}
      \begin{subfigure}
         \centering
         \includegraphics[width=\textwidth]{figures/paper-legend.png}
     \end{subfigure}
     \caption{Average hypervolume results on additional benchmarks}
    \label{fig:appendix-main-results}
\end{figure*}

\begin{figure*}[ht]
    \begin{subfigure}
     \centering
     \begin{subfigure}
         \centering
         \includegraphics[width=0.24\textwidth]{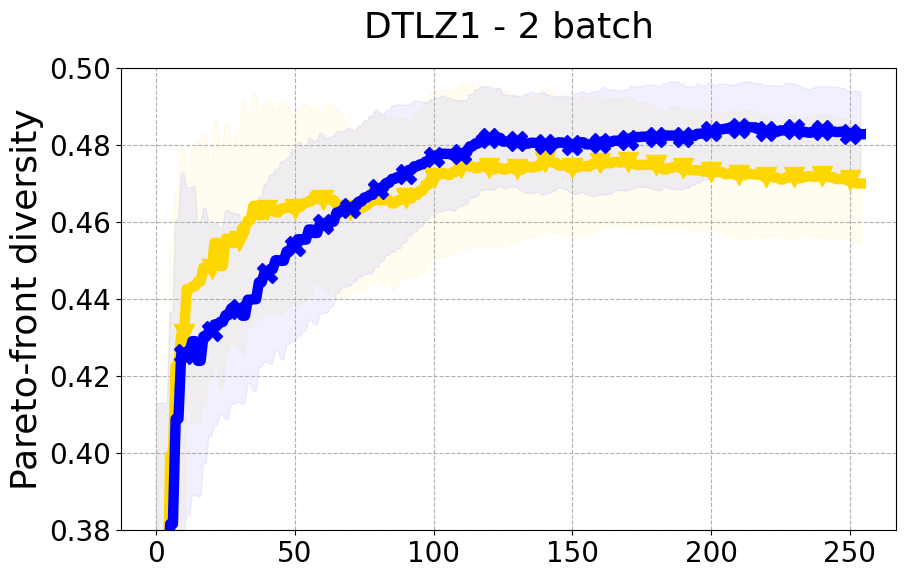}
     \end{subfigure}
     \hfill
     \begin{subfigure}
         \centering
         \includegraphics[width=0.24\textwidth]{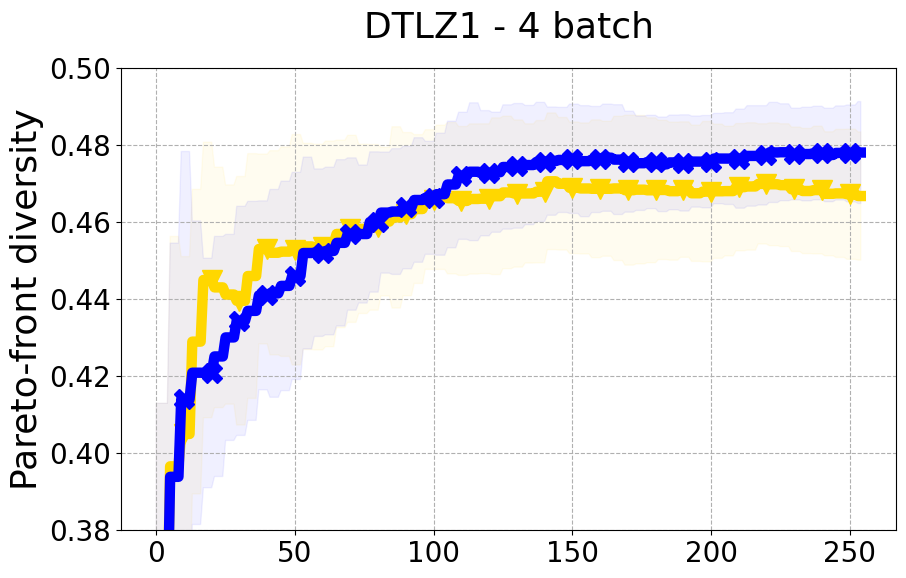}
     \end{subfigure}
     \hfill
     \begin{subfigure}
         \centering
         \includegraphics[width=0.24\textwidth]{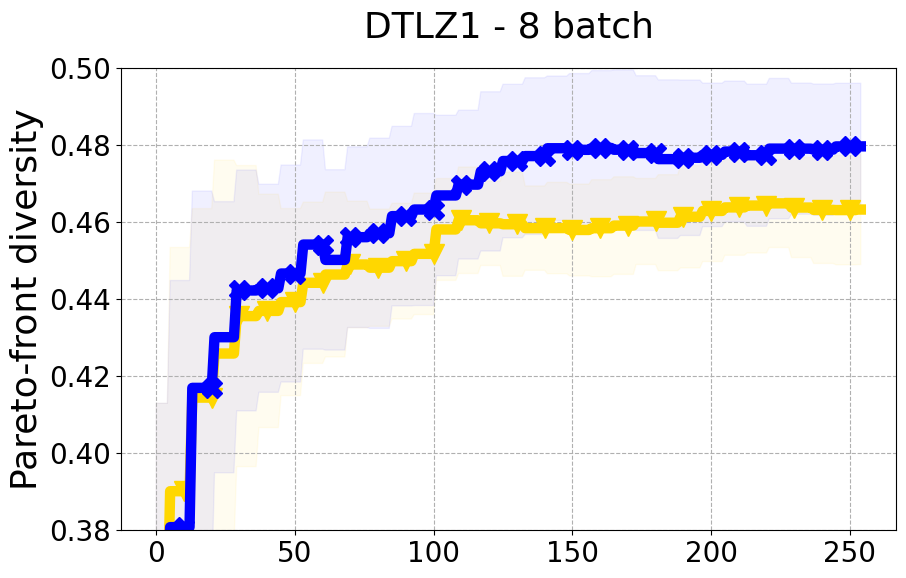}
     \end{subfigure}
     \hfill
     \begin{subfigure}
         \centering
         \includegraphics[width=0.24\textwidth]{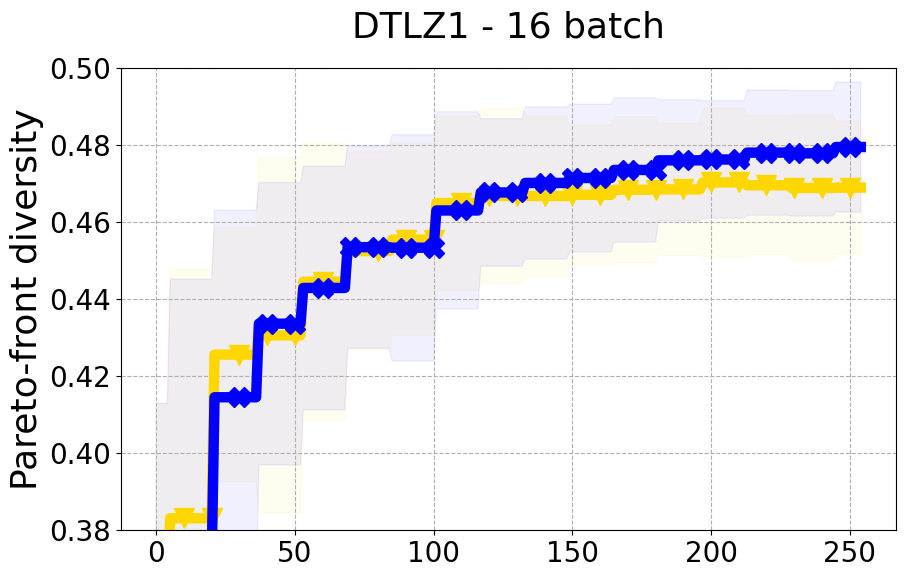}
     \end{subfigure}
        \centering
     \begin{subfigure}
         \centering
         \includegraphics[width=0.24\textwidth]{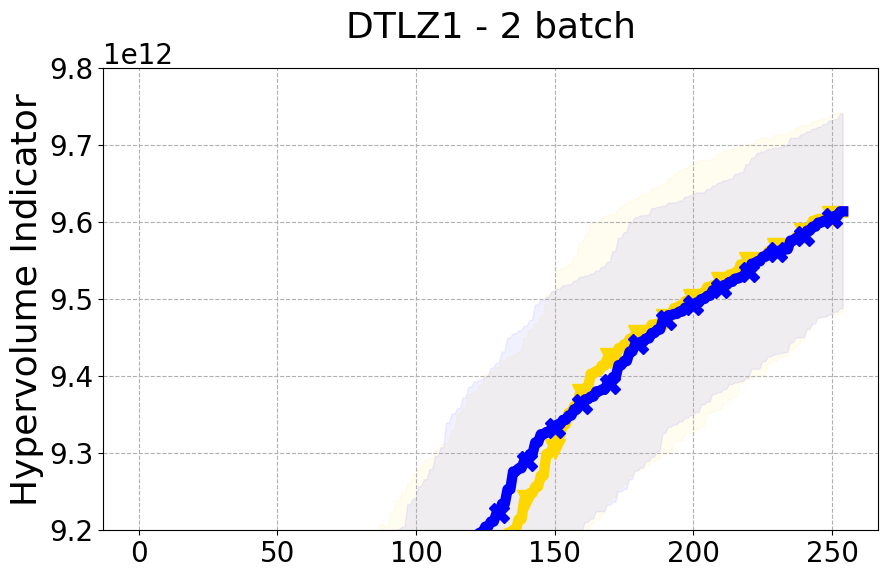}
     \end{subfigure}
     \hfill
     \begin{subfigure}
         \centering
         \includegraphics[width=0.24\textwidth]{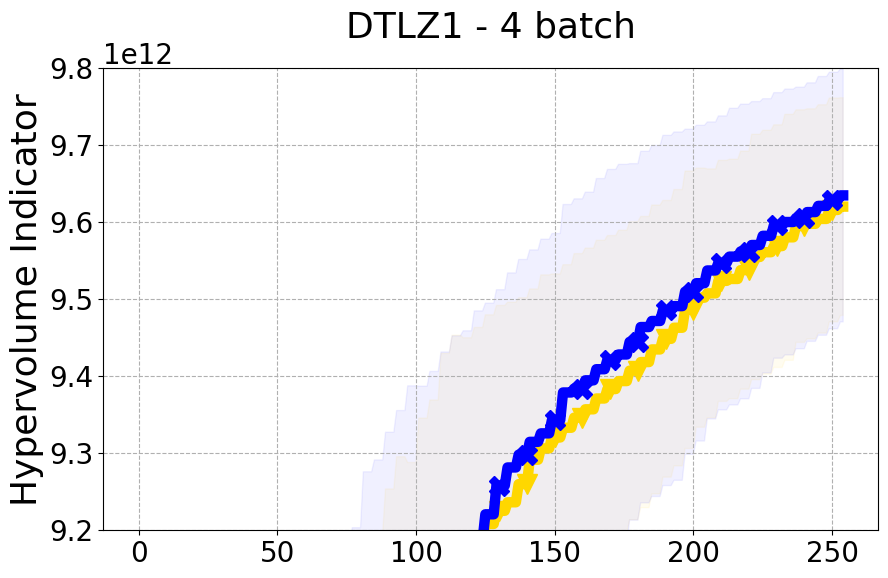}
     \end{subfigure}
     \hfill
     \begin{subfigure}
         \centering
         \includegraphics[width=0.24\textwidth]{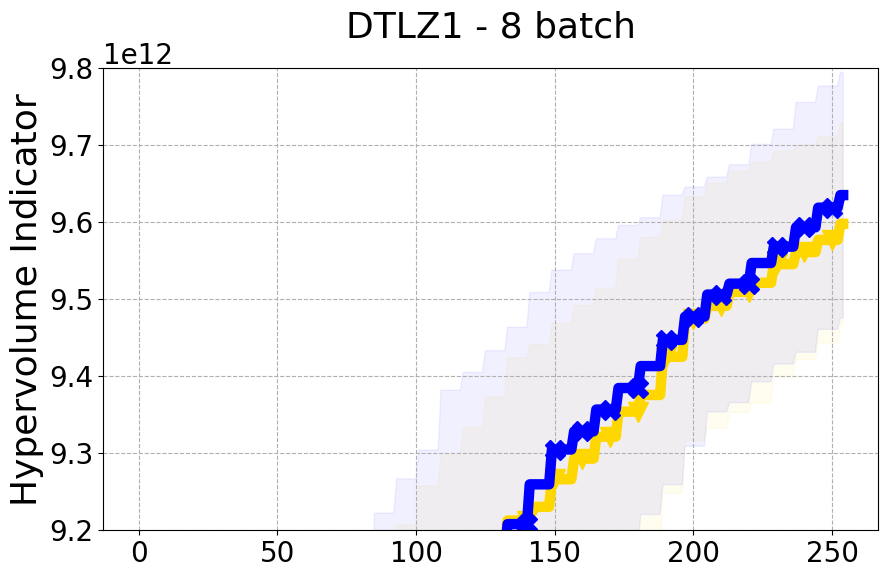}
     \end{subfigure}
     \hfill
     \begin{subfigure}
         \centering
         \includegraphics[width=0.24\textwidth]{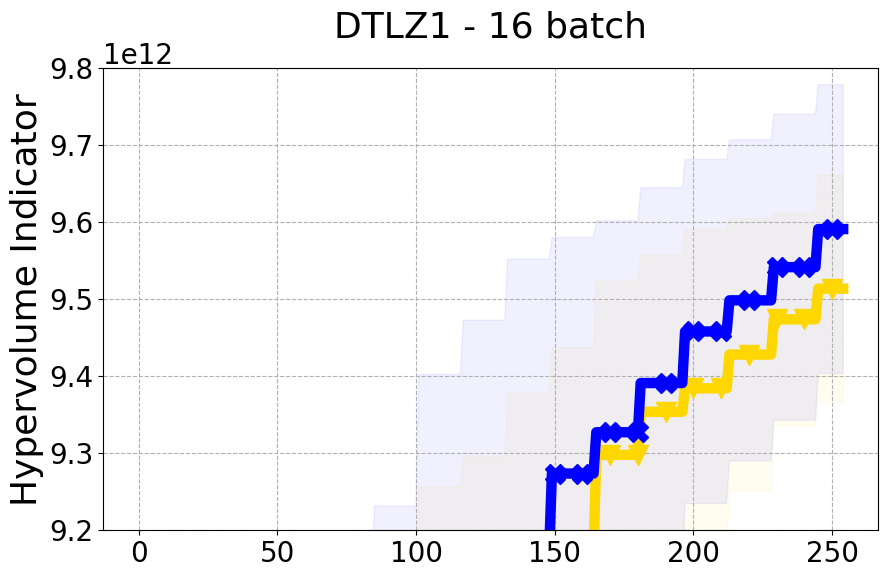}
     \end{subfigure}
     \begin{subfigure}
         \centering
         \includegraphics[width=0.4\textwidth]{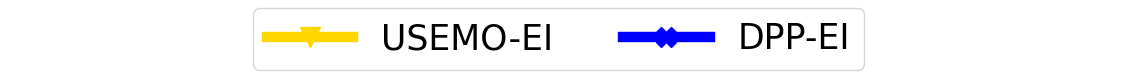}
     \end{subfigure}
     
        \caption{DPF and hypervolume results for DTLZ1 problem, d = 13, K = 5}
         \vspace{0.3cm}
        \label{fig:dpp-div-dtlz1}
    \end{subfigure}
     \centering
    \begin{subfigure}
         \centering
     \begin{subfigure}
         \centering
         \includegraphics[width=0.24\textwidth]{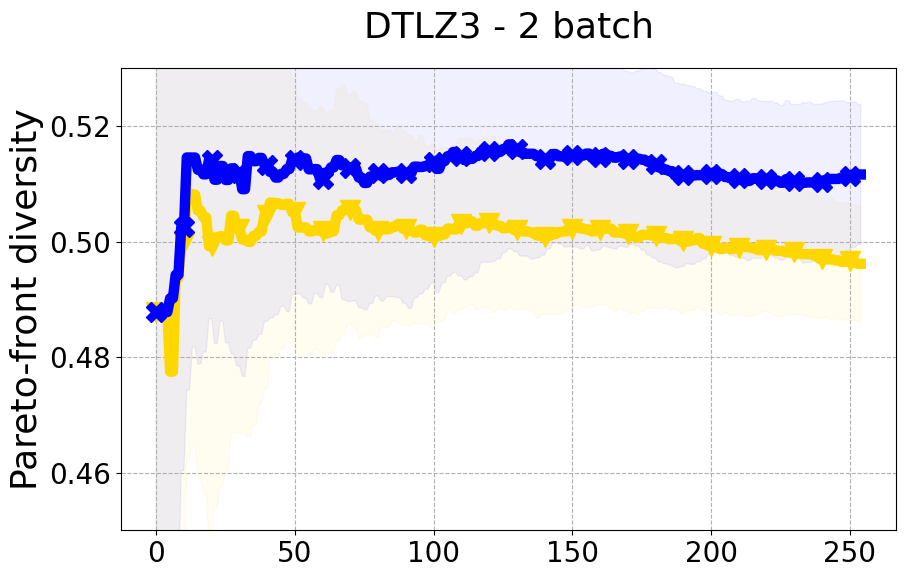}
     \end{subfigure}
     \hfill
     \begin{subfigure}
         \centering
         \includegraphics[width=0.24\textwidth]{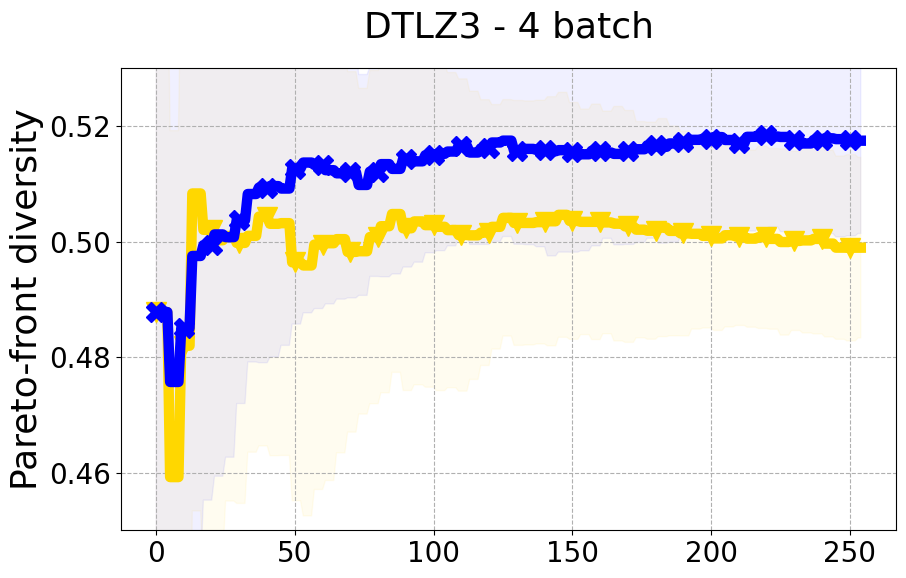}
     \end{subfigure}
     \hfill
     \begin{subfigure}
         \centering
         \includegraphics[width=0.24\textwidth]{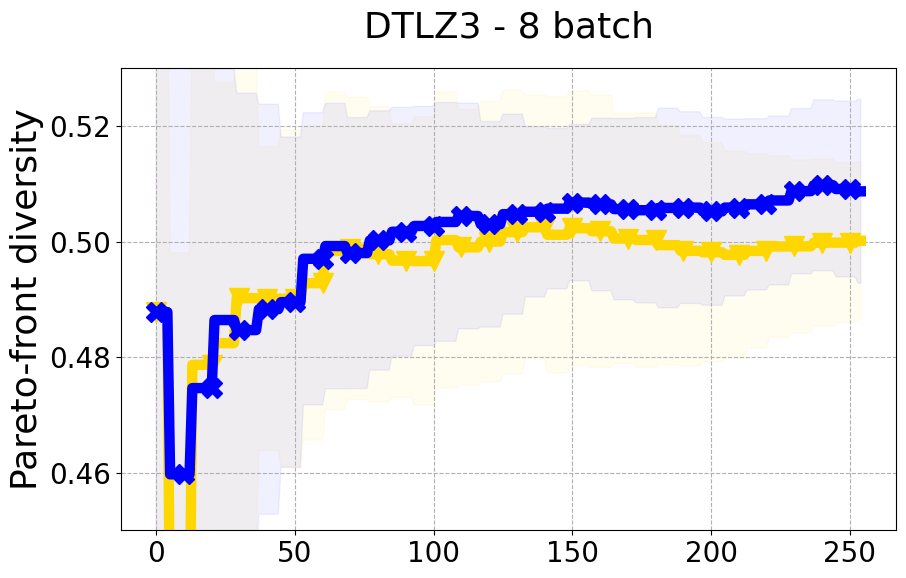}
     \end{subfigure}
     \hfill
     \begin{subfigure}
         \centering
         \includegraphics[width=0.24\textwidth]{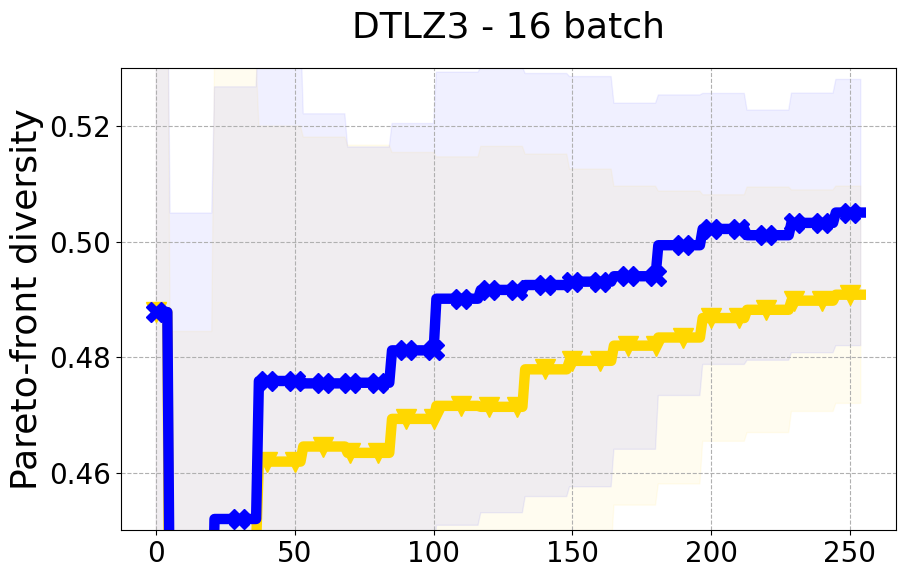}
     \end{subfigure}
        \centering
     \begin{subfigure}
         \centering
         \includegraphics[width=0.24\textwidth]{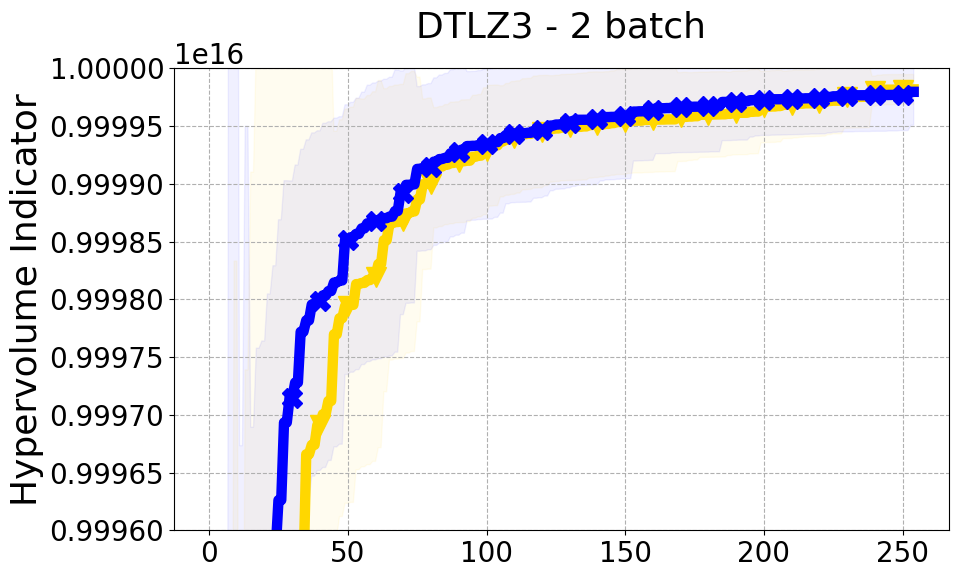}
     \end{subfigure}
     \hfill
     \begin{subfigure}
         \centering
         \includegraphics[width=0.24\textwidth]{figures/dpp/dtlz3/dtlz3-8-4-2batch-Hypervolumes-Plot_zoomed_plot.png}
     \end{subfigure}
     \hfill
     \begin{subfigure}
         \centering
         \includegraphics[width=0.24\textwidth]{figures/dpp/dtlz3/dtlz3-8-4-2batch-Hypervolumes-Plot_zoomed_plot.png}
     \end{subfigure}
     \hfill
     \begin{subfigure}
         \centering
         \includegraphics[width=0.24\textwidth]{figures/dpp/dtlz3/dtlz3-8-4-2batch-Hypervolumes-Plot_zoomed_plot.png}
     \end{subfigure}
     \begin{subfigure}
         \centering
         \includegraphics[width=0.4\textwidth]{figures/dpp-diversity-legend.png}
     \end{subfigure}
        \caption{DPF and hypervolume results - DTLZ3 problem, d = 8, K = 4}
        \label{fig:dpp-div-dtlz3}
        \vspace{0.3cm}
     \end{subfigure}
     \centering
     \begin{subfigure}
     \centering
     \begin{subfigure}
         \centering
         \includegraphics[width=0.24\textwidth]{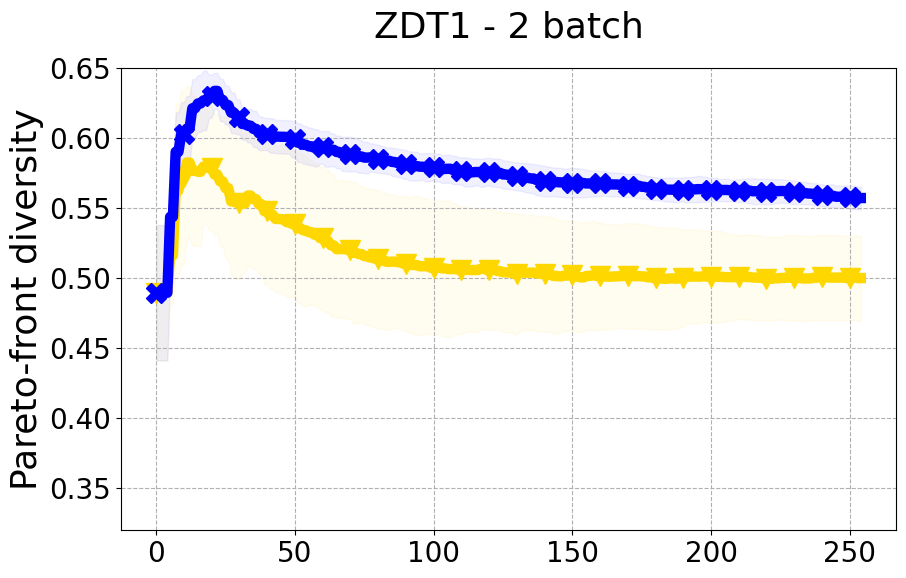}
     \end{subfigure}
     \hfill
     \begin{subfigure}
         \centering
         \includegraphics[width=0.24\textwidth]{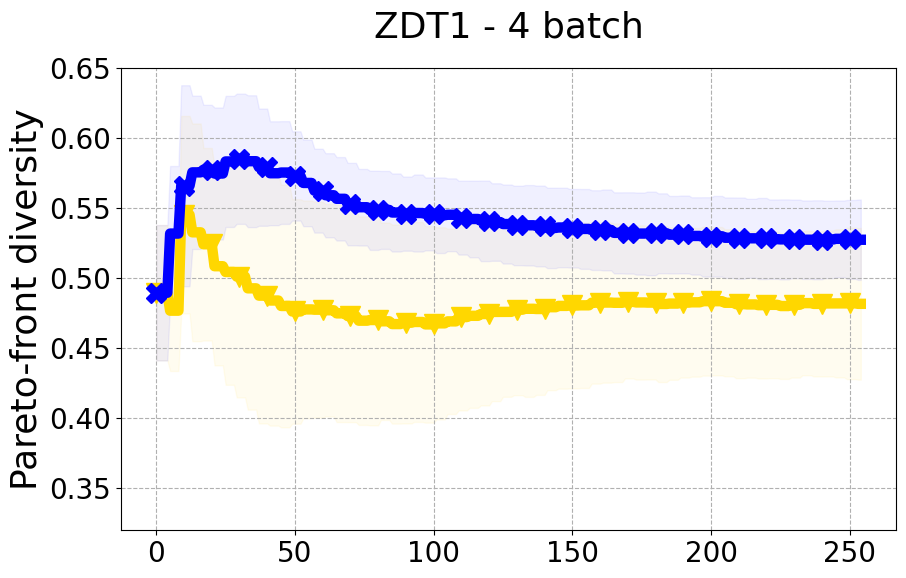}
     \end{subfigure}
     \hfill
     \begin{subfigure}
         \centering
         \includegraphics[width=0.24\textwidth]{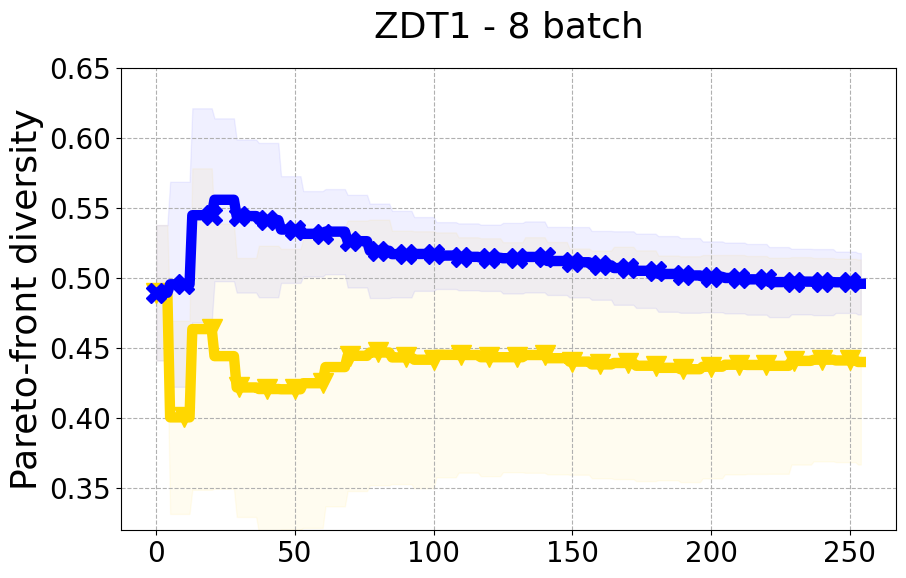}
     \end{subfigure}
     \hfill
     \begin{subfigure}
         \centering
         \includegraphics[width=0.24\textwidth]{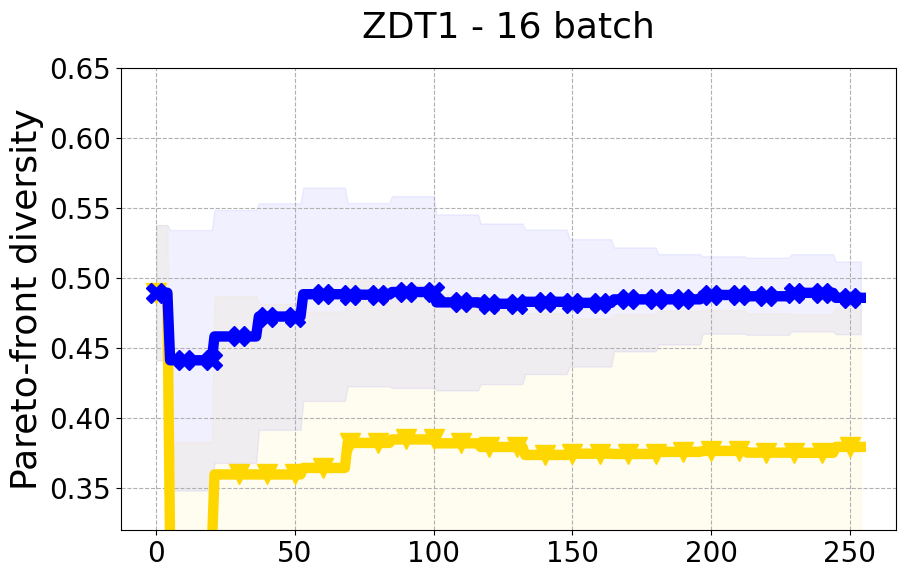}
     \end{subfigure}
        \centering
     \begin{subfigure}
         \centering
         \includegraphics[width=0.24\textwidth]{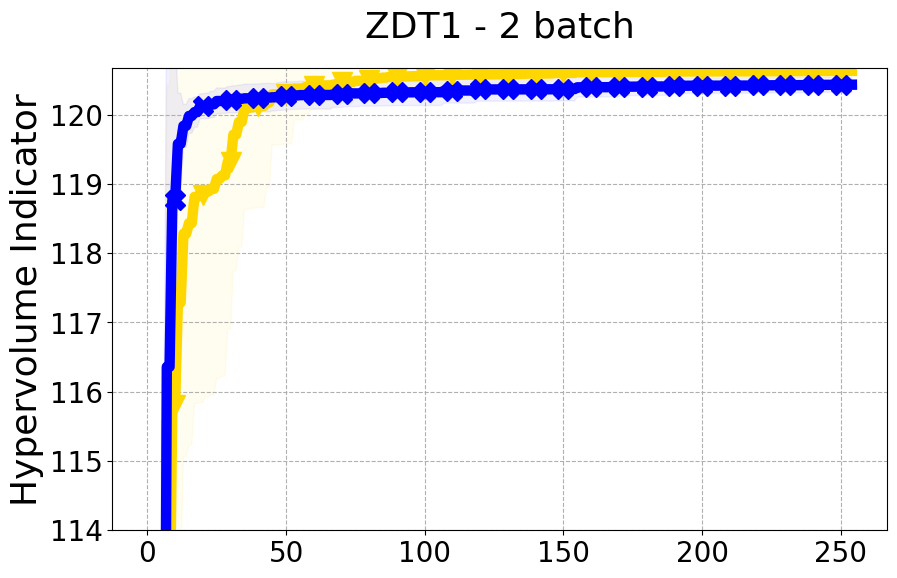}
     \end{subfigure}
     \hfill
     \begin{subfigure}
         \centering
         \includegraphics[width=0.24\textwidth]{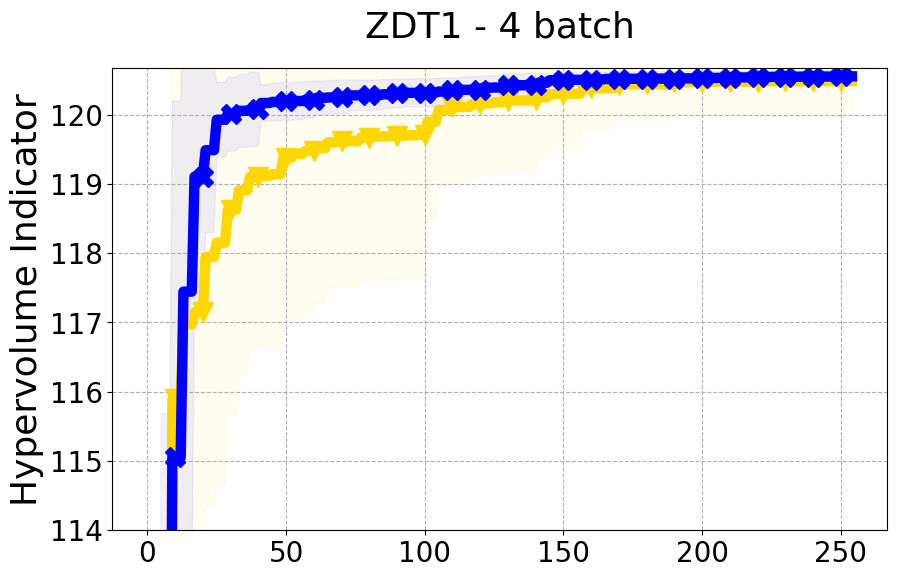}
     \end{subfigure}
     \hfill
     \begin{subfigure}
         \centering
         \includegraphics[width=0.24\textwidth]{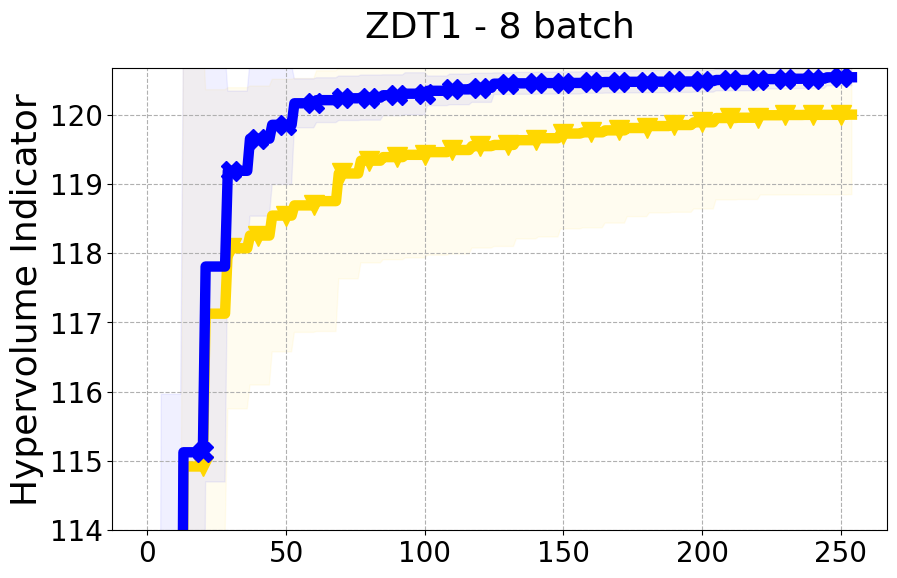}
     \end{subfigure}
     \hfill
     \begin{subfigure}
         \centering
         \includegraphics[width=0.24\textwidth]{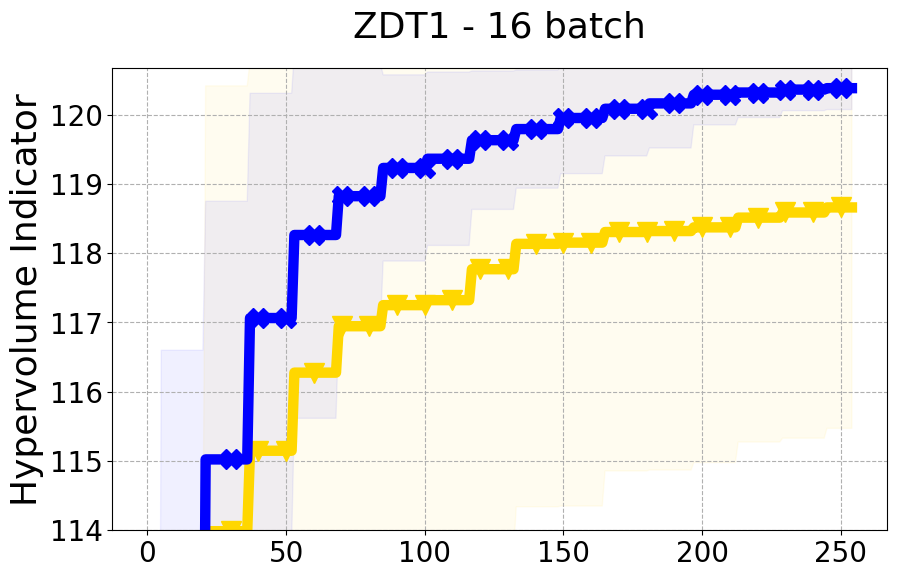}
     \end{subfigure}
     \begin{subfigure}
         \centering
         \includegraphics[width=0.4\textwidth]{figures/dpp-diversity-legend.png}
     \end{subfigure}
        \caption{DPF and hypervolume results - ZDT1 problem, d = 3, K = 2}
        \label{fig:dpp-div-zdt1}
    \end{subfigure}
\end{figure*}

\begin{figure*}[ht!]
     \centering
     \begin{subfigure}
         \centering
         \includegraphics[width=0.24\textwidth]{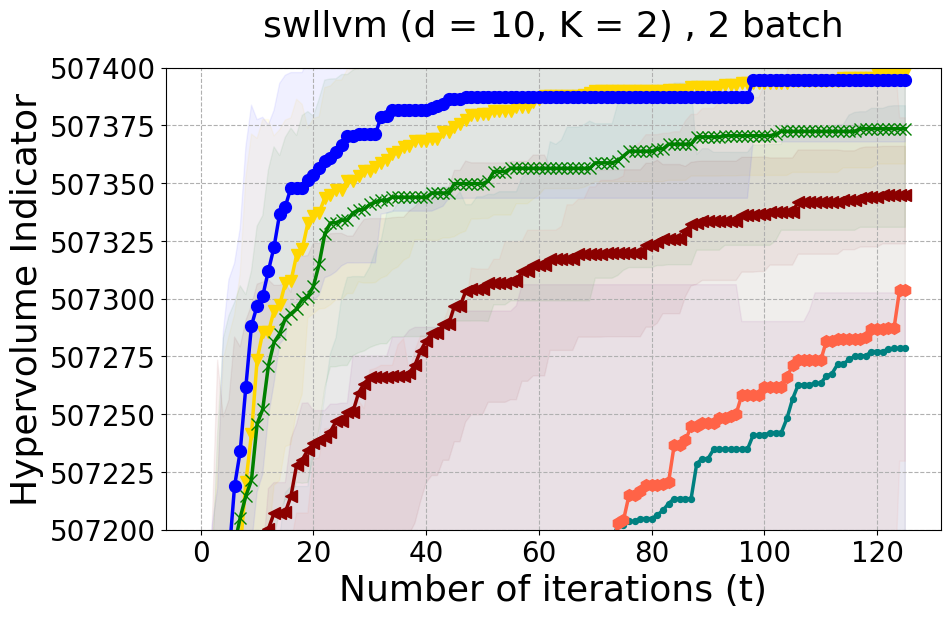}
     \end{subfigure}
     \hfill
     \begin{subfigure}
         \centering
         \includegraphics[width=0.24\textwidth]{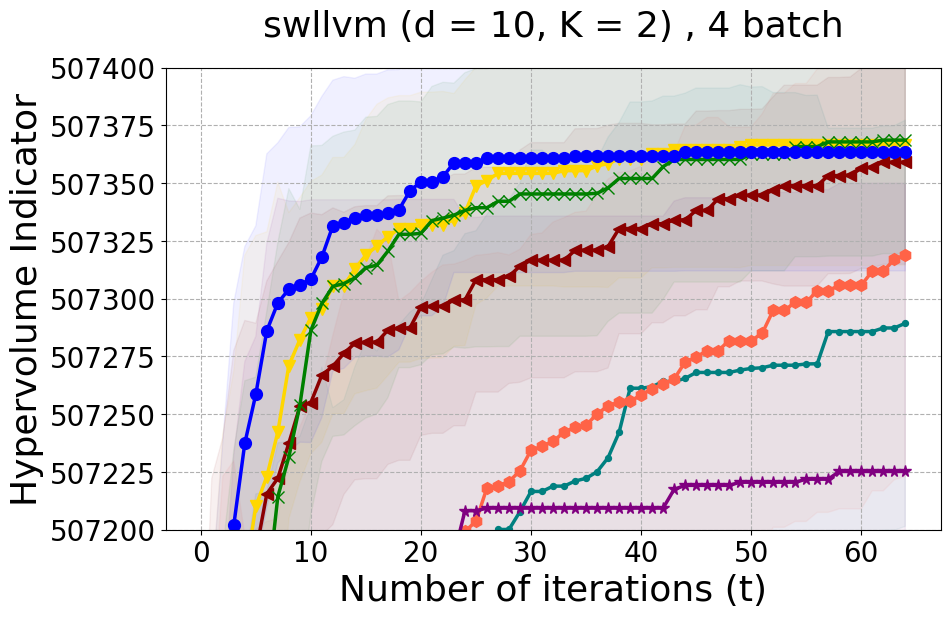}
     \end{subfigure}
     \hfill
     \begin{subfigure}
         \centering
         \includegraphics[width=0.24\textwidth]{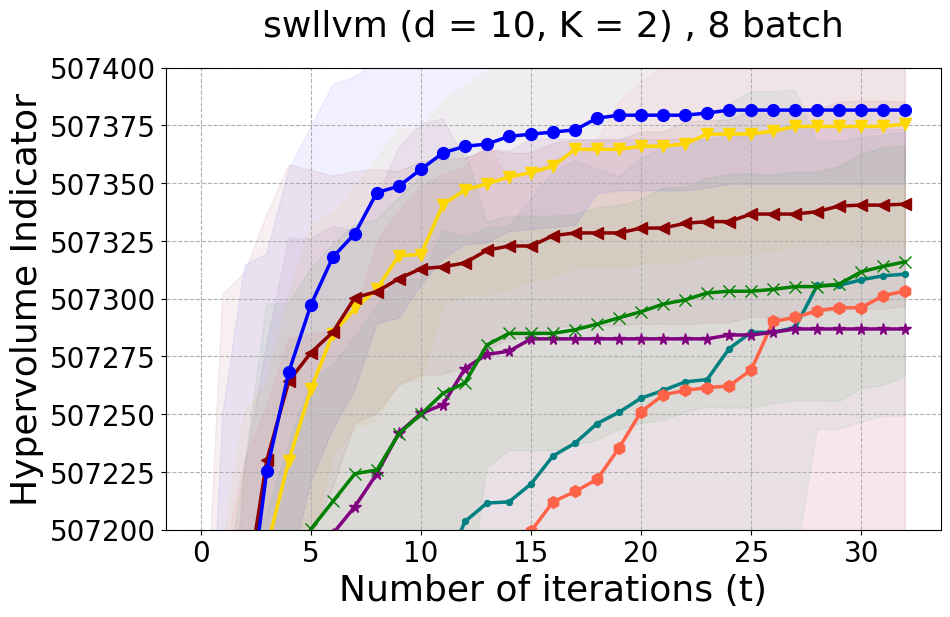}
     \end{subfigure}
     \centering
     \begin{subfigure}
         \centering
         \includegraphics[width=0.24\textwidth]{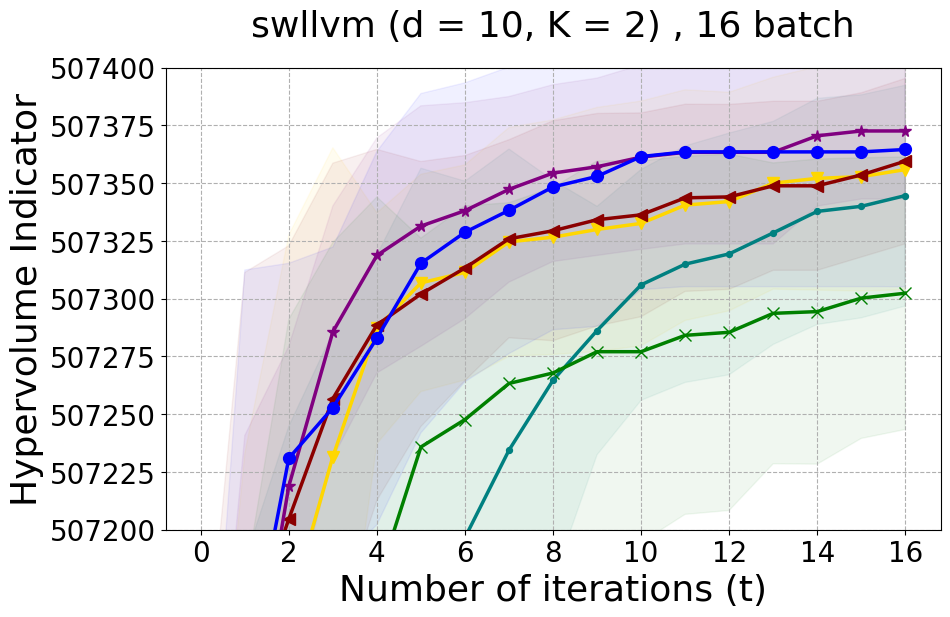}
     \end{subfigure}
     \hfill
     \begin{subfigure}
         \centering
         \includegraphics[width=0.24\textwidth]{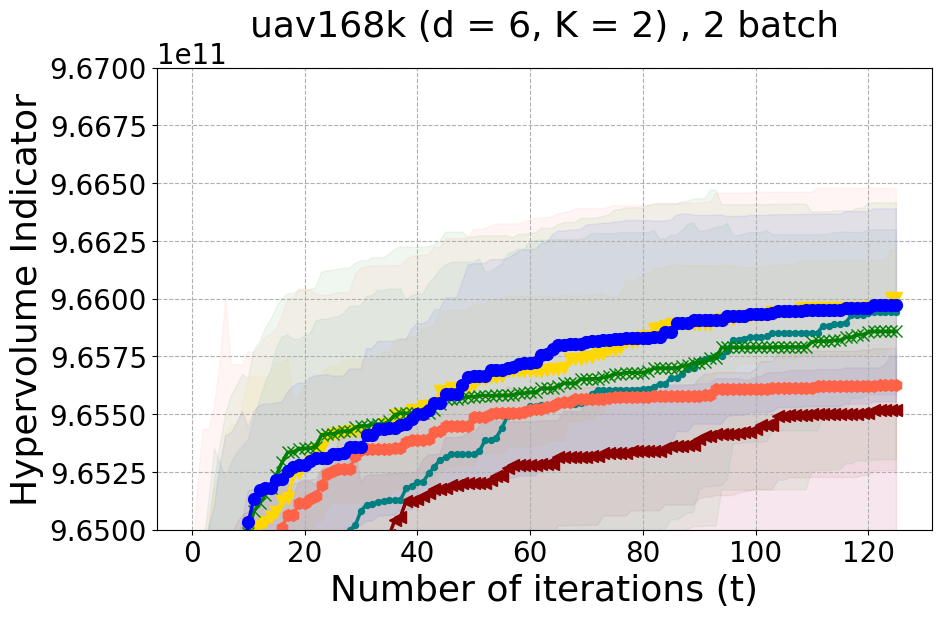}
     \end{subfigure}
     \hfill
     \begin{subfigure}
         \centering
         \includegraphics[width=0.24\textwidth]{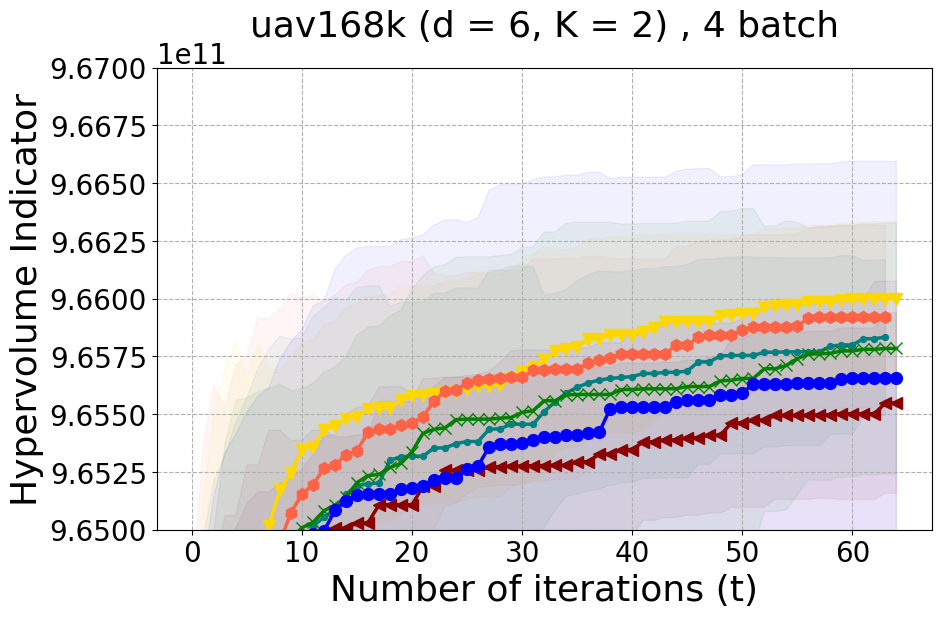}
     \end{subfigure}
     \hfill
     \begin{subfigure}
         \centering
         \includegraphics[width=0.24\textwidth]{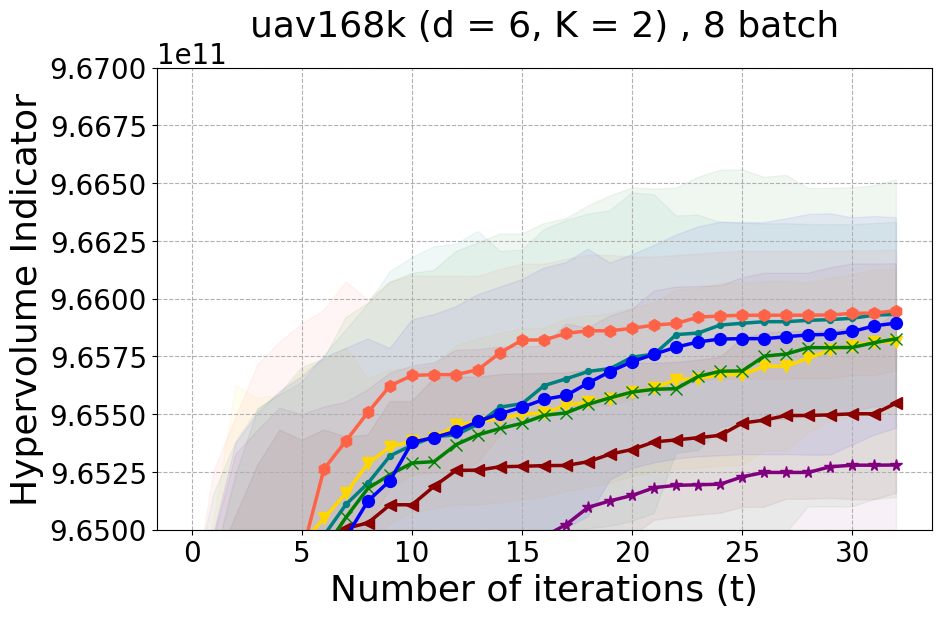}
     \end{subfigure}
      \centering
     \begin{subfigure}
         \centering
         \includegraphics[width=0.24\textwidth]{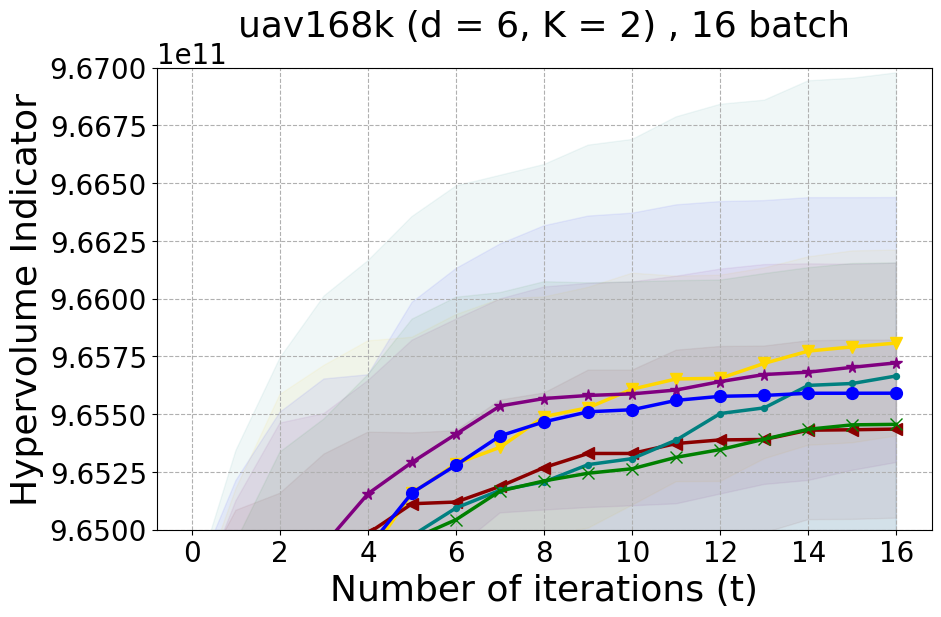}
     \end{subfigure}
    \begin{subfigure}
         \centering
         \includegraphics[width=0.98\textwidth]{figures/paper-legend.png}
     \end{subfigure}
        \caption{Additional real-world experiments hypervolume results}
    \label{fig:additional-real-world-hv}
\end{figure*}

\begin{figure*}[ht!]
     \centering
     \begin{subfigure}
         \centering
         \includegraphics[width=0.24\textwidth]{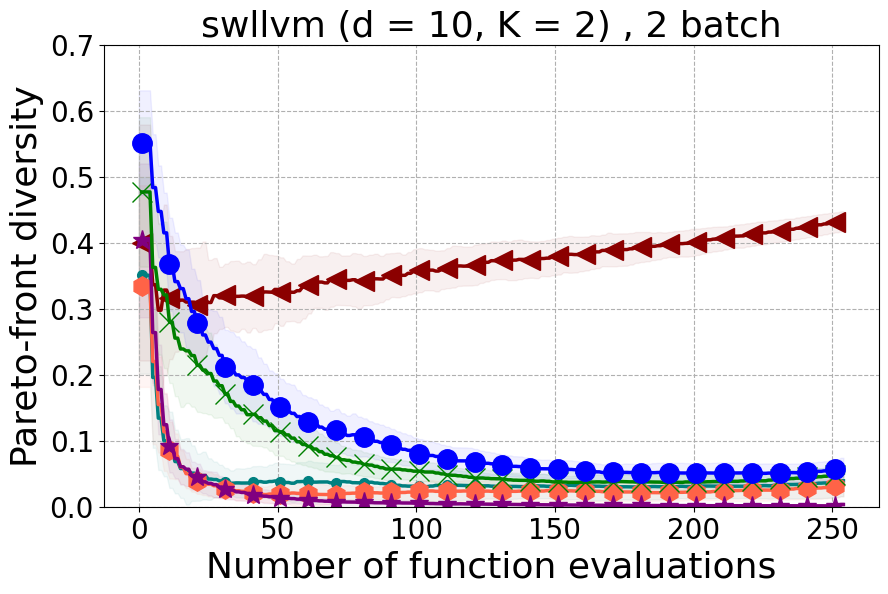}
     \end{subfigure}
     \hfill
     \begin{subfigure}
         \centering
         \includegraphics[width=0.24\textwidth]{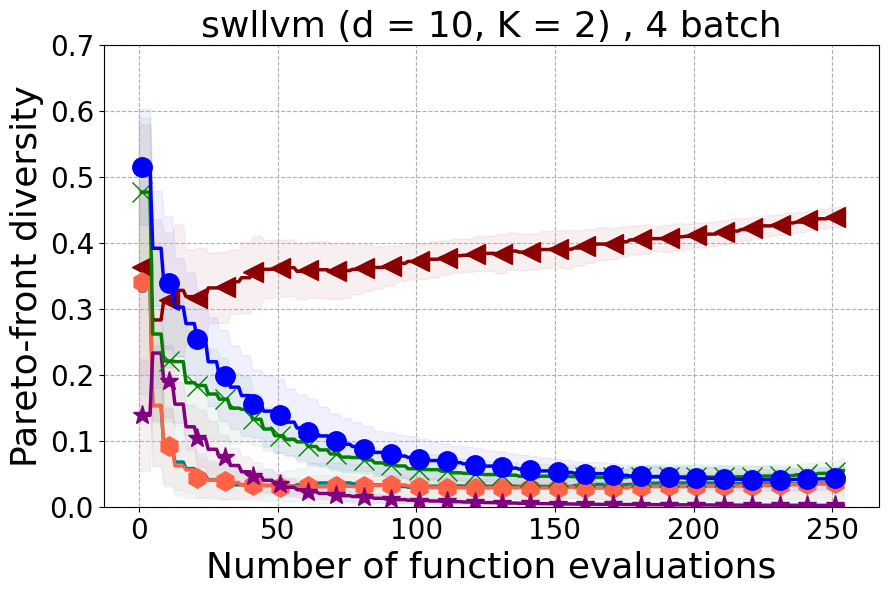}
     \end{subfigure}
     \hfill
     \begin{subfigure}
         \centering
         \includegraphics[width=0.24\textwidth]{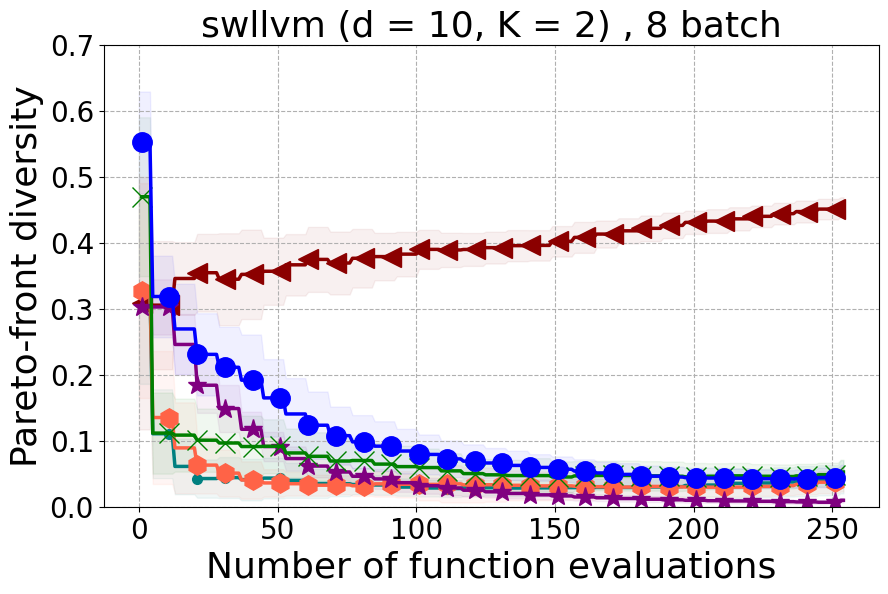}
     \end{subfigure}
     \centering
     \begin{subfigure}
         \centering
         \includegraphics[width=0.24\textwidth]{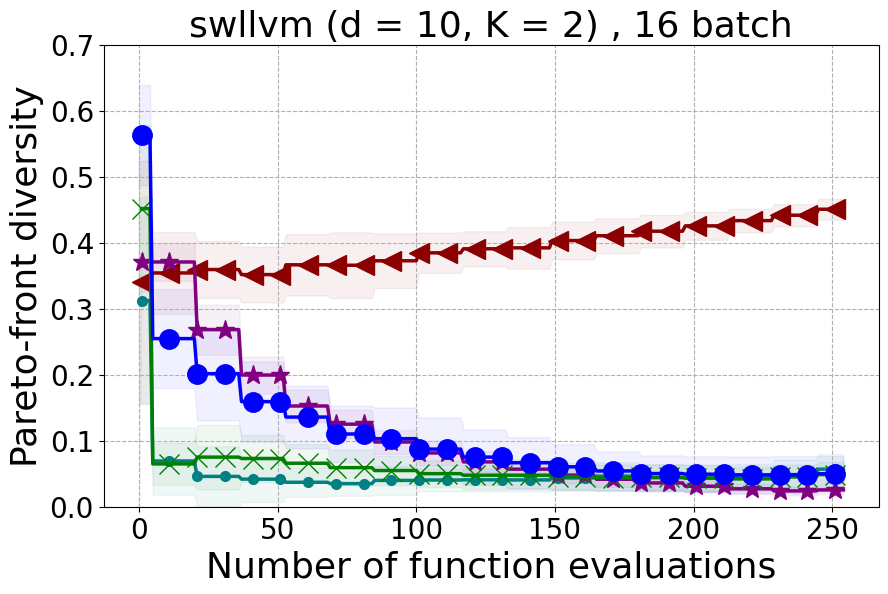}
     \end{subfigure}
     \hfill
     \begin{subfigure}
         \centering
         \includegraphics[width=0.24\textwidth]{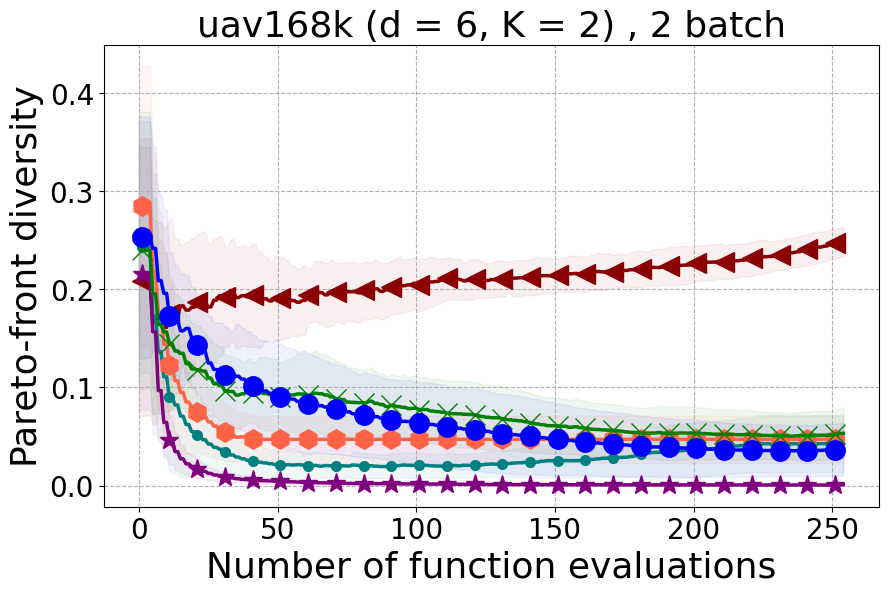}
     \end{subfigure}
     \hfill
     \begin{subfigure}
         \centering
         \includegraphics[width=0.24\textwidth]{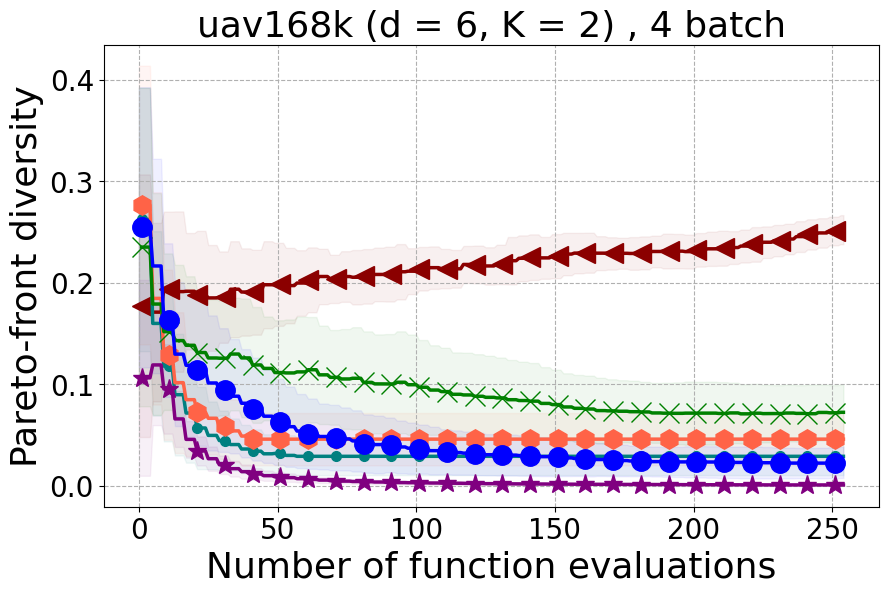}
     \end{subfigure}
     \hfill
     \begin{subfigure}
         \centering
         \includegraphics[width=0.24\textwidth]{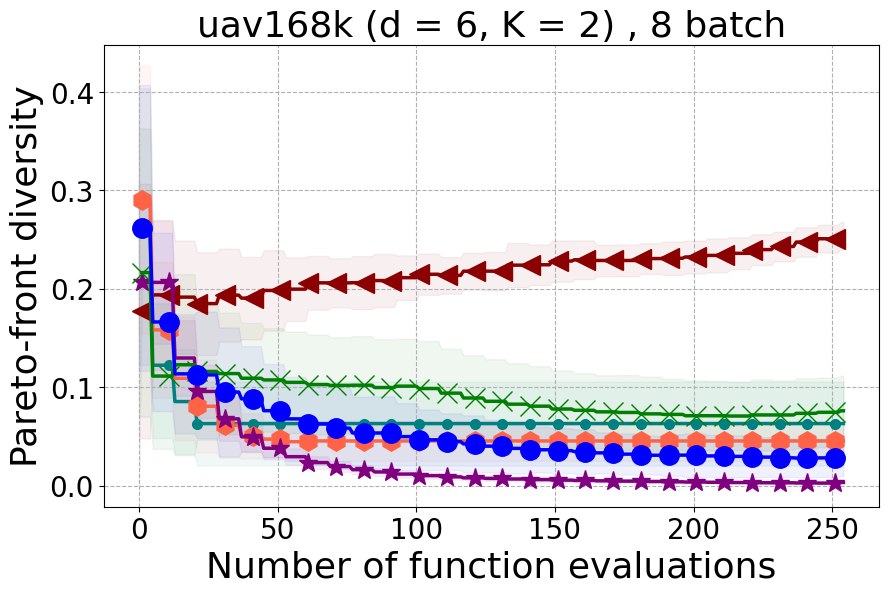}
     \end{subfigure}
      \centering
     \begin{subfigure}
         \centering
         \includegraphics[width=0.24\textwidth]{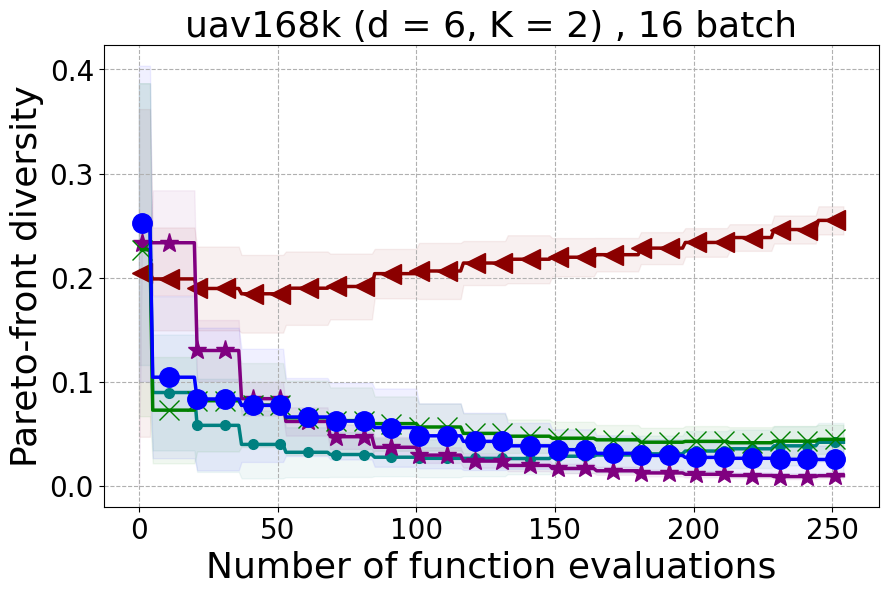}
     \end{subfigure}
    \begin{subfigure}
         \centering
         \includegraphics[width=0.98\textwidth]{figures/diversity-legend.png}
     \end{subfigure}
        \caption{Additional real-world experiments DPF results}
    \label{fig:additional-real-world-div}
\end{figure*}

\begin{figure*}[h!]
\begin{subfigure}{\includegraphics[width=0.24\textwidth]{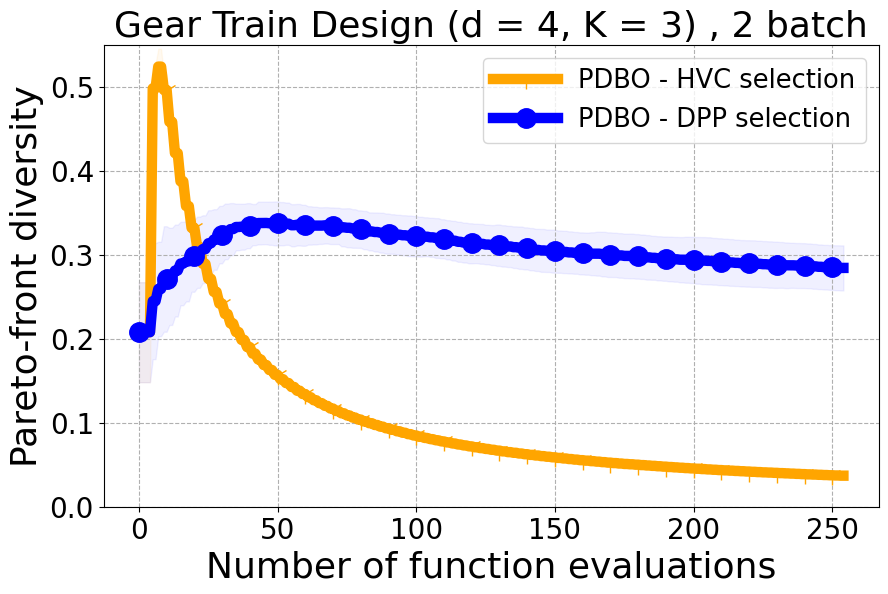}}
\end{subfigure}
\begin{subfigure}{\includegraphics[width=0.24\textwidth]{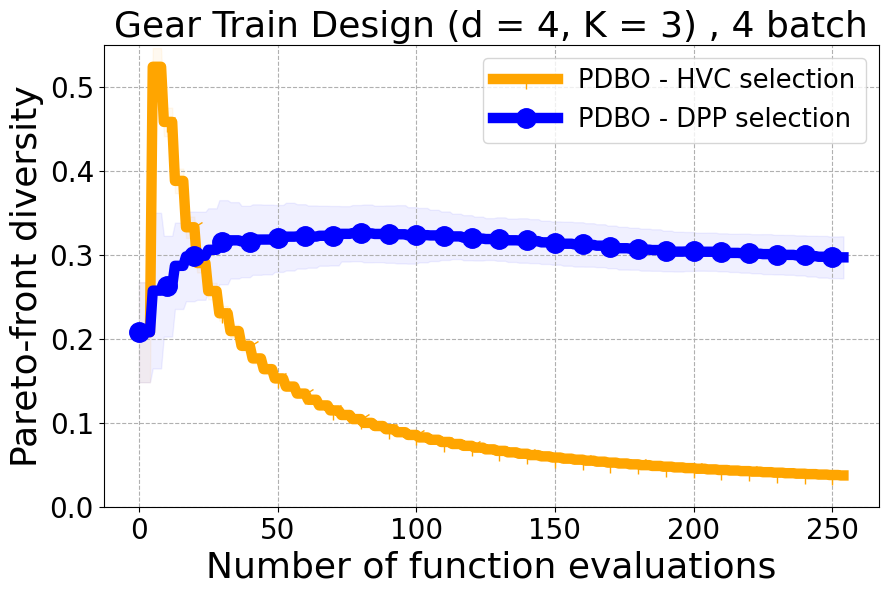}}
\end{subfigure}
\begin{subfigure}{\includegraphics[width=0.24\textwidth]{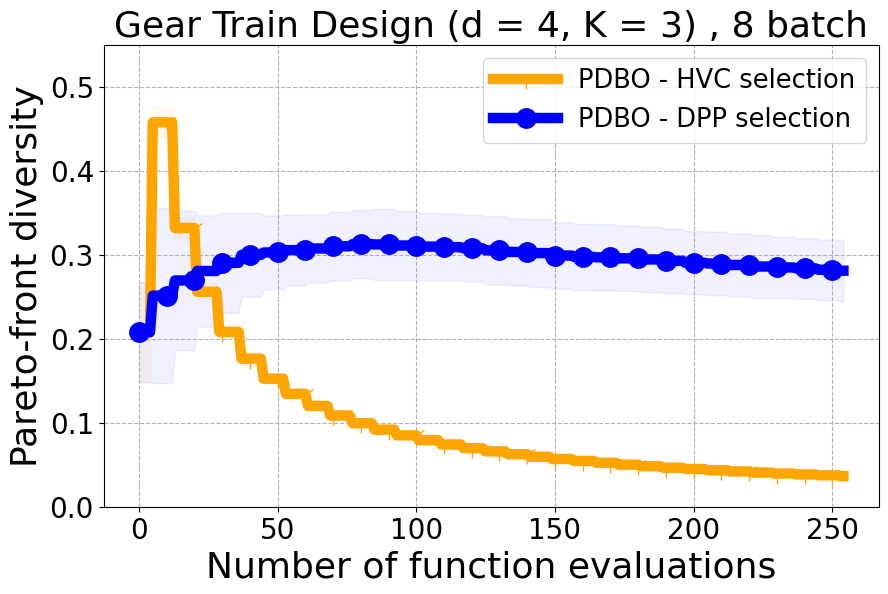}}
\end{subfigure}
\begin{subfigure}{\includegraphics[width=0.24\textwidth]{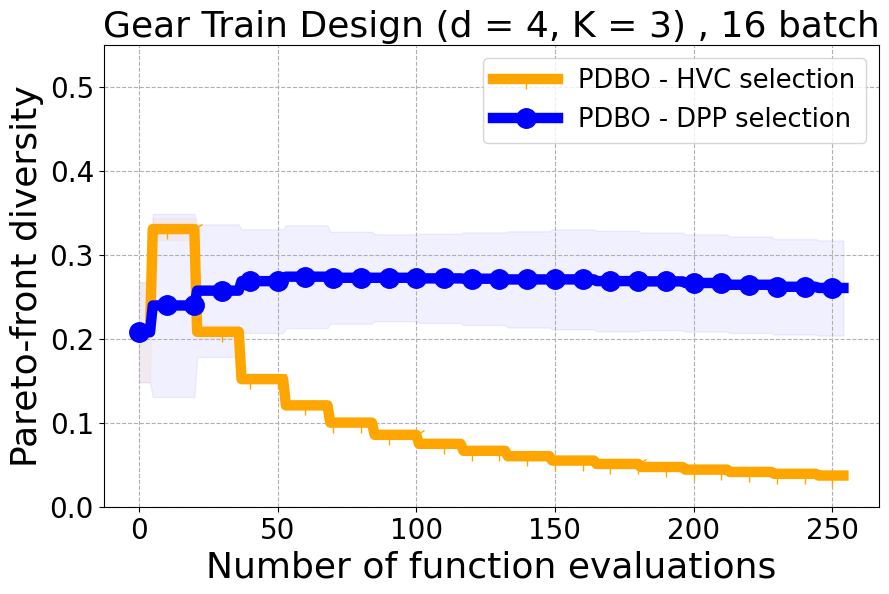}}\\
\end{subfigure}
\begin{subfigure}{\includegraphics[width=0.24\textwidth]{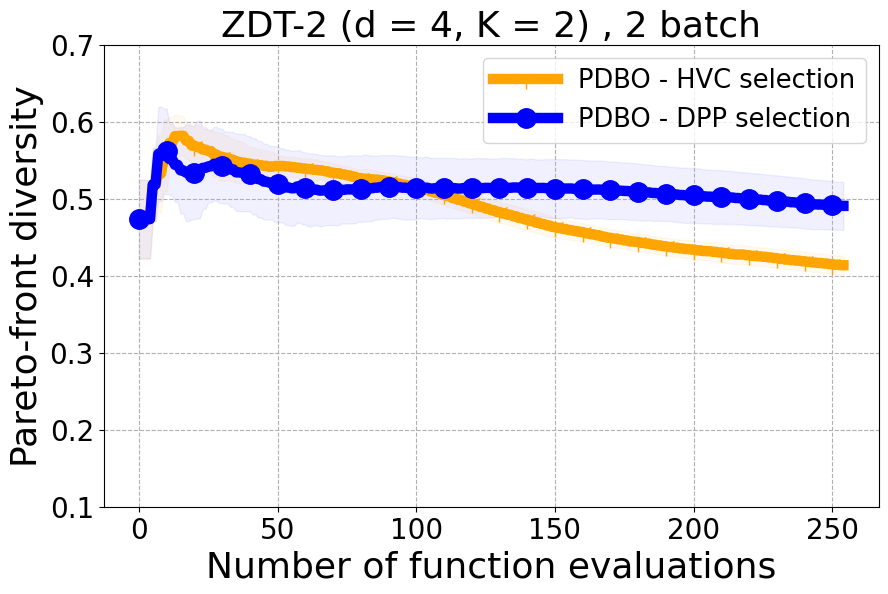}}
\end{subfigure}
\begin{subfigure}{\includegraphics[width=0.24\textwidth]{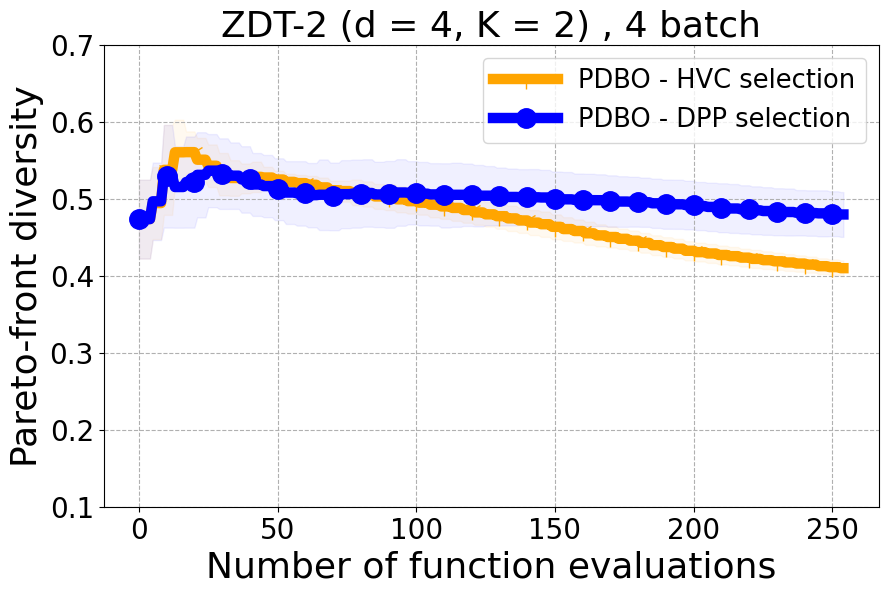}}
\end{subfigure}
\begin{subfigure}{\includegraphics[width=0.24\textwidth]{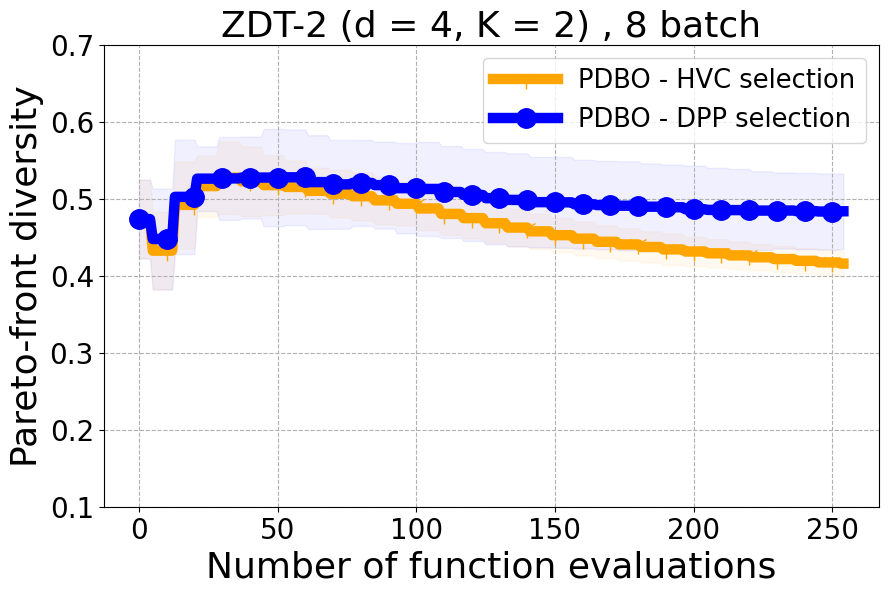}}
\end{subfigure}
\begin{subfigure}{\includegraphics[width=0.24\textwidth]{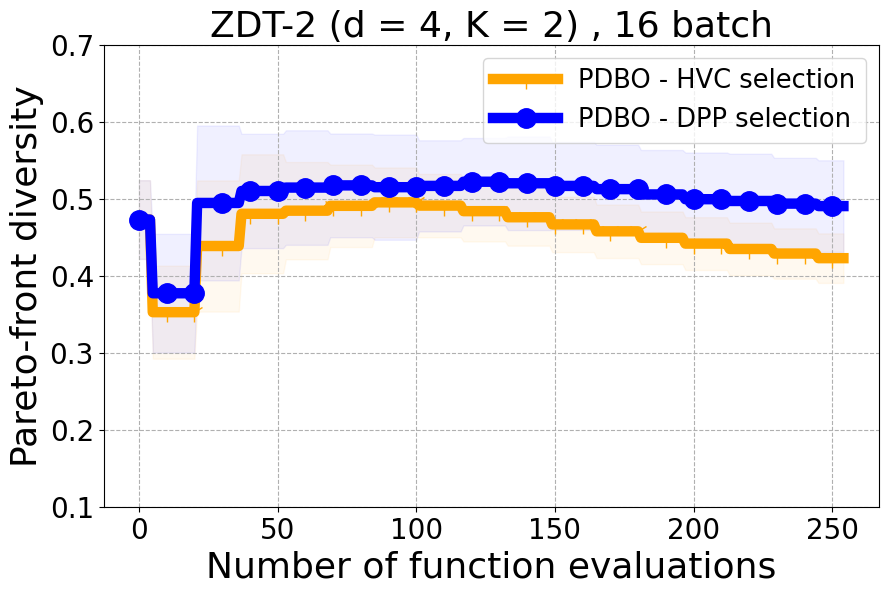}}
\end{subfigure}
\caption{Comparing DPF batch selection to greedy selection based on the highest individual hypervolume contribution}
\label{fig:rebuttal_hvc_vs_dpp}
\end{figure*}

 \begin{center}
\begin{table*}[ht]
\centering
\begin{tabular}{@{}cccccccccc@{}}
\toprule
\centering
\shortstack{Problem\\ Name} & \shortstack{Batch\\ Size} & PDBO      & DGEMO   & \multicolumn{2}{c}{\shortstack{qEHVI\\ CPU \hspace{0.35cm} GPU}}   & NSGA-II & \multicolumn{2}{c}{\shortstack{qPAREGO\\ CPU \hspace{0.35cm} GPU}}  & USEMO-EI \\ \midrule
\multirow{5}{*}{ZDT-1}  & 2    & 52.66   & 58.22 & 192.64 & 7.31 & 0.01 & 66.57 & 3.64 & 1.34   \\ \cmidrule(l){2-10} 
                   		& 4    & 57.73    & 57.93 & 510.11 & 16.17 & 0.01 & 113.00 & 5.94 & 1.47    \\ \cmidrule(l){2-10} 
                   		& 8    & 62.57   & 60.24 & 1781.94 & 53.67 & 0.01 & 226.20 & 11.93 & 1.55    \\ \cmidrule(l){2-10} 
                   		& 16  & 70.40   & 65.28 & NA & NA & 0.01 & 667.65 & 44.92 & 1.50    \\ \midrule
\multirow{5}{*}{ZDT-2}  & 2    & 52.37   & 51.47 & 48.22 & 2.95 & 0.01 & 16.58 & 1.37 & 0.89   \\ \cmidrule(l){2-10} 
                   		& 4    & 56.70   & 51.96 & 127.10 & 5.25 & 0.01 & 26.40 & 2.04 & 0.97    \\ \cmidrule(l){2-10} 
                   	    & 8    & 60.25   & 52.14 & 626.71 & 15.67 & 0.01 & 56.98 & 3.43 & 0.95    \\ \cmidrule(l){2-10} 
                   		& 16  & 64.91   & 51.94 & NA & NA & 0.01 & 163.08 & 9.05 & 0.91    \\ \midrule
\multirow{5}{*}{ZDT-3}  & 2    & 49.52   & 61.81 & 201.73 & 8.95 & 0.01 & 69.55 & 4.23 & 1.18   \\ \cmidrule(l){2-10} 
                   			 & 4    & 54.46  & 61.42 & 530.65 & 22.54 & 0.01 & 114.35 & 9.48 & 1.33    \\ \cmidrule(l){2-10} 
                   			 & 8    & 57.74   & 60.34 & 1871.59 & 68.70 & 0.01 & 247.07 & 18.39 & 1.33    \\ \cmidrule(l){2-10} 
                   			& 16  & 65.46   & 60.20 & NA & NA & 0.01 & 745.80 & 43.90 & 1.24    \\ \midrule
\multirow{5}{*}{Gear Train Design}  & 2    & 30.42   & 77.65 & 60.01 & 1.54 & 0.01 & 17.82 & 0.92 & 1.09    \\ \cmidrule(l){2-10} 
                  			& 4    & 34.51   & 77.15 & 316.89 & 5.62 & 0.01 & 43.32 & 2.19 & 1.08    \\ \cmidrule(l){2-10} 
                   			& 8    & 40.50   & 76.83 & 1534.78 & 37.66 & 0.01 & 200.46 & 7.05 & 1.08    \\ \cmidrule(l){2-10} 
                   			& 16  & 51.12    & 74.88 & NA & NA & 0.01 & 780.58 & 17.70 & 0.96    \\ \midrule
\multirow{5}{*}{DTLZ-1}  & 2    & 55.50   & NA & 267.98 & 25.99 & 0.02 & 112.70 & 7.43 & 2.28   \\ \cmidrule(l){2-10} 
                   			   & 4    & 60.86   & NA & NA & NA & 0.02 & 240.43 & 13.41 & 2.25   \\ \cmidrule(l){2-10} 
                   		           & 8    & 64.68   & NA  & NA & NA & 0.03 & 545.08 & 21.82 & 2.23    \\ \cmidrule(l){2-10} 
                   			   & 16  & 75.46    & NA & NA      & NA        & 0.05   & 1492.85 & 45.54   & 2.16    \\ \midrule
\multirow{5}{*}{DTLZ-3}    & 2    & 54.29    & NA & NA & NA & 0.02 & 111.90 & 9.20 & 2.07   \\ \cmidrule(l){2-10} 
                   			     & 4    & 58.44    & NA & NA & NA & 0.02 & 265.97 & 16.58 & 2.10   \\ \cmidrule(l){2-10} 
                   			     &8     & 63.90    & NA & NA & NA & 0.03 & 603.90 & 25.97 & 2.05    \\ \cmidrule(l){2-10} 
                  			     & 16  & 74.11     & NA   & NA      & NA        & 0.04  & 1676.98 & 50.30   & 2.02    \\ \midrule
\multirow{5}{*}{DTLZ-5}    & 2    & 242.78  & NA  & NA & NA & 1.15 & 146.71  & 6.89 & 4.86   \\ \cmidrule(l){2-10} 
                   			     & 4    & 244.08   & NA & NA & NA & 4.64 & 201.05 & 9.08 & 4.81   \\ \cmidrule(l){2-10} 
                   		             & 8    & 268.75   & NA & NA & NA & 1.59 & 368.43 & 12.69 & 4.41   \\ \cmidrule(l){2-10} 
                   			     & 16  & 303.58   & NA   & NA    & NA        & 2.06  & 935.33 & 29.46   & 4.20    \\ \midrule
\bottomrule
\end{tabular}
\caption{Average runtime (seconds per iteration) of each baseline}
\label{tab:runtimes}
\end{table*}
\end{center}

\end{document}